\documentclass{article}
\usepackage{amsfonts,amsmath,amsthm,amssymb}

\date{}

\usepackage{algorithm}
\usepackage{algorithmic}
\usepackage{amsthm}
\usepackage{subfigure}
\usepackage{epsfig}
\usepackage{graphicx}
\usepackage{url}
\usepackage{multirow}
\usepackage{amssymb}
\usepackage{mathrsfs}
\usepackage{epstopdf}
\usepackage{multicol}
\usepackage{amsmath}
\usepackage{mathtools}
\usepackage{bm}
\usepackage{color}
\usepackage{multicol}
\usepackage{wrapfig}
\usepackage{wrapfig,lipsum,booktabs}
\usepackage{enumitem}
\usepackage{multirow}
\usepackage{natbib}

\usepackage{pifont}
\newcommand{\cmark}{\ding{51}}%
\newcommand{\xmark}{\ding{55}}%

\allowdisplaybreaks[4]

\newtheorem{theorem}{Theorem}[section]

\newtheorem{lemma}[theorem]{Lemma}
\newtheorem{corollary}[theorem]{Corollary}
\newtheorem{definition}[theorem]{Definition}
\newtheorem{assumption}[theorem]{Assumption}
\newtheorem{remark}[theorem]{Remark}

\numberwithin{equation}{section}

\theoremstyle{plain}

\theoremstyle{definition}

\title{Smoothed Gradient Method for Nonconvex Federated Stochastic Bilevel Optimization}

\author{
	Xinwen Zhang \\ Temple University \\
	 \and Peiran Yu \\ University of Texas at Arlington \\
	 \and Zhaosong Lu \\ University of Minnesota  \\
	 \and 
	Hongchang Gao\thanks{{\tt hongchang.gao@temple.edu}}  \\ Temple University 
	}
\begin{document}
\maketitle

\begin{abstract}
	In recent years, federated stochastic bilevel optimization has attracted increasing attention due to its wide range of applications in machine learning. To reduce the computational overhead associated with second-order Hessian and Jacobian matrices, several first-order methods have been proposed. However, existing methods typically impose restrictive assumptions on the lower-level function, suffer from a strong dependence on the condition number in their convergence rates, and require different learning-rate scales for variables across the upper- and lower-level problems, limiting their practical applicability and complicating hyperparameter tuning. To address these challenges, we propose a stochastic doubly smoothed gradient method for nonconvex federated stochastic bilevel optimization problems, which decouples the learning rates of upper- and lower-level variables and does not require a strongly-convex lower-level loss function. We establish rigorous theoretical guarantees for the proposed algorithm, demonstrating an improved convergence rate of $O(\kappa^{15/2}/\epsilon^5)$ and a communication complexity of $O(\kappa^{4}/\epsilon^3)$, where $\kappa$ denotes the condition number and $\epsilon$ represents the solution accuracy. Notably, these bounds exhibit significantly better dependence on the condition number $\kappa$ than those of existing methods. Extensive experiments validate the effectiveness of our algorithm.
\end{abstract}

\section{Introduction}
Federated bilevel optimization has emerged as an important paradigm for large-scale machine learning, with applications spanning model-agnostic meta-learning~\citep{finn2017model, rajeswaran2019meta}, hyperparameter optimization~\citep{feurer2019hyperparameter}, neural architecture search~\citep{liu2018darts, zhang2021idarts}, and reinforcement learning~\citep{li2024getting, shen2025principled}. In many modern scenarios, data are distributed across clients due to privacy and communication constraints, motivating the following federated stochastic bilevel optimization problem:
\begin{align}\label{eq:loss}
	& \min_{x\in \mathbb{R}^{d_x}}  \frac{1}{K}\sum_{k=1}^{K}f^{(k)}(x, y^*(x)) \quad  s.t. \quad {y}^*(x) = \arg\min_{y\in \mathbb{R}^{d_y}} \frac{1}{K}\sum_{k=1}^{K} g^{(k)}(x, y) \ , 
\end{align}
where $K$ denotes the number of clients, $f^{(k)}(x, y^*(x)) = \mathbb{E}_{\xi}[f^{(k)}(x, y^*(x); \xi^{(k)})]$ is the upper-level objective, and $g^{(k)}(x, y) = \mathbb{E}_{\xi}[g^{(k)}(x, y; \xi^{(k)})]$ is the lower-level objective on the $k$-th client.

Solving Eq.~(\ref{eq:loss}) in federated learning setting is challenging because each client's upper-level objective depends on the  global lower-level solution.
Classical federated bilevel methods~\citep{huang2022fast, gao2022convergence, tarzanagh2022fednest, huang2023achieving, li2023communicationefficient} typically rely on implicit differentiation with costly Jacobian- and Hessian-related computations, resulting in substantial computational and communication overhead. Recent works~\citep{yang2025first, zhang2025federated} alleviate this issue through fully first-order reformulations that avoid explicit second-order computations. A representative strategy~\citep{kwon2023fully, shen2023penalty,lu2026solving,lu2024first,liu2024moreau} uses a penalty-based formulation to transform the bilevel problem into an unconstrained minimax problem solvable using only first-order stochastic gradients.




Despite this progress, existing first-order federated bilevel methods have several limitations. First, they require different learning-rate scales for the upper- and lower-level variables, making hyperparameter tuning difficult and potentially destabilizing the coupled optimization dynamics. Second, their theoretical guarantees generally rely on strong convexity of the lower-level objective, excluding a broad class of nonconvex problems. Third, their convergence and communication complexities have unfavorable dependence on the condition number $\kappa$. These limitations motivate more efficient and robust first-order algorithms for federated bilevel optimization.

\begin{table*}[t] 
	\small
	\begin{center}
		\caption{{Comparison of our algorithm with existing stochastic first-order methods for bilevel optimization. NC-SC refers to the setting where the upper-level problem is nonconvex and the lower-level problem is strongly convex, while NC-PL refers to the setting where the lower-level problem satisfies the PL condition. $\epsilon$ measures the accuracy for stationary point, $\kappa$ denotes the condition number; ``$\cdot$'' indicates that the cited work does not report an explicit $\kappa$-dependence in closed form.}}
		\begin{tabular}{l|c|c|c|c}
			\toprule
			\multirow{2}{*}{\textbf{Algorithms}} & \multirow{2}{*}{\textbf{Assumption}} 
			& \textbf{Convergence} & \textbf{Communication} & \textbf{Unified} \\
			&   & \textbf{Rate} & \textbf{Complexity} & \textbf{LR Scale} \\
			\hline
			\multicolumn{5}{c}{\textbf{Single-machine Setting}} \\
			\hline
			F$^2$BA\citep{kwon2023fully} & NC-SC  & $O(\cdot/\epsilon^{7})$, $O(\cdot/\epsilon^{5})$ & -- & \xmark \\
			Prox-F$^2$BA\citep{kwon2024penalty} & NC-PL  & $O(\cdot/\epsilon^{7})$, $O(\cdot/\epsilon^{5})$ & --  & \xmark \\
			F$^2$BA\citep{chen2024finding} & NC-PL  & $O(\kappa^{12}/\epsilon^{6})$ & -- & \xmark\\
			
			\hline
			\multicolumn{5}{c}{\textbf{Federated Learning Setting}} \\
			\hline
			MemFBO~\citep{yang2025first} & NC-SC & $O(\cdot/\epsilon^7)$ & $O(\cdot/\epsilon^3)$ & \xmark \\
			FedSVRBGD-FO~\citep{zhang2025federated} & NC-SC & $O(\kappa^{10}/\epsilon^5)$ & $O(\kappa^{9}/\epsilon^4)$ & \xmark \\
			FedDS-BVRFO(Corollary~\ref{corollary}) & NC-PL & $O(\kappa^{15/2}/\epsilon^5)$ & $O(\kappa^{4}/\epsilon^3)$ & \cmark \\
			
			\bottomrule
		\end{tabular}
	\end{center}
	\label{table:comparison}
	\vspace{-10pt}
\end{table*}



To address these challenges, we revisit stochastic smoothing techniques~\citep{zhang2020single, yang2022faster, zheng2023universal}, which can stabilize optimization, enable unified learning-rate scaling, and improve condition-number dependence in stochastic minimax optimization. Recent works extend smoothing to federated stochastic minimax optimization~\citep{shen2024stochastic}, stochastic multi-level compositional minimax problems~\citep{zhang2025convergence}, and constrained bilevel problems~\citep{lu2023slm, lu2025tsp}. However, stochastic smoothing in federated bilevel optimization, particularly doubly smoothed schemes, remains largely unexplored. This raises the following question:

\textbf{Can stochastic smoothing be effectively incorporated into federated bilevel optimization to simultaneously improve theoretical guarantees and practical performance?}

However, incorporating smoothing into federated bilevel optimization is nontrivial. Unlike standard minimax problems, federated bilevel optimization involves tightly coupled upper- and lower-level variables together with a global lower-level problem. Moreover, the penalty-based first-order reformulation introduces additional interactions between the penalty and smoothing parameters, making the appropriate smoothing strategy unclear. Stochastic gradient estimation and periodic communication further complicate the algorithm design. Theoretically, establishing convergence under the relaxed PL condition while improving condition-number dependence poses additional challenges.

To address these challenges, we propose a stochastic \textit{doubly smoothed} framework for federated bilevel optimization. Our method integrates smoothing into a penalty-based first-order  reformulation for \textit{both the upper- and lower-level variables}, yielding a fully first-order algorithm for solving Eq.~(\ref{eq:loss}). This formulation enables unified learning-rate scaling without explicit Jacobian or Hessian computations. Moreover, our analysis allows lower-level objectives satisfying the PL condition with improved condition-number dependence. Our main contributions are summarized as follows:

\begin{itemize}[leftmargin=20pt, itemsep=0pt]
	\vspace{-10pt}
	\item \textbf{Algorithmic contribution:} We propose a stochastic \emph{doubly smoothed} framework for federated bilevel optimization and develop a fully first-order algorithm, termed \textbf{FedDS-BVRFO}. By smoothing both the upper- and lower-level updates, our method enables the same order of learning rates with respect to the condition number, simplifying hyperparameter tuning. To the best of our knowledge, this is the first stochastic doubly smoothed method for federated bilevel optimization with unified learning-rate scaling.
	
	\item \textbf{Theoretical contribution:} We establish convergence guarantees under the PL condition on the lower-level objective, relaxing the strong-convexity assumption of existing federated bilevel methods. Our method achieves a sample complexity of $O(\kappa^{15/2}/(K\epsilon^5))$ and  communication complexity of $O(\kappa^4/\epsilon^3)$, improving the condition-number dependence over existing fully first-order federated bilevel methods while preserving linear speedup in the number of clients.
\end{itemize}
\vspace{-5pt}
Finally, extensive experimental results validate the effectiveness of the proposed algorithm.

\section{Related Work}
\subsection{Federated and Stochastic Bilevel Optimization}
Federated stochastic bilevel optimization has recently attracted growing attention due to its applications in privacy-preserving meta-learning and hyperparameter optimization~\citep{chen2018federated, jiang2019improving, fallah2020personalized,  yang2023personalized}. Compared with classical single-machine bilevel optimization~\citep{ghadimi2018approximation, ji2021bilevel, chen2021tighter}, the federated setting introduces additional challenges due to the coupling between local upper-level objectives and a global lower-level problem. A primary line of work is based on approximate implicit differentiation techniques, such as Neumann-series-based estimators, leading to representative methods including FedNEST and its variants~\citep{tarzanagh2022fednest, huang2023achieving, yang2023simfbo}. These approaches are further improved by incorporating momentum or variance-reduction strategies~\citep{gao2022convergence, li2023communicationefficient} to enhance communication efficiency and training stability. More recently, fully first-order methods have been proposed via penalty-based Lagrangian reformulations~\citep{yang2025first, zhang2025federated}, which enable gradient-based updates without explicit second-order computations. In addition, several works have investigated nonconvex lower-level problems, primarily in the single-machine setting~\citep{shen2023penalty, kwon2024penalty, chen2024finding,  liu2024moreau,lu2023slm,lu2025tsp}. These studies relax strong convexity assumptions on the lower-level problem, but their extension to federated settings remains largely unexplored.

\subsection{Smoothed Techniques}
Stochastic smoothing has been widely studied for stabilizing optimization dynamics and improving convergence behavior. \cite{zhang2020single} first introduced it into stochastic minimax optimization to mitigate gradient oscillations and ensure convergence to stationary solutions. Under the PL condition,~\cite{yang2022faster} further showed that smoothing improves the condition-number dependence from $O(\kappa^3)$ to $O(\kappa^2)$. Subsequent works extend smoothing to more complex settings. In particular,~\cite{zheng2023universal} proposed doubly smoothed schemes for nonconvex–nonconcave problems, while~\cite{shen2024stochastic} incorporated smoothing into federated stochastic minimax optimization, achieving an $O(\kappa^2)$ dependence. More recently,~\cite{zhang2025convergence} extended smoothing to stochastic multi-level compositional minimax problems, achieving an improved $O(\kappa^{3/2})$ dependence when combined with variance reduction. Beyond minimax optimization,~\cite{lu2023slm,lu2025tsp} incorporated smoothing into Lagrangian methods for constrained bilevel optimization. Note that our reformulation differs from those in recent works~\citep{lu2023slm,lu2025tsp}. Specifically, \cite{lu2023slm,lu2025tsp} apply the smoothing technique only for the primal variable, while our method apply it to both the primal and dual variables in the reformulated minimax problem.  Despite these advances, the role of stochastic smoothing in federated bilevel optimization, particularly the use of doubly smoothed schemes, remains largely unexplored.

\section{Method}\label{sec:method}
\subsection{Doubly Smoothed Bilevel Objective}
Following the fully first-order reformulation in \cite{yang2025first, zhang2025federated}, we introduce a penalty parameter $\rho > 0$ to transform the federated bilevel problem in Eq.~(\ref{eq:loss}) into the following unconstrained minimax formulation:
\begin{align}\label{eq:first_loss}
	&  \min_{x,y} \max_{z} \underbrace{\frac{1}{K}\sum_{k=1}^{K}f_{\rho}^{(k)} (x, y, z)}_{f_{\rho} (x, y, z)} \ ,  \quad f_{\rho}^{(k)} (x, y, z) =  f^{(k)} (x, y) + \frac{1}{\rho}(g^{(k)}(x, y)-g^{(k)}(x, z))    \ .
\end{align}
At optimality, the maximizer with respect to $z \in \mathbb{R}^{d_y}$ coincides with the lower-level solution $y^*(x)$. Importantly, the gradients with respect to $x$, $y$, and $z$ involve only first-order information, thereby avoiding explicit Jacobian and Hessian computations and enabling more efficient optimization in federated learning settings.

Motivated by smoothing techniques in classical stochastic minimax optimization \citep{zhang2020single, yang2022faster, zheng2023universal}, we introduce auxiliary proximal variables $p\in \mathbb{R}^{d_x}$ and $q\in \mathbb{R}^{d_y}$ for the  primal updates $x$ and $y$, respectively, and an auxiliary proximal variables $r\in \mathbb{R}^{d_y}$ for the  dual maximizer $z$. We then  construct the following doubly smoothed objective:
\begin{align}
	& F(x, y, z ; p, q, r)  = \rho \frac{1}{K}\sum_{k=1}^{K}f_{\rho}^{(k)} (x, y, z)   + \underbrace{\frac{\gamma_1}{2}(\| x - p \|^2+ \| y - q\|^2)}_{\textbf{primal smoothing on } (x, y)}  -\underbrace{\frac{\gamma_2}{2} \|z - r\|^2}_{\textbf{dual smoothing on }z} \ , 
\end{align}
where $\gamma_1 > 0$ and $\gamma_2 > 0$ are the primal and dual smoothing parameters.

The term $\rho f_{\rho}^{(k)}(x, y, z)$ is $L_{\rho}$-smooth with respect to $x$ and $y$, where $L_{\rho} \triangleq \rho L_f + 2L_g$. By choosing $\gamma_1 = O(L_{\rho})$ with $\gamma_1 > L_{\rho}$, the function $F(x, y, z; p, q, r)$ is $(\gamma_1 - L_{\rho})$-strongly convex and $(\gamma_1 + L_{\rho})$-smooth in $x$ and $y$. In particular, the strong convexity and smoothness constants are given by $\mu_{F} = (\gamma_1 - L_{\rho})$ and $L_F = (\gamma_1 + L_{\rho})$, respectively. Similarly, by smoothing the dual variable $z$ with $\gamma_2>L_{\rho}$, $F(x,y,z;p,q,r)$ is $(\gamma_2-L_{\rho})$-strongly concave and $(\gamma_2+L_{\rho})$-smooth with respect to $z$.

By rearranging the above expression, the doubly smoothed objective can be written in the following decomposed form:
\begin{align}\label{eq:smooth_loss}
	&  F(x, y, z; p, q, r) = \frac{1}{K}\sum_{k=1}^{K} \Big( \rho f^{(k)} (x, y)  + g_{\gamma_1}^{(k)} (x, y; p, q) - g_{\gamma_2}^{(k)}(x, z; r) \Big) \ , 
\end{align}
where the smoothed lower-level component is defined as
\begin{align}
	& g_{\gamma_1}^{(k)} (x, y; p, q)  = g^{(k)} (x, y)  + \frac{\gamma_1}{2}(\| x - p \|^2 + \| y - q\|^2)  \ , \notag \\
	& g_{\gamma_2}^{(k)} (x, z; r)  = g^{(k)} (x, z)  + \frac{\gamma_2}{2}\| z - r \|^2 \ . 
\end{align}
Here, $g_{\gamma_1}^{(k)}$ smooths the primal minimization updates $(x,y)$, whereas $g_{\gamma_2}^{(k)}$ smooths the dual maximization update $z$. This simultaneous primal-dual smoothing is the key feature of our doubly smoothed formulation, and yields a well-conditioned strongly-convex-strongly-concave surrogate that supports unified learning-rate scaling.


\begin{algorithm}[ht]
	\caption{FedDS-BVRFO}
	\label{alg:fedss-bvrfo}
	\begin{algorithmic}[1]
		\REQUIRE $\alpha_x>0$, $\alpha_y>0$, $\alpha_z>0$, $\alpha_p>0$, $\alpha_q>0$, $\alpha_r>0$,
		$\beta_{x}>0$, $\beta_{y}>0$, $\beta_{z}>0$, $\eta>0$.  \\
		Initialization on each worker $k\in\{1, \cdots, K\}$: 
		${x}_{0}^{(k)}={x}_{0}$, ${y}_{0}^{(k)}={y}_{0}$, ${z}_{0}^{(k)}={z}_{0}$, 
		\FOR{$t=0,\cdots, T-1$} 
		\STATE Update variables $x$, $y$, and $z$ via Eq.~(\ref{eq:xyz_update})
		
		\STATE Update variables $p$, $q$, and $r$ via Eq.~(\ref{eq:pq_update})
		
		\STATE Update gradients $u$, $v$, and $w$ via Eq.~(\ref{eq:uv_update})
		\IF {$\text{mod}(t+1, \tau)==0$}
		\STATE Communication via Eq.~(\ref{eq:comm})
		\ENDIF
		\ENDFOR
	\end{algorithmic}
\end{algorithm}

\subsection{Proposed Algorithm}
To solve Eq.~(\ref{eq:smooth_loss}), we propose a new algorithm termed Federated doubly smoothed  Bilevel Variance-Reduced First-Order (FedDS-BVRFO), summarized in Algorithm~\ref{alg:fedss-bvrfo}. 
The full algorithmic description is provided in the Appendix~\ref{app:alg}.

For the smoothed objective in Eq.~(\ref{eq:smooth_loss}), the partial gradients are given by
\begin{align}\label{eq:partial_grad}
	&  \nabla_{x} F(x_{t}, y_{t}, z_{t}; p_{t}, q_{t}, r_{t}) =  \rho\nabla_1 f(x_{t}, y_{t})  + (\nabla_1 g_{\gamma_1}(x_{t}, y_{t}; p_{t}, q_{t}) - \nabla_1 g_{\gamma_2}(x_{t}, z_{t}; r_{t})) \ , \notag \\
	& \nabla_{y} F(x_{t}, y_{t}, z_{t}; p_{t}, q_{t}, r_{t}) = \rho\nabla_2 f(x_{t}, y_{t}) + \nabla_2 g_{\gamma_1}(x_{t}, y_{t}; p_{t}, q_{t}) \ , \notag \\
	& \nabla_{z} F(x_{t}, y_{t}, z_{t}; p_{t}, q_{t}, r_{t}) = -\nabla_2 g_{\gamma_2}(x_{t}, z_{t}; r_{t}) \ ,
\end{align}
where
\begin{align}
	& \nabla_1 g_{\gamma_1}(x_{t}, y_{t}; p_{t}, q_{t}) = \nabla_1 g(x_{t}, y_{t}) + \gamma_1(x_{t} - p_{t}) \ ,  \notag \\
	& \nabla_2 g_{\gamma_1}(x_{t}, y_{t}; p_{t}, q_{t}) = \nabla_2 g(x_{t}, y_{t}) + \gamma_1(y_{t} - q_{t})  \ , \notag \\
	& \nabla_1 g_{\gamma_2}(x_{t}, z_{t}; r_{t}) = \nabla_1 g(x_{t}, z_{t}) \ , \notag \\
	& \nabla_2 g_{\gamma_2}(x_{t}, z_{t}; r_{t}) = \nabla_2 g(x_{t}, z_{t}) + \gamma_2 (z_{t} - r_{t}) \ . 
\end{align}
To construct variance-reduced estimators, we decompose the gradient into multiple components and maintain separate momentum variables. Specifically, $u_{1, t}$, $u_{2, t}$, and $u_{3, t}$ track the three components of the $x$-gradient, corresponding to $\nabla_1 f(x_{t}, y_{t})$, $\nabla_1 g_{\gamma}(x_{t}, y_{t}, p_{t}, q_{t})$ and $\nabla_1 g(x_{t}, z_{t})$, respectively. Similarly, $v_{1, t} $ and $v_{2, t}$ track the two components of the $y$-gradient, while $w_{t}$ tracks the $z$-gradient.

On each worker $k$, we adopt STORM-style variance-reduced updates~\citep{cutkosky2019momentum} for each gradient component, including $u^{(k)}_{1, t+1}$, $u^{(k)}_{2, t+1}$, $u^{(k)}_{3, t+1}$, $v^{(k)}_{1, t+1}$, $v^{(k)}_{2, t+1}$, $w^{(k)}_{ t+1}$. For illustration, we present the update for $u^{(k)}_{1, t+1}$ as follows:
\begin{align}
	& u^{(k)}_{1, t+1} =  (1-\beta_x\eta^2)(u^{(k)}_{1, t} - \nabla_1 f^{(k)}(x^{(k)}_{t}, y^{(k)}_{t}; \xi^{(k)}_{t+1})) + \nabla_1 f^{(k)}(x^{(k)}_{t+1}, y^{(k)}_{t+1}; \xi^{(k)}_{t+1}) \ . 
\end{align}
Thus, the final variance-reduced momentum estimators for the $x$-, $y$-, and $z$-updates are constructed as:
\begin{align}\label{eq:uv_update}
	& u^{(k)}_{t+1} = \rho u^{(k)}_{1, t+1} +  (u^{(k)}_{2, t+1}  - u^{(k)}_{3, t+1}) \ , \quad  v^{(k)}_{t+1}  =  \rho v^{(k)}_{1, t+1}   +  v^{(k)}_{2, t+1} \ ,  \quad  w^{(k)}_{t+1}  =  w^{(k)}_{t+1} \ .  
\end{align}
After obtaining these estimators, the local model variables are updated as 
\begin{align}\label{eq:xyz_update}
	&	x^{(k)}_{t+1} = x^{(k)}_{t} -  \alpha_{x} \eta u^{(k)}_{t}  \ , \quad  y^{(k)}_{t+1} = y^{(k)}_{t} -  \alpha_{y} \eta v^{(k)}_{t} \ ,  \quad  z^{(k)}_{t+1} = z^{(k)}_{t} -  \alpha_{z} \eta w^{(k)}_{t} \ .  
\end{align}
In addition, the auxiliary smoothing variables $p^{(k)}_{t+1}$, $q^{(k)}_{t+1}$, and $r^{(k)}_{t+1}$ are updated as:
\begin{align}\label{eq:pq_update}
	& p^{(k)}_{t+1} = p^{(k)}_{t} + \alpha_{p} \eta (x^{(k)}_{t+1} - p^{(k)}_{t}) \ , \quad  q^{(k)}_{t+1} = q^{(k)}_{t} + \alpha_{q} \eta (y^{(k)}_{t+1} - q^{(k)}_{t}) \ ,  \notag \\
	& r^{(k)}_{t+1} = r^{(k)}_{t} + \alpha_{r} \eta (z^{(k)}_{t+1} - r^{(k)}_{t}) \ . 
\end{align}
To improve communication efficiency, we adopt a periodic synchronization strategy. Specifically, every $\tau$ iterations, each local variable is replaced by the average across all participating workers. This includes the model variables $x^{(k)}_{t+1}$, $y^{(k)}_{t+1}$, $z^{(k)}_{t+1}$; the auxiliary variables $p^{(k)}_{t+1}$, $q^{(k)}_{ t+1}$, and $r^{(k)}_{ t+1}$; and the variance-reduced momentum estimators $u^{(k)}_{t+1}$, $v^{(k)}_{t+1}$,  $w^{(k)}_{t+1}$. Specifically, for any $S\in\{x,y,z,p,q,r,u,v,w\}$, the synchronization step is given by
\begin{align}\label{eq:comm}
	& S^{(k)}_{t+1}=\frac{1}{K}\sum_{k'=1}^{K}S^{(k')}_{ t+1}  \ .    
\end{align}

\section{Theoretical Analysis}
\subsection{Assumptions}\label{sec:assumption}
We adopt the following standard assumptions commonly used in federated stochastic bilevel optimization~\citep{kwon2023fully, kwon2024penalty, chen2024finding, zhang2025federated, yang2025first}.
\begin{assumption} \label{assumption:f-smooth}
	For $k\in\{1, \cdots, K\}$,  $f^{(k)}(\cdot, \cdot)$ satisfies the mean-squared smoothness with the constant $L_{f}>0$ and its second-order gradient  satisfies the Lipschitz continuity with the constant $\ell_{f}>0$.  In addition,  the partial gradient with respect to the second argument is bounded as  $\|\nabla_2 f^{(k)}(\cdot, \cdot)\|\leq C_f$, where $C_f>0$ is a constant. 
\end{assumption}

\begin{assumption}\label{assumption:g-smooth}
	For $k\in\{1, \cdots, K\}$, $g^{(k)}(\cdot, \cdot)$ satisfies the mean-squared smoothness with the constant $L_{g}>0$, and its second-order gradient satisfies the Lipschitz continuity with the constant $\ell_{g}>0$. 
\end{assumption}

\begin{assumption}\label{assumption:g-pl}
	For $k\in\{1, \cdots, K\}$, $g^{(k)}(\cdot, \cdot)$ satisfies the $\mu$-PL condition with respect to the second argument with the constant $\mu>0$.  In addition,  $h_{\rho}(\cdot, \cdot)= \rho f(\cdot, \cdot) + g(\cdot, \cdot)$ also satisfies the $\mu$-PL condition with respect to the second argument. 
\end{assumption}

Note that most federated bilevel optimization methods in the literature~\citep{tarzanagh2022fednest, huang2023achieving, gao2022convergence, li2023communicationefficient, zhang2025federated, yang2025first} rely on the strong convexity of $g^{(k)}(\cdot, \cdot)$ with respect to the second argument. In contrast, our federated analysis assumes only a $\mu$-PL condition on the lower-level objective, which has recently been studied in single-machine bilevel optimization. Since the PL condition is strictly weaker than strong convexity, our framework applies to a broader range of practical scenarios.

\begin{assumption}\label{assumption:variance}
	For $k\in\{1, \cdots, K\}$, the stochastic gradients of  $f^{(k)}(\cdot, \cdot)$  and $g^{(k)}(\cdot, \cdot)$  have variances upper bounded by $\sigma^2$ where  $\sigma>0$ is a constant.
\end{assumption}

\begin{assumption} \label{assumption:heterogeneous}
	The data distribution between workers is heterogeneous and the upper-level and lower-level gradients satisfy the following conditions:
	{\small\begin{align}
			& \frac{1}{K}\sum_{k=1}^{K}\|\nabla  f^{(k)}(x, y) - \frac{1}{K}\sum_{k'=1}^{K}\nabla  f^{(k')}(x, y) \|^2 \leq \delta^2  \ , \notag \\
			& \frac{1}{K}\sum_{k=1}^{K}\|\nabla  g^{(k)}(x, y) - \frac{1}{K}\sum_{k'=1}^{K}\nabla  g^{(k')}(x, y) \|^2 \leq \delta^2 \  , 
	\end{align}}
	where $\delta>0$ is a constant, $x\in \mathbb{R}^{d_x}$, and $y\in \mathbb{R}^{d_y}$. 
\end{assumption}

\begin{definition}
	Given Assumptions~\ref{assumption:f-smooth}-\ref{assumption:g-pl}, we define $\ell=\max\{C_f, L_f, \ell_{f}, L_{g}, \ell_{g}\}$ and the condition number $\kappa=\ell/\mu$. 
\end{definition}

\begin{definition} \label{def:measures}\citep{yang2022faster}
	$(\alpha, \theta)$ is an $(\epsilon_1, \epsilon_2)$-stationary point of $f(\cdot, \cdot)$, if $\|\nabla_{\alpha} f(\alpha, \theta)\|\leq \epsilon_1$ and $\|\nabla_{\theta} f(\alpha, \theta)\|\leq \epsilon_2$. $\alpha$ is an $\epsilon$-stationary point of $\Phi$ if $\|\nabla \Phi(\alpha)\|\leq \epsilon$, where $\Phi(\alpha) = f(\alpha, \theta^*(\alpha))$ and $\theta^*(\alpha) =\arg\max_{\theta\in\mathbb{R}^{d_2}} f(\alpha, \theta)$. 
\end{definition}

\subsection{Convergence Rate}
Based on Assumptions~\ref{assumption:f-smooth}-\ref{assumption:heterogeneous}, we establish the convergence rate of our algorithm below:
\begin{theorem}\label{theorem:main}
	Given Assumptions~\ref{assumption:f-smooth}-\ref{assumption:heterogeneous}, when $\beta_x>0$, $\beta_y>0$, $\beta_z>0$, $\gamma_1=O(L_{\rho})$, $\gamma_2=O(L_{\rho})$, the condition about $\alpha_x$ in Eq.~(\ref{eq:upper-bound-beta-x}), and
	{\small\begin{align}
			& \alpha_y = \alpha_x \ , \quad  \alpha_z = \alpha_x\underbrace{\frac{(\gamma_1 - L_{\rho})^2}{72L^2_{g}}}_{c_{\alpha_z}=O(1)} \ , \quad \alpha_{p} = \alpha_{q} = \alpha_{x}\underbrace{\frac{(\gamma_1-L_{\rho})^4(\gamma_2-L_\rho)^2\mu}{72\times768\gamma_1L_{g}^2(5L_g^2\mu+16\gamma_2^2(\gamma_1-L_\rho))}}_{c_{\alpha_{pq}} = O(1/\kappa)} \ , \notag \\
			& \alpha_r = \alpha_{x}\underbrace{\frac{(\gamma_1-L_{\rho})^4(\gamma_2-L_\rho)^4}{72\times24\times640\gamma_1\gamma_2^2 L_g^4}}_{{c_{\alpha_r}=O(1)}}  \ , \notag
	\end{align}}
	Algorithm~\ref{alg:fedss-bvrfo} has the following convergence upper bound:
	{\small\begin{align}
			&  \quad  \frac{1}{T}\sum_{t=0}^{T-1} \Bigg(  \mathbb{E}[\| \nabla_{x} f_{\rho}(\bar{x}_{t}, \bar{y}_{t}, \bar{z}_{t}) \|^2] + \mathbb{E}[\|\nabla_{y}   f_{\rho}(\bar{x}_{t}, \bar{y}_{t}, \bar{z}_{t}) \|^2] +  \kappa\mathbb{E}[\|\nabla_{z}  f_{\rho}(\bar{x}_{t}, \bar{y}_{t}, \bar{z}_{t}) \|^2 ] \Bigg) \notag \\
			& \leq O\Big(\frac{  \kappa}{\alpha_{x}\eta \rho T}\Big) + O\Big( \frac{\kappa}{\beta \eta^2\rho^2 BKT }\Big) + O\Big( \frac{\kappa\beta\eta^2}{\rho^2K}\Big) + O\Big(\frac{\kappa\tau^2\beta\eta^4\alpha_{x}^2 }{\rho^2K}\Big) + O\Big(\frac{\kappa\tau^4\alpha_{x}^2\beta^2\eta^6}{\rho^2}\Big)  \ .
	\end{align}}
	
\end{theorem}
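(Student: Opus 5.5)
The plan is a Lyapunov (potential-function) analysis on the averaged iterates $\bar{x}_t,\bar{y}_t,\bar{z}_t,\bar{p}_t,\bar{q}_t,\bar{r}_t$, in the style of the smoothed GDA analyses of \cite{zhang2020single, yang2022faster}, combined with STORM variance bounds and consensus-error bounds for periodic averaging.

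\textbf{Deterministic surrogate quantities.} Because $F$ is $\mu_F=(\gamma_1-L_\rho)$-strongly convex in $(x,y)$ and $(\gamma_2-L_\rho)$-strongly concave in $z$, I introduce:
\begin{itemize}
\item the dual function $d(z;p,q,r)=\min_{x,y}F$ with minimizer $(x^*(z;p,q,r),y^*(z;p,q,r))$;
\item the saddle point $(x^*,y^*,z^*)(p,q,r)$ of $F(\cdot;p,q,r)$;
\item the envelope $P(p,q,r)=\max_z\min_{x,y}F$.
\end{itemize}
The potential is
\[
\Psi_t = F(\bar{x}_t,\bar{y}_t,\bar{z}_t;\bar{p}_t,\bar{q}_t,\bar{r}_t) - 2d(\bar{z}_t;\bar{p}_t,\bar{q}_t,\bar{r}_t) + 2P(\bar{p}_t,\bar{q}_t,\bar{r}_t) + \sum_{i}\frac{c_i}{\eta}\,\mathbb{E}\|\bar{u}_{i,t}-\nabla_i F\|^2 .
\]
This is the standard smoothed-GDA potential. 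The $z$-smoothing is handled by treating $r$ symmetrically, following \cite{zheng2023universal}. The extra terms are the momentum errors of the averaged estimators.

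\textbf{One-step descent bounds, in order.}
\begin{itemize}
\item \emph{Primal descent.} $L_F$-smoothness of $F$ in $(x,y)$ gives descent in $(x,y)$ up to the terms $\|\bar{u}_t-\nabla_x F(\bar{\cdot})\|^2$ and $\|\bar{v}_t-\nabla_y F\|^2$.
\item \emph{Dual ascent.} Dual ascent in $z$ on $d$, where $d$ is smooth with constant $O(L_F^2/\mu_F)$ by Danskin's theorem. This forces $\alpha_z=\alpha_x\,\mu_F^2/(72L_g^2)$.
\item \emph{Smoothing variables.} The change from the $p,q,r$ updates is controlled by a Lipschitz bound on $x^*(z;p,q,r)$ and $z^*(p,q,r)$ with respect to $(p,q,r)$. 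This uses error-bound-type lemmas: under Assumption~\ref{assumption:g-pl} (PL of $g$ and of $h_\rho$ in the second argument), the function $d$ and the envelope $P$ satisfy a PL-type relation. That relation lets me bound $\|z^*(p,q,r)-z^*(\bar z)\|$ and the term $\|x^*(z)-x^*(z^*)\|$ by $\|\nabla_z d\|$ with constants of order $\kappa$. This is the source of $c_{\alpha_{pq}}=O(1/\kappa)$, while $c_{\alpha_r}=O(1)$.
\end{itemize}
Summing these yields, with the stated ratios,
\[
\Psi_t-\Psi_{t+1}\ge c\,\alpha_x\eta\Big(\|\bar x_t-x^*\|^2\gamma_1^2+\|\bar y_t-y^*\|^2\gamma_1^2+\tfrac{1}{\kappa}\|\nabla_z d\|^2+\cdots\Big)-(\text{estimator errors})-(\text{consensus errors}).
\]

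\textbf{Stochastic and federated errors.}
\begin{itemize}
\item \emph{STORM recursion.} I apply the standard recursion to each averaged momentum:
\[
\mathbb{E}\|\bar u_{t+1}-\nabla\|^2\le(1-\beta\eta^2)^2\mathbb{E}\|\bar u_t-\nabla\|^2+\frac{2\beta^2\eta^4\sigma^2}{BK}+\frac{2L^2}{K^2}\sum_k\mathbb{E}\|\Delta_{t}^{(k)}\|^2 .
\]
Telescoping gives the $\frac{1}{\beta\eta^2 BKT}$ (initial-batch) and $\frac{\beta\eta^2}{K}$ terms.
\item \emph{Consensus error.} Within a period of length $\tau$, I bound $\frac1K\sum_k\|x^{(k)}_t-\bar x_t\|^2$ (and likewise for the other variables and estimators) by $\tau^2\alpha_x^2\eta^2$ times the estimator deviations. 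Those deviations are in turn bounded using Assumption~\ref{assumption:heterogeneous} and the STORM recursion. This produces the $\tau^2\beta\eta^4\alpha_x^2/K$ and $\tau^4\alpha_x^2\beta^2\eta^6$ terms once $\alpha_x$ satisfies the smallness condition Eq.~(\ref{eq:upper-bound-beta-x}).
\end{itemize}

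\textbf{Telescoping and conversion.} I telescope over $t$ and divide by $\alpha_x\eta T$. To convert to the target quantities, note that $\nabla F=\rho\nabla f_\rho+(\text{smoothing terms})$, and the smoothing terms are bounded by $\gamma_1\|\bar x_t-x^*\|$-type distances already controlled by the potential. Dividing by $\rho^2$ converts the bounds to $\nabla f_\rho$ and gives the $1/\rho$ and $1/\rho^2$ factors. The $\kappa$ weight on $\nabla_z$ and the global $\kappa$ prefactor come from the $1/\kappa$ coefficient in front of $\|\nabla_z d\|^2$.

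\textbf{Main obstacle.} The hardest step is the PL-based error-bound lemma for the smoothed dual and envelope. Without strong concavity of the original problem, bounding $\|z^*(\bar z)-z^*(p,q,r)\|$ needs the PL constant of $h_\rho$ to transfer through the proximal terms. I would first try to show that $-d(\cdot;p,q,r)$ satisfies a quadratic-growth/PL inequality with constant $\Theta(\mu)$ when $\gamma_1,\gamma_2=\Theta(L_\rho)$, and then follow \cite{yang2022faster}'s argument. Getting the constants to close simultaneously with the chosen $c_{\alpha_z},c_{\alpha_{pq}},c_{\alpha_r}$ is the delicate bookkeeping.
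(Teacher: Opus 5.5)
\textbf{Gap: your potential does not decrease along the $r$-update.} Handling $r$ is the crux of the doubly smoothed analysis. Since $r$ is the proximal center of the \emph{max} variable $z$, moving $\bar r_t$ toward $\bar z_{t+1}$ raises $F$ and $d$ by the same amount, $\frac{\gamma_2}{2}\big(1-(1-\alpha_r\eta)^2\big)\|\bar z_{t+1}-\bar r_t\|^2$, so $F-2d$ drops by that amount. But $\nabla_r P(p,q,r)=\gamma_2\big(z^*(p,q,r)-r\big)$, so $2P$ rises by about $2\gamma_2\alpha_r\eta\langle z^*(p,q,\bar r_t)-\bar r_t,\bar z_{t+1}-\bar r_t\rangle$. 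Once $\bar z_{t+1}\approx z^*$, that is roughly twice the drop.

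Hence even in the ideal case $\bar z_{t+1}=z^*$, your $\Psi$ \emph{increases} by about $\gamma_2\alpha_r\eta\|\bar z_{t+1}-\bar r_t\|^2$. Nothing in your descent inequality controls $\|z-r\|$: at the inner maximizer $\gamma_2(z^*-r)=-\nabla_2 g(x,z^*)$, which need not be small. Yet you need exactly this quantity at the end, since $\rho\nabla_z f_\rho=\nabla_z F+\gamma_2(z-r)$. So $r$ cannot be placed ``symmetrically'' with $p,q$ inside one envelope $P(p,q,r)$.

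The paper, as in \cite{zheng2023universal}, uses $H_z(p,q)=\max_r H(p,q,r)$, which does not depend on $r$. The $r$-step then contributes only the favourable term $-\frac{\gamma_2}{2\alpha_r\eta}\|\bar r_{t+1}-\bar r_t\|^2$. The price is that the $(p,q)$-change of $H_z$ is evaluated at $\hat r=r^*(\bar p_{t+1},\bar q_{t+1})$. One must therefore bound $\|x^*(\bar p_{t+1},\bar q_{t+1},r^+)-x^*(\bar p_{t+1},\bar q_{t+1},\hat r)\|^2$ by $\frac{\gamma_2^2}{\mu(\gamma_1-L_\rho)}\|z^*(\bar p_{t+1},\bar q_{t+1},r^+)-r^+\|^2$ plus tracking errors (Lemma~\ref{lemma:xy_xy_star_r}).

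\textbf{Where PL is actually needed.} It is needed in that error bound. $H(p,q,\cdot)$ is built from minus a Moreau envelope of the nonconvex $g(x,\cdot)$, so it is in general not concave in $r$. The resulting constant $\frac{\gamma_2^2}{\mu(\gamma_1-L_\rho)}$ is what forces $c_{\alpha_{pq}}=O(1/\kappa)$.

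Your use of PL is misplaced. $d(\cdot;p,q,r)$ is a pointwise minimum over $(x,y)$ of $(\gamma_2-L_\rho)$-strongly concave functions of $z$, hence itself $(\gamma_2-L_\rho)$-strongly concave. So $\|\bar z_t-z^*(p,q,r)\|\le\|\nabla_z d\|/(\gamma_2-L_\rho)$ and $\|x^*(z;p,q,r)-x^*(z';p,q,r)\|\le\omega_2\|z-z'\|$ hold with $O(1)$ constants and no PL (Lemmas~\ref{lemma:x_y_star_z_continuity}--\ref{lemma:z_star_continuity}). The ``main obstacle'' you identify is therefore not one, and it cannot be the source of the $1/\kappa$ in $c_{\alpha_{pq}}$.

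\textbf{Two smaller points.} First, $c_{\alpha_z}$ does not come from the smoothness of $d$, which only caps $\alpha_z\eta$. It comes from absorbing the inexact-dual-gradient error $\|\nabla_z d(\bar z_t)-\nabla_z F(\bar x_t,\bar y_t,\bar z_t)\|\le L_g\|x^*(\bar z_t)-\bar x_t\|$, which is $O\big(\frac{L_g}{\gamma_1-L_\rho}\big)$ times the primal gradient norm, into the primal descent term. Second, dividing by $\rho^2$ gives $1/\rho^2$ in the first term. The stated $O\big(\frac{\kappa}{\alpha_x\eta\rho T}\big)$ additionally needs an $O(\rho)$ bound on the initial potential gap, which the paper obtains from a warm start of $y_0,z_0$.
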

Note that the convergence rate is established with respect to the original federated bilevel objective in Eq.~(\ref{eq:first_loss}), rather than the smoothed objective in Eq.~(\ref{eq:smooth_loss}).

\begin{corollary}\label{corollary}
	Given Assumptions~\ref{assumption:f-smooth}-\ref{assumption:heterogeneous}, by setting
	{\small\begin{align}
			& \rho = O\left(\frac{\epsilon}{\kappa^3}\right) \ , \quad  \alpha_x = O\left(1\right)  \ , \quad  \beta_x = O\left(\frac{1}{K}\right) \ , \quad \beta_y = O\left(\frac{1}{K}\right) \ , \quad  \beta_z = O\left(\frac{1}{K}\right) \ , \notag \\
			& \eta = O\left(\frac{K\epsilon^2}{\kappa^{7/2}}\right)\ , \quad  \tau =  O\left(\frac{\kappa^{7/2}}{K\epsilon^2}\right)  \ , \quad B = O\left(\frac{\kappa^{13/2}}{K\epsilon^3}\right) \ , \quad T = O\left(\frac{\kappa^{15/2}}{K\epsilon^5}\right) \ .  
	\end{align}}
	we obtain an $(\epsilon, \epsilon/\sqrt{\kappa})$-stationary point, in the sense that 
	{\small\begin{align}
			&    \frac{1}{T}\sum_{t=0}^{T-1} \Bigg(  \rho^2\mathbb{E}[\| \nabla_{x} f_{\rho}(\bar{x}_{t}, \bar{y}_{t}, \bar{z}_{t}) \|^2] +  \rho^2\mathbb{E}[\|\nabla_{y}   f_{\rho}(\bar{x}_{t}, \bar{y}_{t}, \bar{z}_{t}) \|^2] +  \kappa\rho^2\mathbb{E}[\|\nabla_{z}  f_{\rho}(\bar{x}_{t}, \bar{y}_{t}, \bar{z}_{t}) \|^2 ] \Bigg) \leq O(\epsilon^2) \ .  \notag 
	\end{align}}
\end{corollary}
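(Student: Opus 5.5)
The proof of Corollary~\ref{corollary} is a direct substitution into the bound of Theorem~\ref{theorem:main}. I would first multiply both sides of the theorem's bound by $\rho^2$. The left-hand side then becomes exactly the quantity in the corollary. The five right-hand terms become
\begin{align}
O\Big(\frac{\kappa\rho}{\alpha_x\eta T}\Big) + O\Big(\frac{\kappa}{\beta\eta^2 BKT}\Big) + O\Big(\frac{\kappa\beta\eta^2}{K}\Big) + O\Big(\frac{\kappa\tau^2\beta\eta^4\alpha_x^2}{K}\Big) + O\big(\kappa\tau^4\alpha_x^2\beta^2\eta^6\big) \ . \notag
\end{align}
Here $\beta$ denotes the common order of $\beta_x,\beta_y,\beta_z$. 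So the task reduces to checking that each of these five terms is $O(\epsilon^2)$ under the stated parameter choices. I would also verify that the choices are admissible: $\alpha_x=O(1)$ must satisfy the upper-bound condition in Eq.~(\ref{eq:upper-bound-beta-x}), and the step-size products such as $\beta\eta^2\le 1$ must hold so that the STORM weights $1-\beta\eta^2$ lie in $(0,1)$. With $\eta=O(K\epsilon^2/\kappa^{7/2})$ and $\beta=O(1/K)$, these hold for small $\epsilon$.

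Next I would check the terms one by one.

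\emph{First term.} It equals $\kappa\rho/(\eta T)$, which is $\kappa\cdot(\epsilon/\kappa^3)\cdot\kappa^{7/2}/(K\epsilon^2)\cdot K\epsilon^5/\kappa^{15/2}=\epsilon^4/\kappa^{6}$. This is well below $\epsilon^2$.

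\emph{Second term.} With $\beta=1/K$ it becomes $\kappa/(\eta^2BT)$. Substituting gives $\kappa\cdot\kappa^{7}/(K^2\epsilon^4)\cdot K\epsilon^3/\kappa^{13/2}\cdot K\epsilon^5/\kappa^{15/2}=\epsilon^4/\kappa^{6}$. The exponents may not close exactly as I have sketched, because the theorem's $O(\cdot)$ absorbs $\kappa$ factors hidden in the constants $c_{\alpha_{pq}}=O(1/\kappa)$ and in $\gamma_i=O(L_\rho)$.

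\emph{Third term.} It is $\kappa\eta^2/K^2=K^0\kappa^{-6}\epsilon^4$, again small.

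\emph{Fourth and fifth terms.} The choice $\tau\eta=O(1)$ makes these $O(\kappa\eta^2/K^2)$ and $O(\kappa\eta^2/K^2)$, so they collapse to the third term.

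Because of this, I would actually redo the bookkeeping by tracking the hidden $\kappa$-dependence inside the $O(\cdot)$ of the theorem. The exposed term-by-term bounds above are loose enough ($\epsilon^4$ versus $\epsilon^2$) that extra polynomial $\kappa$ factors are tolerable. The real constraint that fixes $\rho=\epsilon/\kappa^3$ comes from the penalty bias: the stationarity of $f_\rho$ must relate to the original bilevel problem.

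\emph{Main obstacle.} Reconciling the exact powers of $\kappa$ so that the stated $T=O(\kappa^{15/2}/(K\epsilon^5))$ is the binding choice. That is, I would identify which term is tight, presumably the first term once its hidden $\kappa$ factors (for instance from $1/c_{\alpha_{pq}}$ and the potential-function constants) are restored. First I would make the $\kappa$-dependence of the theorem's first term explicit and set it equal to $\epsilon^2$, which determines $T$. Then I would choose $B$ to balance the second term, and $\tau=1/\eta$ to keep the drift terms dominated.

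Finally, I would read off the $(\epsilon,\epsilon/\sqrt\kappa)$-stationarity. By Jensen's inequality there is an iterate at which $\rho\|\nabla_{x,y}f_\rho\|\le\epsilon$ and $\rho\|\nabla_z f_\rho\|\le\epsilon/\sqrt{\kappa}$, using the weight $\kappa$ on the $z$-term. The communication complexity $T/\tau=O(\kappa^4/\epsilon^3)$ then follows by division.
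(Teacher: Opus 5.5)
Your substitution argument is correct and is the paper's own route: you plug the parameters into the bound of Theorem~\ref{theorem:main}, and after multiplying by $\rho^2$ each of the five terms is $O(\epsilon^4/\kappa^6)\le O(\epsilon^2)$, exactly as you computed, so the stated corollary follows with no further bookkeeping. The slack you noticed does not come from hidden $\kappa$ factors. The paper substitutes into the theorem's bound \emph{without} the $\rho^2$ weight, and there all five terms come out exactly $\Theta(\epsilon^2)$: for example $\kappa/(\alpha_x\eta\rho T)=\kappa\beta\eta^2/(\rho^2K)=\epsilon^2$, and $\tau\eta=1$ reduces the last two terms to the third. So the parameters are tuned for the unweighted measure, the $\rho^2$-weighted claim holds a fortiori since $\rho\le 1$, and your ``main obstacle'' paragraph (and the penalty-bias discussion) is unnecessary for this statement.
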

Note that $B$ denotes only the initial batch size used in the first iteration; throughout the subsequent training process, the algorithm operates with an $O(1)$ batch size.

\begin{remark}\label{remark:tran}
	\textbf{(Stationarity translation.)} {Following~\cite{yang2022faster}, we need to translate the $(\epsilon, \epsilon/\sqrt{\kappa})$-stationary solution to the $\epsilon$-stationary solution. The complete translation procedure and analysis are provided in Appendix~\ref{app:translation}. Importantly, after this translation, the iteration complexity to find the $\epsilon$-stationary solution is still $T=O\left(\frac{\kappa^{15/2}}{K\epsilon^5}\right)$. }
\end{remark}

\begin{remark}
	\textbf{(Better convergence rate under milder assumption.)} The iteration complexity $T = O\left(\frac{\kappa^{15/2}}{K\epsilon^5}\right) $ indicates that the proposed method enjoys linear speedup with respect to the number of clients $K$. Compared with existing first-order federated bilevel methods, our result improves upon the $O(\frac{1}{K\epsilon^7})$ complexity established in~\cite{yang2025first} and matches the optimal $O(\frac{1}{K\epsilon^5})$ dependence reported in \cite{zhang2025federated}. In terms of condition-number dependence $\kappa$, our method achieves a complexity of $O(\kappa^{15/2})$, which significantly improves over the $O(\kappa^{10})$ bounds in \cite{zhang2025federated}, and $O(\kappa^{12})$ bounds in \cite{chen2024finding}, respectively. Moreover, this result is established under a milder assumption in federated bilevel optimization, requiring only a PL condition on the lower-level function rather than assuming strong convexity as in~\cite{yang2025first,zhang2025federated}.
\end{remark}

\begin{remark}
	\textbf{(Better communication complexity.)}
	Regarding communication efficiency, Algorithm~\ref{alg:fedss-bvrfo} requires $T/ \tau = O\left( \kappa^{4}/\epsilon^3\right)$ communication rounds. This significantly improves upon the $O\left( \kappa^{9}/\epsilon^4\right)$ communication complexity reported in~\cite{zhang2025federated}, in terms of both the condition-number dependence $\kappa$ and accuracy $\epsilon$.
\end{remark}

\begin{remark}
	\textbf{(Unified learning rate scale.)}
	{Existing first-order federated bilevel methods \citep{yang2025first, zhang2025federated} typically require multi-scale learning rates for different variables in the upper- and lower-level problems, with the step sizes tightly coupled by the condition number $\kappa$. For instance, in \cite{zhang2025federated}, the prescribed learning-rate ratios between $x$ and $y$, as well as between $x$ and $z$, are on the order of $O(1 / \kappa^2)$. By contrast, the proposed smoothed formulation enables the updates of $x$, $y$, and $z$ to operate under the same order of step sizes with respect to $\kappa$. This unified scaling substantially simplifies hyperparameter tuning and improves practical robustness.}
\end{remark}

The proof sketch and full proof are provided in Appendices~\ref{app:sketch} --~\ref{app:proof}.

\section{Experiments}\label{sec:exp}
\subsection{Nonconvex Hyperparameter Optimization}
We consider a nonconvex hyperparameter optimization problem with objective defined in Eq.~(\ref{eq:exp-hyperparameter-opt}). In this bilevel formulation, the lower-level problem trains a two-layer fully connected neural network, while the upper-level problem tunes the hyperparameters. Although the lower-level objective is nonconvex, prior work shows that overparameterized neural networks can satisfy the PL condition~\citep{liu2022loss}. Specifically, the lower-level problem optimizes network parameters $y = \{y_1, y_2\}$ using training data across clients, whereas the upper-level problem updates hyperparameters $x = \{x_1, x_2\}$ based on validation loss. The resulting bilevel formulation is shown as follows:
\begin{align}\label{eq:exp-hyperparameter-opt}
	& \qquad \quad \min_{x=\{x_1, x_2\}} \quad \frac{1}{K} \sum_{k=1}^{K} \ \mathcal{L}^{(k)}(y^*(x); \mathcal{D}^{(k)}_{val}) \notag  \\
	& s.t. \quad   y^*(x) =  \arg\min_{y=\{y_1, y_2\}}\frac{1}{K} \sum_{k=1}^{K} \mathcal{L}^{(k)}(y; \mathcal{D}^{(k)}_{tr})  + \mathcal{R}_1(x)  + \mathcal{R}_2(x)\ . 
\end{align}
Here, $y_1=[y_{1, pq}]\in\mathbb{R}^{d_1\times d_2}$ and $y_2=[y_{2, pq}]\in\mathbb{R}^{d_2\times d_3}$ denote the weight matrices of the first and second network layers. The hyperparameters $x_1=[x_{1, q}]\in\mathbb{R}^{d_2}$ and $x_2=[x_{2, q}]\in\mathbb{R}^{d_3}$ control the regularization terms: $\mathcal{R}_1(x)=\frac{1}{d_2}\sum_{q=1}^{d_2} \exp(x_{1, q})\frac{1}{d_1}\sum_{p=1}^{d_1}y_{1, pq}^2$ and  $\mathcal{R}_2(x)= \frac{1}{d_3}\sum_{q=1}^{d_3} \exp(x_{2, q})\frac{1}{d_2}\sum_{p=1}^{d_2}y_{2, pq}^2$. In our experiments, $d_1$ is set to the number of input features, $d_2=20$, and $d_3=1$ for binary classification. 

Since the lower-level objective in Eq.~(\ref{eq:exp-hyperparameter-opt}) is nonconvex, existing federated bilevel methods do not have convergence guarantees and cannot directly apply to this setting. In particular, both second-order methods~\citep{tarzanagh2022fednest, huang2023achieving, gao2022convergence, li2023communicationefficient} and fully first-order methods~\citep{zhang2025federated, yang2025first} rely critically on the strong convexity assumption for the lower-level problem. To enable a fair and meaningful comparison, we therefore benchmark against algorithms specifically designed for nonconvex bilevel optimization, most notably F$^2$BSA~\citep{kwon2024penalty}, adapted to the federated environment. We evaluate both the double-loop and single-loop variants. For the double-loop version, we set the number of inner iterations to 1, 5, and 10, denoted as \textbf{Fed-F$^2$BSA-1}, \textbf{Fed-F$^2$BSA-5}, and \textbf{Fed-F$^2$BSA-10}, respectively. We also include the variance-reduced single-loop variant, denoted as \textbf{Fed-F$^2$BSA-VR}, which incorporates a STORM-type estimator. All baseline learning rates follow the guidelines in Corollaries 5.2 and 5.5 of~\cite{kwon2024penalty}, which require different learning-rate scales across variables.

\begin{figure*}[ht]
	\vspace{-5pt}
	\begin{center}
		\centerline{\includegraphics[scale=0.25]{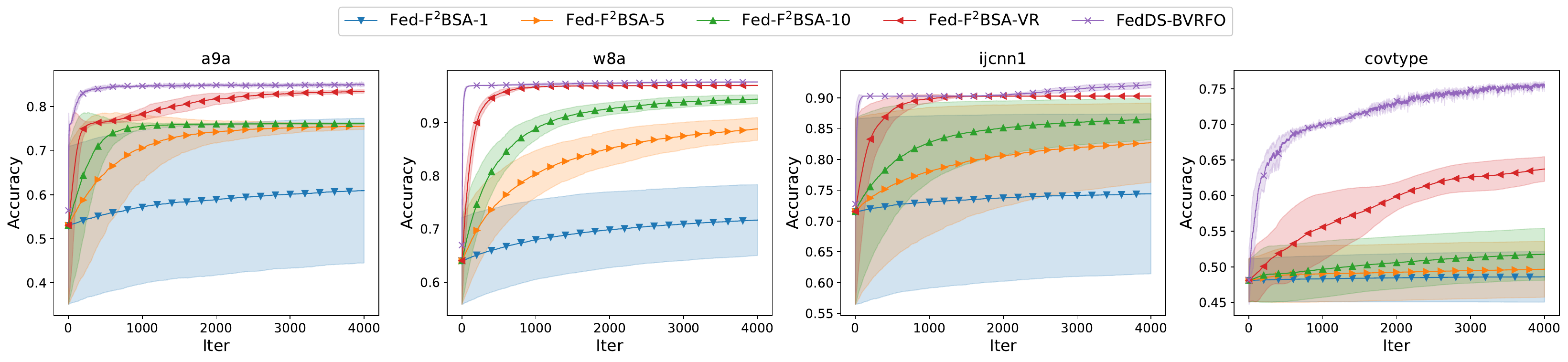}}
		\vspace{-0.1in}
		\caption{The test accuracy versus
			the training epochs, $\tau = 4$, nonconvex. }
		\label{fig:p_4_acc}
	\end{center}
	\vspace{-5pt}
\end{figure*}

\begin{figure*}[ht]
	\vspace{-0.25in}
	\begin{center}
		\centerline{\includegraphics[scale=0.25]{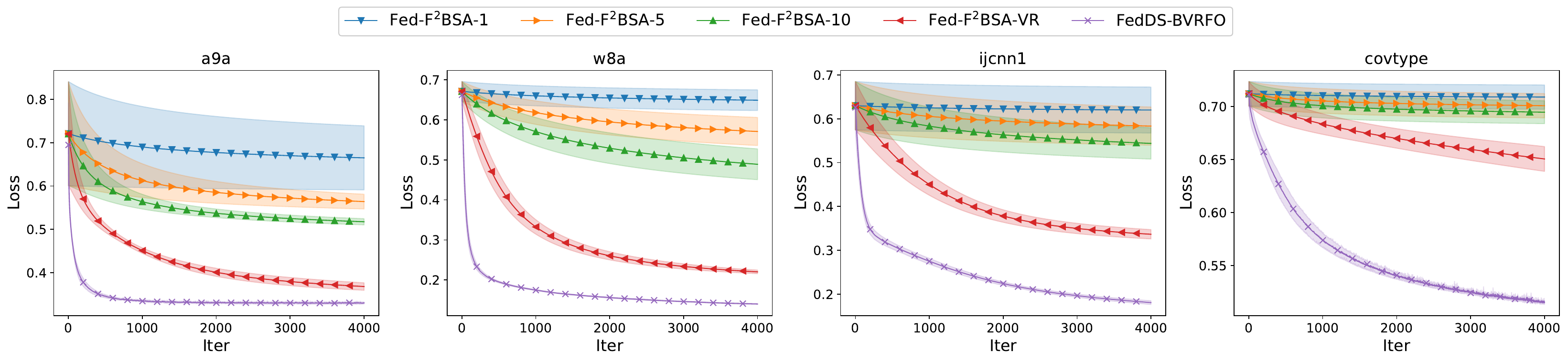}}
		\vspace{-5pt}
		\caption{The upper-level loss
			function value versus
			the training epochs, $\tau = 4$, nonconvex.}
		\label{fig:p_4_loss}
	\end{center}
	\vspace{-15pt}
\end{figure*}

We evaluate our method on four standard LIBSVM datasets: a9a, w8a, ijcnn1, and covtype\footnote{\url{https://www.csie.ntu.edu.tw/~cjlin/libsvmtools/datasets/}}. For each dataset, we randomly reserve $10\%$ of the samples for testing, and split the remaining data into training and validation sets with a $70\%/ 30\%$ ratio. To simulate the federated setting, both training and validation data are uniformly distributed across eight workers. Each worker uses a mini-batch size of $10$. For our method, we set $\epsilon = 0.1$, $\rho = \epsilon / 5$. The learning rates are chosen according to Corollary~\ref{corollary}, using a unified scale for both upper- and lower-level variables. For all methods, we conduct experiments over five random seeds and report the mean and standard deviation.

When the communication period is set to $\tau = 4$, the results in Figure~\ref{fig:p_4_acc} and Figure~\ref{fig:p_4_loss} show that the proposed method consistently achieves higher test accuracy and lower upper-level objective values than all baselines, indicating faster and more stable convergence. Moreover, its smaller standard deviations across five random seeds indicate reduced sensitivity to stochastic variation. We also observe that the double-loop variants of Fed-F$^2$BSA are sensitive to the number of inner iterations: increasing the inner-loop length improves performance but incurs higher computational cost. Although the variance-reduced single-loop variant Fed-F$^2$BSA-VR alleviates this issue, it remains inferior to our method. These results highlight the benefit of the proposed method in achieving efficient and stable convergence without additional inner-loop updates. Additional results for $\tau=16$ are provided in Appendix~\ref{app:exp}.

\begin{figure*}[!h]
	\centering 
	\vspace{-5pt}
	\hspace{-16pt}
	\subfigure[Different $\eta$]{\label{fig:diff_eta}
		\includegraphics[scale=0.225]{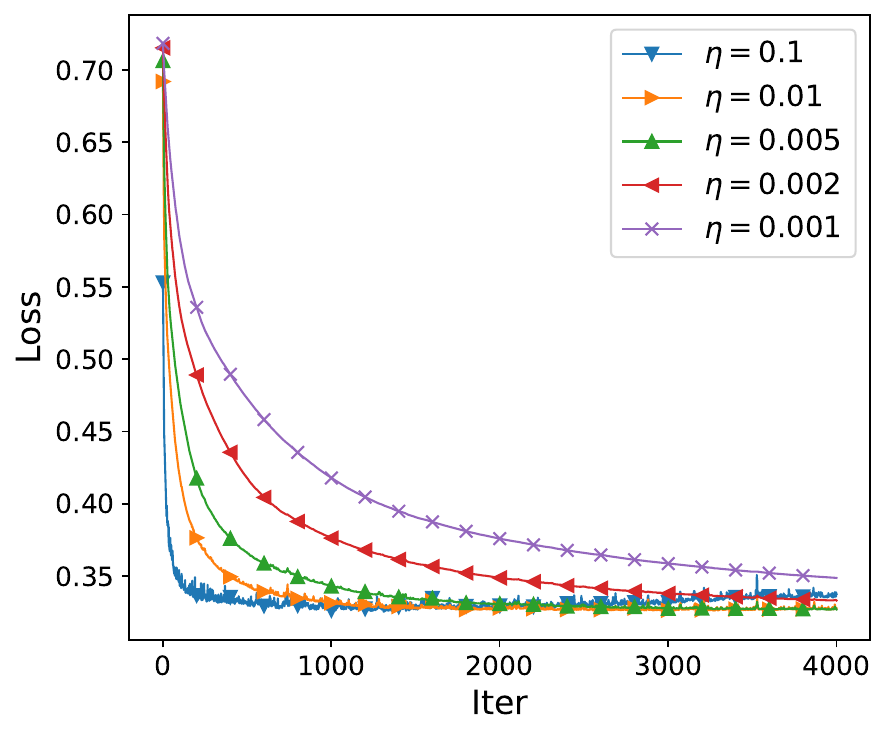}
	}
	\hspace{-5pt}
	\subfigure[Different $\rho$]{
		\label{fig:diff_rho}
		\includegraphics[scale=0.225]{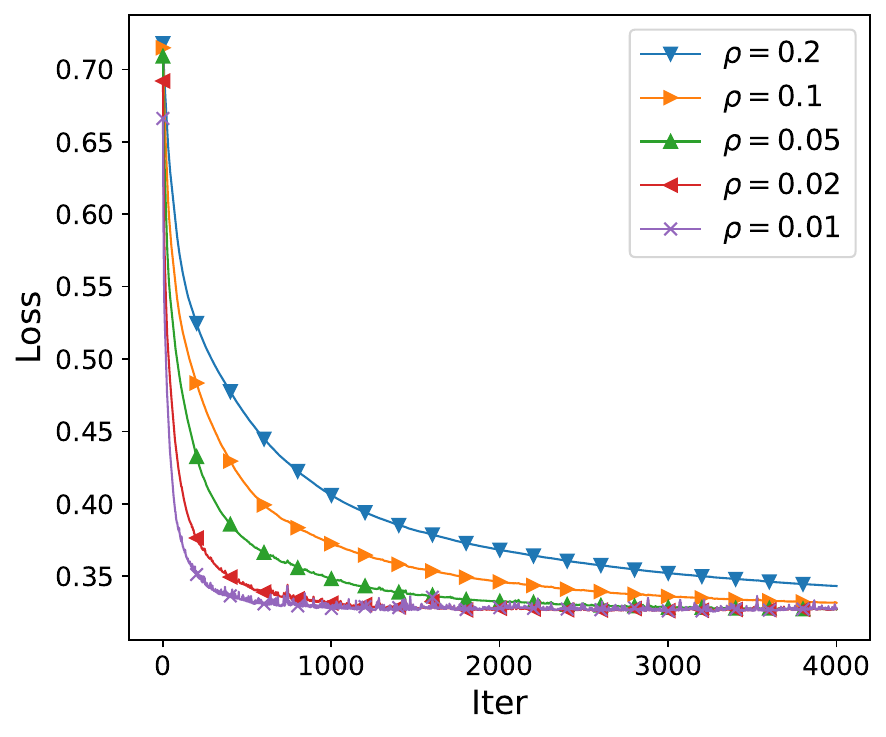}
	}
	\hspace{-5pt}
	\subfigure[Different $\gamma$]{
		\label{fig:diff_gamma}
		\includegraphics[scale=0.225]{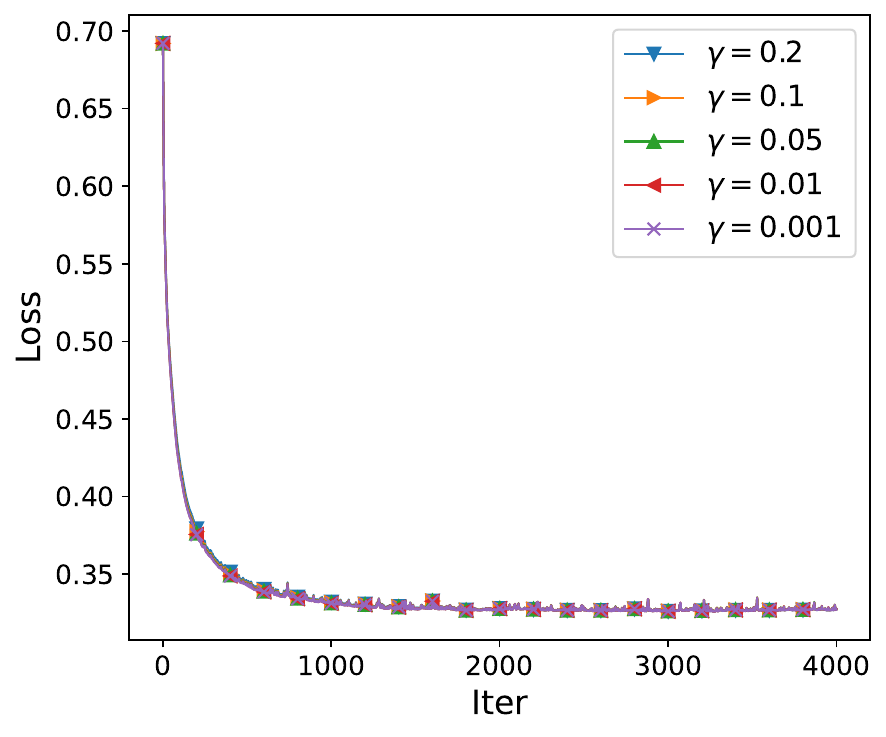}
	} 
	\hspace{-5pt}
	\subfigure[Different $\beta\eta^2$]{
		\label{fig:diff_beta}
		\includegraphics[scale=0.225]{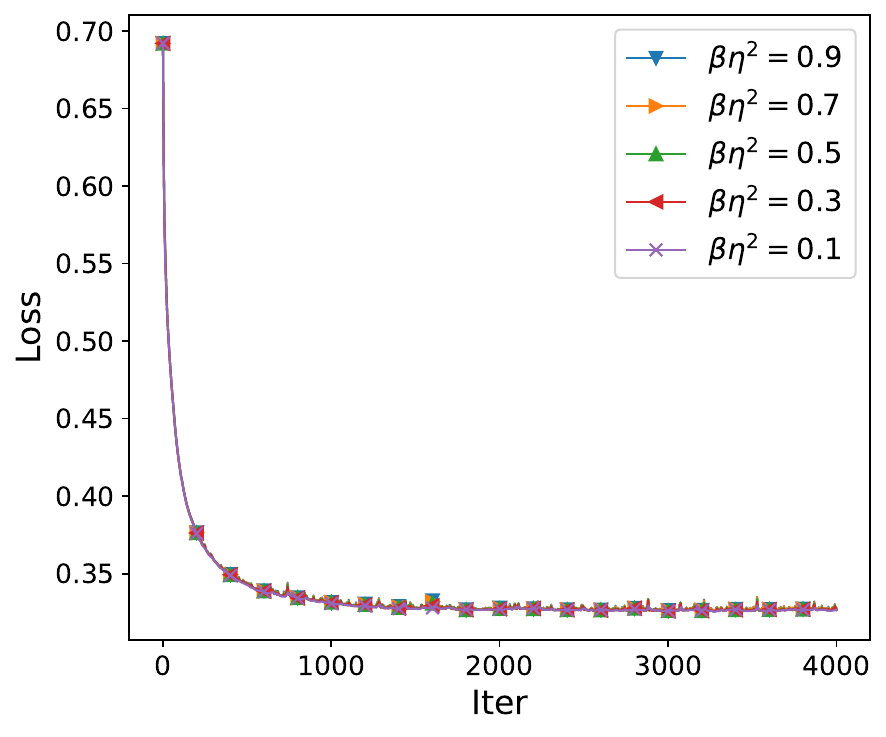}
	} 
	\vspace{-0.1in}
	\caption{The upper-level loss
		function value versus
		the training epochs on a9a, $\tau = 16$, nonconvex.}
	\label{fig:diff}
	\vspace{-0.1in}
\end{figure*}

We further study the sensitivity of our method to several hyperparameters, including the learning rate $\eta$, penalty parameter $\rho$, smoothing parameter $\gamma: = \gamma_1 = \gamma_2$, and the momentum coefficient $\beta \eta^2$, where $\beta := \beta_x = \beta_y = \beta_z$, as shown in Figure~\ref{fig:diff}. Figure~\ref{fig:diff_eta} indicates that a larger $\eta$ accelerates convergence but may cause instability, while Figure~\ref{fig:diff_rho} shows that a smaller $\rho$ (i.e., a larger penalty $1/\rho$) improves convergence speed. Moreover, Figures~\ref{fig:diff_gamma} and~\ref{fig:diff_beta} indicate that our method is relatively robust to $\gamma$ and $\beta$, suggesting that the proposed smoothing and momentum parameters do not require extensive tuning in practice.

Finally, additional comparisons with federated bilevel baselines on nonconvex–strongly-convex tasks are reported in Appendix~\ref{app:exp}.

\subsection{Model Pruning}
To provide additional empirical evidence, we evaluate our method on a structured parameter reduction problem, namely model pruning \citep{zhang2022advancing}. This task aims to identify a sparse subnetwork by selectively disabling redundant weights via a binary selection mask. In this framework, the sparsity pattern and the parameter retraining process are coupled through a bilevel formulation: the lower-level problem optimizes the network weights under a fixed selection pattern, while the upper-level problem determines which parameters should be retained and which should be pruned. The corresponding formulation is given by
\begin{align}\label{eq:model-pruning}
	& \min_{x} \underbrace{\frac{1}{K}\sum_{k=1}^{K}\ell^{(k)}(x \odot y^*(x))}_{\text{Pruning task}} \qquad s.t. \quad \underbrace{y^*(x) = \arg\min_{y}  \frac{1}{K}\sum_{k=1}^{K}\ell^{(k)}(x \odot y)}_{\text{Sparsity-fixed model retraining}}  \ , 
\end{align}
where $y\in\mathbb{R}^{d}$ represents the network weights and $x\in \{0, 1\}^{d}$ encodes the selection pattern, with zero entries indicating removed connections.

To solve Eq.~(\ref{eq:model-pruning}), we instantiate this setup using a ResNet20 architecture trained on CIFAR10 and CIFAR100. Due to the overparameterized nature of deep neural networks relative to the complexity of these datasets, the lower-level objective satisfies the PL condition under standard assumptions \citep{liu2022loss}. Our algorithm is then applied to jointly determine the model parameters and the sparsity structure, with a target sparsity level of $80\%$ (i.e., retaining $20\%$ of the weights).

All experiments are conducted under the same setup as in the previous section, except that we consider a heterogeneous data distribution across clients ($\delta > 0$). Specifically, the data are partitioned according to a Dirichlet distribution \citep{hsu2019measuring} with $Dir(0.5)$ , resulting in a non-IID setting. The testing accuracy and upper-level loss over iterations are reported in Figure~\ref{fig:mp}. As shown in the figure, our method consistently achieves higher accuracy and lower objective values at a faster rate compared to competing approaches, demonstrating improved optimization efficiency in this setting.

\begin{figure*}[ht]
	\vspace{-0.1in}
	\begin{center}
		\centerline{\includegraphics[scale=0.25]{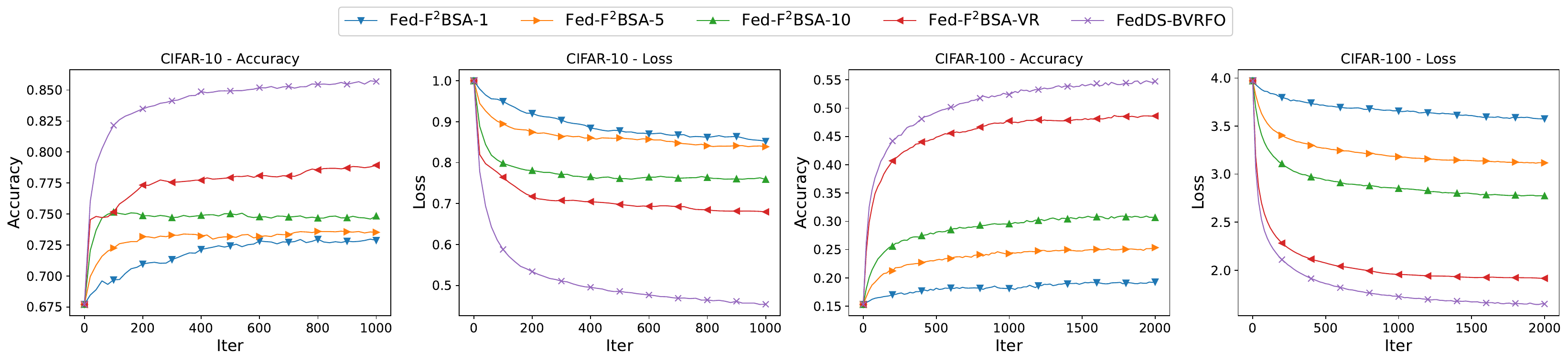}}
		\vspace{-0.1in}
		\caption{Testing accuracy and upper-level loss for model pruning, $\tau = 4$, nonconvex.}
		\label{fig:mp}
	\end{center}
	\vspace{-15pt}
\end{figure*}

\section{Conclusion}
In this paper, we proposed a stochastic doubly smoothed framework for federated bilevel optimization, leading to a fully first-order algorithm with unified learning-rate scaling. By integrating smoothing into a Lagrangian reformulation, the proposed method effectively stabilizes the coupled dynamics across variables and avoids expensive second-order computations. We further established convergence guarantees under relaxed nonconvex lower-level conditions and derived improved sample and communication complexities with favorable dependence on the condition number. Extensive experiments on benchmark tasks, including hyperparameter optimization and model pruning, demonstrate the efficiency and robustness of our approach compared to existing methods.

\bibliographystyle{abbrv}
\bibliography{sample-base}

\appendix
\section{Additional Experiments}
\label{app:exp}
\vspace{-5pt}
\subsection{Nonconvex Hyperparameter Optimization}
\vspace{-5pt}
We further evaluate all methods under a larger communication interval ($\tau = 16$), as shown in Figure~\ref{fig:p_16_acc} and Figure~\ref{fig:p_16_loss}.  A similar performance pattern is observed, where our method continues to achieve superior accuracy and lower upper-level loss, demonstrating its robustness and efficiency under reduced communication frequency.

\begin{figure*}[ht]
\vspace{-0.1in}
\begin{center}
\centerline{\includegraphics[scale=0.25]{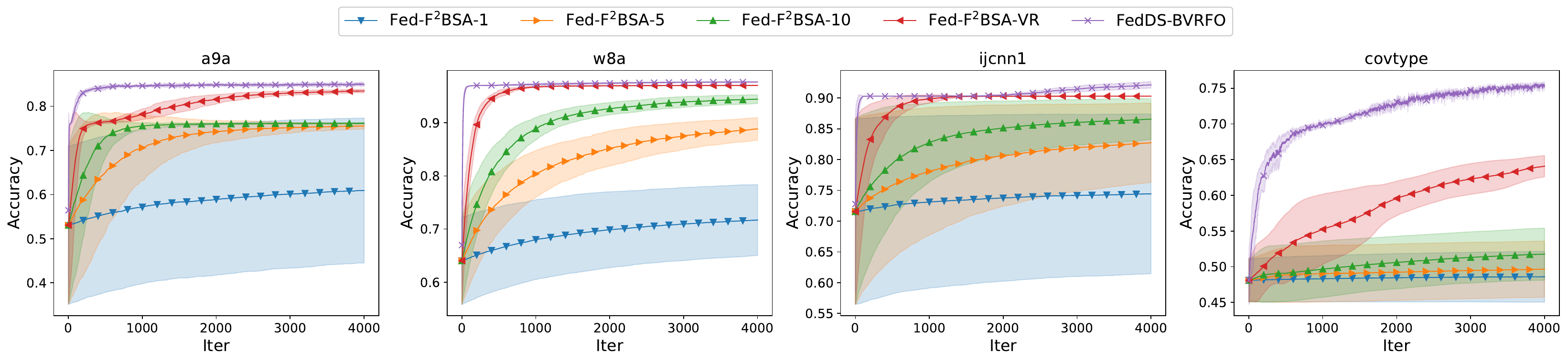}}
\vspace{-0.1in}
\caption{The test accuracy versus
the training epochs, $\tau = 16$, nonconvex.}
\label{fig:p_16_acc}
\end{center}
\vspace{-5pt}
\end{figure*}

\begin{figure*}[ht]
\vspace{-0.3in}
\begin{center}
\centerline{\includegraphics[scale=0.25]{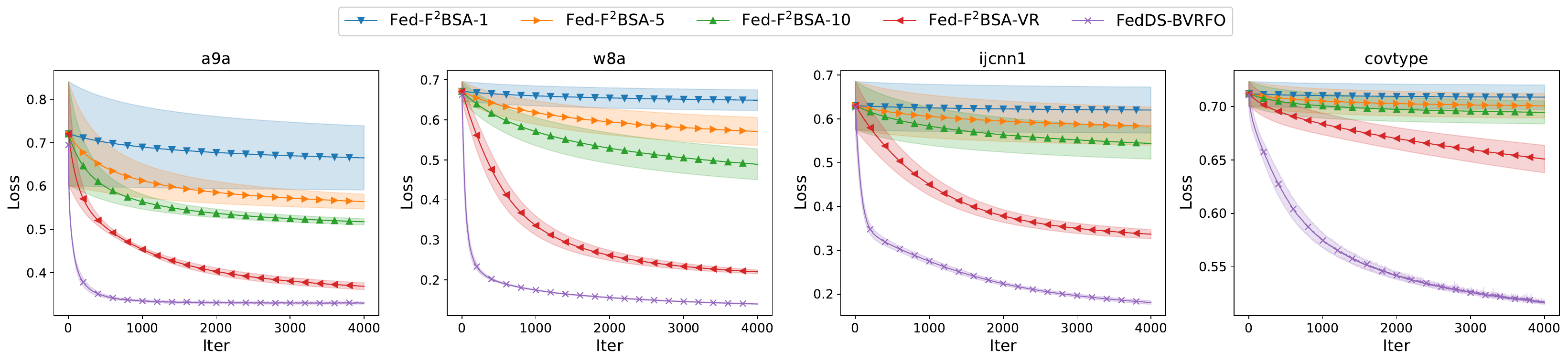}}
\vspace{-0.1in}
\caption{The upper-level loss
function value versus
the training epochs, $\tau = 16$, nonconvex.}
\label{fig:p_16_loss}
\end{center}
\vspace{-5pt}
\end{figure*}

\subsection{Hyperparameter Optimization on Nonconvex-Strongly-Convex Bilevel Problem}
\vspace{-5pt}
To provide a more comprehensive comparison with existing federated stochastic bilevel baselines, we conduct additional experiments on a hyperparameter optimization task under a nonconvex–strongly-convex bilevel formulation. Specifically, 
	{\begin{align}\label{eq:exp-hyperparameter-opt_strongly}
		& \qquad \min_{x\in \mathbb{R}^d} \frac{1}{K} \sum_{k=1}^{K}\frac{1}{m^{(n)}}\hat{\mathcal{L}}^{(k)}(y^*(x)^T; \mathcal{D}^{(k)}_{val})  \\
        & s.t. \quad  y^*(x) =  \arg\min_{y\in \mathbb{R}^{d\times c}}\frac{1}{K} \sum_{k=1}^{K}  \hat{\mathcal{L}}^{(k)}(y;\mathcal{D}^{(k)}_{tr}) + \hat{\mathcal{R}}(x) \ , \notag
	\end{align}
    }
where the lower-level problem learns  the parameter $y\in \mathbb{R}^{d\times c}$ of a logistic regression model using the training data, while the upper-level problem optimizes the hyperparameter $x\in \mathbb{R}^d$ with regularization term $\hat{\mathcal{R}}(x) =\frac{1}{d}\sum_{q=1}^{d}\exp(x_q) \frac{1}{c}\sum_{p=1}^{c} y_{pq}^2 $ based on the validation data. 

We compare Algorithm~\ref{alg:fedss-bvrfo} with four state-of-the-art second-order methods: FedNEST~\citep{tarzanagh2022fednest}, FedMBO~\citep{huang2023achieving}, LocalBSGVRM~\citep{gao2022convergence}, and FedBiOAcc~\citep{li2023communicationefficient}, as well as two first-order methods: MemFBO~\citep{yang2025first} and FedSVRBGD-FO~\citep{zhang2025federated}. For a fair comparison, we fix the target test accuracy for each dataset and set hyperparameters according to their theoretical guidelines. We report the upper-level objective value required to reach this accuracy versus running time in Figure~\ref{fig:p_4_loss_strongly}. The results show that first-order methods converge faster than second-order methods, as they avoid the computation of Jacobian and Hessian matrices. Compared with MemFBO and FedSVRBGD-FO, our method achieves comparable or better convergence behavior. These results indicate that the proposed smoothing approach effectively enhances convergence speed in nonconvex–strongly-convex bilevel settings.

\begin{figure*}[ht]
\vspace{-0.1in}
\begin{center}
\centerline{\includegraphics[scale=0.26]{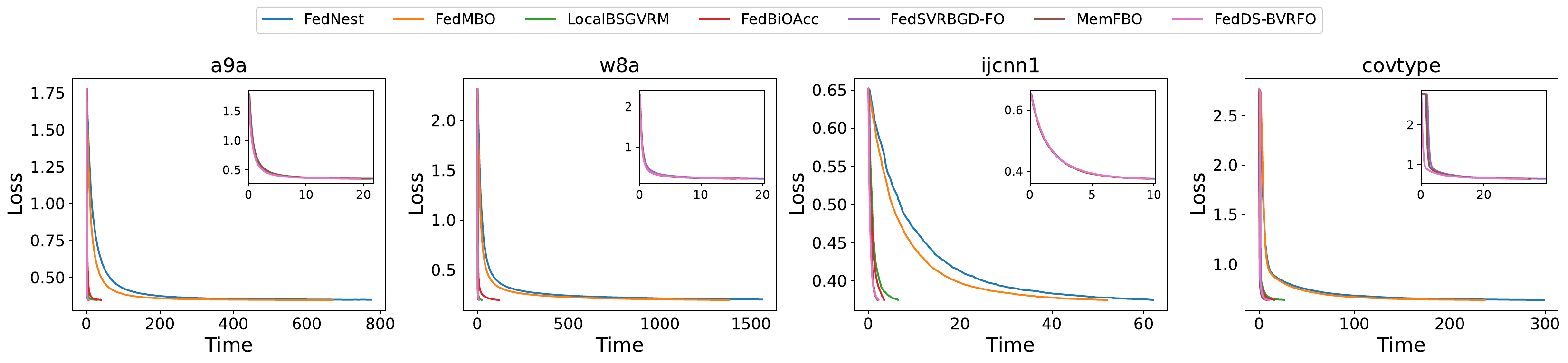}}
\vspace{-0.1in}
\caption{The upper-level loss
function value versus the running time, $\tau = 4$, strongly-convex.}
\label{fig:p_4_loss_strongly}
\end{center}
\vspace{-5pt}
\end{figure*}

\section{Complete Algorithm}\label{app:alg}
\vspace{-5pt}
The full algorithm is presented  in Algorithm~\ref{alg:2}.
\begin{algorithm*}[ht]
	\small
	\caption{FedDS-BVRFO}
	\label{alg:2}
	\begin{algorithmic}[1]
		\REQUIRE $\alpha_x>0$, $\alpha_y>0$, $\alpha_z>0$, $\alpha_p>0$, $\alpha_q>0$, $\alpha_r>0$, 
		$\beta_{x}>0$, $\beta_{y}>0$, $\beta_{z}>0$, $\eta>0$.  \\
		Initialization on each worker $k\in\{1, \cdots, K\}$: 
		${x}_{0}^{(k)}={x}_{0}$, ${y}_{0}^{(k)}={y}_{0}$, ${z}_{0}^{(k)}={z}_{0}$, 
		\FOR{$t=0,\cdots, T-1$} 
		
		%
		%
		\STATE Update variables $x$, $y$, and $z$: 
		\vspace{-5pt}
		\begin{align}
			&	x^{(k)}_{t+1} = x^{(k)}_{t} -  \alpha_{x} \eta u^{(k)}_{t}  \ , \qquad y^{(k)}_{t+1} = y^{(k)}_{t} -  \alpha_{y} \eta v^{(k)}_{t} \ , \qquad z^{(k)}_{t+1} = z^{(k)}_{t} -  \alpha_{z} \eta w^{(k)}_{t} \ , \notag 
		\end{align}
		
		\STATE Update variables $p$,  $q$, and $r$: 
		\vspace{-5pt}
		\begin{align}
			& p^{(k)}_{t+1} = p^{(k)}_{t} + \alpha_{p} \eta (x^{(k)}_{t+1} - p^{(k)}_{t}) \ , \qquad q^{(k)}_{t+1} = q^{(k)}_{t} + \alpha_{q} \eta (y^{(k)}_{t+1} - q^{(k)}_{t}) \ , \qquad r^{(k)}_{t+1} = r^{(k)}_{t} + \alpha_{r} \eta (z^{(k)}_{t+1} - r^{(k)}_{t}) \ , \notag 
		\end{align}

		\STATE Update gradient $u$:  
		\vspace{-5pt}
		\begin{align}
			& u^{(k)}_{1, t+1} =  (1-\beta_x\eta^2)(u^{(k)}_{1, t} - \nabla_1 f^{(k)}(x^{(k)}_{t}, y^{(k)}_{t}; \xi^{(k)}_{t+1})) 
			+ \nabla_1 f^{(k)}(x^{(k)}_{t+1}, y^{(k)}_{t+1}; \xi^{(k)}_{t+1}) \ , \notag \\
			& u^{(k)}_{2, t+1} =  (1-\beta_x\eta^2)(u^{(k)}_{2, t} - \nabla_1 g_{\gamma_1}^{(k)}(x^{(k)}_{t}, y^{(k)}_{t}, p^{(k)}_{t}, q^{(k)}_{t}; \zeta^{(k)}_{t+1}))+ \nabla_1 g_{\gamma_1}^{(k)}(x^{(k)}_{t+1}, y^{(k)}_{t+1}, p^{(k)}_{t+1}, q^{(k)}_{t+1}; \zeta^{(k)}_{t+1}) \ ,  \notag \\
			& u^{(k)}_{3, t+1}  =  (1-\beta_x\eta^2)(u^{(k)}_{3, t} - \nabla_1 g_{\gamma_2}^{(k)}(x^{(k)}_{t}, z^{(k)}_{t}, r^{(k)}_{t}; \zeta^{(k)}_{t+1}) )
			+ \nabla_1 g_{\gamma_2}^{(k)}(x^{(k)}_{t+1}, z^{(k)}_{t+1}, r^{(k)}_{t+1}; \zeta^{(k)}_{t+1}) \ ,  \notag \\
			& u^{(k)}_{t+1} = \rho u^{(k)}_{1, t+1} +  (u^{(k)}_{2, t+1}  - u^{(k)}_{3, t+1}) \ , \notag  
		\end{align}
		
		\STATE Update gradient $v$:  
		\vspace{-5pt}
		\begin{align}
			& v^{(k)}_{1, t+1}  =  (1-\beta_y\eta^2)(v^{(k)}_{1, t} - \nabla_2 f^{(k)}(x^{(k)}_{t}, y^{(k)}_{t}; \xi^{(k)}_{t+1}))+ \nabla_2 f^{(k)}(x^{(k)}_{t+1}, y^{(k)}_{t+1}; \xi^{(k)}_{t+1})\ , \notag \\
			& v^{(k)}_{2, t+1}  =  (1-\beta_y\eta^2)(v^{(k)}_{2, t} - \nabla_2 g_{\gamma_1}^{(k)}(x^{(k)}_{t}, y^{(k)}_{t}, p^{(k)}_{t}, q^{(k)}_{t}; \zeta^{(k)}_{t+1})) + \nabla_2 g_{\gamma_1}^{(k)}(x^{(k)}_{t+1}, y^{(k)}_{t+1}, p^{(k)}_{t+1}, q^{(k)}_{t+1}; \zeta^{(k)}_{t+1})\ , \notag \\
			& v^{(k)}_{t+1}  =  \rho v^{(k)}_{1, t+1}   +  v^{(k)}_{2, t+1} \ , \notag  
		\end{align}

		\STATE Update gradient $w$:  
		\vspace{-5pt}
		\begin{align}
			& w^{(k)}_{ t+1}  =  (1-\beta_z\eta^2)(w^{(k)}_{t} - \nabla_2 g_{\gamma_2}^{(k)}(x^{(k)}_{t}, z^{(k)}_{t}, r^{(k)}_{t}; \zeta^{(k)}_{t+1})) + \nabla_2 g_{\gamma_2}^{(k)}(x^{(k)}_{t+1}, z^{(k)}_{t+1}, r^{(k)}_{t+1}; \zeta^{(k)}_{t+1}) \ , \notag  
		\end{align}
		
		\IF {$\text{mod}(t+1, \tau)==0$}
		\STATE Communication:
		\vspace{-5pt}
		\begin{align}
			& x^{(k)}_{t+1}=\frac{1}{K}\sum_{k'=1}^{K}x^{(k')}_{ t+1} \ , \quad y^{(k)}_{ t+1}=\frac{1}{K}\sum_{k'=1}^{K}y^{(k')}_{ t+1} \ , \quad z^{(k)}_{ t+1}=\frac{1}{K}\sum_{k'=1}^{K}z^{(k')}_{ t+1} \ , \notag \\
            & p^{(k)}_{t+1}=\frac{1}{K}\sum_{k'=1}^{K}p^{(k')}_{ t+1} \ , \quad q^{(k)}_{ t+1}=\frac{1}{K}\sum_{k'=1}^{K}q^{(k')}_{ t+1} \ , \quad r^{(k)}_{ t+1}=\frac{1}{K}\sum_{k'=1}^{K}r^{(k')}_{ t+1} \ ,  \notag \\
			&  u^{(k)}_{t+1} =\frac{1}{K}\sum_{k'=1}^{K}u^{(k')}_{ t+1} \ , \quad  v^{(k)}_{t+1} =\frac{1}{K}\sum_{k'=1}^{K}v^{(k')}_{t+1} \ , \quad w^{(k)}_{t+1} =\frac{1}{K}\sum_{k'=1}^{K}w^{(k')}_{t+1}  \ ,  \notag 
		\end{align}
		\ENDIF
		\ENDFOR
	\end{algorithmic}
\end{algorithm*}

\section{Proof Sketch}\label{app:sketch}
To facilitate the convergence analysis, we introduce the following auxiliary quantities, following the constructions in \cite{zhang2020single, yang2022faster, zheng2023universal}:
\begin{align}\label{eq:auxiliary-fun-def}
    &  F_d({z};  p, q, r)= \min_{x,y} F(x, y, z; p, q, r) = F(x^*({z};  p, q, r), y^*({z};  p, q, r), {z}; p, q, r) \ ,\notag \\
    & (x^*({z};  p, q, r), y^*({z};  p, q, r)) =  \arg \min_{x, y} F(x, y, z; p, q, r) \notag \\
    & F_p(x, y; p, q, r)= \max_{z} F(x, y, z; p, q, r) = F(x, y, z^*({x}, {y};  p, q, r); p, q, r) \ ,\notag \\
    & z^*({x}, {y};  p, q, r) = \arg \max_{z} F(x, y, z; p, q, r) \ , \notag \\
    & H(p, q, r) =  \max_{z} F_d(z; p, q, r)  = \min_{x, y}\max_{z} F(x, y, z; p, q, r)   \ , \notag \\
    & z^*(p, q, r) =  \arg \max_{z} F_{d} (z; p, q, r) \ , \notag \\
    & x^*(p, q, r) = x^*(z^*(p, q, r); p, q, r)  \ , \qquad  y^*(p, q, r) = y^*(z^*(p, q, r); p, q, r)  \ , \notag \\
    & H_{z}(p, q) = \max_{r} H(p, q, r) \ , \qquad r^*(p, q) = \arg \max_{r} H(p, q, r) \ , \notag \\
    & z^+ = z_{t} + \alpha_{z}\eta\nabla_{z} F_{d}(z_{t}; p_{t}, q_{t}, r_{t}) \ , \qquad  r^+ = \bar{r}_t + \alpha_r\eta(z^*(\bar{p}_t,\bar{q}_t,\bar{r}_t)-\bar{r}_t) \ .
\end{align}

We next provide a proof sketch for the convergence result. The analysis consists of two main components: (i) the smoothing technique (Appendix~\ref{app:smooth}), and (ii) the federated analysis (Appendix~\ref{app:fed}).

\paragraph{Smoothing Analysis.}
We begin by establishing several auxiliary results that characterize the properties of the optimal solution mappings. Specifically, Lemmas~\ref{lemma:x_y_star_z_continuity}–\ref{lemma:xy_z_star_xy_star_r} provide continuity and regularity properties of the associated optimal functions.

Building on these results, we derive descent properties for the smoothed objective components in Lemmas~\ref{lemma:F_smooth}–\ref{lemma:h_smooth}. These results enable us to introduce the following smoothing potential function (Lemma~\ref{lemma:potential_psi}):
\begin{align}
     \Psi_{t} & = \underbrace{F(\bar{x}_{t}, \bar{y}_{t}, \bar{z}_{t}; \bar{p}_{t}, \bar{q}_{t}, \bar{r}_{t})}_{\text{Lemma~\ref{lemma:F_smooth}}} - 2 \underbrace{F_d(\bar{z}_{t}; \bar{p}_{t}, \bar{q}_{t}, \bar{r}_{t})}_{\text{Lemma~\ref{lemma:F_d_smooth}}} + 2\underbrace{H(\bar{p}_{t}, \bar{q}_{t})}_{\text{Lemma~\ref{lemma:h_smooth}}} \ . \notag
\end{align}
This potential function captures the joint progress of all variables and serves as the key tool for the convergence analysis. Based on this construction, we show that the step sizes of $x$, $y$, and $z$ can be chosen at the same order regarding $\kappa$:
\begin{align}
    \alpha_y = \alpha_x \ , \quad  \alpha_z = \alpha_x\frac{(\gamma_1 - L_{\rho})^2}{72L^2_{g}} \ , 
\end{align}
which highlights the stabilizing effect of the smoothing technique.

\paragraph{Federated Analysis.}
We begin by establishing two useful lemmas, Lemmas~\ref{lemma:incremental-x-y-z}–\ref{lemma:consensus-error-variables}, which characterize the evolution of local variables and their consensus errors. We then analyze the property of controlling gradient estimation errors and consensus errors.

First, we bound the six gradient estimation errors using Lemmas~\ref{lemma:grad_orig} and~\ref{lemma:grad_new}. In particular, Lemma~\ref{lemma:grad_new} establishes three novel bounds associated with the smoothed gradients of $g_{\gamma_1}$ and $g_{\gamma_2}$, which are essential for handling the doubly-smoothed terms. 

Next, we control the corresponding consensus errors in Lemmas~\ref{lemma:consensus-error-momentum-each-step-org} and \ref{lemma:consensus-error-momentum-each-step-new}. Among them,  Lemma~\ref{lemma:consensus-error-momentum-each-step-new} captures the additional error terms introduced by doubly-smoothed. Finally, these bounds are aggregated in Lemma~\ref{lemma:consensus_total}, which provides an overall consensus error bound across all six components.

\paragraph{Final Proof.} We construct a novel Lyapunov function $\mathcal{P}_{t}$ in Appendix~\ref{app:proof}:
\begin{align}
    & \mathcal{P}_{t} = \Psi_{t} + c_1\rho^2\mathcal{G}_{u, 1, t}  + c_2\mathcal{G}_{u, 2, t}   + c_3\mathcal{G}_{u, 3, t}   + c_4\rho^2\mathcal{G}_{v, 1, t} + c_5 \mathcal{G}_{v, 2, t}  + c_6\mathcal{G}_{w, t}  \ , \notag 
\end{align}
where we denote
\begin{align}
    & \mathcal{G}_{u, 1, t} = \mathbb{E}[\| \frac{1}{K}\sum_{k=1}^{K} \nabla_1 f^{(k)}(x^{(k)}_{t}, y^{(k)}_{t}) - \bar{u}_{1, t} \|^2] \ , \notag \\
    & \mathcal{G}_{u, 2, t} = \mathbb{E}[\| \frac{1}{K}\sum_{k=1}^{K} \nabla_1 g_{\gamma_1}^{(k)}(x^{(k)}_{t}, y^{(k)}_{t}; p^{(k)}_{t}, q^{(k)}_{t}) - \bar{u}_{2, t} \|^2] \ , \notag \\
    & \mathcal{G}_{u, 3, t} = \mathbb{E}[\| \frac{1}{K}\sum_{k=1}^{K} \nabla_1 g_{\gamma_2}^{(k)}(x^{(k)}_{t}, z^{(k)}_{t}; r^{(k)}_{t}) - \bar{u}_{3, t} \|^2] \ , \notag  \\
    & \mathcal{G}_{v, 1, t} = \mathbb{E}[\| \frac{1}{K}\sum_{k=1}^{K} \nabla_2 f^{(k)}(x^{(k)}_{t}, y^{(k)}_{t}) - \bar{v}_{1, t} \|^2] \ , \notag \\
    & \mathcal{G}_{v, 2, t} = \mathbb{E}[\| \frac{1}{K}\sum_{k=1}^{K} \nabla_2 g_{\gamma_1}^{(k)}(x^{(k)}_{t}, y^{(k)}_{t}; p^{(k)}_{t}, q^{(k)}_{t}) - \bar{v}_{2, t} \|^2] \ , \notag \\
    & \mathcal{G}_{w,t} = \mathbb{E}[\| \frac{1}{K}\sum_{k=1}^{K} \nabla_2 g_{\gamma_2}^{(k)}(x^{(k)}_{t}, z^{(k)}_{t}; r^{(k)}_{t}) - \bar{w}_{t} \|^2] \ .
\end{align}

This Lyapunov function integrates the smoothing potential $\Psi_{t}$ from Appendix~\ref{app:smooth} with the six gradient error terms from Appendix~\ref{app:fed}. By carefully choosing the coefficients $\{c_j\}_{j=1}^6$ and properly selecting the hyperparameter $\alpha_{x}$, we establish a descent property for $\mathcal{P}_{t}$, which leads to the desired convergence result.

\section{Smoothed Lemmas}\label{app:smooth}
\begin{lemma}\label{lemma:x_y_star_z_continuity}
	 Given Assumptions~\ref{assumption:f-smooth}-\ref{assumption:variance}, the following inequality holds:
	\begin{align}\label{eq:x_y_star_z_continuity}
		& (i). \ \|x^*(z; p, q, r) -x^*(z; p', q', r) \|^2+ \|y^*(z; p, q, r) -y^*(z; p', q', r) \|^2 \leq \omega_{1}^2(\| p - p' \|^2 + \| q - q' \|^2) \  , \notag \\
        & (ii).\  \|x^*(p, q, r) -x^*(p', q', r) \|^2 +  \|y^*(p, q, r) -y^*( p', q', r) \|^2  \leq \omega_{1}^2(\| p - p' \|^2 + \| q - q' \|^2) \ , \notag \\
        & (iii).\  \|x^*(z; p, q, r) -x^*(z'; p, q, r) \|^2+ \|y^*(z; p, q, r) -y^*(z'; p, q, r) \|^2 \leq \omega_{2}^2\| z - z' \|^2\ .  
	\end{align}
	where $\omega_{1} = \frac{\gamma_1}{(\gamma_1-L_{\rho})}, \omega_{2} = \frac{L_g}{(\gamma_1-L_{\rho})} $.
\end{lemma}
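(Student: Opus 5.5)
We treat this as a perturbation bound for the minimizer of a strongly convex function. Write $w=(x,y)$ and $\pi=(p,q)$. For fixed $(z,r)$, $F(\cdot,z;\pi,r)$ is $\mu_F=(\gamma_1-L_\rho)$-strongly convex in $w$, with $\gamma_1>L_\rho$ as stated in Section~\ref{sec:method}. So $w^*(z;\pi,r)$ is the unique point where $\nabla_w F(w^*,z;\pi,r)=0$. The tool is the inequality $\mu_F\|w-w'\|\le\|\nabla_wF(w,\cdot)-\nabla_wF(w',\cdot)\|$, which follows from strong monotonicity of $\nabla_w F$ together with Cauchy--Schwarz.

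\textbf{Part (i).} Set $w=w^*(z;\pi,r)$ and $w'=w^*(z;\pi',r)$. Since $\nabla_wF(w',z;\pi',r)=0$, we have
\[
\nabla_wF(w',z;\pi,r)=\nabla_wF(w',z;\pi,r)-\nabla_wF(w',z;\pi',r)=\gamma_1(\pi'-\pi).
\]
This holds because the only $\pi$-dependence of $\nabla_wF$ is the term $\gamma_1(w-\pi)$ from the primal smoothing. Combining this with $\nabla_wF(w,z;\pi,r)=0$ and strong monotonicity gives
\[
\mu_F\|w-w'\|^2\le\langle \nabla_wF(w',z;\pi,r)-\nabla_wF(w,z;\pi,r),\,w'-w\rangle\le\gamma_1\|\pi-\pi'\|\,\|w-w'\|,
\]
so $\|w-w'\|\le\omega_1\|\pi-\pi'\|$. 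Squaring and splitting the norms into $x,y$ and $p,q$ components gives (i).

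\textbf{Part (iii).} The argument is the same, with $z$ varied instead of $\pi$. Now $\nabla_wF(w',z;\pi,r)-\nabla_wF(w',z';\pi,r)$ involves only the term $-\nabla_1 g(x',z)+\nabla_1 g(x',z')$. Its $y$-component vanishes, since $F$ depends on $z$ only through $-g(x,z)-\frac{\gamma_2}{2}\|z-r\|^2$. By $L_g$-smoothness of $g$ (Assumption~\ref{assumption:g-smooth}, using the averaged $g$, whose gradient is Lipschitz by Jensen), the norm of this difference is at most $L_g\|z-z'\|$. The strong-monotonicity step then gives $\|w-w'\|\le \frac{L_g}{\gamma_1-L_\rho}\|z-z'\|=\omega_2\|z-z'\|$. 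I would check carefully that the factor $\rho$ multiplying $f_\rho$ leaves exactly $L_g$ here: $\rho f_\rho$ contains $-g(x,z)$ with coefficient $1$, so it does.

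\textbf{Part (ii).} This is the main obstacle. Here $x^*(\pi,r)=x^*(z^*(\pi,r);\pi,r)$ also moves through $z^*(\pi,r)$. A naive triangle inequality using (i) and (iii) plus a Lipschitz bound on $z^*$ would give a constant larger than $\omega_1$. I would instead use the saddle-point structure. $F$ is strongly convex in $w$ and strongly concave in $z$ (since $\gamma_2>L_\rho$), so $(w^*(\pi,r),z^*(\pi,r))$ is the unique saddle point. Let $(w,z)$ and $(w',z')$ be the saddle points for $\pi$ and $\pi'$. The operator $G=(\nabla_wF,-\nabla_zF)$ is strongly monotone with $w$-modulus $\mu_F$ and vanishes at each saddle point. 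We have $G(w',z';\pi)=(\gamma_1(\pi'-\pi),0)$, since the $\pi$-dependence enters only through the $w$-block. Hence
\[
\mu_F\|w-w'\|^2+(\gamma_2-L_\rho)\|z-z'\|^2\le\langle G(w',z';\pi)-G(w,z;\pi),(w'-w,z'-z)\rangle=\gamma_1\langle\pi'-\pi,w'-w\rangle .
\]
The cross terms from the $x$--$z$ coupling cancel in this monotonicity inner product, because the bilinear-type contributions appear with opposite signs. Dropping the nonnegative $z$ term and applying Cauchy--Schwarz yields $\|w-w'\|\le\omega_1\|\pi-\pi'\|$, which is (ii).

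The delicate point in (ii) is justifying that the coupled operator $G$ has monotonicity modulus $\mu_F$ in the $w$-block. This follows from strong convexity--strong concavity of $F$: monotonicity of $G$ for a convex--concave function is standard, and the strong parts add the diagonal terms.
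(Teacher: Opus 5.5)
Your proof is correct. Part (iii) is the paper's argument. Part (i) is the gradient form of the paper's argument: the paper splits $F(w^*(z;\pi,r),z;\pi',r)-F(w^*(z;\pi',r),z;\pi',r)$ into three terms and uses strong convexity twice in function values. It arrives at the same inequality $(\gamma_1-L_\rho)\|w-w'\|^2\le\gamma_1\langle w-w',\pi-\pi'\rangle$ that you obtain from $\nabla_wF(w',z;\pi,r)=\gamma_1(\pi'-\pi)$ and strong monotonicity.

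The genuine difference is in (ii), which the paper only asserts can be ``proved in a similar manner.'' The route implicit there (and used later in Lemma~\ref{lemma:xy_xy_star_r}, which invokes strong convexity of $F_p$) runs as follows:
\begin{itemize}
\item $(x^*(p,q,r),y^*(p,q,r))$ minimizes $F_p(w;\pi,r)=\max_z F(w,z;\pi,r)$.
\item $F_p$ is a $\pi$-independent function plus $\frac{\gamma_1}{2}\|w-\pi\|^2$.
\item $F_p$ is $(\gamma_1-L_\rho)$-strongly convex as a pointwise maximum of such functions.
\end{itemize}
The argument of (i) then applies verbatim to $F_p$, and the motion of $z^*$ never enters.

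You instead use strong monotonicity of the saddle operator $G=(\nabla_wF,-\nabla_zF)$ at the two saddle points. This gives the same constant $\omega_1$ once you drop the nonnegative $(\gamma_2-L_\rho)\|z-z'\|^2$ term. You are also right that chaining (i), (iii) and a Lipschitz bound on $z^*$ would lose the constant.

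The $F_p$ route is shorter and reuses (i) directly. Yours makes explicit why the $x$--$z$ coupling costs nothing. It also handles perturbations entering the $z$-block, such as $r$, with the same one-line computation, giving continuity of the saddle point's $w$-part in $r$ as a by-product.

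Your monotonicity justification is fine: under the $C^2$ regularity granted by the assumptions, the off-diagonal Hessian blocks cancel in the integrated quadratic form. An alternative that needs no second derivatives is to add the four first-order strong convexity/concavity inequalities at $(w,z)$ and $(w',z')$. All mixed function values then cancel.
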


\begin{proof}
	\textbf{Dependence of $(x^*, y^*)$ on $(p, q)$.} To prove $(i)$, on the one hand, we split
	\begin{align}
		 F(x^*(z; p, q, r), y^*(z; p, q, r), z; p', q', r) -  F(x^*(z; p', q', r), y^*(z; p', q', r), z; p', q', r) \notag 
    \end{align}
    into three terms
    \begin{align}
		& T_1 : = F(x^*(z; p, q, r), y^*(z; p, q, r), z; p', q', r) -  F(x^*(z; p, q, r), y^*(z; p, q, r), z; p, q, r) \ , \notag \\
		& T_2 : = F(x^*(z; p, q, r), y^*(z; p, q, r), z; p, q, r) - F(x^*(z; p', q', r), y^*(z; p', q', r), z; p, q, r) \ ,  \notag \\
		& T_3: =F(x^*(z; p', q', r), y^*(z; p', q', r), z; p, q, r) -  F(x^*(z; p', q', r), y^*(z; p', q', r), z; p', q', r) \ . \notag 
    \end{align}
    For $T_2$, due to the $(\gamma_1 - L_{\rho})$-strong convexity of $F$ with respect to $x$ and $y$, we obtain
    \begin{align}
        T_2 \leq -\frac{(\gamma_1-L_{\rho})}{2} \|x^*(z; p, q, r) -x^*(z; p', q', r) \|^2-\frac{(\gamma_1-L_{\rho})}{2} \|y^*(z; p, q, r) -y^*(z; p', q', r) \|^2 \ . \notag 
    \end{align}
    For $T_1$ and $T_3$ , we derive
    \begin{align}
        T_1 & \leq \frac{\gamma_1}{2}\|x^*(z; p, q, r) - p'\|^2 +  \frac{\gamma_1}{2}\|y^*(z; p, q, r) - q'\|^2 \notag \\
        & \quad -  \frac{\gamma_1}{2}\|x^*(z; p, q, r) - p\|^2 -  \frac{\gamma_1}{2}\|y^*(z; p, q, r) - q\|^2  \notag \\
        & \leq \frac{\gamma_1}{2}(2\langle x^*(z; p, q, r) ,  p - p' \rangle+\| p'\|^2- \| p\|^2) + \frac{\gamma_1}{2}(2\langle y^*(z; p, q, r) ,  q - q' \rangle+\| q'\|^2- \| q\|^2) \ . \notag \\
        T_3 & \leq \frac{\gamma_1}{2}\|x^*(z; p', q', r) - p\|^2 +  \frac{\gamma_1}{2}\|y^*(z; p', q', r) - q\|^2 \notag \\
        & \quad -  \frac{\gamma_1}{2}\|x^*(z; p', q', r) - p'\|^2 -  \frac{\gamma_1}{2}\|y^*(z; p', q', r) - q'\|^2 \notag \\
        & \leq \frac{\gamma_1}{2}(2\langle x^*(z; p', q', r) ,  p' - p \rangle+\| p\|^2- \| p'\|^2) + \frac{\gamma_1}{2}(2\langle y^*(z; p', q', r) ,  q' - q \rangle+\| q\|^2- \| q'\|^2) \ . \notag
    \end{align}
    Combining $T_1$ and $T_3$ yields
    \begin{align}
        T_1 + T_3 \leq \gamma_1 \langle x^*(z; p, q, r) -x^*(z; p', q', r),  p - p' \rangle +\gamma_1\langle y^*(z; p, q, r) -y^*(z; p', q', r),  q - q' \rangle \ . \notag
    \end{align}
    Thus, combining $T_1 + T_2 + T_3$ yields
    \begin{align}
        & \quad F(x^*(z; p, q, r), y^*(z; p, q, r), z; p', q', r) -  F(x^*(z; p', q', r), y^*(z; p', q', r), z; p', q', r) \notag \\
        & \leq -\frac{(\gamma_1-L_{\rho})}{2} \|x^*(z; p, q, r) -x^*(z; p', q', r) \|^2-\frac{(\gamma_1-L_{\rho})}{2} \|y^*(z; p, q, r) -y^*(z; p', q', r) \|^2 \notag \\
		& \quad + \gamma_1 \langle x^*(z; p, q, r) -x^*(z; p', q', r),  p - p' \rangle  +\gamma_1\langle y^*(z; p, q, r) -y^*(z; p', q', r),  q - q' \rangle   \ . \notag
    \end{align}
	On the other hand, due to  the $(\gamma_1-L_{\rho})$-strong convexity of $F$ with respect to $x$ and $y$, we have
	\begin{align}
		& \quad F(x^*(z; p, q, r), y^*(z; p, q, r), z; p', q', r) -  F(x^*(z; p', q', r), y^*(z; p', q', r), z; p', q', r) \notag \\ 
		& \geq \frac{(\gamma_1-L_{\rho})}{2} \|x^*(z; p, q, r) -x^*(z; p', q', r) \|^2+\frac{(\gamma_1-L_{\rho})}{2} \|y^*(z; p, q, r) -y^*(z; p', q', r) \|^2  \  . \notag 
	\end{align}
	By combining the above two inequalities, we have
	\begin{align}
		& \quad  (\gamma_1-L_{\rho}) \|x^*(z; p, q, r) -x^*(z; p', q', r) \|^2+ (\gamma_1-L_{\rho}) \|y^*(z; p, q, r) -y^*(z; p', q', r) \|^2 \notag \\
		& \leq \gamma_1 \langle x^*(z; p, q, r) -x^*(z; p', q', r),  p - p' \rangle  +\gamma_1\langle y^*(z; p, q) -y^*(z; p', q'),  q - q' \rangle   \ . \notag
	\end{align}
	By denoting $a=[x^*(z; p, q, r) -x^*(z; p', q', r), y^*(z; p, q, r) -y^*(z; p', q', r)]^T$ and $b=[p - p', q - q']^T$, the above inequality can be represented as follows:
	\begin{align}
		& (\gamma_1-L_{\rho}) \|a\|^2 \leq \gamma_1 \langle a, b \rangle \leq \gamma_1 \|a\| \|b\| \ . \notag
	\end{align}
	It indicates
	\begin{align}
		& (\gamma_1-L_{\rho}) \|a\| \leq \gamma_1 \|b\| \ . \notag
	\end{align}
	As a result, we have
	\begin{align}
		&   \|x^*(z; p, q, r) -x^*(z; p', q', r) \|^2+ \|y^*(z; p, q, r) -y^*(z; p', q', r) \|^2  \leq \omega_{1}^2(\| p - p' \|^2 + \| q - q' \|^2) \ . \notag
	\end{align}
    where $\omega_{1} = \frac{\gamma_1}{(\gamma_1-L_{\rho})}$.
	The second inequalities $(ii)$ in Eq.~(\ref{eq:x_y_star_z_continuity}) can be proved in a similar manner.

    \textbf{Dependence of $(x^*, y^*)$ on $z$.}  To prove $(iii)$, due to $(\gamma_1-L_\rho)$-strong convexity of $F$ with respect to $x$ and $y$, we obtain
	\begin{align}
		& \quad (\gamma_1-L_\rho)\|(x^*(z'; p, q, r),y^*(z'; p, q, r)) - (x^*(z; p, q, r),y^*(z; p, q, r))\| \notag \\
        & \leq \|\nabla_{x,y} F(x^*(z'; p, q, r),y^*(z'; p, q, r),z;p,q,r)\| \notag \\
		& = \|\nabla_{x,y}F(x^*(z'; p, q, r),y^*(z'; p, q, r),z;p,q,r) - \nabla_{x,y}F(x^*(z'; p, q, r),y^*(z'; p, q, r),z';p,q,r)\| \ , \notag
	\end{align}
	where the last equality uses $\nabla_{x,y}F(x^*(z'; p, q, r),y^*(z'; p, q, r),z';p,q,r)=0$. Recalling $\nabla_x F(x,y,z;p,q,r) = \rho\nabla_1 f(x,y) + \nabla_1 g(x,y) - \nabla_1 g(x,z) + \gamma_1(x-p)$ and $\nabla_y F(x,y,z;p,q,r) = \rho\nabla_2 f(x,y) + \nabla_2 g(x,y) + \gamma_1(y-q)$, only the $x$-component of $\nabla_{x,y}F$ depends on $z$. Hence
	\begin{align}
		& \quad \|\nabla_{x,y}F(x^*(z'; p, q, r),y^*(z'; p, q, r),z;p,q,r) - \nabla_{x,y}F(x^*(z'; p, q, r),y^*(z'; p, q, r),z';p,q,r)\|  \notag \\
        & = \|\nabla_1 g(x^*(z'), z') - \nabla_1 g(x^*(z'), z)\| \leq L_g\|z-z'\| \ . \notag
	\end{align}
	Combining the two displays, we obtain
	\begin{align}
		\|x^*(z;p,q,r) - x^*(z';p,q,r)\|^2+ \|y^*(z;p,q,r) - y^*(z';p,q,r)\|^2 \leq  \omega_2^2\|z-z'\|^2 \ , \notag
	\end{align}
	where $\omega_{2} = \frac{L_g}{(\gamma_1-L_{\rho})} $.
\end{proof}

\begin{lemma}\label{lemma:z_star_continuity}
	Given Assumptions~\ref{assumption:f-smooth}-\ref{assumption:variance}, the following inequalities hold:
	\begin{align}\label{eq:z_star_continuity}
        & (i) . \ \|z^*(x, y; p, q, r) - z^*(x, y; p, q, r') \|^2 \leq \omega_{3}^2 \|r - r'\|^2 \  , \notag \\
        & (ii) . \ \|z^*(p, q, r) - z^*(p, q, r') \|^2 \leq \omega_{3}^2 \|r - r'\|^2 \ , \notag \\
        & (iii) . \  \|z^*(x, y; p, q, r) - z^*(x', y; p, q, r) \|^2 \leq \omega_{4}^2 \|x - x'\|^2 \  , \notag \\
        & (iv). \ \|z^*(p, q, r) - z^*(p', q', r) \|^2 \leq \omega_{5}^2 \big(\|p-p'\|^2+\|q-q'\|^2\big) \ , 
	\end{align}
	where $\omega_{3} = \frac{\gamma_2}{(\gamma_2-L_{\rho})}$, $\omega_{4} = \frac{L_g}{(\gamma_2-L_{\rho})}$, and $\omega_{5} = \frac{\gamma_1\omega_2}{(\gamma_2-L_\rho)} + \frac{\omega_1}{\omega_2}$. 
\end{lemma}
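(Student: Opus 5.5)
The four bounds follow the template of Lemma~\ref{lemma:x_y_star_z_continuity}, now using the $(\gamma_2-L_\rho)$-strong concavity of $F$ in $z$. For $(i)$ and $(iii)$, $z^*(x,y;p,q,r)$ is characterized by $\nabla_z F(x,y,z^*;p,q,r)=0$. Strong concavity gives, for any $z$,
\[
(\gamma_2-L_\rho)\|z-z^*(x,y;p,q,r)\|\le \|\nabla_z F(x,y,z;p,q,r)\| .
\]
We plug in $z=z^*(x,y;p,q,r')$ and subtract $\nabla_z F(x,y,z;p,q,r')=0$. Since $\nabla_z F=-\nabla_2 g(x,z)-\gamma_2(z-r)$, the difference of the two gradients is $\gamma_2(r-r')$, which yields $\omega_3=\gamma_2/(\gamma_2-L_\rho)$. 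Alternatively, the $T_1+T_2+T_3$ splitting argument of Lemma~\ref{lemma:x_y_star_z_continuity}(i) can be repeated verbatim, with concavity replacing convexity. For $(iii)$ we compare at $x$ versus $x'$: the only $x$-dependence of $\nabla_z F$ is through $\nabla_2 g(x,z)$, whose change is at most $L_g\|x-x'\|$, so $\omega_4=L_g/(\gamma_2-L_\rho)$.

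For $(ii)$ and $(iv)$ we work with $z^*(p,q,r)=\arg\max_z F_d(z;p,q,r)$. The first step is to record that $F_d$ is $(\gamma_2-L_\rho)$-strongly concave in $z$. This holds because $F_d$ is a minimum over $(x,y)$ of functions each $(\gamma_2-L_\rho)$-strongly concave in $z$: writing $F_d+\frac{\gamma_2-L_\rho}{2}\|z\|^2$ as a pointwise minimum of concave functions shows it is concave. Next we use Danskin's theorem, which applies by the strong convexity in $(x,y)$ of Lemma~\ref{lemma:x_y_star_z_continuity}:
\[
\nabla_z F_d(z;p,q,r)=\nabla_z F\big(x^*(z;p,q,r),y^*(z;p,q,r),z;p,q,r\big).
\]
For $(ii)$, perturbing $r$ changes this gradient by exactly $\gamma_2(r-r')$, because $x^*(z;\cdot)$ and $y^*(z;\cdot)$ do not depend on $r$ (the $r$-term in $F$ involves only $z$). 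The same argument as in $(i)$ then gives the constant $\omega_3$.

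For $(iv)$, perturbing $(p,q)$ moves $x^*(z;p,q,r)$ by at most $\omega_1\|(p-p',q-q')\|$, by Lemma~\ref{lemma:x_y_star_z_continuity}(i). Consequently $\nabla_z F_d$ changes by at most $L_g\omega_1\|(p-p',q-q')\|$, which would suggest a constant $L_g\omega_1/(\gamma_2-L_\rho)$. The stated constant is instead $\omega_5=\gamma_1\omega_2/(\gamma_2-L_\rho)+\omega_1/\omega_2$, and since $\omega_1=\gamma_1/(\gamma_1-L_\rho)$ we have $\gamma_1\omega_2=L_g\omega_1$, so the first summand of $\omega_5$ is precisely the bound above. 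The extra term $\omega_1/\omega_2$ is nonnegative, so the claimed inequality follows from the sharper one. I would present it via the chain $\|z^*(p,q,r)-z^*(p',q',r)\|\le(\gamma_2-L_\rho)^{-1}\|\nabla_zF_d(\cdot;p,q,r)-\nabla_zF_d(\cdot;p',q',r)\|$, evaluated at $z^*(p',q',r)$, and then note that the result is dominated by $\omega_5$.

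The main obstacle is justifying the strong concavity and differentiability of $F_d$ in the PL/nonconvex setting. I expect this to be manageable because $F$ is strongly convex in $(x,y)$ once $\gamma_1>L_\rho$. The key identity is
\[
F_d(z)+\tfrac{\gamma_2-L_\rho}{2}\|z\|^2=\min_{x,y}\Big[F+\tfrac{\gamma_2-L_\rho}{2}\|z\|^2\Big],
\]
and the bracketed function is concave in $z$ for each fixed $(x,y)$, since $F$ is $(\gamma_2-L_\rho)$-strongly concave in $z$. A pointwise minimum of concave functions is concave, which gives the claim; Danskin's theorem then gives differentiability. The remaining steps are routine one-line computations with the explicit gradient formulas in Eq.~(\ref{eq:partial_grad}).
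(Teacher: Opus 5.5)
Your proof is correct. Part (iii) matches the paper, but (i), (ii) and especially (iv) follow a different route. The paper proves (i) with the three-term function-value splitting of Lemma~\ref{lemma:x_y_star_z_continuity}, using only strong concavity, and says (ii) follows ``in a similar manner''; only (iii) goes through $\nabla_zF=0$, exactly as you do.

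For (iv), the paper compares $F_d(\cdot;p,q,r)$ and $F_d(\cdot;p',q',r)$ at the two maximizers using only function values. This produces a term $\gamma_1\langle a,b\rangle$, where $a$ is a difference of $(x^*,y^*)$ taken at \emph{different} $z$'s and different $(p,q)$'s. Bounding $\|a\|$ therefore needs both $\omega_1$ and $\omega_2$. The result is the quadratic inequality $\xi^2\le\frac{\gamma_1\omega_2}{\gamma_2-L_\rho}\xi+\frac{\gamma_1\omega_1}{\gamma_2-L_\rho}$, whose solution gives $\omega_5=\frac{\gamma_1\omega_2}{\gamma_2-L_\rho}+\frac{\omega_1}{\omega_2}$.

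Your Danskin-based argument instead evaluates both gradients $\nabla_zF_d(\cdot;p,q,r)$ and $\nabla_zF_d(\cdot;p',q',r)$ at the \emph{same} point $z^*(p',q',r)$. Only the fixed-$z$ Lipschitz constant $\omega_1$ of $x^*(z;\cdot)$ then enters. Through the identity $L_g\omega_1=\gamma_1\omega_2$ you obtain exactly the first summand of $\omega_5$: a strictly sharper constant, with no quadratic inequality to solve.

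The price is that you must justify differentiability of $F_d$ and the formula $\nabla_zF_d(z)=\nabla_zF(x^*(z),y^*(z),z)$. The paper's route needs only strong concavity of $F_d$, which it asserts without proof; your pointwise-minimum argument supplies that fact. Since $\omega_1/\omega_2=\gamma_1/L_g$ and every quantity involved is $O(1)$ in the paper's regime $\gamma_1,\gamma_2=O(L_\rho)$, the gain is in absolute constants only, not in the downstream $\kappa$-dependence.
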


\begin{proof}
    \textbf{Dependence of $z^*$ on $r$.} We split
	\begin{align}
		F(x, y, z^*(x, y; p, q, r); p, q, r') -  F(x, y, z^*(x, y; p, q, r'); p, q, r') \notag
	\end{align}
	into three terms
	\begin{align}
		& S_1 : = F(x, y, z^*(x, y; p, q, r); p, q, r') -  F(x, y, z^*(x, y; p, q, r); p, q, r) \ , \notag \\
		& S_2 : = F(x, y, z^*(x, y; p, q, r); p, q, r) - F(x, y, z^*(x, y; p, q, r'); p, q, r) \ ,  \notag \\
		& S_3: =F(x, y, z^*(x, y; p, q, r'); p, q, r) -  F(x, y, z^*(x, y; p, q, r'); p, q, r') \ . \notag
	\end{align}
	Since $(x, y, p, q)$ are held fixed, only the term $-\frac{\gamma_2}{2}\|z-r\|^2$ in $F$ contributes to $S_1$ and $S_3$; a direct expansion gives
	\begin{align}
		S_1 + S_3 = -\gamma_2 \langle z^* - z^{*\prime}, \ r - r' \rangle \ . \notag
	\end{align}
	For $S_2$, since $z^*$ maximizes $F(x,y,\cdot;p,q,r)$ and $F$ is $(\gamma_2-L_\rho)$-strongly concave in $z$, we have $S_2 \geq \frac{\gamma_2-L_\rho}{2}\|z^*-z^{*\prime}\|^2$. 
    
    On the other hand, since $z^{*\prime}$ maximizes $F(x,y,\cdot;p,q,r')$, strong concavity also gives $F(x,y,z^*;p,q,r') \leq F(x,y,z^{*\prime};p,q,r') - \frac{\gamma_2-L_\rho}{2}\|z^*-z^{*\prime}\|^2$, i.e., $S_1+S_2+S_3 \leq -\frac{\gamma_2-L_\rho}{2}\|z^*-z^{*\prime}\|^2$. Combining the two bounds on $S_1+S_2+S_3$,
	\begin{align}
		\frac{\gamma_2-L_\rho}{2}\|z^*-z^{*\prime}\|^2 - \gamma_2\langle z^*-z^{*\prime}, r-r'\rangle \leq S_1+S_2+S_3 \leq -\frac{\gamma_2-L_\rho}{2}\|z^*-z^{*\prime}\|^2 \ , \notag
	\end{align}
	which yields $(\gamma_2-L_\rho)\|z^*-z^{*\prime}\|^2 \leq \gamma_2 \langle z^*-z^{*\prime}, r-r'\rangle \leq \gamma_2\|z^*-z^{*\prime}\|\|r-r'\|$, and hence
	\begin{align}
		\|z^*(x, y; p, q, r) - z^*(x, y; p, q, r') \|^2 \leq  \omega_3^2\|r-r'\|^2 \ , \notag
	\end{align}
    where $\omega_3 = \frac{\gamma_2}{\gamma_2-L_\rho}$. The second inequalities $(ii)$ in Eq.~(\ref{eq:z_star_continuity}) can be proved in a similar manner.

    \textbf{Dependence of $z^*$ on $x$.} We first note that $z^*(x,y;p,q,r)$ does not depend on $y$ at all. By the $(\gamma_2-L_\rho)$-strong concavity of $F$ in $z$,
	\begin{align}
		&  \quad (\gamma_2-L_\rho)\|z^*(x,y;p,q,r) - z^*(x',y;p,q,r)\| \notag \\
        & \leq \| \nabla_z F(x,y,z^*(x',y;p,q,r); p, q, r) \| \notag \\
		& = \| \nabla_z F(x,y,z^*(x',y;p,q,r); p, q, r) - \nabla_z F(x',y,z^*(x',y;p,q,r); p, q, r) \| \ , \notag \\
        & \leq L_{g}\|x-x'\| \ , \notag 
	\end{align}
	where the equality uses $\nabla_z F(x',y,z^*(x',y;p,q,r);p,q,r)=0$. Thus. we obtain
	\begin{align}
		\|z^*(x, y; p, q, r) - z^*(x', y; p, q, r) \|^2 \leq  \omega_4^2\|x-x'\|^2 \ , \notag
	\end{align}
    where $\omega_4 = \frac{L_{g}}{\gamma_2-L_\rho} $.

    \textbf{Dependence of $z^*$ on $p, q$.} By strong concavity of $F_d$, we obtain
	\begin{align}
		F_d(z^*(p,q,r); p,q,r) - F_d(z^*(p',q',r); p,q,r) &\geq \frac{\gamma_2-L_\rho}{2}\|z^*(p,q,r) - z^*(p',q',r)\|^2 \ , \notag \\
		F_d(z^*(p,q,r); p',q',r) - F_d(z^*(p',q',r); p',q',r) &\leq -\frac{\gamma_2-L_\rho}{2}\|z^*(p,q,r) - z^*(p',q',r)\|^2 \ . \notag
	\end{align}
	Combining these two inequalities derives
	\begin{align}
		& \quad (\gamma_2-L_\rho)\|z^*(p,q,r) - z^*(p',q',r)\|^2 \notag \\
        & \leq \underbrace{\big[F_d(z^*(p,q,r);p,q,r) - F_d(z^*(p,q,r);p',q',r)\big]}_{=:U_1} \notag \\
        & \quad + \underbrace{\big[F_d(z^*(p',q',r);p',q',r) - F_d(z^*(p',q',r);p,q,r)\big]}_{=:U_2} \ . \notag
	\end{align}
	For $U_1$, we obtain
	\begin{align}
		U_1 & \leq \gamma_1\langle x^*(z^*(p,q,r);p',q',r), p-p'\rangle + \gamma_1\langle y^*(z^*(p,q,r);p',q',r), q-q'\rangle \notag \\
        & \quad + \frac{\gamma_1}{2}(\|p'\|^2-\|p\|^2+\|q'\|^2-\|q\|^2) \ . \notag
	\end{align}
	Symmetrically,
	\begin{align}
		U_2 & \leq \gamma_1\langle x^*(z^*(p',q',r);p,q,r), p'-p\rangle + \gamma_1\langle y^*(z^*(p',q',r);p,q,r), q'-q\rangle \notag \\
        & \quad + \frac{\gamma_1}{2}(\|p\|^2-\|p'\|^2+\|q\|^2-\|q'\|^2) \ . \notag
	\end{align}
	Adding $U_1$ and $U_2$, the quadratic terms cancel, leaving
	\begin{align}
		U_1+U_2 & \leq \gamma_1\langle x^*(z^*(p,q,r);p',q',r)-x^*(z^*(p',q',r);p,q,r), p-p'\rangle \notag \\
        & \quad + \gamma_1\langle y^*(z^*(p,q,r);p',q',r)-y^*(z^*(p',q',r);p,q,r), q-q'\rangle \ . \notag
	\end{align}
	Denoting 
    \begin{align}
        a := & [x^*(z^*(p,q,r);p',q',r)-x^*(z^*(p',q',r);p,q,r), \notag \\
        & \ y^*(z^*(p,q,r);p',q',r)-y^*(z^*(p',q',r);p,q,r)]^T \ , \notag \\
        b := & [p-p',q-q']^T \ , \notag 
    \end{align}
    the display above gives 
    \begin{align}
         & \quad (\gamma_2-L_\rho)\|z^*(p,q,r) - z^*(p',q',r)\|^2  \leq \gamma_1\langle a,b\rangle \leq \gamma_1\|a\|\|b\| \notag \\
         & \leq \gamma_1(\omega_1\|b\| + \omega_2\|z^*(p,q,r) - z^*(p',q',r)\|)\|b\|  \ , \notag 
    \end{align}
    where the last step from Lemma~\ref{lemma:x_y_star_z_continuity}.
    
	Setting $\xi := \frac{\|z^*(p,q,r) - z^*(p',q',r)\|}{\|b\| }$, we have $\xi^2 \leq \frac{\gamma_1\omega_2}{(\gamma_2-L_\rho)}\xi + \frac{\gamma_1\omega_1}{(\gamma_2-L_\rho)}$. Following ~\cite{zheng2023universal}, using AM-GM inequality, we obtain
	\begin{align}
		\|z^*(p,q,r) - z^*(p',q',r)\|^2 \leq \omega_5^2\big(\|p-p'\|^2+\|q-q'\|^2\big) \ , \notag
	\end{align}
    where $\omega_5 = \frac{\gamma_1\omega_2}{(\gamma_2-L_\rho)} + \frac{\omega_1}{\omega_2}$.
\end{proof}

\begin{lemma}\label{lemma:xy_xy_star_r}
    Given Assumptions~\ref{assumption:f-smooth}-\ref{assumption:variance}, by defining $\hat{r} := r^*(\bar{p}_{t+1}, \bar{q}_{t+1})$, $r^+ = \bar{r}_t + \alpha_r\eta(z^*(\bar{p}_t,\bar{q}_t,\bar{r}_t)-\bar{r}_t)$, $z^+ = \bar{z}_t + \alpha_z\eta\nabla_zF_d(\bar{z}_t;\bar{p}_t,\bar{q}_t,\bar{r}_t)$, and $C_r = \frac{4\gamma_2^2}{\mu(\gamma_1-L_\rho)}\big(\omega_3^2+\frac{(1-\alpha_r\eta)^2}{\alpha_r^2\eta^2}\big)$, the following inequality holds:
    \begin{align}
        & \quad \| x^*( \bar{p}_{t+1}, \bar{q}_{t+1}, r^+) - x^*(\bar{p}_{t+1}, \bar{q}_{t+1}, \hat{r}) \|^2  + \| y^*( \bar{p}_{t+1}, \bar{q}_{t+1}, r^+) - y^*( \bar{p}_{t+1}, \bar{q}_{t+1}, \hat{r}) \|^2 \notag \\
        & \leq \frac{2\omega_5^2\gamma_2^2}{\mu(\gamma_1-L_\rho)}\big(\|\bar{p}_{t+1}-\bar{p}_t\|^2+\|\bar{q}_{t+1}-\bar{q}_t\|^2\big) + 2C_r\|\bar{r}_{t+1}-\bar{r}_t\|^2 \notag \\
        &  \quad  + 4C_r\alpha_r^2\eta^2\Big(\alpha_z^2\eta^2+ \frac{2}{(\gamma_2-L_\rho)^2}\Big) \|\bar{w}_t\|^2 + \frac{16C_r\alpha_r^2\eta^2L_g^2}{(\gamma_1-L_\rho)^2(\gamma_2-L_\rho)^2}\|\nabla_xF(\bar{x}_t,\bar{y}_t,\bar{z}_t;\bar{p}_t,\bar{q}_t,\bar{r}_t)\|^2 \notag \\
        & \quad  + \frac{16C_r\alpha_r^2\eta^2}{(\gamma_2-L_\rho)^2}\|\nabla_zF(\bar{x}_t,\bar{y}_t,\bar{z}_t;\bar{p}_t,\bar{q}_t,\bar{r}_t)-\bar{w}_t\|^2  \ .  
    \end{align}
\end{lemma}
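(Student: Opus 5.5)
The plan is to reduce everything to a bound on $\|r^+-\hat r\|^2$ and then split that distance along the trajectory. The tool is a Lipschitz-type bound of $(x^*,y^*)(p,q,\cdot)$ with respect to $r$ that carries the PL constant: the factor $\frac{4\gamma_2^2}{\mu(\gamma_1-L_\rho)}$ in $C_r$ suggests the estimate
\[
\|x^*(p,q,r)-x^*(p,q,r')\|^2+\|y^*(p,q,r)-y^*(p,q,r')\|^2\le \frac{2\gamma_2^2}{\mu(\gamma_1-L_\rho)}\|r-r'\|^2 .
\]
To get it, I would use that $r$ enters $F$ only through $-\frac{\gamma_2}{2}\|z-r\|^2$. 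Comparing the saddle values $H(p,q,r)$ and $H(p,q,r')$ via the three-term splitting used in Lemma~\ref{lemma:x_y_star_z_continuity}(i) and Lemma~\ref{lemma:z_star_continuity}(i), the cross term is $\gamma_2\langle z^*-z^{*\prime},r-r'\rangle$. The quadratic growth in $(x,y)$ then comes from $(\gamma_1-L_\rho)$-strong convexity. The factor $1/\mu$ would enter through the PL condition of Assumption~\ref{assumption:g-pl} on $h_\rho$, which controls how far the primal minimizer can move when the dual value gap changes. With this, the left-hand side is at most $\frac{2\gamma_2^2}{\mu(\gamma_1-L_\rho)}\|r^+-\hat r\|^2$.

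Next I decompose $r^+-\hat r$ by a triangle inequality through $r^*(\bar p_t,\bar q_t)$ and $\bar r_{t+1}$. First, the move $r^*(\bar p_{t+1},\bar q_{t+1})$ versus $r^*(\bar p_t,\bar q_t)$ is controlled by $\omega_5$: Lemma~\ref{lemma:z_star_continuity}(iv) applies because at $r^*$ the fixed-point relation $z^*(p,q,r^*)=r^*$ holds. This produces the first term $\frac{2\omega_5^2\gamma_2^2}{\mu(\gamma_1-L_\rho)}(\|\bar p_{t+1}-\bar p_t\|^2+\|\bar q_{t+1}-\bar q_t\|^2)$. Second, the gap between $r^+$ and $r^*(\bar p_t,\bar q_t)$ is handled as follows. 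Writing $r^+-\bar r_t=\alpha_r\eta(z^*(\bar p_t,\bar q_t,\bar r_t)-\bar r_t)$, the quantity $\|z^*(\bar p_t,\bar q_t,\bar r_t)-\bar r_t\|$ is converted, via the $\omega_3$-Lipschitzness of Lemma~\ref{lemma:z_star_continuity}(ii), together with the identity $\bar r_{t+1}-\bar r_t=\alpha_r\eta(\bar z_{t+1}-\bar r_t)$ (after averaging), into $\|\bar r_{t+1}-\bar r_t\|^2$ plus $\alpha_r^2\eta^2\|\bar z_{t+1}-z^*(\bar p_t,\bar q_t,\bar r_t)\|^2$. This is exactly why $C_r$ contains $\omega_3^2+\frac{(1-\alpha_r\eta)^2}{\alpha_r^2\eta^2}$: the $(1-\alpha_r\eta)/(\alpha_r\eta)$ factor arises from rewriting $\bar r_t$ in terms of $\bar r_{t+1}$ and $\bar z_{t+1}$.

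The remaining term $\|\bar z_{t+1}-z^*(\bar p_t,\bar q_t,\bar r_t)\|^2$ is expanded through $z^+$. Since $\bar z_{t+1}-z^+$ equals $\alpha_z\eta$ times the gap between $\bar w_t$ and $\nabla_zF_d$, and $\|z^+-z^*\|$ is bounded by error bounds for the $(\gamma_2-L_\rho)$-strongly concave $F_d$, this yields $\|\nabla_z F_d(\bar z_t)\|^2/(\gamma_2-L_\rho)^2$. I then write
\[
\nabla_zF_d(\bar z_t)=\nabla_zF(\bar x_t,\bar y_t,\bar z_t)+\big[\nabla_zF(x^*(\bar z_t),y^*(\bar z_t),\bar z_t)-\nabla_zF(\bar x_t,\bar y_t,\bar z_t)\big].
\]
The bracket is bounded by $L_g\|\bar x_t-x^*(\bar z_t)\|\le \frac{L_g}{\gamma_1-L_\rho}\|\nabla_{x}F\|$ using strong convexity. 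Splitting the first piece as $\bar w_t+(\nabla_zF-\bar w_t)$ produces the $\|\bar w_t\|^2$, $\|\nabla_xF\|^2$ and $\|\nabla_zF-\bar w_t\|^2$ terms with the stated constants after Young's inequalities with factors 2 and 4.

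The main obstacle is the first step, the $r$-sensitivity of $(x^*,y^*)(p,q,\cdot)$ with the $1/\mu$ constant. A naive strong-convexity argument only gives $(\omega_2\omega_3)^2$. The PL-based refinement requires comparing $F_d(\cdot;p,q,r)$ at two maximizers and invoking Assumption~\ref{assumption:g-pl} for $g$ in $z$. I would try first to prove $\|(x^*,y^*)(r)-(x^*,y^*)(r')\|^2\le\frac{2}{\gamma_1-L_\rho}\,(\text{value gap})$ and bound the value gap by $\frac{\gamma_2^2}{\mu}\|r-r'\|^2$ via PL quadratic growth.
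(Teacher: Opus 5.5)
Your second half is exactly the paper's: splitting $r^+-\bar r_t$ through $\bar r_{t+1}$ and $\bar z_{t+1}$, passing through $z^+$, and using strong concavity of $F_d$ together with $\|\bar x_t-x^*(\bar z_t;\bar p_t,\bar q_t,\bar r_t)\|\le\|\nabla_xF\|/(\gamma_1-L_\rho)$. The genuine gap is in the reduction to $\|r^+-\hat r\|^2$.

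Bounding the left-hand side by a multiple of $\|r^+-\hat r\|^2$ is fine up to constants. For example, the constant $\omega_2^2\omega_3^2$ works, since $x^*(z;p,q,r)$ does not depend on $r$. The trouble is in controlling $\|r^+-\hat r\|$.

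First, your claim that $r^*(p,q)$ moves by at most $\omega_5\|(\bar p_{t+1}-\bar p_t,\bar q_{t+1}-\bar q_t)\|$ does not follow from Lemma~\ref{lemma:z_star_continuity}(iv), which holds at a \emph{fixed} $r$. The fixed-point relation $r^*=z^*(p,q,r^*)$ only gives $\|r^*(p,q)-r^*(p',q')\|\le\omega_5\|(p-p',q-q')\|+\omega_3\|r^*(p,q)-r^*(p',q')\|$. Since $\omega_3=\gamma_2/(\gamma_2-L_\rho)>1$, nothing can be absorbed.

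Second, nothing makes $H(p,q,\cdot)$ strongly concave in $r$; at best it is PL, since $g$ is only PL. So $\hat r$ is just \emph{some} maximizer and need not be close to $r^+$ even when $r^+$ is nearly optimal.

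Third, $\|\bar r_{t+1}-\bar r_t\|$ and $\|\bar z_{t+1}-z^*(\bar p_t,\bar q_t,\bar r_t)\|$ control the one-step increment $\|r^+-\bar r_t\|$, not the distance $\|r^+-r^*(\bar p_t,\bar q_t)\|$ to a maximizer. Passing from one to the other would need an error bound $\mathrm{dist}(r,\arg\max_{r'}H(p,q,r'))\le\frac{\gamma_2}{\mu}\|z^*(p,q,r)-r\|$. That multiplies in further factors $\gamma_2^2/\mu^2$ and cannot yield the single $1/\mu$ in $C_r$.

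The missing idea, which is the paper's first step, is to apply PL to a \emph{residual at $r^+$} rather than to a distance. Let $(x^*_+,y^*_+)=(x^*,y^*)(\bar p_{t+1},\bar q_{t+1},r^+)$ and $\mathcal{F}(r)=F_p(x^*_+,y^*_+;\bar p_{t+1},\bar q_{t+1},r)$. Combine three facts: $(\gamma_1-L_\rho)$-strong convexity of $F_p(\cdot,\cdot;\bar p_{t+1},\bar q_{t+1},\hat r)$; maximality, $H(\bar p_{t+1},\bar q_{t+1},\hat r)\ge H(\bar p_{t+1},\bar q_{t+1},r^+)=\mathcal{F}(r^+)$; and the PL property in $r$. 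Together they give
\[
\frac{\gamma_1-L_\rho}{2}\,\mathrm{LHS}\le\mathcal{F}(\hat r)-\mathcal{F}(r^+)\le\max_r\mathcal{F}(r)-\mathcal{F}(r^+)\le\frac{1}{2\mu}\|\nabla\mathcal{F}(r^+)\|^2=\frac{\gamma_2^2}{2\mu}\|z^*(\bar p_{t+1},\bar q_{t+1},r^+)-r^+\|^2 ,
\]
which holds for any maximizer $\hat r$.

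Then decompose the residual as
\[
z^*(\bar p_{t+1},\bar q_{t+1},r^+)-r^+=[z^*(\bar p_{t+1},\bar q_{t+1},r^+)-z^*(\bar p_t,\bar q_t,r^+)]+[z^*(\bar p_t,\bar q_t,r^+)-z^*(\bar p_t,\bar q_t,\bar r_t)]+(1-\alpha_r\eta)(z^*(\bar p_t,\bar q_t,\bar r_t)-\bar r_t).
\]
Apply Lemma~\ref{lemma:z_star_continuity}(iv) and (ii) to the two brackets, each at fixed arguments. Then use $z^*(\bar p_t,\bar q_t,\bar r_t)-\bar r_t=(r^+-\bar r_t)/(\alpha_r\eta)$.

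This is where $\omega_5$ and $\omega_3^2+\frac{(1-\alpha_r\eta)^2}{\alpha_r^2\eta^2}$ actually come from: the definition of $r^+$, not of $\bar r_{t+1}$. It leaves $C_r\|r^+-\bar r_t\|^2$, after which your second half applies unchanged.
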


\begin{proof}
    By the $(\gamma_1-L_\rho)$-strong convexity of $F_p$ in $(x,y)$, denoting $x^*_+: = x^*(\bar{p}_{t+1},\bar{q}_{t+1},r^+)$, $y^*_+: = y^*(\bar{p}_{t+1},\bar{q}_{t+1},r^+)$  we have
    \begin{align}
        & \quad \frac{\gamma_1-L_\rho}{2}\Big(\| x^*_+ - x^*(\bar{p}_{t+1}, \bar{q}_{t+1}, \hat{r}) \|^2  + \| y^*_+ - y^*( \bar{p}_{t+1}, \bar{q}_{t+1}, \hat{r}) \|^2\Big) \notag \\
        & \leq \max_r F_p(x^*_+, y^*_+;\bar{p}_{t+1},\bar{q}_{t+1},r) - F_p(x^*_+,y^*_+;\bar{p}_{t+1},\bar{q}_{t+1},\hat{r}) \notag \\
		& \leq \max_r F_p(x^*_+,y^*_+;\bar{p}_{t+1},\bar{q}_{t+1},r) - F_p(x^*_+,y^*_+;\bar{p}_{t+1},\bar{q}_{t+1},r^+) \notag \\
        & \leq \frac{1}{2\mu}\|\nabla_r F_p(x^*_+,y^*_+;\bar{p}_{t+1},\bar{q}_{t+1},r^+)\|^2 \notag \\
		& = \frac{\gamma_2^2}{2\mu}\|z^*(\bar{p}_{t+1},\bar{q}_{t+1},r^+) - r^+\|^2 \ , \notag 
    \end{align}
    where the second-to-last step holds due to the PL-property.

    Then, since $r^+ = \bar{r}_t + \alpha_r\eta(z^*(\bar{p}_t,\bar{q}_t,\bar{r}_t)-\bar{r}_t)$, by Young's inequality, we obtain
    \begin{align}
        & \quad \|z^*(\bar{p}_{t+1},\bar{q}_{t+1},r^+)-r^+\|^2 \notag \\
        & \leq 2\|z^*(\bar{p}_{t+1},\bar{q}_{t+1},r^+) - z^*(\bar{p}_t,\bar{q}_t,\bar{r}_t)\|^2 + 2(1-\alpha_r\eta)^2\|z^*(\bar{p}_t,\bar{q}_t,\bar{r}_t)-\bar{r}_t\|^2 \notag \\
		& \leq 4\|z^*(\bar{p}_{t+1},\bar{q}_{t+1},r^+) - z^*(\bar{p}_t,\bar{q}_t,r^+)\|^2 + 4\|z^*(\bar{p}_t,\bar{q}_t,r^+) - z^*(\bar{p}_t,\bar{q}_t,\bar{r}_t)\|^2 \notag \\
		& \quad + 2(1-\alpha_r\eta)^2\|z^*(\bar{p}_t,\bar{q}_t,\bar{r}_t)-\bar{r}_t\|^2 \notag \\
		& \leq 4\omega_5^2\big(\|\bar{p}_{t+1}-\bar{p}_t\|^2+\|\bar{q}_{t+1}-\bar{q}_t\|^2\big) + 4\omega_3^2\|r^+ - \bar{r}_t\|^2 + \frac{2(1-\alpha_r\eta)^2}{\alpha_r^2\eta^2}\|r^+ - \bar{r}_t\|^2 \notag \\
		& \leq 4\omega_5^2\big(\|\bar{p}_{t+1}-\bar{p}_t\|^2+\|\bar{q}_{t+1}-\bar{q}_t\|^2\big) + 4\big(\omega_3^2+\frac{(1-\alpha_r\eta)^2}{\alpha_r^2\eta^2}\big)\|r^+ - \bar{r}_t\|^2 \ . \notag
    \end{align}
    Finally, we derive
    \begin{align}
        & \quad \|r^+ - \bar{r}_t\|^2  \leq  2\|\bar{r}_{t+1} - \bar{r}_t\|^2 + 2\|r^+ - \bar{r}_{t+1} \|^2  \notag \\
        & = 2\|\bar{r}_{t+1} - \bar{r}_t\|^2 + 2\|\bar{r}_t + \alpha_r\eta(z^*(\bar{p}_t,\bar{q}_t,\bar{r}_t)-\bar{r}_t) - \bar{r}_{t} - \alpha_r\eta(\bar{z}_{t+1} - \bar{r}_{t}) \|^2  \notag \\
        & = 2\|\bar{r}_{t+1}-\bar{r}_t\|^2 + 2\alpha_r^2\eta^2\|\bar{z}_{t+1}-z^*(\bar{p}_t,\bar{q}_t,\bar{r}_t)\|^2 \notag \\
		& \leq 2\|\bar{r}_{t+1}-\bar{r}_t\|^2 + 4\alpha_r^2\eta^2\|\bar{z}_{t+1}-\bar{z}_t\|^2 + 4\alpha_r^2\eta^2\|\bar{z}_t - z^*(\bar{p}_t,\bar{q}_t,\bar{r}_t)\|^2 \notag \\
		& \leq 2\|\bar{r}_{t+1}-\bar{r}_t\|^2 + 4\alpha_r^2\eta^2\alpha_z^2\eta^2\|\bar{w}_t\|^2 + \frac{4\alpha_r^2}{\alpha_z^2(\gamma_2-L_\rho)^2}\|z^+-\bar{z}_t\|^2 \ , \notag
    \end{align}
    where the last step holds due to the the $(\gamma_2-L_\rho)$-strong concavity of $F_d(\cdot;\bar{p}_t,\bar{q}_t,\bar{r}_t)$, and the notation of $z^+ = \bar{z}_t + \alpha_z\eta\nabla_zF_d(\bar{z}_t;\bar{p}_t,\bar{q}_t,\bar{r}_t)$:
    \begin{align}
		(\gamma_2-L_\rho)\|\bar{z}_t - z^*(\bar{p}_t,\bar{q}_t,\bar{r}_t)\| \leq \|\nabla_z F_d(\bar{z}_t;\bar{p}_t,\bar{q}_t,\bar{r}_t)\| = \frac{1}{\alpha_z\eta}\|z^+-\bar{z}_t\| \ . \notag
	\end{align}
    Similarly, we derive
    \begin{align}\label{eq:z_plus_z_diff}
        & \quad \|z^+-\bar{z}_t\|^2 \leq 2\|z^+-\bar{z}_{t+1}\|^2 + 2\|\bar{z}_{t+1}-\bar{z}_t\|^2 \notag \\
		& = 2\alpha_z^2\eta^2\|\nabla_zF_d(\bar{z}_t;\bar{p}_t,\bar{q}_t,\bar{r}_t)-\bar{w}_t\|^2 + 2\alpha_z^2\eta^2\|\bar{w}_t\|^2 \notag \\
		& \leq 4\alpha_z^2\eta^2\|\nabla_zF_d(\bar{z}_t;\bar{p}_t,\bar{q}_t,\bar{r}_t)-\nabla_zF(\bar{x}_t,\bar{y}_t,\bar{z}_t;\bar{p}_t,\bar{q}_t,\bar{r}_t)\|^2  \notag \\
		& \quad + 4\alpha_z^2\eta^2\|\nabla_zF(\bar{x}_t,\bar{y}_t,\bar{z}_t;\bar{p}_t,\bar{q}_t,\bar{r}_t)-\bar{w}_t\|^2 + 2\alpha_z^2\eta^2\|\bar{w}_t\|^2 \notag \\
        & \leq 4\alpha_z^2\eta^2L_g^2\|x^*(\bar{z}_t;\bar{p}_t,\bar{q}_t,\bar{r}_t)-\bar{x}_t\|^2 + 4\alpha_z^2\eta^2\|\nabla_zF(\bar{x}_t,\bar{y}_t,\bar{z}_t;\bar{p}_t,\bar{q}_t,\bar{r}_t)-\bar{w}_t\|^2 \notag \\
		& \quad  + 2\alpha_z^2\eta^2\|\bar{w}_t\|^2 \notag \\
		& \leq \frac{4\alpha_z^2\eta^2L_g^2}{(\gamma_1-L_\rho)^2}\|\nabla_xF(\bar{x}_t,\bar{y}_t,\bar{z}_t;\bar{p}_t,\bar{q}_t,\bar{r}_t)\|^2 + 4\alpha_z^2\eta^2\|\nabla_zF(\bar{x}_t,\bar{y}_t,\bar{z}_t;\bar{p}_t,\bar{q}_t,\bar{r}_t)-\bar{w}_t\|^2 \notag \\
		& \quad + 2\alpha_z^2\eta^2\|\bar{w}_t\|^2 \ , 
    \end{align}
     where the last step holds from the $(\gamma_1-L_\rho)$-strongly convex of $F(\cdot,\cdot,\bar{z}_t;\bar{p}_t,\bar{q}_t,\bar{r}_t)$:
    \begin{align}
        \| x^*(\bar{z}_{t}; \bar{p}_{t}, \bar{q}_{t}, \bar{r}_t) - \bar{x}_{t}\|^2 \leq \frac{1}{(\gamma_1-L_{\rho})^2}\|  \nabla_{x} F(\bar{x}_{t}, \bar{y}_{t},  \bar{z}_{t}; \bar{p}_{t}, \bar{q}_{t}, \bar{r}_t) \|^2  \ . \notag 
    \end{align}
    Thus, we have
    \begin{align}\label{eq:r_plus_r_diff}
         &  \|r^+ - \bar{r}_t\|^2  \leq  2\|\bar{r}_{t+1}-\bar{r}_t\|^2 + 4\alpha_r^2\eta^2\Big(\alpha_z^2\eta^2 + \frac{2}{(\gamma_2-L_\rho)^2}\Big) \|\bar{w}_t\|^2 \notag \\
         & + \frac{16\alpha_r^2\eta^2L_g^2}{(\gamma_1-L_\rho)^2(\gamma_2-L_\rho)^2}\|\nabla_xF(\bar{x}_t,\bar{y}_t,\bar{z}_t;\bar{p}_t,\bar{q}_t,\bar{r}_t)\|^2 \notag \\
         & + \frac{16\alpha_r^2\eta^2}{(\gamma_2-L_\rho)^2}\|\nabla_zF(\bar{x}_t,\bar{y}_t,\bar{z}_t;\bar{p}_t,\bar{q}_t,\bar{r}_t)-\bar{w}_t\|^2 \ .   
    \end{align}
    Combining the above and dividing by $\frac{\gamma_1-L_\rho}{2}$ completes the proof.
\end{proof}

\begin{lemma}\label{lemma:xy_z_star_xy_star_r}
Given Assumptions~\ref{assumption:f-smooth}-\ref{assumption:variance}, by defining $\hat{r} := r^*(\bar{p}_{t+1}, \bar{q}_{t+1})$, and $z^+ = z + \alpha_{z}\eta\nabla_{z} F(x^*({z}; p, q, r), y^*({z}; p, q, r),  z; p, q, r)$, the following inequality holds:
    \begin{align}
	   &  \qquad \|x^*(\bar{z}_{t+1}; \bar{p}_{t+1}, \bar{q}_{t+1}, \bar{r}_{t}) - x^*(\bar{p}_{t+1}, \bar{q}_{t+1},  {r}^+) \|^2   \notag \\
       & \quad +  \| y^*(\bar{z}_{t+1}; \bar{p}_{t+1}, \bar{q}_{t+1}, \bar{r}_{t})- y^*(\bar{p}_{t+1}, \bar{q}_{t+1},  {r}^+) \|^2  \notag \\
       & \leq \Big(10\omega_2^2\alpha_z^2\eta^2\big(1+2\omega_3^2\alpha_r^2\eta^2\big) + \frac{10\omega_2^2\big(1+4\omega_3^2\alpha_r^2\eta^2\big)}{(\gamma_2-L_\rho)^2}\Big)\|\bar{w}_t\|^2 \notag \\
        & \quad + \Big(\frac{30\omega_2^2\alpha_z^2\eta^2L_g^2}{(\gamma_1-L_\rho)^2}+\frac{20\omega_2^2L_g^2\big(1+4\omega_3^2\alpha_r^2\eta^2\big)}{(\gamma_1-L_\rho)^2(\gamma_2-L_\rho)^2}\Big)\|\nabla_xF(\bar{x}_t,\bar{y}_t,\bar{z}_t;\bar{p}_t,\bar{q}_t,\bar{r}_t)\|^2 \notag \\
        & \quad + \Big(30\omega_2^2\alpha_z^2\eta^2+\frac{20\omega_2^2\big(1+4\omega_3^2\alpha_r^2\eta^2\big)}{(\gamma_2-L_\rho)^2}\Big)\|\nabla_zF(\bar{x}_t,\bar{y}_t,\bar{z}_t;\bar{p}_t,\bar{q}_t,\bar{r}_t)-\bar{w}_t\|^2 \notag \\
        & \quad + 5\omega_2^2\omega_5^2(\|\bar{p}_{t+1}-\bar{p}_t\|^2+\|\bar{q}_{t+1}-\bar{q}_t\|^2) + 10\omega_2^2\omega_3^2\|\bar{r}_{t+1}-\bar{r}_t\|^2 \ . 
    \end{align}
\end{lemma}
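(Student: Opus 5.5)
The strategy is to connect the two points through the intermediate $z^*(\bar{p}_{t+1},\bar{q}_{t+1},r^+)$. By definition, $x^*(\bar{p}_{t+1},\bar{q}_{t+1},r^+)=x^*(z^*(\bar{p}_{t+1},\bar{q}_{t+1},r^+);\bar{p}_{t+1},\bar{q}_{t+1},r^+)$, and similarly for $y^*$. Both points are then of the form $(x^*,y^*)(z;\bar{p}_{t+1},\bar{q}_{t+1},\cdot)$, so Lemma~\ref{lemma:x_y_star_z_continuity}(iii) applies. The difference in the last argument ($\bar{r}_t$ versus $r^+$) does not matter, since $F$ depends on $r$ only through $-\frac{\gamma_2}{2}\|z-r\|^2$, which is independent of $(x,y)$. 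Hence $x^*(z;p,q,r)$ and $y^*(z;p,q,r)$ do not depend on $r$. Applying (iii), the left-hand side is at most
\[
\omega_2^2\|\bar{z}_{t+1}-z^*(\bar{p}_{t+1},\bar{q}_{t+1},r^+)\|^2 .
\]

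It then suffices to bound $\|\bar{z}_{t+1}-z^*(\bar{p}_{t+1},\bar{q}_{t+1},r^+)\|^2$. The constant $5$ in front of the $p,q$ term suggests a five-term Young split:
\begin{align}
\bar{z}_{t+1}-z^*(\bar{p}_{t+1},\bar{q}_{t+1},r^+) &= (\bar{z}_{t+1}-\bar{z}_t) + (\bar{z}_t-z^*(\bar{p}_t,\bar{q}_t,\bar{r}_t)) \notag \\
&\quad + (z^*(\bar{p}_t,\bar{q}_t,\bar{r}_t)-z^*(\bar{p}_t,\bar{q}_t,\bar{r}_{t+1})) \notag \\
&\quad + (z^*(\bar{p}_t,\bar{q}_t,\bar{r}_{t+1})-z^*(\bar{p}_t,\bar{q}_t,r^+)) \notag \\
&\quad + (z^*(\bar{p}_t,\bar{q}_t,r^+)-z^*(\bar{p}_{t+1},\bar{q}_{t+1},r^+)) . \notag
\end{align}
The terms are handled as follows.
\begin{itemize}
\item The first term equals $\alpha_z^2\eta^2\|\bar{w}_t\|^2$.
\item The last term is at most $\omega_5^2(\|\bar{p}_{t+1}-\bar{p}_t\|^2+\|\bar{q}_{t+1}-\bar{q}_t\|^2)$ by Lemma~\ref{lemma:z_star_continuity}(iv).
\item The third term is at most $\omega_3^2\|\bar{r}_{t+1}-\bar{r}_t\|^2$ by Lemma~\ref{lemma:z_star_continuity}(ii). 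After multiplying by $5\omega_2^2$, the two terms above produce the $5\omega_2^2\omega_5^2$ term of the statement, and this one contributes part of the $10\omega_2^2\omega_3^2$ term.
\item The fourth term is at most $\omega_3^2\|\bar{r}_{t+1}-r^+\|^2 = \omega_3^2\alpha_r^2\eta^2\|\bar{z}_{t+1}-z^*(\bar{p}_t,\bar{q}_t,\bar{r}_t)\|^2$. I then split this as $2\|\bar{z}_{t+1}-\bar{z}_t\|^2+2\|\bar{z}_t-z^*(\bar{p}_t,\bar{q}_t,\bar{r}_t)\|^2$, which explains the factors $(1+2\omega_3^2\alpha_r^2\eta^2)$ and $(1+4\omega_3^2\alpha_r^2\eta^2)$ in the statement.
\end{itemize}

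The remaining quantity $\|\bar{z}_t-z^*(\bar{p}_t,\bar{q}_t,\bar{r}_t)\|^2$ is handled exactly as in the proof of Lemma~\ref{lemma:xy_xy_star_r}. Strong concavity of $F_d$ gives $\|\bar{z}_t-z^*\|\le \frac{1}{\alpha_z\eta(\gamma_2-L_\rho)}\|z^+-\bar{z}_t\|$. I then invoke the bound (\ref{eq:z_plus_z_diff}) on $\|z^+-\bar{z}_t\|^2$ in terms of $\|\nabla_xF\|^2$, $\|\nabla_zF-\bar{w}_t\|^2$ and $\|\bar{w}_t\|^2$. This produces the $\frac{1}{(\gamma_2-L_\rho)^2}$ and $\frac{L_g^2}{(\gamma_1-L_\rho)^2(\gamma_2-L_\rho)^2}$ coefficients. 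The $\alpha_z^2\eta^2$-type coefficients multiplying $\|\nabla_xF\|^2$ and $\|\nabla_zF-\bar{w}_t\|^2$ arise if one instead routes part of $\bar{z}_{t+1}-\bar{z}_t$ through $z^+$.

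The main obstacle is bookkeeping rather than any new idea: choosing the split and the Young weights so that every coefficient matches the stated constants $10,20,30$. The stated constants appear somewhat loose, and the target is any bound at most as large as the statement. If exact matching proves awkward, I would verify that each of my coefficients is no larger than the stated one, using $\alpha_r\eta\le 1$ and similar bounds where needed.
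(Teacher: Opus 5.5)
Your argument is correct. Its skeleton matches the paper's: you reduce via Lemma~\ref{lemma:x_y_star_z_continuity}(iii) to $\omega_2^2\|\bar z_{t+1}-z^*(\bar p_{t+1},\bar q_{t+1},r^+)\|^2$, split five ways, and finish with Lemma~\ref{lemma:z_star_continuity}(ii),(iv) and the strong-concavity estimate through $z^+$ and Eq.~(\ref{eq:z_plus_z_diff}).

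The difference is the choice of intermediate points. The paper uses the chain $\bar z_{t+1}\to z^+\to \bar z_t\to z^*(\bar p_t,\bar q_t,\bar r_t)\to z^*(\bar p_{t+1},\bar q_{t+1},\bar r_t)\to z^*(\bar p_{t+1},\bar q_{t+1},r^+)$. It bounds the last leg by $\omega_3^2\|r^+-\bar r_t\|^2$ and then reuses Eq.~(\ref{eq:r_plus_r_diff}), which inserts $\bar r_{t+1}$ through one more factor-$2$ Young split. The detour through $z^+$ in the first leg is what creates the $\alpha_z^2\eta^2$ coefficients on $\|\nabla_xF\|^2$ and $\|\nabla_zF-\bar w_t\|^2$.

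Your chain instead uses $\|\bar z_{t+1}-\bar z_t\|=\alpha_z\eta\|\bar w_t\|$ exactly and inserts $\bar r_{t+1}$ inside the five-term split. This gives the bound $5\omega_2^2\big[(1+2\omega_3^2\alpha_r^2\eta^2)\big(\alpha_z^2\eta^2\|\bar w_t\|^2+\|\bar z_t-z^*(\bar p_t,\bar q_t,\bar r_t)\|^2\big)+\omega_3^2\|\bar r_{t+1}-\bar r_t\|^2+\omega_5^2(\|\bar p_{t+1}-\bar p_t\|^2+\|\bar q_{t+1}-\bar q_t\|^2)\big]$. After Eq.~(\ref{eq:z_plus_z_diff}), each resulting coefficient is no larger than the stated one. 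For example, you get $5\omega_2^2\omega_3^2$ instead of $10\omega_2^2\omega_3^2$, and no $\alpha_z^2\eta^2$ terms on the gradient quantities.

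One correction to your bookkeeping: your route produces the factor $1+2\omega_3^2\alpha_r^2\eta^2$ on the $\frac{1}{(\gamma_2-L_\rho)^2}$-type terms, not $1+4\omega_3^2\alpha_r^2\eta^2$. The $4$ in the statement comes from the paper's extra Young step, so your split does not ``explain'' it. Since your constant is smaller, this is harmless.

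Your explicit remark that $(x^*,y^*)(z;p,q,r)$ does not depend on $r$ is also useful. It justifies a step the paper uses silently when applying Lemma~\ref{lemma:x_y_star_z_continuity}(iii) to points whose last arguments are $\bar r_t$ and $r^+$.
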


\begin{proof}
At first, from Lemmas~\ref{lemma:x_y_star_z_continuity} and ~\ref{lemma:z_star_continuity}, we decompose 
    \begin{align}
	   &  \qquad \|x^*(\bar{z}_{t+1}; \bar{p}_{t+1}, \bar{q}_{t+1}, \bar{r}_{t}) - x^*(\bar{p}_{t+1}, \bar{q}_{t+1},  {r}^+) \|^2   \notag \\
       & \quad +  \| y^*(\bar{z}_{t+1}; \bar{p}_{t+1}, \bar{q}_{t+1}, \bar{r}_{t})- y^*(\bar{p}_{t+1}, \bar{q}_{t+1},  {r}^+) \|^2  \notag \\
       & \leq \omega_2^2\|\bar{z}_{t+1} - z^*(\bar{p}_{t+1},\bar{q}_{t+1},r^+)\|^2  \notag \\
       & \leq 5\omega_2^2\|\bar{z}_{t+1}-z^+\|^2 + 5\omega_2^2\|z^+-\bar{z}_t\|^2 + 5\omega_2^2\|\bar{z}_t - z^*(\bar{p}_t,\bar{q}_t,\bar{r}_t)\|^2\notag \\
       & \quad + 5\omega_2^2\|z^*(\bar{p}_t,\bar{q}_t,\bar{r}_t)-z^*(\bar{p}_{t+1},\bar{q}_{t+1},\bar{r}_t)\|^2   + 5\omega_2^2\|z^*(\bar{p}_{t+1},\bar{q}_{t+1},\bar{r}_t)-z^*(\bar{p}_{t+1},\bar{q}_{t+1},r^+)\|^2 \notag \\
       & \leq 5\omega_2^2\|\bar{z}_{t+1}-z^+\|^2 + 5\omega_2^2(1 + \frac{1}{\alpha_z^2\eta^2 (\gamma_2-L_\rho)^2})\|z^+-\bar{z}_t\|^2  \notag \\
       & \quad + 5\omega_2^2\omega_5^2(\|\bar{p}_{t+1}-\bar{p}_t\|^2+\|\bar{q}_{t+1}-\bar{q}_t\|^2)  + 5\omega_2^2\omega_3^2\|r^+-\bar{r}_t\|^2 \ , \notag 
    \end{align}
    where the last step holds due to the $(\gamma_2-L_\rho)$-strong concavity of $F_d(\cdot;\bar{p}_t,\bar{q}_t,\bar{r}_t)$ and $z^+= \bar{z}_t+ \alpha_z\eta\nabla_zF_d(\bar{z}_t;\bar{p}_t,\bar{q}_t,\bar{r}_t)$, 
    \begin{align}
        (\gamma_2-L_\rho)\|\bar{z}_t - z^*(\bar{p}_t,\bar{q}_t,\bar{r}_t)\| \leq \|\nabla_z F_d(\bar{z}_t;\bar{p}_t,\bar{q}_t,\bar{r}_t)\| = \frac{1}{\alpha_z\eta}\|z^+-\bar{z}_t\| \ . \notag
    \end{align}
    In addition, following Eq.~(\ref{eq:z_plus_z_diff}), we derive
    \begin{align}
        \|\bar{z}_{t+1}-z^+\|^2 & \leq \frac{2\alpha_z^2\eta^2L_g^2}{(\gamma_1-L_\rho)^2}\|\nabla_xF(\bar{x}_t,\bar{y}_t,\bar{z}_t;\bar{p}_t,\bar{q}_t,\bar{r}_t)\|^2 \notag \\
        & \quad + 2\alpha_z^2\eta^2\|\nabla_zF(\bar{x}_t,\bar{y}_t,\bar{z}_t;\bar{p}_t,\bar{q}_t,\bar{r}_t)-\bar{w}_t\|^2 \ . \notag
    \end{align}
    Combining above inequalities with Eqs.~(\ref{eq:z_plus_z_diff}) and ~(\ref{eq:r_plus_r_diff}) completes the proof.
\end{proof}

\begin{lemma}\label{lemma:F_smooth}
    Given Assumptions~\ref{assumption:f-smooth}-\ref{assumption:variance}, by setting $\eta \leq \min\{\frac{1}{\alpha_p}, \frac{1}{\alpha_q} , \frac{1}{2\alpha_{x}L_{F,1}} \}$, and and defining $L_{F,1}:=\gamma_1+L_{\rho}$, $L_{F,2}:=\gamma_2+L_{\rho}$, the following inequality holds:
	\begin{align}
	    &   F(\bar{x}_{t+1}, \bar{y}_{t+1}, \bar{z}_{t+1}; \bar{p}_{t+1}, \bar{q}_{t+1}, \bar{r}_{t+1}) - F(\bar{x}_{t}, \bar{y}_{t}, \bar{z}_{t}; \bar{p}_{t}, \bar{q}_{t}, \bar{r}_{t})  \leq   - \frac{\gamma_1}{2\alpha_{p}\eta} \|\bar{p}_{t+1}-\bar{p}_{t}\|^2\notag \\
        & - \frac{\gamma_1}{2\alpha_{q}\eta}\|\bar{q}_{t+1}-\bar{q}_{t}\|^2   + \frac{\gamma_2(1-(1-\alpha_r\eta)^2)}{2\alpha_r^2\eta^2}\|\bar{r}_{t+1}-\bar{r}_{t}\|^2 - \frac{\alpha_x \eta}{4}\| \nabla_{x} F(\bar{x}_{t}, \bar{y}_{t}, \bar{z}_{t}; \bar{p}_{t}, \bar{q}_{t}, \bar{r}_{t}) \|^2 \notag \\
        &   -   \frac{\alpha_{y}\eta}{2} \|\nabla_{y}  F(\bar{x}_{t}, \bar{y}_{t}, \bar{z}_{t}; \bar{p}_{t}, \bar{q}_{t}, \bar{r}_{t}) \|^2 + \frac{\alpha_z\eta}{2} \|\nabla_{z} F(\bar{x}_{t}, \bar{y}_{t}, \bar{z}_{t}; \bar{p}_{t}, \bar{q}_{t}, \bar{r}_{t})\|^2  \\
        &  + \alpha_x \eta\| \nabla_{x} F(\bar{x}_{t}, \bar{y}_{t}, \bar{z}_{t}; \bar{p}_{t}, \bar{q}_{t}, \bar{r}_{t}) - \bar{u}_{t}\|^2 + \frac{\alpha_{y}\eta}{2} \|\nabla_{y}  F(\bar{x}_{t}, \bar{y}_{t}, \bar{z}_{t}; \bar{p}_{t}, \bar{q}_{t}, \bar{r}_{t}) -  \bar{v}_{t}\|^2 \notag \\
        & + \Big(4 \alpha_{x}^2\alpha_z\eta^3 L_{g}^2 -  \frac{\alpha_x \eta}{4} \Big) \| \bar{u}_{t}\|^2 + \Big( \frac{3\alpha_x\alpha_{y}^2 \eta^3L_{\rho}^2}{2} -  \frac{\alpha_{y}\eta}{4}\Big)  \|  \bar{v}_{t} \|^2 + \Big(\frac{ \alpha^2_z\eta^2 L_{F,2}}{2}+ \frac{3\alpha_z\eta}{4} \Big) \|\bar{w}_{t}\|^2 \ . \notag
	\end{align}
\end{lemma}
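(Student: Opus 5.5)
We telescope the one-step change of $F$ along the averaged iterates by splitting it into three pieces, each changing one block of arguments while the others are frozen:
\begin{align}
 &F(\bar{x}_{t+1},\bar{y}_{t+1},\bar{z}_{t+1};\bar{p}_{t+1},\bar{q}_{t+1},\bar{r}_{t+1}) - F(\bar{x}_{t},\bar{y}_{t},\bar{z}_{t};\bar{p}_{t},\bar{q}_{t},\bar{r}_{t}) \notag \\
 & = \underbrace{[F(\bar{x}_{t+1},\bar{y}_{t+1},\bar{z}_{t};\cdot_t) - F(\bar{x}_{t},\bar{y}_{t},\bar{z}_{t};\cdot_t)]}_{A_1} + \underbrace{[F(\bar{x}_{t+1},\bar{y}_{t+1},\bar{z}_{t+1};\cdot_t) - F(\bar{x}_{t+1},\bar{y}_{t+1},\bar{z}_{t};\cdot_t)]}_{A_2} \notag \\
 & \quad + \underbrace{[F(\cdot_{t+1};\bar{p}_{t+1},\bar{q}_{t+1},\bar{r}_{t+1}) - F(\cdot_{t+1};\bar{p}_{t},\bar{q}_{t},\bar{r}_{t})]}_{A_3} . \notag
\end{align}
Since the communication step preserves averages and all updates are linear, we have
\begin{align}
\bar{x}_{t+1}=\bar{x}_t-\alpha_x\eta\bar{u}_t,\quad \bar{y}_{t+1}=\bar{y}_t-\alpha_y\eta\bar{v}_t,\quad \bar{z}_{t+1}=\bar{z}_t-\alpha_z\eta\bar{w}_t, \notag
\end{align}
and similarly for $\bar{p},\bar{q},\bar{r}$. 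Note that $z$ is updated with a minus sign on $w$, where $w$ tracks $\nabla_2 g_{\gamma_2}=-\nabla_z F$, so this is an ascent step for $F$.

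\textbf{Step $A_1$ (primal descent).} We use the $L_{F,1}$-smoothness of $F$ in $(x,y)$. It is cleaner to do $x$ first and then $y$ at $(\bar{x}_{t+1},\cdot)$, which produces a cross term $\nabla_yF(\bar{x}_{t+1},\cdot)-\nabla_yF(\bar{x}_t,\cdot)$ of size $L_\rho\alpha_x\eta\|\bar{u}_t\|$; this is presumably the source of the $\frac{3}{2}\alpha_x\alpha_y^2\eta^3L_\rho^2\|\bar{v}_t\|^2$ term. For each block we apply the standard identity
\begin{align}
-\langle a,b\rangle=\tfrac12(\|a-b\|^2-\|a\|^2-\|b\|^2), \notag
\end{align}
with $a=\nabla F$ and $b$ the estimator. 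This gives $-\frac{\alpha\eta}{2}\|\nabla\|^2+\frac{\alpha\eta}{2}\|\nabla-\text{est}\|^2-(\frac{\alpha\eta}{2}-\frac{L_{F,1}\alpha^2\eta^2}{2})\|\text{est}\|^2$. The condition $\eta\le 1/(2\alpha_xL_{F,1})$ then leaves $-\frac{\alpha\eta}{4}\|\text{est}\|^2$.

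\textbf{Step $A_2$ (dual ascent).} This step increases $F$. We bound it by smoothness in $z$ with constant $L_{F,2}$, evaluating the gradient at $(\bar{x}_{t+1},\bar{y}_{t+1})$, so it is at most
\begin{align}
\alpha_z\eta\langle\nabla_zF(\bar{x}_{t+1},\cdot),-\bar{w}_t\rangle+\tfrac{L_{F,2}\alpha_z^2\eta^2}{2}\|\bar{w}_t\|^2 . \notag
\end{align}
We then move the gradient back to $\bar{x}_t$; only the $-\nabla_2 g(x,z)$ term depends on $x$, so the cost is $L_g\alpha_x\eta\|\bar{u}_t\|$. Young's inequality turns the inner product into $\frac{\alpha_z\eta}{2}\|\nabla_zF\|^2+\frac{3\alpha_z\eta}{4}\|\bar{w}_t\|^2$ plus a cross term, and that cross term gives the $4\alpha_x^2\alpha_z\eta^3L_g^2\|\bar{u}_t\|^2$ contribution. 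Since we only need an upper bound, we do not try to exploit the sign of this term.

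\textbf{Step $A_3$ (proximal centers).} Here only the quadratic terms change, so the computation is exact. For $p$,
\begin{align}
\tfrac{\gamma_1}{2}\big(\|x-\bar{p}_{t+1}\|^2-\|x-\bar{p}_t\|^2\big)=\tfrac{\gamma_1}{2}\langle \bar{p}_t-\bar{p}_{t+1},\,2x-\bar{p}_t-\bar{p}_{t+1}\rangle, \notag
\end{align}
with $x=\bar{x}_{t+1}$. Substituting $\bar{x}_{t+1}-\bar{p}_t=(\bar{p}_{t+1}-\bar{p}_t)/(\alpha_p\eta)$ gives
\begin{align}
-\gamma_1\big(\tfrac{1}{\alpha_p\eta}-\tfrac12\big)\|\bar{p}_{t+1}-\bar{p}_t\|^2\le-\tfrac{\gamma_1}{2\alpha_p\eta}\|\bar{p}_{t+1}-\bar{p}_t\|^2, \notag
\end{align}
using $\alpha_p\eta\le1$; the same argument handles $q$. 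The $r$ term carries the opposite sign, $-\frac{\gamma_2}{2}\|z-r\|^2$. The same identity with $\bar{z}_{t+1}-\bar{r}_t=(\bar{r}_{t+1}-\bar{r}_t)/(\alpha_r\eta)$ yields $+\gamma_2(\frac{1}{\alpha_r\eta}-\frac12)\|\bar{r}_{t+1}-\bar{r}_t\|^2$, which equals the stated $\frac{\gamma_2(1-(1-\alpha_r\eta)^2)}{2\alpha_r^2\eta^2}\|\bar{r}_{t+1}-\bar{r}_t\|^2$.

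The main obstacle is the bookkeeping of the cross terms arising from the sequential (Gauss–Seidel-like) evaluation points in $A_1$ and $A_2$, so that the coefficients match exactly (the $-\frac{\alpha_x\eta}{4}$ versus $-\frac{\alpha_y\eta}{2}$ asymmetry, and the $\alpha_x\eta$ coefficient on the $u$-error). In particular, the $x$-block likely needs Young with the factor $2$ on $\|\nabla_xF-\bar{u}_t\|^2$, while the $y$-block keeps $\frac12$. Summing $A_1+A_2+A_3$ then gives the claim.
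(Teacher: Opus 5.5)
Your $A_2$ and $A_3$ are correct and match the paper's treatment (its $T_2$, and $T_0,T_1$). That covers the exact computation for the proximal centers, the shift of $\nabla_zF$ back to $\bar x_t$ at cost $L_g\alpha_x\eta\|\bar u_t\|$, and the Young split producing $\frac{\alpha_z\eta}{2}\|\nabla_zF\|^2$, $\frac{3\alpha_z\eta}{4}\|\bar w_t\|^2$ and $4\alpha_x^2\alpha_z\eta^3L_g^2\|\bar u_t\|^2$. The gap is in $A_1$, in exactly the bookkeeping you left open, and your proposed resolution points the wrong way.

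In your order ($x$ at the current point, then $y$ at $\bar x_{t+1}$), the $x$-block is clean and gives $-\frac{\alpha_x\eta}{2}\|\nabla_xF\|^2+\frac{\alpha_x\eta}{2}\|\nabla_xF-\bar u_t\|^2-\frac{\alpha_x\eta}{4}\|\bar u_t\|^2$. The cross term $e=\nabla_yF(\bar x_{t+1},\bar y_t,\bar z_t)-\nabla_yF(\bar x_t,\bar y_t,\bar z_t)$, with $\|e\|\le L_\rho\alpha_x\eta\|\bar u_t\|$, sits in the $y$-block. Handling it the natural way (polarize with the shifted gradient, then the two factor-$2$ Young steps) yields $-\frac{\alpha_y\eta}{4}\|\nabla_yF\|^2+\alpha_y\eta\|\nabla_yF-\bar v_t\|^2$ plus an extra $\frac32\alpha_y\alpha_x^2\eta^3L_\rho^2\|\bar u_t\|^2$. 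That is the mirror image of the claimed inequality and does not yield it: the stated $-\frac{\alpha_y\eta}{2}\|\nabla_yF\|^2$ is lost, and the stated $\|\bar u_t\|^2$ coefficient has no $L_\rho^2$ term. So $e$ is not the source of the $\frac32\alpha_x\alpha_y^2\eta^3L_\rho^2\|\bar v_t\|^2$ term. Your guess that ``the $x$-block needs the factor 2'' also contradicts your own ordering, since your $x$-block has no cross term.

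The lemma's asymmetric coefficients come from the reverse order, which is what the paper uses.

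1. Move $y$ first at $(\bar x_t,\bar z_t)$. This step is clean and gives $-\frac{\alpha_y\eta}{2}\|\nabla_yF\|^2+\frac{\alpha_y\eta}{2}\|\nabla_yF-\bar v_t\|^2-\frac{\alpha_y\eta}{4}\|\bar v_t\|^2$.
2. Then move $x$ at $(\bar y_{t+1},\bar z_t)$. Here $a'=\nabla_xF(\bar x_t,\bar y_{t+1},\bar z_t)$ differs from $a=\nabla_xF(\bar x_t,\bar y_t,\bar z_t)$ by at most $L_\rho\alpha_y\eta\|\bar v_t\|$.
3. Polarize with $a'$ and apply $-\|a'\|^2\le-\frac12\|a\|^2+\|a-a'\|^2$ and $\|a'-\bar u_t\|^2\le 2\|a-\bar u_t\|^2+2\|a-a'\|^2$. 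This gives exactly the coefficients $-\frac{\alpha_x\eta}{4}$, $\alpha_x\eta$ and $\frac{3\alpha_x\eta}{2}\cdot\alpha_y^2\eta^2L_\rho^2\|\bar v_t\|^2$.

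Your order can also be salvaged, but only with an exactly tight split. Keep the polarization around $\nabla_yF(\bar x_t,\bar y_t,\bar z_t)$, so the clean $y$ coefficients survive. Bound $\alpha_y\eta|\langle e,\bar v_t\rangle|\le\frac{3\alpha_x\alpha_y^2\eta^3L_\rho^2}{2}\|\bar v_t\|^2+\frac{\alpha_x\eta}{6}\|\bar u_t\|^2$. Then absorb the leftover $\frac{\alpha_x\eta}{6}\|\bar u_t\|^2$ into the slack of your clean $x$-block via $\|\bar u_t\|^2\le\frac32\|\nabla_xF\|^2+3\|\nabla_xF-\bar u_t\|^2$.

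In either order the $y$-block also needs $\eta\le 1/(2\alpha_yL_{F,1})$, which is not among the stated hypotheses. The paper does the same, and it is harmless once $\alpha_y=\alpha_x$.
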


\begin{proof}
	At first, we decompose $F(\bar{x}_{t+1}, \bar{y}_{t+1}, \bar{z}_{t+1}; \bar{p}_{t+1}, \bar{q}_{t+1}, \bar{r}_{t+1}) - F(\bar{x}_{t}, \bar{y}_{t}, \bar{z}_{t}; \bar{p}_{t}, \bar{q}_{t}, \bar{r}_{t})$ as follows:
	\begin{align}
		& \quad F(\bar{x}_{t+1}, \bar{y}_{t+1}, \bar{z}_{t+1}; \bar{p}_{t+1}, \bar{q}_{t+1}, \bar{r}_{t+1}) - F(\bar{x}_{t}, \bar{y}_{t}, \bar{z}_{t}; \bar{p}_{t}, \bar{q}_{t}, \bar{r}_{t}) \notag \\
        & = \underbrace{F(\bar{x}_{t+1}, \bar{y}_{t+1}, \bar{z}_{t+1}; \bar{p}_{t+1}, \bar{q}_{t+1}, \bar{r}_{t+1})  - F(\bar{x}_{t+1}, \bar{y}_{t+1}, \bar{z}_{t+1}; \bar{p}_{t}, \bar{q}_{t}, \bar{r}_{t})}_{T_0} \notag \\
		& \quad + \underbrace{F(\bar{x}_{t+1}, \bar{y}_{t+1}, \bar{z}_{t+1}; \bar{p}_{t+1}, \bar{q}_{t+1}, \bar{r}_{t})  - F(\bar{x}_{t+1}, \bar{y}_{t+1}, \bar{z}_{t+1}; \bar{p}_{t}, \bar{q}_{t}, \bar{r}_{t})}_{T_1} \notag \\
        & \quad  + \underbrace{F(\bar{x}_{t+1}, \bar{y}_{t+1}, \bar{z}_{t+1}; \bar{p}_{t}, \bar{q}_{t}, \bar{r}_{t}) - F(\bar{x}_{t+1}, \bar{y}_{t+1}, \bar{z}_{t}; \bar{p}_{t}, \bar{q}_{t}, \bar{r}_{t})}_{T_2} \notag \\
		& \quad + \underbrace{ F(\bar{x}_{t+1}, \bar{y}_{t+1}, \bar{z}_{t}; \bar{p}_{t}, \bar{q}_{t}, \bar{r}_{t}) - F(\bar{x}_{t}, \bar{y}_{t+1}, \bar{z}_{t}; \bar{p}_{t}, \bar{q}_{t}, \bar{r}_{t})}_{T_3} \notag \\
        & \quad + \underbrace{F(\bar{x}_{t}, \bar{y}_{t+1}, \bar{z}_{t}; \bar{p}_{t}, \bar{q}_{t}, \bar{r}_{t})  - F(\bar{x}_{t}, \bar{y}_{t}, \bar{z}_{t}; \bar{p}_{t}, \bar{q}_{t}, \bar{r}_{t})}_{T_4} \ . \notag
	\end{align}
	Based on this decomposition, we bound $T_0$, $T_1$, $T_2$, $T_3$, and $T_4$, respectively. 

    Regarding $T_0$, it can be bounded as follows:
    \begin{align}
        T_0 & = -\frac{\gamma_2}{2}\|\bar{z}_{t+1}-\bar{r}_{t+1}\|^2 + \frac{\gamma_2}{2}\|\bar{z}_{t+1}-\bar{r}_{t}\|^2 = \frac{\gamma_2(1-(1-\alpha_r\eta)^2)}{2\alpha_r^2\eta^2}\|\bar{r}_{t+1}-\bar{r}_{t}\|^2  \ , \notag
    \end{align}
	
	Regarding $T_1$, it can be bounded as follows:
	\begin{align}
		T_1 & =\frac{\gamma_1}{2}\|\bar{x}_{t+1}-\bar{p}_{t+1}\|^2  + \frac{\gamma_1}{2}\|\bar{y}_{t+1}-\bar{q}_{t+1}\|^2  - \frac{\gamma_1}{2}\|\bar{x}_{t+1}-\bar{p}_{t}\|^2  - \frac{\gamma_1}{2}\|\bar{y}_{t+1}-\bar{q}_{t}\|^2  \notag \\
		& = \frac{\gamma_1}{2}\left(\|\bar{x}_{t+1}-\bar{p}_{t+1}\|^2-\|\bar{x}_{t+1}-\bar{p}_{t}\|^2 \right) + \frac{\gamma_1}{2}\left(\|\bar{y}_{t+1}-\bar{q}_{t+1}\|^2-\|\bar{y}_{t+1}-\bar{q}_{t}\|^2   \right) \notag \\
		& = \frac{\gamma_1(1-\alpha_{p}\eta)^2-1)}{2}\|\bar{x}_{t+1}-\bar{p}_{t}\|^2+ \frac{\gamma_1(1-\alpha_{q}\eta)^2-1)}{2}\|\bar{y}_{t+1}-\bar{q}_{t}\|^2    \notag \\
		& = \frac{\gamma_1(1-\alpha_{p}\eta)^2-1)}{2\alpha^2_{p}\eta^2} \|\bar{p}_{t+1}-\bar{p}_{t}\|^2 + \frac{\gamma_1(1-\alpha_{q}\eta)^2-1)}{2\alpha^2_{q}\eta^2}\|\bar{q}_{t+1}-\bar{q}_{t}\|^2    \notag \\
		& \leq  - \frac{\gamma_1}{2\alpha_{p}\eta} \|\bar{p}_{t+1}-\bar{p}_{t}\|^2 - \frac{\gamma_1}{2\alpha_{q}\eta}\|\bar{q}_{t+1}-\bar{q}_{t}\|^2   \ , \notag 
	\end{align}
	where the last step follows from $\eta<\frac{1}{\alpha_{p}}$ and $\eta<\frac{1}{\alpha_{q}}$. 

    	Regarding $T_2$, it can be bounded as follows:
	\begin{align}
		T_2 & \leq \langle \nabla_{z} F(\bar{x}_{t+1}, \bar{y}_{t+1}, \bar{z}_{t}; \bar{p}_{t}, \bar{q}_{t}, \bar{r}_{t}), \bar{z}_{t+1} - \bar{z}_{t} \rangle + \frac{L_{F,2}}{2} \|\bar{z}_{t+1} - \bar{z}_{t}\|^2 \notag \\
		& =  \alpha_z\eta \langle  \nabla_{z} F(\bar{x}_{t+1}, \bar{y}_{t+1}, \bar{z}_{t}; \bar{p}_{t}, \bar{q}_{t}, \bar{r}_{t}),\bar{w}_{t} \rangle + \frac{ \alpha^2_z\eta^2 L_{F,2}}{2} \|\bar{w}_{t}\|^2 \notag \\
		& = \alpha_z\eta \langle  \nabla_{z} F(\bar{x}_{t+1}, \bar{y}_{t+1}, \bar{z}_{t}; \bar{p}_{t}, \bar{q}_{t}, \bar{r}_{t}) - \nabla_{z} F(\bar{x}_{t}, \bar{y}_{t}, \bar{z}_{t}; \bar{p}_{t}, \bar{q}_{t}, \bar{r}_{t}),\bar{w}_{t} \rangle  \notag \\
        & \quad + \alpha_z\eta \langle \nabla_{z} F(\bar{x}_{t}, \bar{y}_{t}, \bar{z}_{t}; \bar{p}_{t}, \bar{q}_{t}, \bar{r}_{t}),\bar{w}_{t} \rangle  + \frac{ \alpha^2_z\eta^2 L_{F,2}}{2} \|\bar{w}_{t}\|^2 \notag \\
		& \leq 4 \alpha_z\eta \|\nabla_{z} F(\bar{x}_{t+1}, \bar{y}_{t+1}, \bar{z}_{t}; \bar{p}_{t}, \bar{q}_{t}, \bar{r}_{t}) - \nabla_{z} F(\bar{x}_{t}, \bar{y}_{t}, \bar{z}_{t}; \bar{p}_{t}, \bar{q}_{t}, \bar{r}_{t})\|^2 + \frac{\alpha_z\eta }{4}\|\bar{w}_{t}\|^2 \notag \\
		& \quad + \frac{\alpha_z\eta}{2} \|\nabla_{z} F(\bar{x}_{t}, \bar{y}_{t}, \bar{z}_{t}; \bar{p}_{t}, \bar{q}_{t}, \bar{r}_{t})\|^2 + \frac{\alpha_z\eta}{2}  \|\bar{w}_{t}\|^2 + \frac{ \alpha^2_z\eta^2 L_{F,2}}{2} \|\bar{w}_{t}\|^2 \notag \\
		& \leq 4 \alpha_{x}^2\alpha_z\eta^3 L_{g}^2  \| \bar{u}_{t}\|^2  + \frac{\alpha_z\eta}{2} \|\nabla_{z} F(\bar{x}_{t}, \bar{y}_{t}, \bar{z}_{t}; \bar{p}_{t}, \bar{q}_{t}, \bar{r}_{t})\|^2 + \left(\frac{ \alpha^2_z\eta^2 L_{F,2}}{2}+ \frac{3\alpha_z\eta}{4} \right) \|\bar{w}_{t}\|^2  \ , \notag
	\end{align}
    where the last step holds due to 
    \begin{align}
         & \quad \|\nabla_{z} F(\bar{x}_{t+1}, \bar{y}_{t+1}, \bar{z}_{t}; \bar{p}_{t}, \bar{q}_{t}, \bar{r}_{t}) - \nabla_{z} F(\bar{x}_{t}, \bar{y}_{t}, \bar{z}_{t}; \bar{p}_{t}, \bar{q}_{t}, \bar{r}_{t})\|^2  \notag \\
         & = \| \nabla_2 g(\bar{x}_{t+1}, \bar{z}_{t}) - \nabla_2 g(\bar{x}_{t}, \bar{z}_{t})\|^2  \leq L_{g}^2 \| \bar{x}_{t+1} - \bar{x}_{t}\|^2 \leq \alpha_{x}^2\eta^2L_{g}^2\|  \bar{u}_{t}\|^2 \ . \notag
    \end{align}

	Regarding $T_3$, it can be bounded as follows:
	\begin{align}
		T_3 & \leq \langle \nabla_{x} F(\bar{x}_{t}, \bar{y}_{t+1}, \bar{z}_{t}; \bar{p}_{t}, \bar{q}_{t}, \bar{r}_{t}), \bar{x}_{t+1} - \bar{x}_{t}\rangle + \frac{L_{F,1}}{2} \|\bar{x}_{t+1} - \bar{x}_{t}\|^2 \notag \\
		& = -\alpha_x \eta \langle \nabla_{x} F(\bar{x}_{t}, \bar{y}_{t+1}, \bar{z}_{t}; \bar{p}_{t}, \bar{q}_{t}, \bar{r}_{t}), \bar{u}_{t}\rangle + \frac{\alpha^2_x\eta^2 L_{F,1}}{2} \|\bar{u}_{t}\|^2 \notag \\
		& =  \frac{\alpha_x \eta}{2}\| \nabla_{x} F(\bar{x}_{t}, \bar{y}_{t+1}, \bar{z}_{t}; \bar{p}_{t}, \bar{q}_{t}, \bar{r}_{t}) - \bar{u}_{t}\|^2 -  \frac{\alpha_x \eta}{2}\| \nabla_{x} F(\bar{x}_{t}, \bar{y}_{t+1}, \bar{z}_{t}; \bar{p}_{t}, \bar{q}_{t}, \bar{r}_{t}) \|^2 \notag \\
        & \quad -  \frac{\alpha_x \eta}{2}\|  \bar{u}_{t}\|^2  +   \frac{\alpha^2_x\eta^2 L_{F,1}}{2}\|  \bar{u}_{t}\|^2 \notag \\
		& \leq - \frac{\alpha_x \eta}{4}\| \nabla_{x} F(\bar{x}_{t}, \bar{y}_{t}, \bar{z}_{t}; \bar{p}_{t}, \bar{q}_{t}, \bar{r}_{t}) \|^2  +  \frac{\alpha_x \eta}{2}\| \nabla_{x} F(\bar{x}_{t}, \bar{y}_{t+1}, \bar{z}_{t}; \bar{p}_{t}, \bar{q}_{t}, \bar{r}_{t}) - \bar{u}_{t}\|^2 -  \frac{\alpha_x \eta}{4}\|  \bar{u}_{t}\|^2  \notag \\
		& \quad +  \frac{\alpha_x \eta}{2}\|\nabla_{x} F(\bar{x}_{t}, \bar{y}_{t}, \bar{z}_{t}; \bar{p}_{t}, \bar{q}_{t}, \bar{r}_{t})  -  \nabla_{x} F(\bar{x}_{t}, \bar{y}_{t+1}, \bar{z}_{t}; \bar{p}_{t}, \bar{q}_{t}, \bar{r}_{t}) \|^2 \notag \\
		& \leq - \frac{\alpha_x \eta}{4}\| \nabla_{x} F(\bar{x}_{t}, \bar{y}_{t}, \bar{z}_{t}; \bar{p}_{t}, \bar{q}_{t}, \bar{r}_{t}) \|^2 +   \alpha_x \eta\| \nabla_{x} F(\bar{x}_{t}, \bar{y}_{t}, \bar{z}_{t}; \bar{p}_{t}, \bar{q}_{t}, \bar{r}_{t}) - \bar{u}_{t}\|^2 -  \frac{\alpha_x \eta}{4}\|  \bar{u}_{t}\|^2  \notag \\
		& \quad +\frac{3\alpha_x \eta}{2}\|\nabla_{x} F(\bar{x}_{t}, \bar{y}_{t}, \bar{z}_{t}; \bar{p}_{t}, \bar{q}_{t}, \bar{r}_{t})  -  \nabla_{x} F(\bar{x}_{t}, \bar{y}_{t+1}, \bar{z}_{t}; \bar{p}_{t}, \bar{q}_{t}, \bar{r}_{t}) \|^2 \notag \\
        & \leq - \frac{\alpha_x \eta}{4}\| \nabla_{x} F(\bar{x}_{t}, \bar{y}_{t}, \bar{z}_{t}; \bar{p}_{t}, \bar{q}_{t}, \bar{r}_{t}) \|^2  + \alpha_x \eta\| \nabla_{x} F(\bar{x}_{t}, \bar{y}_{t}, \bar{z}_{t}; \bar{p}_{t}, \bar{q}_{t}, \bar{r}_{t}) - \bar{u}_{t}\|^2 -  \frac{\alpha_x \eta}{4}\|  \bar{u}_{t}\|^2 \notag \\
        & \quad +  \frac{3\alpha_x\alpha_{y}^2 \eta^3L_{\rho}^2}{2}\| \bar{v}_{t} \|^2 \ , \notag
	\end{align}
	where the fourth step is due to $ -\|\nabla_{x} F(\bar{x}_{t}, \bar{y}_{t+1}, \bar{z}_{t}; \bar{p}_{t}, \bar{q}_{t}, \bar{r}_{t})\|^2\leq -\frac{1}{2}\| \nabla_{x} F(\bar{x}_{t}, \bar{y}_{t}, \bar{z}_{t}; \bar{p}_{t}, \bar{q}_{t}, \bar{r}_{t}) \|^2 + \|\nabla_{x} F(\bar{x}_{t}, \bar{y}_{t}, \bar{z}_{t}; \bar{p}_{t}, \bar{q}_{t}, \bar{r}_{t})  -  \nabla_{x} F(\bar{x}_{t}, \bar{y}_{t+1}, \bar{z}_{t}; \bar{p}_{t}, \bar{q}_{t}, \bar{r}_{t}) \|^2$ and $\eta \leq \frac{1}{2\alpha_{x}L_{F,1}}$.

	Regarding $T_4$, it can be bounded as follows:
	\begin{align}
		T_4 & \leq \langle  \nabla_{y}  F(\bar{x}_{t}, \bar{y}_{t}, \bar{z}_{t}; \bar{p}_{t}, \bar{q}_{t}, \bar{r}_{t}),  \bar{y}_{t+1} - \bar{y}_{t}\rangle + \frac{L_{F,1}}{2}\|\bar{y}_{t+1} - \bar{y}_{t}\|^2 \notag \\
		&  = -\alpha_{y}\eta  \langle  \nabla_{y}  F(\bar{x}_{t}, \bar{y}_{t}, \bar{z}_{t}; \bar{p}_{t}, \bar{q}_{t}, \bar{r}_{t}),  \bar{v}_{t}\rangle + \frac{\alpha^2_y\eta^2L_{F,1}}{2}\|\bar{v}_{t}\|^2 \notag \\
		& = \frac{\alpha_{y}\eta}{2} \|\nabla_{y}  F(\bar{x}_{t}, \bar{y}_{t}, \bar{z}_{t}; \bar{p}_{t}, \bar{q}_{t}, \bar{r}_{t}) -  \bar{v}_{t}\|^2 -   \frac{\alpha_{y}\eta}{2} \|\nabla_{y}  F(\bar{x}_{t}, \bar{y}_{t}, \bar{z}_{t}; \bar{p}_{t}, \bar{q}_{t}, \bar{r}_{t}) \|^2 \notag \\
        & \quad -  \frac{\alpha_{y}\eta}{2} \|  \bar{v}_{t}\|^2  + \frac{\alpha^2_y\eta^2L_{F,1}}{2}\|\bar{v}_{t}\|^2 \notag \\
		& \leq \frac{\alpha_{y}\eta}{2} \|\nabla_{y}  F(\bar{x}_{t}, \bar{y}_{t}, \bar{z}_{t}; \bar{p}_{t}, \bar{q}_{t}, \bar{r}_{t}) -  \bar{v}_{t}\|^2 -   \frac{\alpha_{y}\eta}{2} \|\nabla_{y}  F(\bar{x}_{t}, \bar{y}_{t}, \bar{z}_{t}; \bar{p}_{t}, \bar{q}_{t}, \bar{r}_{t}) \|^2 -  \frac{\alpha_{y}\eta}{4} \|  \bar{v}_{t}\|^2 \ , \notag 
	\end{align}
	where the last step is due to $\eta \leq \frac{1}{2\alpha_{y}L_{F,1}}$. 
	
	By combining $T_0$, $T_1$, $T_2$, $T_3$, and $T_4$, we complete the proof. 
	
\end{proof}

\begin{lemma}\label{lemma:F_d_smooth}
    Given Assumptions~\ref{assumption:f-smooth}-\ref{assumption:variance}, by defining
	\begin{align}
		& F_d(\bar{z}; \bar{p}, \bar{q}, \bar{r})= F(x^*(\bar{z}; \bar{p}, \bar{q}, \bar{r}), y^*(\bar{z}; \bar{p}, \bar{q}, \bar{r}), \bar{z}; \bar{p}, \bar{q}, \bar{r}) \ , \notag
	\end{align}
     $F_d(\bar{z}; \bar{p}, \bar{q}, \bar{r})$ is $L_d$-smooth, where $L_d = L_{\rho}(1 + \omega_2) + \gamma_2$, and the following inequality holds:
    \begin{align}
         & F_d(\bar{z}_{t+1}; \bar{p}_{t+1}, \bar{q}_{t+1}, \bar{r}_{t+1}) - F_d(\bar{z}_{t}; \bar{p}_{t}, \bar{q}_{t}, \bar{r}_{t})  \geq  \alpha_z\eta \langle \nabla_{z} F_d(\bar{z}_{t}; \bar{p}_{t}, \bar{q}_{t}, \bar{r}_{t} ),  \bar{w}_{t}\rangle  -\frac{\alpha^2_z\eta^2L_d}{2}\|w_{t}\|^2 \notag \\
        &  + \frac{\gamma_1}{2} \langle \bar{p}_{t+1}- \bar{p}_{t},  \bar{p}_{t+1}+ \bar{p}_{t}  -  2x^*(\bar{z}_{t+1}; \bar{p}_{t+1}, \bar{q}_{t+1}, \bar{r}_{t}) \rangle  \\
        &  + \frac{\gamma_1}{2}\langle  \bar{q}_{t+1}-  \bar{q}_{t},  \bar{q}_{t+1} +  \bar{q}_{t} -2y^*(\bar{z}_{t+1}; \bar{p}_{t+1}, \bar{q}_{t+1}, \bar{r}_{t}) \rangle + \frac{\gamma_2(1-(1-\alpha_r\eta)^2)}{2\alpha_r^2\eta^2}\|\bar{r}_{t+1}-\bar{r}_t\|^2 \ . \notag
    \end{align}
\end{lemma}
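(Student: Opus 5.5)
We prove the two claims separately. First the $L_d$-smoothness of $F_d$ in $z$, then the one-step inequality via a telescoping decomposition along the updates of $z$, then $(p,q)$, then $r$.

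\textbf{Smoothness.} By Danskin's theorem (the inner minimizer over $(x,y)$ is unique by $(\gamma_1-L_\rho)$-strong convexity),
\[
\nabla_z F_d(z;p,q,r)=\nabla_z F(x^*(z;p,q,r),y^*(z;p,q,r),z;p,q,r)=-\nabla_2 g(x^*(z),z)-\gamma_2(z-r).
\]
Hence
\[
\|\nabla_z F_d(z)-\nabla_z F_d(z')\|\le L_\rho\big(\|x^*(z)-x^*(z')\|+\|z-z'\|\big)+\gamma_2\|z-z'\|.
\]
Lemma~\ref{lemma:x_y_star_z_continuity}(iii) bounds the first term by $\omega_2\|z-z'\|$. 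This gives $L_d=L_\rho(1+\omega_2)+\gamma_2$, using $L_g\le L_\rho$.

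\textbf{Decomposition.} Write $F_d(\bar z_{t+1};\bar p_{t+1},\bar q_{t+1},\bar r_{t+1})-F_d(\bar z_t;\bar p_t,\bar q_t,\bar r_t)$ as the sum of three pieces.
\begin{itemize}
\item $A=F_d(\bar z_{t+1};\bar p_t,\bar q_t,\bar r_t)-F_d(\bar z_t;\bar p_t,\bar q_t,\bar r_t)$.
\item $B=F_d(\bar z_{t+1};\bar p_{t+1},\bar q_{t+1},\bar r_t)-F_d(\bar z_{t+1};\bar p_t,\bar q_t,\bar r_t)$.
\item $C=F_d(\bar z_{t+1};\bar p_{t+1},\bar q_{t+1},\bar r_{t+1})-F_d(\bar z_{t+1};\bar p_{t+1},\bar q_{t+1},\bar r_t)$.
\end{itemize}
Note the update convention: the inequality applies smoothness of $F_d$ at step $\eta\alpha_z$ along $\bar w_t$, so we take $\bar z_{t+1}-\bar z_t=\alpha_z\eta\bar w_t$, the ascent direction for the max in $z$. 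For $A$, the lower quadratic bound from $L_d$-smoothness gives
\[
A\ge \alpha_z\eta\langle\nabla_zF_d(\bar z_t;\bar p_t,\bar q_t,\bar r_t),\bar w_t\rangle-\tfrac{\alpha_z^2\eta^2L_d}{2}\|\bar w_t\|^2 .
\]

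\textbf{The $(p,q)$ term.} Let $(x^*,y^*)=(x^*,y^*)(\bar z_{t+1};\bar p_{t+1},\bar q_{t+1},\bar r_t)$ be the minimizer at the new parameters. Because $F_d$ is a minimum over $(x,y)$, evaluating the old-parameter objective at this point upper-bounds the old value:
\[
F_d(\bar z_{t+1};\bar p_t,\bar q_t,\bar r_t)\le F(x^*,y^*,\bar z_{t+1};\bar p_t,\bar q_t,\bar r_t).
\]
Therefore
\[
B\ge F(x^*,y^*,\bar z_{t+1};\bar p_{t+1},\bar q_{t+1},\bar r_t)-F(x^*,y^*,\bar z_{t+1};\bar p_t,\bar q_t,\bar r_t).
\]
Only the proximal terms differ, so this equals $\tfrac{\gamma_1}{2}(\|x^*-\bar p_{t+1}\|^2-\|x^*-\bar p_t\|^2)$ plus the analogous $q$ term. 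The identity $\|a-b\|^2-\|a-c\|^2=\langle c-b,\,2a-b-c\rangle=\langle b-c,\,b+c-2a\rangle$ then yields exactly the two inner products in the statement.

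\textbf{The $r$ term.} Here $C$ is the delicate piece. The $r$-dependence of $F_d$ enters only through $-\tfrac{\gamma_2}{2}\|z-r\|^2$, which does not involve $(x,y)$. Hence the minimizer is unchanged and $C$ equals exactly
\[
-\tfrac{\gamma_2}{2}\big(\|\bar z_{t+1}-\bar r_{t+1}\|^2-\|\bar z_{t+1}-\bar r_t\|^2\big).
\]
Using $\bar r_{t+1}-\bar r_t=\alpha_r\eta(\bar z_{t+1}-\bar r_t)$, we have $\bar z_{t+1}-\bar r_{t+1}=(1-\alpha_r\eta)(\bar z_{t+1}-\bar r_t)$. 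This turns $C$ into the same positive expression as $T_0$ in Lemma~\ref{lemma:F_smooth}:
\[
C=\tfrac{\gamma_2(1-(1-\alpha_r\eta)^2)}{2\alpha_r^2\eta^2}\|\bar r_{t+1}-\bar r_t\|^2 .
\]
Summing $A$, $B$ and $C$ gives the claim. The main care point is the sign convention of the $z$ update and keeping the minimizer in $B$ evaluated at $\bar r_t$, as the statement requires.
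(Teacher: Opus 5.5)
Your proposal is correct and follows the paper's proof essentially step for step. Your $A$, $B$, $C$ are the paper's $T_2$, $T_1$, $T_0$. They are handled the same way: the $L_d$-smoothness lower bound, evaluating the old-parameter objective at the new minimizer $(x^*,y^*)(\bar z_{t+1};\bar p_{t+1},\bar q_{t+1},\bar r_t)$, and the exact computation via $\bar z_{t+1}-\bar r_{t+1}=(1-\alpha_r\eta)(\bar z_{t+1}-\bar r_t)$. Beyond the paper, you justify the $L_d$-smoothness via Danskin plus Lemma~\ref{lemma:x_y_star_z_continuity}(iii), which the paper only asserts, and you state the ascent convention $\bar z_{t+1}-\bar z_t=\alpha_z\eta\bar w_t$ explicitly, which is the convention the paper's proof also uses implicitly.
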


\begin{proof}
	At first, we decompose $F_d(\bar{z}_{t+1}; \bar{p}_{t+1}, \bar{q}_{t+1}, \bar{r}_{t+1}) - F_d(\bar{z}_{t}; \bar{p}_{t}, \bar{q}_{t}, \bar{r}_{t})$ as follows:
	\begin{align}
		&  F_d(\bar{z}_{t+1}; \bar{p}_{t+1}, \bar{q}_{t+1}, \bar{r}_{t+1}) - F_d(\bar{z}_{t}; \bar{p}_{t}, \bar{q}_{t}, \bar{r}_{t}) = \underbrace{F_d(\bar{z}_{t+1}; \bar{p}_{t+1}, \bar{q}_{t+1}, \bar{r}_{t+1}) - F_d(\bar{z}_{t+1}; \bar{p}_{t+1}, \bar{q}_{t+1}, \bar{r}_{t})}_{T_0} \notag \\
		&  + \underbrace{F_d(\bar{z}_{t+1}; \bar{p}_{t+1}, \bar{q}_{t+1}, \bar{r}_{t}) - F_d(\bar{z}_{t+1}; \bar{p}_{t}, \bar{q}_{t}, \bar{r}_{t})}_{T_{1}} + \underbrace{F_d(\bar{z}_{t+1}; \bar{p}_{t}, \bar{q}_{t}, \bar{r}_{t}) - F_d(\bar{z}_{t}; \bar{p}_{t}, \bar{q}_{t}, \bar{r}_{t})}_{T_2} \ . \notag 
	\end{align}
	In the following, we bound $T_{0}$, $T_{1}$ and $T_2$, respectively. 

    Regarding $T_0$, we derive
    \begin{align}
        T_0 = -\frac{\gamma_2}{2}\|\bar{z}_{t+1}-\bar{r}_{t+1}\|^2 + \frac{\gamma_2}{2}\|\bar{z}_{t+1}-\bar{r}_{t}\|^2= \frac{\gamma_2(1-(1-\alpha_r\eta)^2)}{2\alpha_r^2\eta^2}\|\bar{r}_{t+1}-\bar{r}_t\|^2  \ . \notag 
    \end{align}

    Regarding $T_1$, we derive:
    \begin{align}
        & \quad  T_1 = F(x^*(\bar{z}_{t+1}; \bar{p}_{t+1}, \bar{q}_{t+1}, \bar{r}_{t}), y^*(\bar{z}_{t+1}; \bar{p}_{t+1}, \bar{q}_{t+1}, \bar{r}_{t}), \bar{z}_{t+1}; \bar{p}_{t+1}, \bar{q}_{t+1}, \bar{r}_{t}) \notag \\
        & \qquad \quad  - F(x^*(\bar{z}_{t+1}; \bar{p}_{t}, \bar{q}_{t}, \bar{r}_{t}), y^*(\bar{z}_{t+1}; \bar{p}_{t}, \bar{q}_{t}, \bar{r}_{t}), \bar{z}_{t+1}; \bar{p}_{t}, \bar{q}_{t}, \bar{r}_{t}) \notag \\
        & \geq F(x^*(\bar{z}_{t+1}; \bar{p}_{t+1}, \bar{q}_{t+1}, \bar{r}_{t}), y^*(\bar{z}_{t+1}; \bar{p}_{t+1}, \bar{q}_{t+1}, \bar{r}_{t}), \bar{z}_{t+1}; \bar{p}_{t+1}, \bar{q}_{t+1}, \bar{r}_{t}) \notag \\
        & \quad - F(x^*(\bar{z}_{t+1}; \bar{p}_{t+1}, \bar{q}_{t+1}, \bar{r}_{t}), y^*(\bar{z}_{t+1}; \bar{p}_{t+1}, \bar{q}_{t+1}, \bar{r}_{t}), \bar{z}_{t+1}; \bar{p}_{t}, \bar{q}_{t}, \bar{r}_{t}) \notag \\
        & \geq \frac{\gamma_1}{2} \left( \|x^*(\bar{z}_{t+1}; \bar{p}_{t+1}, \bar{q}_{t+1}, \bar{r}_{t})-\bar{p}_{t+1} \|^2 -\|x^*(\bar{z}_{t+1}; \bar{p}_{t+1}, \bar{q}_{t+1}, \bar{r}_{t})-\bar{p}_{t} \|^2  \right) \notag \\
		& \quad  + \frac{\gamma_1}{2}\left(\| y^*(\bar{z}_{t+1}; \bar{p}_{t+1}, \bar{q}_{t+1}, \bar{r}_{t})- \bar{q}_{t+1}\|^2 - \| y^*(\bar{z}_{t+1}; \bar{p}_{t+1}, \bar{q}_{t+1}, \bar{r}_{t})- \bar{q}_{t}\|^2\right)  \notag \\
        & = \frac{\gamma_1}{2} \langle \bar{p}_{t+1}- \bar{p}_{t},  \bar{p}_{t+1}+ \bar{p}_{t}  -  2x^*(\bar{z}_{t+1}; \bar{p}_{t+1}, \bar{q}_{t+1}, \bar{r}_{t}) \rangle \notag \\
        & \quad + \frac{\gamma_1}{2}\langle  \bar{q}_{t+1}-  \bar{q}_{t},  \bar{q}_{t+1} +  \bar{q}_{t} -2y^*(\bar{z}_{t+1}; \bar{p}_{t+1}, \bar{q}_{t+1}, \bar{r}_{t}) \rangle \ . \notag 
    \end{align}
	Regarding $T_2$, it can be bounded as follows:
	\begin{align}
		T_2  & \geq   \langle \nabla_{z} F_d(\bar{z}_{t}; \bar{p}_{t}, \bar{q}_{t}, \bar{r}_{t}),  \bar{z}_{t+1}- \bar{z}_{t} \rangle  -\frac{L_d}{2}\|\bar{z}_{t+1}- \bar{z}_{t}\|^2\notag \\
		& = \alpha_z\eta \langle \nabla_{z} F_d(\bar{z}_{t}; \bar{p}_{t}, \bar{q}_{t}, \bar{r}_{t}),  \bar{w}_{t}\rangle  -\frac{\alpha^2_z\eta^2L_d}{2}\|w_{t}\|^2 \ . \notag 
	\end{align}
    By combining $T_0$, $T_1$, $T_2$, we complete the proof. 
\end{proof}

\begin{lemma}\label{lemma:h_smooth}
Given Assumptions~\ref{assumption:f-smooth}-\ref{assumption:variance}, by defining
	\begin{align}
		& H(p, q, r) =  \min_{x, y}\max_{z} F(x, y, z; p, q, r)  = \min_{x,y} F_p(x, y; p, q, r) \ , \notag \\
        & H_z(p, q) = \max_{r} H(p, q, r) \ , \quad  r^*(p, q) = \arg \max_{r} H(p, q, r) \ , \notag
	\end{align}
    and denoting $\hat{r} := r^*(\bar{p}_{t+1}, \bar{q}_{t+1})$, 
    the following inequality holds:
    \begin{align}
        H_{z}(\bar{p}_{t+1}, \bar{q}_{t+1}) - H_{z}(\bar{p}_{t}, \bar{q}_{t}) & \leq \frac{\gamma_1}{2} \langle \bar{p}_{t+1}- \bar{p}_{t},   \bar{p}_{t+1}+\bar{p}_{t} -2 x^*(\bar{p}_{t}, \bar{q}_{t}, \hat{r}) \rangle \notag \\
        & + \frac{\gamma_1}{2} \langle  \bar{q}_{t+1}- \bar{q}_{t},  \bar{q}_{t+1}+\bar{q}_{t} -2 y^*(\bar{p}_{t}, \bar{q}_{t}, \hat{r}) \rangle  \ . 
    \end{align}
\end{lemma}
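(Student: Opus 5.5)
We bound the one-step change in $H_z$ by evaluating both values at the same dual-proximal point $\hat r = r^*(\bar p_{t+1},\bar q_{t+1})$ and then exploiting that $(p,q)$ enter $F$ only through the quadratic $\frac{\gamma_1}{2}(\|x-p\|^2+\|y-q\|^2)$. First, since $H_z(\bar p_t,\bar q_t)=\max_r H(\bar p_t,\bar q_t,r)\ge H(\bar p_t,\bar q_t,\hat r)$ and $H_z(\bar p_{t+1},\bar q_{t+1})=H(\bar p_{t+1},\bar q_{t+1},\hat r)$, we get
\begin{align}
H_z(\bar p_{t+1},\bar q_{t+1})-H_z(\bar p_t,\bar q_t)\le H(\bar p_{t+1},\bar q_{t+1},\hat r)-H(\bar p_t,\bar q_t,\hat r) \ . \notag
\end{align}
This removes the maximization over $r$ and reduces the claim to a comparison of $H(\cdot,\cdot,\hat r)$ at two values of $(p,q)$ with $r=\hat r$ frozen.

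Next, we handle the $\min_{x,y}$ in $H=\min_{x,y}F_p$. Write $x^\circ:=x^*(\bar p_t,\bar q_t,\hat r)$ and $y^\circ:=y^*(\bar p_t,\bar q_t,\hat r)$. These minimize $F_p(\cdot,\cdot;\bar p_t,\bar q_t,\hat r)$; by strong convexity–strong concavity of $F$ (with $\gamma_1,\gamma_2>L_\rho$) the saddle point exists and coincides with this minimizer. For the upper bound on $H(\bar p_{t+1},\bar q_{t+1},\hat r)$ we may plug in the suboptimal point $(x^\circ,y^\circ)$:
\begin{align}
H(\bar p_{t+1},\bar q_{t+1},\hat r)\le F_p(x^\circ,y^\circ;\bar p_{t+1},\bar q_{t+1},\hat r) \ , \qquad H(\bar p_t,\bar q_t,\hat r)=F_p(x^\circ,y^\circ;\bar p_t,\bar q_t,\hat r) \ . \notag
\end{align}
The key observation is that $F_p(x,y;p,q,r)=\max_z F$ depends on $(p,q)$ only through the additive term $\frac{\gamma_1}{2}(\|x-p\|^2+\|y-q\|^2)$, which does not involve $z$. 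Hence the maximizer $z^*$ is unchanged and the difference is exactly
\begin{align}
\frac{\gamma_1}{2}\big(\|x^\circ-\bar p_{t+1}\|^2-\|x^\circ-\bar p_t\|^2\big)+\frac{\gamma_1}{2}\big(\|y^\circ-\bar q_{t+1}\|^2-\|y^\circ-\bar q_t\|^2\big) \ . \notag
\end{align}

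Finally, we apply the identity $\|a-b\|^2-\|a-c\|^2=\langle b-c,\,b+c-2a\rangle$ to each bracket. This yields $\frac{\gamma_1}{2}\langle\bar p_{t+1}-\bar p_t,\bar p_{t+1}+\bar p_t-2x^\circ\rangle$ plus the analogous $q$-term, which is the claim. The only delicate point is the bookkeeping: the $r$-argument must be fixed at $\hat r$ (the maximizer at time $t+1$) rather than at $r^*(\bar p_t,\bar q_t)$, so that the max-over-$r$ inequality goes in the right direction. The inner point must likewise be the minimizer at time $t$, so that the min-over-$(x,y)$ inequality goes in the right direction. Beyond this, no continuity lemmas are needed; the result is purely a Danskin-type envelope argument.
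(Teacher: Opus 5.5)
Your proposal is correct and follows the paper's proof essentially step for step. You freeze the dual-proximal argument at $\hat{r}$ via $H_z(\bar{p}_t,\bar{q}_t)\geq H(\bar{p}_t,\bar{q}_t,\hat{r})$, plug the time-$t$ components $(x^*(\bar{p}_t,\bar{q}_t,\hat{r}),y^*(\bar{p}_t,\bar{q}_t,\hat{r}))$ into $F_p(\cdot,\cdot;\bar{p}_{t+1},\bar{q}_{t+1},\hat{r})$ as a suboptimal point, observe that only the $\gamma_1$-proximal terms differ, and finish with $\|a-b\|^2-\|a-c\|^2=\langle b-c,b+c-2a\rangle$ (your explicit justification that the $(x,y)$-part of the saddle point of the strongly-convex--strongly-concave $F$ is the minimizer of $F_p$ is something the paper uses only implicitly).
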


\begin{proof}	
Since $H_z(\bar{p}_{t},\bar{q}_{t}) = H(\bar{p}_{t},\bar{q}_{t}, r^*(\bar{p}_{t},\bar{q}_{t}))$, denoting $\hat{r} := r^*(\bar{p}_{t+1}, \bar{q}_{t+1})$, we derive
    \begin{align}
		& \quad H_z(\bar{p}_{t+1},\bar{q}_{t+1}) - H_z(\bar{p}_t,\bar{q}_t)  = H(\bar{p}_{t+1},\bar{q}_{t+1},\hat{r}) - H(\bar{p}_t,\bar{q}_t, r^*(\bar{p}_{t},\bar{q}_{t})) \notag \\
		& \leq H(\bar{p}_{t+1},\bar{q}_{t+1},\hat{r}) - H(\bar{p}_t,\bar{q}_t, \hat{r}) \notag \\
        & = F_p(x^*(\bar{p}_{t+1},\bar{q}_{t+1},\hat r), y^*(\bar{p}_{t+1},\bar{q}_{t+1},\hat r); \bar{p}_{t+1},\bar{q}_{t+1},\hat r) - F_p(x^*(\bar{p}_t,\bar{q}_t,\hat r), y^*(\bar{p}_t,\bar{q}_t,\hat r);\bar{p}_t,\bar{q}_t,\hat r) \notag \\
		& \leq F_p(x^*(\bar{p}_{t},\bar{q}_{t},\hat r), y^*(\bar{p}_{t},\bar{q}_{t},\hat r); \bar{p}_{t+1},\bar{q}_{t+1},\hat r) - F_p(x^*(\bar{p}_t,\bar{q}_t,\hat r), y^*(\bar{p}_t,\bar{q}_t,\hat r);\bar{p}_t,\bar{q}_t,\hat r) \ . \notag
	\end{align}
    Hence, only the $(p,q)$-smoothing terms differ:
	\begin{align}
		& \quad H_{z}(\bar{p}_{t+1}, \bar{q}_{t+1}) - H_{z}(\bar{p}_{t}, \bar{q}_{t})  \leq \frac{\gamma_1}{2}\left(\|x^*(\bar{p}_{t}, \bar{q}_{t}, \hat{r}) - \bar{p}_{t+1}\|^2 - \|x^*(\bar{p}_{t}, \bar{q}_{t}, \hat{r}) - \bar{p}_{t}\|^2\right) \notag \\
		& \qquad \qquad  \quad + \frac{\gamma_1}{2}\left(\|y^*( \bar{p}_{t}, \bar{q}_{t}, \hat{r}) - \bar{q}_{t+1}\|^2 - \|y^*(\bar{p}_{t}, \bar{q}_{t}, \hat{r}) - \bar{q}_{t}\|^2\right) \notag \\
		& =\frac{\gamma_1}{2} \langle \bar{p}_{t+1}- \bar{p}_{t},   \bar{p}_{t+1}+\bar{p}_{t} -2 x^*(\bar{p}_{t}, \bar{q}_{t}, \hat{r}) \rangle + \frac{\gamma_1}{2} \langle  \bar{q}_{t+1}- \bar{q}_{t},  \bar{q}_{t+1}+\bar{q}_{t} -2 y^*(\bar{p}_{t}, \bar{q}_{t}, \hat{r}) \rangle \ . \notag
	\end{align}
\end{proof}

\begin{lemma}\label{lemma:potential_psi}
	Given Assumptions~\ref{assumption:f-smooth}-\ref{assumption:variance}, by defining 
    \begin{align}
		 \Psi_{t+1} & = F(\bar{x}_{t+1}, \bar{y}_{t+1}, \bar{z}_{t+1}; \bar{p}_{t+1}, \bar{q}_{t+1}, \bar{r}_{t+1}) - 2  F_d(\bar{z}_{t+1}; \bar{p}_{t+1}, \bar{q}_{t+1} , \bar{r}_{t+1}) \notag \\
         & \quad + 2H_{z}(\bar{p}_{t+1}, \bar{q}_{t+1}) \ , 
	\end{align}
    and setting $\alpha_{pq} =\alpha_{p} = \alpha_{q} $, 
    \begin{align}\label{eq:def_y_z_rho}
        & \alpha_z = \alpha_x\underbrace{\frac{(\gamma_1 - L_{\rho})^2}{72L^2_{g}}}_{c_{\alpha_z}=O(1)} \ , \quad \alpha_{pq} = \alpha_{x}\underbrace{\frac{(\gamma_1-L_{\rho})^4(\gamma_2-L_\rho)^2\mu}{72\times768\gamma_1L_{g}^2(5L_g^2\mu+16\gamma_2^2(\gamma_1-L_\rho))}}_{c_{\alpha_{pq}} = O(1/\kappa)} \ , \notag \\
        & \alpha_r = \alpha_{x}\underbrace{\frac{(\gamma_1-L_{\rho})^4(\gamma_2-L_\rho)^4}{72\times24\times640\gamma_1\gamma_2^2 L_g^4}}_{{c_{\alpha_r}=O(1)}} \ , \quad \alpha_{x} \leq \min\Big\{\frac{9}{80\gamma_1} \ , \frac{\sqrt{\mu(\gamma_1-L_{\rho})^3}(\gamma_2-L_{\rho})^2}{24\times16\sqrt{6\gamma_1c_{\alpha_{pq}}}L_gc_{\alpha_r}\gamma_2^2}\Big\} \ , \notag
    \end{align}
    and denoting $C_r = \frac{4\gamma_2^2}{\mu(\gamma_1-L_\rho)}\big(\omega_3^2+\frac{(1-\alpha_r\eta)^2}{\alpha_r^2\eta^2}\big)$, $C_\rho = 16\gamma_1\alpha_{pq}\eta\big(5\omega_2^2+20\omega_2^2\omega_3^2\alpha_r^2\eta^2+4C_r\alpha_r^2\eta^2\big)$, the following inequality holds:
        \begin{align}
        &  \Psi_{t+1} -  \Psi_{t} \leq  - \frac{\alpha_x \eta}{8} \| \nabla_{x} F(\bar{x}_{t}, \bar{y}_{t}, \bar{z}_{t}; \bar{p}_{t}, \bar{q}_{t}, \bar{r}_t) \|^2 - \frac{\alpha_{y}\eta}{2} \|\nabla_{y}  F(\bar{x}_{t}, \bar{y}_{t}, \bar{z}_{t}; \bar{p}_{t}, \bar{q}_{t}, \bar{r}_{t}) \|^2  \notag \\
        &  - \frac{\alpha_z\eta}{2} \|\nabla_{z} F(\bar{x}_{t}, \bar{y}_{t}, \bar{z}_{t}; \bar{p}_{t}, \bar{q}_{t}, \bar{r}_{t})\|^2 +   \Big( 16\gamma_1\alpha_{pq}\eta\big(5\omega_2^2\omega_3^2 + C_r\big) - \frac{\gamma_2}{2\alpha_r\eta}\Big) \|\bar{r}_{t+1}-\bar{r}_{t}\|^2 \notag \\
        &  + \Big(1 + \omega^2_1 + 8\alpha_{pq}\eta\omega_5^2\big( 5\omega_2^2 +\frac{2\gamma_2^2}{\mu(\gamma_1-L_\rho)}\big)- \frac{1}{4\alpha_{pq}\eta}\Big)\gamma_1  \big(\|\bar{p}_{t+1}- \bar{p}_{t}\|^2 + \|\bar{q}_{t+1}-\bar{q}_t\|^2\big) \notag \\
        & + \alpha_x \eta\| \nabla_{x} F(\bar{x}_{t}, \bar{y}_{t}, \bar{z}_{t}; \bar{p}_{t}, \bar{q}_{t}, \bar{r}_{t}) - \bar{u}_{t}\|^2  + \frac{\alpha_{y}\eta}{2} \|\nabla_{y}  F(\bar{x}_{t}, \bar{y}_{t}, \bar{z}_{t}; \bar{p}_{t}, \bar{q}_{t}, \bar{r}_{t}) -  \bar{v}_{t}\|^2 \notag \\
        & + \frac{3\alpha_z\eta }{2}\| \nabla_{z} F(\bar{x}_{t}, \bar{y}_{t},  \bar{z}_{t}; \bar{p}_{t}, \bar{q}_{t}, \bar{r}_t) -   \bar{w}_{t}\|^2  + \Big(4 \alpha_{x}^2\alpha_z\eta^3 L_{g}^2 -  \frac{\alpha_x \eta}{4} \Big) \| \bar{u}_{t}\|^2 \notag \\
        &  + \Big(  \frac{3\alpha_x\alpha_{y}^2 \eta^3L_{\rho}^2}{2} -  \frac{\alpha_{y}\eta}{4}\Big)  \|  \bar{v}_{t} \|^2 + \Big(\frac{ \alpha^2_z\eta^2 }{2}(L_{F,2}+2L_d+ 2C_\rho)  - \frac{\alpha_z\eta }{16} \Big) \|\bar{w}_{t}\|^2 \ . 
    \end{align}
\end{lemma}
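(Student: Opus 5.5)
We obtain the bound by combining Lemmas~\ref{lemma:F_smooth}, \ref{lemma:F_d_smooth} and \ref{lemma:h_smooth} with weights $1$, $-2$, $+2$, following the potential-function argument of \cite{zhang2020single, zheng2023universal}.

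\textbf{The $r$-terms.} Lemma~\ref{lemma:F_smooth} contributes $+\frac{\gamma_2(1-(1-\alpha_r\eta)^2)}{2\alpha_r^2\eta^2}\|\bar r_{t+1}-\bar r_t\|^2$. Subtracting twice the corresponding term of Lemma~\ref{lemma:F_d_smooth} leaves $-\frac{\gamma_2(1-(1-\alpha_r\eta)^2)}{2\alpha_r^2\eta^2}\|\bar r_{t+1}-\bar r_t\|^2$. Since $1-(1-\alpha_r\eta)^2\ge\alpha_r\eta$, this is at most $-\frac{\gamma_2}{2\alpha_r\eta}\|\bar r_{t+1}-\bar r_t\|^2$.

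\textbf{The $z$-gradient term.} From Lemma~\ref{lemma:F_d_smooth}, $-2F_d$ contributes $-2\alpha_z\eta\langle\nabla_zF_d,\bar w_t\rangle+\alpha_z^2\eta^2L_d\|\bar w_t\|^2$. Write $\nabla_zF_d=\nabla_zF(\bar x_t,\ldots)+(\nabla_zF_d-\nabla_zF)$, and bound the difference by $\frac{L_g}{\gamma_1-L_\rho}\|\nabla_xF\|$, exactly as in Eq.~(\ref{eq:z_plus_z_diff}). The polarization identity then gives
\[
-2\langle\nabla_zF,\bar w\rangle=-\|\nabla_zF\|^2-\|\bar w\|^2+\|\nabla_zF-\bar w\|^2 .
\]
Adding the $+\frac{\alpha_z\eta}{2}\|\nabla_zF\|^2$ and $\frac{3\alpha_z\eta}{4}\|\bar w\|^2$ terms of Lemma~\ref{lemma:F_smooth} leaves a net $-\frac{\alpha_z\eta}{2}\|\nabla_zF\|^2$ and a net negative multiple of $\|\bar w\|^2$. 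The cross term with $\nabla_zF_d-\nabla_zF$ is split by Young's inequality. It produces $\frac{c\,\alpha_z\eta L_g^2}{(\gamma_1-L_\rho)^2}\|\nabla_xF\|^2$, and because $\alpha_z=\alpha_x(\gamma_1-L_\rho)^2/(72L_g^2)$ this is at most $\frac{\alpha_x\eta}{8}\|\nabla_xF\|^2$. That is why the $\nabla_x$ coefficient drops from $1/4$ to $1/8$.

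\textbf{The $(p,q)$ inner products.} These are the main obstacle. Combining $-2\times$Lemma~\ref{lemma:F_d_smooth} with $+2\times$Lemma~\ref{lemma:h_smooth}, the quadratic pieces $\bar p_{t+1}+\bar p_t$ cancel. What remains is
\[
2\gamma_1\langle\bar p_{t+1}-\bar p_t,\;x^*(\bar z_{t+1};\bar p_{t+1},\bar q_{t+1},\bar r_t)-x^*(\bar p_t,\bar q_t,\hat r)\rangle ,
\]
plus the analogous $q$-term. Split the second argument via the triangle inequality through $x^*(\bar p_{t+1},\bar q_{t+1},r^+)$, $x^*(\bar p_{t+1},\bar q_{t+1},\hat r)$ and $x^*(\bar p_t,\bar q_t,\hat r)$.
\begin{itemize}
\item The last link is controlled by Lemma~\ref{lemma:x_y_star_z_continuity}(ii). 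It gives $\gamma_1\omega_1\|\bar p_{t+1}-\bar p_t\|^2$, which becomes the $(1+\omega_1^2)\gamma_1$ term after Young's inequality.
\item The other two links are handled with Young's inequality using weight $8\gamma_1\alpha_{pq}\eta$. The right-hand sides of Lemma~\ref{lemma:xy_z_star_xy_star_r} and Lemma~\ref{lemma:xy_xy_star_r} then appear multiplied by $8\gamma_1\alpha_{pq}\eta$.
\end{itemize}
These bounds generate terms in $\|\bar p_{t+1}-\bar p_t\|^2$ with factor $\omega_5^2(5\omega_2^2+2\gamma_2^2/(\mu(\gamma_1-L_\rho)))$, terms in $\|\bar r_{t+1}-\bar r_t\|^2$ with factor $5\omega_2^2\omega_3^2+C_r$, and the $C_\rho$-type contributions to $\|\bar w_t\|^2$, $\|\nabla_xF\|^2$ and $\|\nabla_zF-\bar w_t\|^2$. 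Together with the residual $-\frac{\gamma_1}{2\alpha_{pq}\eta}\|\bar p_{t+1}-\bar p_t\|^2$ from Lemma~\ref{lemma:F_smooth}, half of which is spent absorbing the Young's remainders, we get the displayed coefficient $\big(1+\omega_1^2+\ldots-\frac{1}{4\alpha_{pq}\eta}\big)\gamma_1$.

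\textbf{Checking the constants.} It remains to verify that the extra $\|\nabla_xF\|^2$ and $\|\nabla_zF-\bar w\|^2$ contributions fit inside $\frac{\alpha_x\eta}{8}$ and $\frac{3\alpha_z\eta}{2}$. This is where the specific constants $c_{\alpha_{pq}}$ and $c_{\alpha_r}$ enter, and where the second cap on $\alpha_x$ is used, since $C_r$ contains $1/(\alpha_r^2\eta^2)$ and therefore $C_r\alpha_r^2\eta^2=O(1)$. The remaining $\|\bar u\|^2$ and $\|\bar v\|^2$ terms are carried over unchanged from Lemma~\ref{lemma:F_smooth}. The $\|\bar w\|^2$ bookkeeping that yields the coefficient $\frac{\alpha_z^2\eta^2}{2}(L_{F,2}+2L_d+2C_\rho)-\frac{\alpha_z\eta}{16}$ is routine but tedious, and is the step most likely to need adjustment of the numeric factors.
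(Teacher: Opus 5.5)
Your proposal follows the paper's proof essentially step for step: the $1,-2,+2$ combination of Lemmas~\ref{lemma:F_smooth}, \ref{lemma:F_d_smooth} and \ref{lemma:h_smooth}, the polarization-plus-Young ($8$ versus $\tfrac18$) handling of $-2\alpha_z\eta\langle\nabla_zF_d,\bar w_t\rangle$, the split of $x^*(\bar z_{t+1};\bar p_{t+1},\bar q_{t+1},\bar r_t)-x^*(\bar p_t,\bar q_t,\hat r)$ through $r^+$ and $\hat r$ with effective weight $8\gamma_1\alpha_{pq}\eta$ on Lemmas~\ref{lemma:xy_xy_star_r} and \ref{lemma:xy_z_star_xy_star_r}, and the same constant bookkeeping via $c_{\alpha_z},c_{\alpha_{pq}},c_{\alpha_r}$ and the two caps on $\alpha_x$. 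One caveat you inherit from the paper: the bound $\|x^*(\bar z_t;\bar p_t,\bar q_t,\bar r_t)-\bar x_t\|\le\|\nabla_xF\|/(\gamma_1-L_\rho)$ taken from Eq.~(\ref{eq:z_plus_z_diff}) is only valid with $\|\nabla_xF\|^2+\|\nabla_yF\|^2$ on the right, because $F$ is strongly convex only jointly in $(x,y)$ and $f,g$ couple $x$ and $y$; so an extra term of size at most $\frac{\alpha_x\eta}{8}\|\nabla_yF\|^2$ must be charged against $-\frac{\alpha_y\eta}{2}\|\nabla_yF\|^2$, which is harmless once $\alpha_y=\alpha_x$ as in Theorem~\ref{theorem:main}.
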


\begin{proof}
Firstly, based on Lemmas~\ref{lemma:F_smooth}, ~\ref{lemma:F_d_smooth}, and \ref{lemma:h_smooth}, we have
    \begin{align}
        &  \Psi_{t+1} -  \Psi_{t} \leq - \frac{\gamma_1}{2\alpha_{p}\eta} \|\bar{p}_{t+1}-\bar{p}_{t}\|^2 - \frac{\gamma_1}{2\alpha_{q}\eta}\|\bar{q}_{t+1}-\bar{q}_{t}\|^2  - \frac{\gamma_2}{2\alpha_r\eta}\|\bar{r}_{t+1}-\bar{r}_{t}\|^2 \notag \\
        & - \frac{\alpha_x \eta}{4}\| \nabla_{x} F(\bar{x}_{t}, \bar{y}_{t}, \bar{z}_{t}; \bar{p}_{t}, \bar{q}_{t}, \bar{r}_{t}) \|^2  - \frac{\alpha_{y}\eta}{2} \|\nabla_{y}  F(\bar{x}_{t}, \bar{y}_{t}, \bar{z}_{t}; \bar{p}_{t}, \bar{q}_{t}, \bar{r}_{t}) \|^2  \notag \\
        & + \frac{\alpha_z\eta}{2} \|\nabla_{z} F(\bar{x}_{t}, \bar{y}_{t}, \bar{z}_{t}; \bar{p}_{t}, \bar{q}_{t}, \bar{r}_{t})\|^2 + \alpha_x \eta\| \nabla_{x} F(\bar{x}_{t}, \bar{y}_{t}, \bar{z}_{t}; \bar{p}_{t}, \bar{q}_{t}, \bar{r}_{t}) - \bar{u}_{t}\|^2 \notag \\
        &   + \frac{\alpha_{y}\eta}{2} \|\nabla_{y}  F(\bar{x}_{t}, \bar{y}_{t}, \bar{z}_{t}; \bar{p}_{t}, \bar{q}_{t}, \bar{r}_{t}) -  \bar{v}_{t}\|^2  + \Big(4 \alpha_{x}^2\alpha_z\eta^3 L_{g}^2 -  \frac{\alpha_x \eta}{4} \Big) \| \bar{u}_{t}\|^2  \notag \\
        &  + \Big(  \frac{3\alpha_x\alpha_{y}^2 \eta^3L_{\rho}^2}{2} -  \frac{\alpha_{y}\eta}{4}\Big)  \|  \bar{v}_{t} \|^2 + \Big(\frac{ \alpha^2_z\eta^2 L_{F,2}}{2}+ \frac{3\alpha_z\eta}{4} + \alpha^2_z\eta^2L_d\Big) \|\bar{w}_{t}\|^2  \notag \\
        &   - 2\alpha_z\eta \langle \nabla_{z} F_d(\bar{z}_{t}; \bar{p}_{t}, \bar{q}_{t}, \bar{r}_{t}),  \bar{w}_{t}\rangle \notag \\
        & + 2\gamma_1\langle \bar{p}_{t+1}- \bar{p}_{t},  \  x^*(\bar{z}_{t+1}; \bar{p}_{t+1}, \bar{q}_{t+1}, \bar{r}_{t}) - x^*(\bar{p}_{t}, \bar{q}_{t}, \hat{r}) \rangle \notag \\
        & + 2\gamma_1 \langle  \bar{q}_{t+1}- \bar{q}_{t}, \ y^*(\bar{z}_{t+1}; \bar{p}_{t+1}, \bar{q}_{t+1}, \bar{r}_{t}) - y^*(\bar{p}_{t}, \bar{q}_{t}, \hat{r}) \rangle  \ . 
    \end{align}
    In particular, we obtain
    	\begin{align}
		&\quad  -2 \langle \nabla_{z} F_d(\bar{z}_{t}; \bar{p}_{t}, \bar{q}_{t}, \bar{r}_{t}),  \bar{w}_{t}\rangle \notag \\
        & = -2 \langle \nabla_{z} F(x^*(\bar{z}_{t}; \bar{p}_{t}, \bar{q}_{t}, \bar{r}_{t}), y^*(\bar{z}_{t}; \bar{p}_{t}, \bar{q}_{t}, \bar{r}_{t}),  \bar{z}_{t}; \bar{p}_{t}, \bar{q}_{t}, \bar{r}_{t}),  \bar{w}_{t}\rangle \notag \\
		& = -2 \langle \nabla_{z} F(x^*(\bar{z}_{t}; \bar{p}_{t}, \bar{q}_{t}, \bar{r}_{t}), y^*(\bar{z}_{t}; \bar{p}_{t}, \bar{q}_{t}, \bar{r}_{t}),  \bar{z}_{t}; \bar{p}_{t}, \bar{q}_{t}, \bar{r}_{t}) -  \nabla_{z} F(\bar{x}_{t}, \bar{y}_{t},  \bar{z}_{t}; \bar{p}_{t}, \bar{q}_{t}, \bar{r}_{t}),  \bar{w}_{t}\rangle \notag \\
        & \quad -2 \langle \nabla_{z} F(\bar{x}_{t}, \bar{y}_{t},  \bar{z}_{t}; \bar{p}_{t}, \bar{q}_{t}, \bar{r}_{t}),  \bar{w}_{t}\rangle \notag \\
		& \leq 8 \|\nabla_{z} F(x^*(\bar{z}_{t}; \bar{p}_{t}, \bar{q}_{t}, \bar{r}_{t}), y^*(\bar{z}_{t}; \bar{p}_{t}, \bar{q}_{t}, \bar{r}_{t}),  \bar{z}_{t}; \bar{p}_{t}, \bar{q}_{t}, \bar{r}_{t}) -  \nabla_{z} F(\bar{x}_{t}, \bar{y}_{t},  \bar{z}_{t}; \bar{p}_{t}, \bar{q}_{t}, \bar{r}_{t})\|^2  \notag \\
		& \quad + \frac{1}{8} \|\bar{w}_{t}\|^2 + \| \nabla_{z} F(\bar{x}_{t}, \bar{y}_{t},  \bar{z}_{t}; \bar{p}_{t}, \bar{q}_{t}, \bar{r}_{t}) -   \bar{w}_{t}\|^2 - \| \nabla_{z} F(\bar{x}_{t}, \bar{y}_{t},  \bar{z}_{t}; \bar{p}_{t}, \bar{q}_{t}, \bar{r}_{t}) \|^2 - \|   \bar{w}_{t}\|^2 \notag \\
        & \leq \frac{8 L^2_{g}}{(\gamma_1-L_{\rho})^2}\|\nabla_{x} F(\bar{x}_{t}, \bar{y}_{t},  \bar{z}_{t}; \bar{p}_{t}, \bar{q}_{t})\|^2 +  \| \nabla_{z} F(\bar{x}_{t}, \bar{y}_{t},  \bar{z}_{t}; \bar{p}_{t}, \bar{q}_{t}) -   \bar{w}_{t}\|^2 \notag \\
        & \quad  - \| \nabla_{z} F(\bar{x}_{t}, \bar{y}_{t},  \bar{z}_{t}; \bar{p}_{t}, \bar{q}_{t}) \|^2 - \frac{7}{8}\|   \bar{w}_{t}\|^2 \ . 
        \end{align}
    In addition, we derive
    \begin{align}\label{eq:nabla_z_F_d}
    	& \quad 2\gamma_1 \langle \bar{p}_{t+1}- \bar{p}_{t},     x^*(\bar{z}_{t+1}; \bar{p}_{t+1}, \bar{q}_{t+1}, \bar{r}_{t})-x^*(\bar{p}_{t}, \bar{q}_{t}, \hat{r}) \rangle \notag  \\
    	& = 2\gamma_1 \langle \bar{p}_{t+1}- \bar{p}_{t},     x^*(\bar{z}_{t+1}; \bar{p}_{t+1}, \bar{q}_{t+1}, \bar{r}_{t}) - x^*(\bar{p}_{t+1}, \bar{q}_{t+1}, \hat{r}) \rangle \notag \\
    	& \quad + 2\gamma_1 \langle \bar{p}_{t+1}- \bar{p}_{t},   x^*(\bar{p}_{t+1}, \bar{q}_{t+1}, \hat{r})-x^*(\bar{p}_{t}, \bar{q}_{t}, \hat{r}) \rangle \notag \\
    	& \leq \frac{\gamma_1}{4\alpha_{p}\eta} \|\bar{p}_{t+1}- \bar{p}_{t}\|^2 + 4\gamma_1\alpha_{p}\eta \|x^*(\bar{z}_{t+1}; \bar{p}_{t+1}, \bar{q}_{t+1}, \bar{r}_{t}) - x^*( \bar{p}_{t+1}, \bar{q}_{t+1}, \hat{r}) \|^2  \notag \\
    	& \quad + \gamma_1\|\bar{p}_{t+1}- \bar{p}_{t}\|^2 + \gamma_1 \|x^*(\bar{p}_{t+1}, \bar{q}_{t+1}, \hat{r})-x^*(\bar{p}_{t}, \bar{q}_{t}, \hat{r})\|^2  \ . 
    \end{align}
    Similarly, we derive
    \begin{align}
	& \quad 2\gamma_1 \langle  \bar{q}_{t+1}- \bar{q}_{t}, y^*(\bar{z}_{t+1}; \bar{p}_{t+1}, \bar{q}_{t+1}, \bar{r}_{t})-y^*(\bar{p}_{t}, \bar{q}_{t}, \hat{r}) \rangle \notag \\
	& \leq \frac{\gamma_1}{4\alpha_{q}\eta} \| \bar{q}_{t+1}- \bar{q}_{t}\|^2 + 4\gamma_1\alpha_{q}\eta \| y^*(\bar{z}_{t+1}; \bar{p}_{t+1}, \bar{q}_{t+1}, \bar{r}_{t})- y^*(\bar{p}_{t+1}, \bar{q}_{t+1}, \hat{r}) \|^2 \notag \\
	& \quad + \gamma_1 \|  \bar{q}_{t+1}- \bar{q}_{t}\|^2 +  \gamma_1  \|y^*( \bar{p}_{t+1}, \bar{q}_{t+1}, \hat{r}) -y^*( \bar{p}_{t}, \bar{q}_{t}, \hat{r}) \|^2 \ . \notag
\end{align}
Combining these two inequalities, and applying Lemma~\ref{lemma:x_y_star_z_continuity} on the last two terms, we obtain
    \begin{align}\label{eq:gamma_p_q_xy_star}
	& \quad 2\gamma_1 \langle \bar{p}_{t+1}- \bar{p}_{t},     x^*(\bar{z}_{t+1}; \bar{p}_{t+1}, \bar{q}_{t+1}, \bar{r}_{t})-x^*(\bar{p}_{t}, \bar{q}_{t}, \hat{r}) \rangle \notag  \\
	& + 2\gamma_1 \langle  \bar{q}_{t+1}- \bar{q}_{t}, y^*(\bar{z}_{t+1}; \bar{p}_{t+1}, \bar{q}_{t+1}, \bar{r}_{t})-y^*(\bar{p}_{t}, \bar{q}_{t}, \hat{r}) \rangle \notag \\
	& \leq \frac{\gamma_1}{4\alpha_{p}\eta} \|\bar{p}_{t+1}- \bar{p}_{t}\|^2 + 4\gamma_1\alpha_{p}\eta \|x^*(\bar{z}_{t+1}; \bar{p}_{t+1}, \bar{q}_{t+1}, \bar{r}_{t}) - x^*( \bar{p}_{t+1}, \bar{q}_{t+1}, \hat{r}) \|^2 \notag \\
	& \quad + \frac{\gamma_1}{4\alpha_{q}\eta} \| \bar{q}_{t+1}- \bar{q}_{t}\|^2 + 4\gamma_1\alpha_{q}\eta \| y^*(\bar{z}_{t+1}; \bar{p}_{t+1}, \bar{q}_{t+1}, \bar{r}_{t})- y^*(\bar{p}_{t+1}, \bar{q}_{t+1}, \hat{r}) \|^2 \notag \\
	& \quad + \gamma_1\|\bar{p}_{t+1}- \bar{p}_{t}\|^2 + \gamma_1 \omega_1^2 \| \bar{p}_{t+1} - \bar{p}_{t}\|^2 + \gamma_1 \|  \bar{q}_{t+1}- \bar{q}_{t}\|^2 +  \gamma_1 \omega_1^2 \| \bar{q}_{t+1}-\bar{q}_{t} \|^2 \notag \\
	& = \Big(\frac{1}{4\alpha_{p}\eta} +1 + \omega^2_1 \Big)\gamma_1 \|\bar{p}_{t+1}- \bar{p}_{t}\|^2 +  \Big(\frac{1}{4\alpha_{q}\eta} +1 + \omega^2_1 \Big)\gamma_1 \|\bar{q}_{t+1}- \bar{q}_{t}\|^2 \notag \\
	& \quad + 4\gamma_1\alpha_{p}\eta \|x^*(\bar{z}_{t+1}; \bar{p}_{t+1}, \bar{q}_{t+1}, \bar{r}_{t}) - x^*( \bar{p}_{t+1}, \bar{q}_{t+1}, \hat{r}) \|^2 \notag \\
	& \quad + 4\gamma_1\alpha_{q}\eta \| y^*(\bar{z}_{t+1}; \bar{p}_{t+1}, \bar{q}_{t+1}, \bar{r}_{t})- y^*(\bar{p}_{t+1}, \bar{q}_{t+1}, \hat{r}) \|^2 \ . 
\end{align}

Applying Lemma~\ref{lemma:xy_xy_star_r} and Lemma~\ref{lemma:xy_z_star_xy_star_r}, we obtain
\begin{align}\label{eq:xy_z_star_xy_star}
    & \quad  \|x^*(\bar{z}_{t+1}; \bar{p}_{t+1}, \bar{q}_{t+1}, \bar{r}_{t}) - x^*(\bar{p}_{t+1}, \bar{q}_{t+1}, \hat{r}) \|^2 + \|y^*(\bar{z}_{t+1}; \bar{p}_{t+1}, \bar{q}_{t+1}, \bar{r}_{t}) - y^*(\bar{p}_{t+1}, \bar{q}_{t+1}, \hat{r}) \|^2 \notag \\
    & \leq 2\|x^*(\bar{z}_{t+1}; \bar{p}_{t+1}, \bar{q}_{t+1}, \bar{r}_{t}) - x^*(\bar{p}_{t+1}, \bar{q}_{t+1}, {r}^+) \|^2 + 2\|y^*(\bar{z}_{t+1}; \bar{p}_{t+1}, \bar{q}_{t+1}, \bar{r}_{t}) - y^*(\bar{p}_{t+1}, \bar{q}_{t+1}, {r}^+) \|^2 \notag \\
    & \quad + 2\| x^*(\bar{p}_{t+1}, \bar{q}_{t+1}, {r}^+) - x^*( \bar{p}_{t+1}, \bar{q}_{t+1}, \hat{r}) \|^2 + 2\| y^*(\bar{p}_{t+1}, \bar{q}_{t+1}, {r}^+) - y^*( \bar{p}_{t+1}, \bar{q}_{t+1}, \hat{r}) \|^2 \notag \\
    & \leq \Big[20\omega_2^2\alpha_z^2\eta^2\big(1+2\omega_3^2\alpha_r^2\eta^2\big) + \frac{20\omega_2^2\big(1+4\omega_3^2\alpha_r^2\eta^2\big)}{(\gamma_2-L_\rho)^2}+ 8C_r\alpha_r^2\eta^2\Big(\alpha_z^2\eta^2+ \frac{2}{(\gamma_2-L_\rho)^2}\Big)\Big]\|\bar{w}_t\|^2 \notag \\
        & \quad + \Big[\frac{60\omega_2^2\alpha_z^2\eta^2L_g^2}{(\gamma_1-L_\rho)^2}+\frac{40\omega_2^2L_g^2\big(1+4\omega_3^2\alpha_r^2\eta^2\big) +32C_r\alpha_r^2\eta^2L_g^2}{(\gamma_1-L_\rho)^2(\gamma_2-L_\rho)^2}\Big]\|\nabla_xF(\bar{x}_t,\bar{y}_t,\bar{z}_t;\bar{p}_t,\bar{q}_t,\bar{r}_t)\|^2 \notag \\
        & \quad + \Big[60\omega_2^2\alpha_z^2\eta^2+\frac{40\omega_2^2\big(1+4\omega_3^2\alpha_r^2\eta^2\big)+32C_r\alpha_r^2\eta^2}{(\gamma_2-L_\rho)^2} \Big]\|\nabla_zF(\bar{x}_t,\bar{y}_t,\bar{z}_t;\bar{p}_t,\bar{q}_t,\bar{r}_t)-\bar{w}_t\|^2 \notag \\
        & \quad + \Big(10\omega_2^2\omega_5^2+ \frac{4\omega_5^2\gamma_2^2}{\mu(\gamma_1-L_\rho)}\Big)(\|\bar{p}_{t+1}-\bar{p}_t\|^2+\|\bar{q}_{t+1}-\bar{q}_t\|^2) + \Big(20\omega_2^2\omega_3^2+4C_r\Big)\|\bar{r}_{t+1}-\bar{r}_t\|^2  \ . \notag 
\end{align}
Combining Eqs.~(\ref{eq:nabla_z_F_d}), ~(\ref{eq:gamma_p_q_xy_star}), ~(\ref{eq:xy_z_star_xy_star}), and denoting $\alpha_{pq}  = \alpha_{p} =\alpha_{q}$, $C_\rho = 16\gamma_1\alpha_{pq}\eta\big(5\omega_2^2+20\omega_2^2\omega_3^2\alpha_r^2\eta^2+4C_r\alpha_r^2\eta^2\big)$,  we obtain
    \begin{align}
        &  \Psi_{t+1} -  \Psi_{t} \leq   \Big(1 + \omega^2_1 + 8\alpha_{pq}\eta\omega_5^2\big( 5\omega_2^2 +\frac{2\gamma_2^2}{\mu(\gamma_1-L_\rho)}\big)- \frac{1}{4\alpha_{pq}\eta}\Big)\gamma_1  \big(\|\bar{p}_{t+1}- \bar{p}_{t}\|^2 + \|\bar{q}_{t+1}-\bar{q}_t\|^2\big) \notag \\
        & +   \Big( 16\gamma_1\alpha_{pq}\eta\big(5\omega_2^2\omega_3^2 + C_r\big) - \frac{\gamma_2}{2\alpha_r\eta}\Big) \|\bar{r}_{t+1}-\bar{r}_{t}\|^2 \notag \\
        & + \Big[\frac{L_g^2}{(\gamma_1-L_{\rho})^2}\Big(8\alpha_z\eta+ 240\gamma_1\alpha_{pq}\eta\alpha_z^2\eta^2\omega_2^2 + \frac{2C_\rho}{(\gamma_2-L_\rho)^2}\Big) - \frac{\alpha_x \eta}{4} \Big] \| \nabla_{x} F(\bar{x}_{t}, \bar{y}_{t}, \bar{z}_{t}; \bar{p}_{t}, \bar{q}_{t}, \bar{r}_t) \|^2 \notag \\
        & - \frac{\alpha_{y}\eta}{2} \|\nabla_{y}  F(\bar{x}_{t}, \bar{y}_{t}, \bar{z}_{t}; \bar{p}_{t}, \bar{q}_{t}, \bar{r}_{t}) \|^2  - \frac{\alpha_z\eta}{2} \|\nabla_{z} F(\bar{x}_{t}, \bar{y}_{t}, \bar{z}_{t}; \bar{p}_{t}, \bar{q}_{t}, \bar{r}_{t})\|^2 \notag \\
        & + \alpha_x \eta\| \nabla_{x} F(\bar{x}_{t}, \bar{y}_{t}, \bar{z}_{t}; \bar{p}_{t}, \bar{q}_{t}, \bar{r}_{t}) - \bar{u}_{t}\|^2  + \frac{\alpha_{y}\eta}{2} \|\nabla_{y}  F(\bar{x}_{t}, \bar{y}_{t}, \bar{z}_{t}; \bar{p}_{t}, \bar{q}_{t}, \bar{r}_{t}) -  \bar{v}_{t}\|^2 \notag \\
        & + \Big(\alpha_z\eta + 240\gamma_1\alpha_{pq}\eta\alpha_z^2\eta^2\omega_2^2 + \frac{2C_\rho}{(\gamma_2-L_\rho)^2}\Big) \| \nabla_{z} F(\bar{x}_{t}, \bar{y}_{t},  \bar{z}_{t}; \bar{p}_{t}, \bar{q}_{t}, \bar{r}_t) -   \bar{w}_{t}\|^2 \notag \\
        &   + \Big(4 \alpha_{x}^2\alpha_z\eta^3 L_{g}^2 -  \frac{\alpha_x \eta}{4} \Big) \| \bar{u}_{t}\|^2 + \Big(  \frac{3\alpha_x\alpha_{y}^2 \eta^3L_{\rho}^2}{2} -  \frac{\alpha_{y}\eta}{4}\Big)  \|  \bar{v}_{t} \|^2 \notag \\
        &  + \Big(\frac{ \alpha^2_z\eta^2}{2}(L_{F,2}+2L_d + 2C_\rho ) + \frac{C_\rho }{(\gamma_2-L_\rho)^2} - \frac{\alpha_z\eta }{8} \Big) \|\bar{w}_{t}\|^2 \ . 
    \end{align}
    
For the coefficient of $\| \nabla_{x} F(\bar{x}_{t}, \bar{y}_{t}, \bar{z}_{t}; \bar{p}_{t}, \bar{q}_{t}, \bar{r}_{t}) \|^2$, we enforce
\begin{align}
    & \frac{L_g^2}{(\gamma_1-L_{\rho})^2}\Big(8\alpha_z\eta+ 240\gamma_1\alpha_{pq}\eta\alpha_z^2\eta^2\omega_2^2 + \frac{2C_\rho}{(\gamma_2-L_\rho)^2}\Big) - \frac{\alpha_x \eta}{4} \leq - \frac{\alpha_x \eta}{8}  \ . \notag 
\end{align}
In particular, from $C_r = \frac{4\gamma_2^2}{\mu(\gamma_1-L_\rho)}\big(\omega_3^2+\frac{(1-\alpha_r\eta)^2}{\alpha_r^2\eta^2}\big)$, and $C_\rho = 16\gamma_1\alpha_{pq}\eta\big(5\omega_2^2+20\omega_2^2\omega_3^2\alpha_r^2\eta^2+4C_r\alpha_r^2\eta^2\big)$, we enforce
\begin{align}\label{eq:nabla_xF_alpha_x}
    & \frac{8\alpha_z\eta L_g^2}{(\gamma_1-L_{\rho})^2} \leq  \frac{\alpha_x \eta}{9} \ ,   \quad \frac{240\gamma_1\alpha_{pq}\eta\alpha_z^2\eta^2\omega_2^2L_g^2}{(\gamma_1-L_{\rho})^2} \leq  \frac{\alpha_x \eta}{24\times8} \ , \notag \\
    & \frac{32\gamma_1\alpha_{pq}\eta L_g^2}{(\gamma_1-L_{\rho})^2}\Big(\frac{5\omega_2^2}{(\gamma_2-L_\rho)^2} + \frac{4(1-\alpha_r\eta)^2}{(\gamma_2-L_\rho)^2}\frac{4\gamma_2^2}{\mu(\gamma_1-L_\rho)}\Big) \leq  \frac{\alpha_x \eta}{72\times24} \ , \notag \\
    &  \frac{32\gamma_1\alpha_{pq}\eta L_g^2}{(\gamma_1-L_{\rho})^2}\frac{20\omega_2^2\omega_3^2\alpha_r^2\eta^2}{(\gamma_2-L_\rho)^2} \leq  \frac{\alpha_x \eta}{72\times24} \ , \notag \\
    & \frac{32\gamma_1\alpha_{pq}\eta L_g^2}{(\gamma_1-L_{\rho})^2}\frac{4\alpha_r^2\eta^2}{(\gamma_2-L_\rho)^2} \frac{4\gamma_2^2\omega_3^2}{\mu(\gamma_1-L_\rho)}\leq  \frac{\alpha_x \eta}{72\times24} \ . 
\end{align}
By solving these inequalities, we obtain
\begin{align}
    & \alpha_z = \alpha_x\underbrace{\frac{(\gamma_1 - L_{\rho})^2}{72L^2_{g}}}_{c_{\alpha_z}=O(1)} \ , \quad \alpha_{pq} = \alpha_{x}\underbrace{\frac{(\gamma_1-L_{\rho})^4(\gamma_2-L_\rho)^2\mu}{72\times768\gamma_1L_{g}^2(5L_g^2\mu+16\gamma_2^2(\gamma_1-L_\rho))}}_{c_{\alpha_{pq}} = O(1/\kappa)} \ , \notag \\
    & \alpha_r = \alpha_{x}\underbrace{\frac{(\gamma_1-L_{\rho})^4(\gamma_2-L_\rho)^4}{72\times24\times640\gamma_1\gamma_2^2 L_g^4}}_{{c_{\alpha_r}=O(1)}} \ , \quad \alpha_{x} \leq \min\Big\{\frac{9}{80\gamma_1} \ , \frac{\sqrt{\mu(\gamma_1-L_{\rho})^3}(\gamma_2-L_{\rho})^2}{24\times16\sqrt{6\gamma_1c_{\alpha_{pq}}}L_gc_{\alpha_r}\gamma_2^2}\Big\} \ . \notag
\end{align}
In addition, for the coefficient of $\| \nabla_{z} F(\bar{x}_{t}, \bar{y}_{t},  \bar{z}_{t}; \bar{p}_{t}, \bar{q}_{t}, \bar{r}_t) -   \bar{w}_{t}\|^2$ and $ \|\bar{w}_{t}\|^2$, we obtain
\begin{align}
    & \quad \frac{C_\rho}{(\gamma_2-L_\rho)^2} \leq \frac{\alpha_{x}\eta(\gamma_1 - L_{\rho})^2}{72\times16L_{g}^2} = \frac{\alpha_z\eta}{16}  \ , \notag \\
    & \quad \alpha_z\eta + 240\gamma_1\alpha_{pq}\eta\alpha_z^2\eta^2\omega_2^2 + \frac{2C_\rho}{(\gamma_2-L_\rho)^2} \notag \\
    & \leq  \alpha_z\eta  + \frac{3\alpha_{x}\eta(\gamma_1 - L_{\rho})^2}{72
    \times8L_{g}^2} + \frac{\alpha_{x}\eta(\gamma_1 - L_{\rho})^2}{72
    \times8L_{g}^2} \leq \frac{3\alpha_z\eta }{2} \ . \notag 
\end{align}
Hence, the proof is completed.
\end{proof}

\section{Federated Lemmas}\label{app:fed}
\begin{lemma}\label{lemma:incremental-x-y-z}
    Given Assumptions~\ref{assumption:f-smooth}-\ref{assumption:variance}, we have the following inequalities:
    \begin{align}
        &  \frac{1}{K}\sum_{k=1}^{K}\mathbb{E}[\| x^{(k)}_{t+1} - x^{(k)}_{t} \|^2] \leq 6\alpha_x^2\eta^2\rho^2 \frac{1}{K}\sum_{k=1}^{K}\mathbb{E}[\| u^{(k)}_{1, t} - \bar{u}_{1, t} \|^2] + 6\alpha_x^2\eta^2 \frac{1}{K}\sum_{k=1}^{K}\mathbb{E}[\| u^{(k)}_{2, t} - \bar{u}_{2, t} \|^2] \notag \\
        & \quad + 6\alpha_x^2\eta^2 \frac{1}{K}\sum_{k=1}^{K}\mathbb{E}[\| u^{(k)}_{3, t} - \bar{u}_{3, t} \|^2]  + 6\alpha_x^2\eta^2 \mathbb{E}[\| \bar{u}_{t} \|^2] \ ,  \notag
    \end{align}
    \begin{align}
        &  \frac{1}{K}\sum_{k=1}^{K}\mathbb{E}[\| y^{(k)}_{t+1} - y^{(k)}_{t} \|^2] \leq 4\alpha_y^2\eta^2\rho^2 \frac{1}{K}\sum_{k=1}^{K}\mathbb{E}[\| v^{(k)}_{1, t} - \bar{v}_{1, t} \|^2] + 4\alpha_y^2\eta^2 \frac{1}{K}\sum_{k=1}^{K}\mathbb{E}[\| v^{(k)}_{2, t} - \bar{v}_{2, t} \|^2] \notag \\
        & \quad + 4\alpha_y^2\eta^2  \mathbb{E}[\| \bar{v}_{t} \|^2] \ ,  \notag
    \end{align}
    \begin{align}
        &  \frac{1}{K}\sum_{k=1}^{K}\mathbb{E}[\| z^{(k)}_{t+1} - z^{(k)}_{t} \|^2]  \leq  2\alpha_z^2\eta^2 \frac{1}{K}\sum_{k=1}^{K}\mathbb{E}[\|    w^{(k)}_{t} -  \bar{w}_{t}   \|^2] + 2\alpha_z^2\eta^2 \mathbb{E}[\|   \bar{w}_{t}  \|^2] \ .  \notag
    \end{align}
\end{lemma}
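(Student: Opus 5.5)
The plan is to unroll one local step and decompose the local momentum around its average. By Eq.~(\ref{eq:xyz_update}), $x^{(k)}_{t+1}-x^{(k)}_{t}=-\alpha_x\eta u^{(k)}_{t}$, $y^{(k)}_{t+1}-y^{(k)}_{t}=-\alpha_y\eta v^{(k)}_{t}$, and $z^{(k)}_{t+1}-z^{(k)}_{t}=-\alpha_z\eta w^{(k)}_{t}$. This holds before any averaging at step $t+1$. If averaging occurs, the pre-average increment is the right object, or the averaged one is bounded by convexity via Jensen. It therefore suffices to bound $\frac{1}{K}\sum_k\|u^{(k)}_t\|^2$, $\frac{1}{K}\sum_k\|v^{(k)}_t\|^2$ and $\frac{1}{K}\sum_k\|w^{(k)}_t\|^2$.

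For $x$, write $u^{(k)}_t=\rho(u^{(k)}_{1,t}-\bar u_{1,t})+(u^{(k)}_{2,t}-\bar u_{2,t})-(u^{(k)}_{3,t}-\bar u_{3,t})+\bar u_t$. This uses $\bar u_t=\rho\bar u_{1,t}+\bar u_{2,t}-\bar u_{3,t}$, which follows from linearity of the combination in Eq.~(\ref{eq:uv_update}) and of averaging. The averaging step keeps the identity, since the components $u_{i}$ are averaged consistently; if only $u$ is averaged, we take the component averages as the definition. Applying $\|\sum_{i=1}^4 a_i\|^2\le 4\sum_i\|a_i\|^2$ gives constant $4$, which is at most the stated $6$. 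The extra slack appears harmless, and it also covers a more careful split in which the consensus part and the mean part are treated separately. Multiplying by $\alpha_x^2\eta^2$, averaging over $k$ and taking expectations gives the first inequality.

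For $y$, the same decomposition $v^{(k)}_t=\rho(v^{(k)}_{1,t}-\bar v_{1,t})+(v^{(k)}_{2,t}-\bar v_{2,t})+\bar v_t$ with three terms gives constant $3\le4$. For $z$, write $w^{(k)}_t=(w^{(k)}_t-\bar w_t)+\bar w_t$ and use $\|a+b\|^2\le2\|a\|^2+2\|b\|^2$. Alternatively, the exact identity $\frac1K\sum_k\|w^{(k)}_t\|^2=\frac1K\sum_k\|w^{(k)}_t-\bar w_t\|^2+\|\bar w_t\|^2$ gives an even sharper bound.

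The only delicate point is bookkeeping: checking that $\bar u_t$ really equals $\rho\bar u_{1,t}+\bar u_{2,t}-\bar u_{3,t}$ at every $t$, including right after a communication round, and that the increment used matches the one needed downstream. No smoothness, PL, or variance assumption is actually invoked; everything follows from the update rule and Young's inequality.
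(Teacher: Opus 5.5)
Your argument is correct and is essentially the paper's proof. The paper also starts from the local step $x^{(k)}_{t+1}-x^{(k)}_t=-\alpha_x\eta u^{(k)}_t$, but splits in two stages: first $u^{(k)}_t=(u^{(k)}_t-\bar u_t)+\bar u_t$ with factor $2$, then the deviation into the three component deviations with factor $3$, giving $6$ (and $2\cdot 2=4$ for $y$). Your single Young's inequality over four, three and two terms gives the sharper constants $4$, $3$, $2$.

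Two bookkeeping corrections.

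\emph{The Jensen fallback is wrong.} After averaging, $x^{(k)}_{t+1}-x^{(k)}_t=(\bar x_t-x^{(k)}_t)-\alpha_x\eta\bar u_t$, whose mean square is $\frac{1}{K}\sum_k\|x^{(k)}_t-\bar x_t\|^2+\alpha_x^2\eta^2\|\bar u_t\|^2$. The first term is a consensus error that the right-hand side does not control, so the lemma must be read, as the paper implicitly does, for the pre-averaging local step.

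\emph{The per-worker decomposition can fail after communication.} The synchronization averages only $u,v,w$, not $u_1,u_2,u_3,v_1,v_2$. Right after a communication round your decomposition of $u^{(k)}_t$ and $v^{(k)}_t$ may therefore not hold. There, however, $u^{(k)}_t=\bar u_t$ and $v^{(k)}_t=\bar v_t$, so the bounds hold trivially.
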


\begin{proof}
    Due to ${x}^{(k)}_{t+1} = x^{(k)}_{t} -\alpha_x \eta u^{(k)}_{t}$, we have
    \begin{align} 
        & \quad \frac{1}{K}\sum_{k=1}^{K}\mathbb{E}[\| x^{(k)}_{t+1} - x^{(k)}_{t} \|^2]  = \frac{1}{K}\sum_{k=1}^{K}\mathbb{E}[\| \alpha_x \eta u^{(k)}_{t} \|^2] \notag \\
        & \leq 2\alpha_x^2\eta^2 \frac{1}{K}\sum_{k=1}^{K}\mathbb{E}[\| u^{(k)}_{t} - \bar{u}_{t} \|^2] + 2\alpha_x^2\eta^2 \mathbb{E}[\| \bar{u}_{t} \|^2] \notag \\
        & = 2\alpha_x^2\eta^2 \frac{1}{K}\sum_{k=1}^{K}\mathbb{E}[\| \rho u^{(k)}_{1, t}  + (u^{(k)}_{2, t} - u^{(k)}_{3, t}) - \rho\bar{u}_{1, t}  -  (\bar{u}_{2, t} - \bar{u}_{3, t}) \|^2]  + 2\alpha_x^2\eta^2 \mathbb{E}[\| \bar{u}_{t} \|^2] \notag \\
        & \leq 6\alpha_x^2\eta^2\rho^2 \frac{1}{K}\sum_{k=1}^{K}\mathbb{E}[\| u^{(k)}_{1, t} - \bar{u}_{1, t} \|^2] + 6\alpha_x^2\eta^2 \frac{1}{K}\sum_{k=1}^{K}\mathbb{E}[\| u^{(k)}_{2, t} - \bar{u}_{2, t} \|^2]  \notag \\
        & \quad + 6\alpha_x^2\eta^2 \frac{1}{K}\sum_{k=1}^{K}\mathbb{E}[\| u^{(k)}_{3, t} - \bar{u}_{3, t} \|^2] + 6\alpha_x^2\eta^2 \mathbb{E}[\| \bar{u}_{t} \|^2] \ . \notag
    \end{align}
    Similarly, due to $y^{(k)}_{t+1} = y^{(k)}_{t} - \alpha_y\eta v^{(k)}_{t}$,  we have
    \begin{align}
        & \quad \frac{1}{K}\sum_{k=1}^{K}\mathbb{E}[\| y^{(k)}_{t+1} - y^{(k)}_{t} \|^2] = \frac{1}{K}\sum_{k=1}^{K}\mathbb{E}[\|  \alpha_y\eta v^{(k)}_{t}  \|^2] \notag \\
        & \leq 2\alpha_y^2\eta^2 \frac{1}{K}\sum_{k=1}^{K}\mathbb{E}[\| v^{(k)}_{t} - \bar{v}_{t} \|^2] + 2\alpha_y^2\eta^2 \mathbb{E}[\| \bar{v}_{t} \|^2] \notag \\
        & = 2\alpha_y^2\eta^2 \frac{1}{K}\sum_{k=1}^{K}\mathbb{E}[\| \rho v^{(k)}_{1, t}  +  v^{(k)}_{2, t} - \rho\bar{v}_{1, t}  -  \bar{v}_{2, t} \|^2] + 2\alpha_y^2\eta^2  \mathbb{E}[\| \bar{v}_{t} \|^2] \notag \\
        & \leq 4\alpha_y^2\eta^2\rho^2 \frac{1}{K}\sum_{k=1}^{K}\mathbb{E}[\| v^{(k)}_{1, t} - \bar{v}_{1, t} \|^2] + 4\alpha_y^2\eta^2 \frac{1}{K}\sum_{k=1}^{K}\mathbb{E}[\| v^{(k)}_{2, t} - \bar{v}_{2, t} \|^2] + 4\alpha_y^2\eta^2  \mathbb{E}[\| \bar{v}_{t} \|^2] \ .  \notag
    \end{align}
    Moreover, due to $z^{(k)}_{t+1} = z^{(k)}_{t} - \alpha_z\eta w^{(k)}_{t}$,  we have
    \begin{align}
        & \quad \frac{1}{K}\sum_{k=1}^{K}\mathbb{E}[\| z^{(k)}_{t+1} - z^{(k)}_{t} \|^2]  = \frac{1}{K}\sum_{k=1}^{K}\mathbb{E}[\|  \alpha_z\eta w^{(k)}_{t}  \|^2] \notag \\
        & \leq 2\alpha_z^2\eta^2 \frac{1}{K}\sum_{k=1}^{K}\mathbb{E}[\| w^{(k)}_{t} - \bar{w}_{t} \|^2] + 2\alpha_z^2\eta^2 \mathbb{E}[\| \bar{w}_{t} \|^2]  \ . \notag
    \end{align}
\end{proof}

\begin{lemma} \label{lemma:consensus-error-variables}
   Given Assumptions~\ref{assumption:f-smooth}-\ref{assumption:variance}, by denoting $s_{t}=\lfloor t/\tau\rfloor$, we have the following inequalities:
    \begin{align}
		& \frac{1}{T}\sum_{t=0}^{T-1} \frac{1}{K}\sum_{k=1}^{K}\mathbb{E}[\| x^{(k)}_{t} - \bar{x}_{t} \|^2]  \leq 3 \tau^2\alpha_x^2\eta^2 \Bigg(\rho^2 \frac{1}{T}\sum_{t=0}^{T-1}\frac{1}{K}\sum_{k=1}^{K}\mathbb{E}[\| u^{(k)}_{1, t} -\bar{u}_{1, t} \|^2]  \notag \\
        & \quad \qquad + \frac{1}{T}\sum_{t=0}^{T-1} \frac{1}{K}\sum_{k=1}^{K}\mathbb{E}[\| u^{(k)}_{2, t} -\bar{u}_{2, t} \|^2] + \frac{1}{T}\sum_{t=0}^{T-1}\frac{1}{K}\sum_{k=1}^{K}\mathbb{E}[\| u^{(k)}_{3, t} -\bar{u}_{3, t} \|^2]\Bigg) \ ,  \notag 
	\end{align}
    \begin{align}
		  \frac{1}{T}\sum_{t=0}^{T-1}\frac{1}{K}\sum_{k=1}^{K}\mathbb{E}[\| y^{(k)}_{t} - \bar{y}_{t} \|^2] &  \leq 2\tau^2\alpha_y^2\eta^2 \Bigg(\rho^2\frac{1}{T}\sum_{t=0}^{T-1}\frac{1}{K}\sum_{k=1}^{K}\mathbb{E}[\| v^{(k)}_{1, t} -\bar{v}_{1, t} \|^2] \notag \\
          & \quad \qquad + \frac{1}{T}\sum_{t=0}^{T-1} \frac{1}{K}\sum_{k=1}^{K}\mathbb{E}[\| v^{(k)}_{2, t} -\bar{v}_{2, t} \|^2] \Bigg) \ , \notag 
	\end{align}
    \begin{align}
		 \frac{1}{T}\sum_{t=0}^{T-1} \frac{1}{K}\sum_{k=1}^{K}\mathbb{E}[\| z^{(k)}_{t} - \bar{z}_{t} \|^2]  \leq  \tau^2\alpha_z^2\eta^2 \frac{1}{T}\sum_{t=0}^{T-1}  \frac{1}{K}\sum_{k=1}^{K}\mathbb{E}[\|   w^{(k)}_{t}   -  \bar{w}_{t}\|^2] \ . \notag 
	\end{align}
\end{lemma}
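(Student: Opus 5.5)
The approach is the standard periodic-averaging argument. All local copies are synchronized whenever $\mathrm{mod}(t,\tau)=0$, so each consensus error resets to zero at the start of each communication round, and within a round it accumulates only through differences of local update directions. Fix $t$ and let $s_t\tau$ be the last synchronization time before $t$, so that $x^{(k)}_{s_t\tau}=\bar{x}_{s_t\tau}$ for every $k$. Unrolling the update $x^{(k)}_{t'+1}=x^{(k)}_{t'}-\alpha_x\eta u^{(k)}_{t'}$, and averaging over $k$ (no synchronization occurs strictly inside the round), gives
\begin{align}
x^{(k)}_{t}-\bar{x}_{t}=-\alpha_x\eta\sum_{t'=s_t\tau}^{t-1}\big(u^{(k)}_{t'}-\bar{u}_{t'}\big) \ . \notag
\end{align}
The same identity holds for $y$ with $(\alpha_y,v)$ and for $z$ with $(\alpha_z,w)$.

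Next, apply Cauchy--Schwarz to the sum of at most $t-s_t\tau\le\tau$ terms:
\begin{align}
\|x^{(k)}_{t}-\bar{x}_{t}\|^2\le\tau\alpha_x^2\eta^2\sum_{t'=s_t\tau}^{t-1}\|u^{(k)}_{t'}-\bar{u}_{t'}\|^2 \ . \notag
\end{align}
Then decompose $u^{(k)}_{t'}-\bar{u}_{t'}=\rho(u^{(k)}_{1,t'}-\bar{u}_{1,t'})+(u^{(k)}_{2,t'}-\bar{u}_{2,t'})-(u^{(k)}_{3,t'}-\bar{u}_{3,t'})$, which follows from the definition in Eq.~(\ref{eq:uv_update}), and use $\|a+b+c\|^2\le 3(\|a\|^2+\|b\|^2+\|c\|^2)$. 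This is where the constant $3$ comes from. For $y$, the two-term split $\rho(v_1-\bar v_1)+(v_2-\bar v_2)$ gives the constant $2$. For $z$ there is no split, so the constant is $1$.

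Then average over $k$ and over $t=0,\dots,T-1$. The last step is to exchange the order of summation. For a fixed $t'$, the index $t'$ appears in the inner sum only for those $t$ lying in the same round as $t'$ with $t>t'$, and there are at most $\tau$ of these. Hence
\begin{align}
\sum_{t=0}^{T-1}\sum_{t'=s_t\tau}^{t-1}a_{t'}\le\tau\sum_{t'=0}^{T-1}a_{t'} \notag
\end{align}
for nonnegative $a_{t'}$. Combined with the first factor $\tau$, this produces the $\tau^2$ in each of the three bounds. Dividing by $T$ yields the stated inequalities.

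The only delicate point is the synchronization convention. Averaging is applied at iterate $t+1$ when $\mathrm{mod}(t+1,\tau)=0$, and it averages the momenta $u,v,w$ as well as the iterates. So I must check that within a round the local recursion is exactly the unsynchronized one, and that the reset at $s_t\tau$ makes the $k$-independent part cancel. The initialization $x^{(k)}_0=x_0$ covers the first round. Everything else is routine Cauchy--Schwarz bookkeeping, and no smoothness assumptions are needed.
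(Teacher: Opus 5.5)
Your proposal is correct and follows the same route as the paper: you unroll the local updates from the last synchronization time $s_t\tau$, where all local copies equal the average, then apply Cauchy--Schwarz over at most $\tau$ terms, and split $u^{(k)}-\bar u$ into its three (resp.\ two, one) momentum components to get the constants $3$, $2$, $1$. The paper's proof stops at the per-$t$ bound with a single factor $\tau$ and leaves the averaging over $t$ implicit; your exchange of summation (each $t'$ is counted for at most $\tau$ values of $t$) is exactly the step that supplies the second factor of $\tau$ in the stated $\tau^2$.
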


\begin{proof}
By denoting $s_{t}=\lfloor t/\tau\rfloor$, we have 
    \begin{align}
		& \quad \frac{1}{K}\sum_{k=1}^{K}\mathbb{E}[\| x^{(k)}_{t} - \bar{x}_{t} \|^2] \notag \\
		& = \frac{1}{K}\sum_{k=1}^{K}\mathbb{E}[\| x^{(k)}_{s_{t} \tau} - \alpha_x\eta\sum_{t'=s_{t} \tau}^{t-1}u^{(k)}_{t'} - \bar{x}_{s_{t} \tau} + \alpha_x\eta\sum_{t'=s_{t} \tau}^{t-1}\bar{u}_{t'} \|^2] \notag \\
		& = \frac{1}{K}\sum_{k=1}^{K}\mathbb{E}[\|  - \alpha_x\eta\sum_{t'=s_{t} \tau}^{t-1}u^{(k)}_{t'}  + \alpha_x\eta\sum_{t'=s_{t} \tau}^{t-1}\bar{u}_{t'} \|^2] \notag \\
		& \leq \tau\alpha_x^2\eta^2 \sum_{t'=s_{t} \tau}^{t-1} \frac{1}{K}\sum_{k=1}^{K}\mathbb{E}[\|  - u^{(k)}_{t'}  + \bar{u}_{t'} \|^2] \notag \\
		& = \tau\alpha_x^2\eta^2 \sum_{t'=s_{t} \tau}^{t-1} \frac{1}{K}\sum_{k=1}^{K}\mathbb{E}[\|  \rho u^{(k)}_{1, t'}  +(u^{(k)}_{2, t'} - u^{(k)}_{3, t'})  - \rho \bar{u}_{1, t'}  -  (\bar{u}_{2, t'} -\bar{u}_{3, t'}) \|^2] \notag \\
		& \leq 3 \tau\alpha_x^2\eta^2\rho^2 \sum_{t'=s_{t} \tau}^{t-1} \frac{1}{K}\sum_{k=1}^{K}\mathbb{E}[\| u^{(k)}_{1, t'} -\bar{u}_{1, t'} \|^2] + 3 \tau\alpha_x^2\eta^2\sum_{t'=s_{t} \tau}^{t-1} \frac{1}{K}\sum_{k=1}^{K}\mathbb{E}[\| u^{(k)}_{2, t'} -\bar{u}_{2, t'} \|^2] \notag \\
        & \quad + 3 \tau\alpha_x^2\eta^2\sum_{t'=s_{t} \tau}^{t-1} \frac{1}{K}\sum_{k=1}^{K}\mathbb{E}[\| u^{(k)}_{3, t'} -\bar{u}_{3, t'} \|^2] \ .  \notag
	\end{align}
    The other two inequalities can be proved in the same way. Therefore, we omit their detailed proof. 

\end{proof}

\begin{lemma}\label{lemma:grad_orig}
Given Assumptions~\ref{assumption:f-smooth}-\ref{assumption:variance}, by setting $\eta\leq \min\{\frac{1}{\sqrt{\beta_x}}, \frac{1}{\sqrt{\beta_y}}, \frac{1}{\sqrt{\beta_z}}\}$,  we have the following inequalities:
	\begin{align}
	\mathbb{E}[\|\mathcal{G}_{u, 1, t+1} \|^2]  & \leq (1-\beta_x\eta^2) \mathbb{E}[\|  \mathcal{G}_{u, 1, t}\|^2] + 2L_{f}^{2} \frac{1}{K^2}\sum_{k=1}^{K}\mathbb{E}[\| x^{(k)}_{t+1} - x^{(k)}_{t} \|^2] \notag \\
	&  \quad + 2L_{f}^{2} \frac{1}{K^2}\sum_{k=1}^{K}\mathbb{E}[\| y^{(k)}_{t+1} - y^{(k)}_{t} \|^2] + 2\beta_x^2\eta^4\sigma^2 \frac{1}{K} \  , \notag 
\end{align}
\begin{align}
	\mathbb{E}[\|\mathcal{G}_{v, 1, t+1} \|^2]  & \leq (1-\beta_y\eta^2) \mathbb{E}[\|\mathcal{G}_{v, 1, t} \|^2] +  2L_{f}^{2} \frac{1}{K^2}\sum_{k=1}^{K}\mathbb{E}[\| x^{(k)}_{t+1} - x^{(k)}_{t} \|^2]  \notag \\
	& \quad   + 2L_{f}^{2} \frac{1}{K^2}\sum_{k=1}^{K}\mathbb{E}[\| y^{(k)}_{t+1} - y^{(k)}_{t} \|^2]  + 2\beta_y^2\eta^4\sigma^2 \frac{1}{K} \ , \notag 
\end{align}
\begin{align}
	\mathbb{E}[\|\mathcal{G}_{u, 3, t+1} \|^2] & \leq (1-\beta_x\eta^2) \mathbb{E}[\| \mathcal{G}_{u, 3, t} \|^2]   + 2L_{g}^{2} \frac{1}{K^2}\sum_{k=1}^{K}\mathbb{E}[\| x^{(k)}_{t+1} - x^{(k)}_{t} \|^2]  \notag \\
	&  \quad  + 2L_{g}^{2} \frac{1}{K^2}\sum_{k=1}^{K}\mathbb{E}[\| z^{(k)}_{t+1} - z^{(k)}_{t} \|^2] + 2\beta_x^2\eta^4\sigma^2 \frac{1}{K} \ , \notag 
\end{align}
\end{lemma}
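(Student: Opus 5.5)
The three bounds are instances of the standard STORM variance-reduction recursion, applied to the averaged estimator. I will prove the first in detail; the other two follow by replacing $f$ with $\nabla_2 f$ (for $\mathcal{G}_{v,1}$) or with $\nabla_1 g$ in the variables $(x,z)$ (for $\mathcal{G}_{u,3}$). Since communication only replaces local momenta by their average, $\bar u_{1,t+1}$ is unchanged by it, so it suffices to analyze the local recursion averaged over $k$.

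Write $\epsilon_{t}=\frac1K\sum_k\nabla_1 f^{(k)}(x^{(k)}_t,y^{(k)}_t)-\bar u_{1,t}$, so that $\mathcal{G}_{u,1,t}=\mathbb{E}\|\epsilon_t\|^2$. Averaging the update for $u^{(k)}_{1,t+1}$ and rearranging gives
\begin{align}
\epsilon_{t+1}=(1-\beta_x\eta^2)\epsilon_t+\frac1K\sum_{k=1}^K\Big[(1-\beta_x\eta^2)A^{(k)}_{t+1}+\beta_x\eta^2 B^{(k)}_{t+1}\Big],\notag
\end{align}
where the two noise terms are
\begin{align}
A^{(k)}_{t+1}&=\big(\nabla_1 f^{(k)}(x^{(k)}_{t+1},y^{(k)}_{t+1})-\nabla_1 f^{(k)}(x^{(k)}_{t},y^{(k)}_{t})\big)-\big(\nabla_1 f^{(k)}(x^{(k)}_{t+1},y^{(k)}_{t+1};\xi^{(k)}_{t+1})-\nabla_1 f^{(k)}(x^{(k)}_{t},y^{(k)}_{t};\xi^{(k)}_{t+1})\big),\notag\\
B^{(k)}_{t+1}&=\nabla_1 f^{(k)}(x^{(k)}_{t+1},y^{(k)}_{t+1})-\nabla_1 f^{(k)}(x^{(k)}_{t+1},y^{(k)}_{t+1};\xi^{(k)}_{t+1}).\notag
\end{align}
Conditioned on the history $\mathcal{F}_t$, which includes the iterates at time $t+1$ since those do not depend on $\xi_{t+1}$, both $A^{(k)}_{t+1}$ and $B^{(k)}_{t+1}$ have zero mean. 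The cross term with $\epsilon_t$ therefore vanishes, and we get $\mathbb{E}\|\epsilon_{t+1}\|^2=(1-\beta_x\eta^2)^2\mathbb{E}\|\epsilon_t\|^2+\mathbb{E}\|\frac1K\sum_k(\cdots)\|^2$.

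The samples $\xi^{(k)}_{t+1}$ are independent across workers, so the cross-worker terms also vanish, and the second moment of the average equals $\frac1{K^2}\sum_k\mathbb{E}\|(1-\beta_x\eta^2)A^{(k)}+\beta_x\eta^2B^{(k)}\|^2$. Apply $\|a+b\|^2\le2\|a\|^2+2\|b\|^2$ and use $(1-\beta_x\eta^2)^2\le 1$, which holds because $\eta\le 1/\sqrt{\beta_x}$. For the $A^{(k)}$ term, the variance is at most the second moment of the stochastic difference. Mean-squared smoothness (Assumption~\ref{assumption:f-smooth}) then bounds it by $L_f^2(\|x^{(k)}_{t+1}-x^{(k)}_t\|^2+\|y^{(k)}_{t+1}-y^{(k)}_t\|^2)$. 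For the $B^{(k)}$ term, Assumption~\ref{assumption:variance} gives at most $\sigma^2$. Summing over $k$ yields the term $2\beta_x^2\eta^4\sigma^2/K$ and the $\frac{2L_f^2}{K^2}\sum_k$ increment terms. Finally, $(1-\beta_x\eta^2)^2\le 1-\beta_x\eta^2$ completes the bound.

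The main subtlety is the step that splits the expectation. We must check that the STORM correction uses the same sample $\xi^{(k)}_{t+1}$ at both points, and that $x^{(k)}_{t+1}$ and $y^{(k)}_{t+1}$ are measurable before $\xi_{t+1}$ is drawn, so that both martingale terms are unbiased. We also need to confirm that averaging at a communication step does not break the recursion, which it does not, since the average is preserved.
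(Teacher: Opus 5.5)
Your argument is correct and is essentially the paper's proof. It uses the same STORM error recursion, removes the cross terms by conditional unbiasedness and independence across workers, applies Young's inequality with factor $2$, bounds the variance of the stochastic difference by its second moment and then by mean-squared smoothness, and finishes with $(1-\beta_x\eta^2)^2\le 1-\beta_x\eta^2$. The one difference is cosmetic: you attach the $\beta_x\eta^2$-weighted noise to the new point $(x^{(k)}_{t+1},y^{(k)}_{t+1})$ and weight the difference term by $(1-\beta_x\eta^2)$, whereas the paper attaches it to $(x^{(k)}_{t},y^{(k)}_{t})$ with unit weight on the difference term. Both give identical constants.

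One correction to your closing remark, on a point the paper's proof does not address either. Synchronization also replaces the local iterates by their average. That preserves $\bar x_{t+1}$ but not $\frac{1}{K}\sum_{k}\nabla_1 f^{(k)}(x^{(k)}_{t+1},y^{(k)}_{t+1})$. So ``the average is preserved'' justifies leaving the recursion intact only for the momenta, not the iterates. The per-step bound should be read with the pre-synchronization iterates on the left-hand side.
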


\begin{proof}
    According to the update rule of ${u}^{(k)}_{1, t+1} $, we have
    \begin{align}\label{eq:global-grad-var-1}
		& \quad \mathbb{E}[\|\mathcal{G}_{u, 1, t+1} \|^2] = \mathbb{E}[\|\frac{1}{K}\sum_{k=1}^{K}\nabla_{1} f^{(k)}(x^{(k)}_{t+1}, y^{(k)}_{t+1})- \frac{1}{K}\sum_{k=1}^{K}{u}^{(k)}_{1, t+1} \|^2]   \notag \\ 
		& = \mathbb{E}[\| \frac{1}{K}\sum_{k=1}^{K} ((1-\beta_x\eta^2)(u^{(k)}_{1, t} - \nabla_1 f^{(k)}(x^{(k)}_{t}, y^{(k)}_{t}; \xi^{(k)}_{t+1})) + \nabla_1 f^{(k)}(x^{(k)}_{t+1}, y^{(k)}_{t+1}; \xi^{(k)}_{t+1}))  \notag \\
        & \qquad \quad - \frac{1}{K}\sum_{k=1}^{K}\nabla_{1} f^{(k)}(x^{(k)}_{t+1}, y^{(k)}_{t+1}) \|^2] \notag \\
		& \leq (1-\beta_x\eta^2)^{2} \mathbb{E}[\| \frac{1}{K}\sum_{k=1}^{K}u^{(k)}_{1, t} - \nabla_1 f^{(k)}(x^{(k)}_{t}, y^{(k)}_{t}) \|^2] \notag \\
		& \quad + 2 \frac{1}{K^2}\sum_{k=1}^{K} \mathbb{E}[\| \nabla_1 f^{(k)}(x^{(k)}_{t+1}, y^{(k)}_{t+1}; \xi^{(k)}_{t+1}) - \nabla_1 f^{(k)}(x^{(k)}_{t}, y^{(k)}_{t}; \xi^{(k)}_{t+1}) \|^2] \notag \\
		& \quad + 2\beta_x^2\eta^4 \frac{1}{K^2}\sum_{k=1}^{K}\mathbb{E}[\| \nabla_1 f^{(k)}(x^{(k)}_{t}, y^{(k)}_{t}; \xi^{(k)}_{t+1}) - \nabla_1 f^{(k)}(x^{(k)}_{t}, y^{(k)}_{t})  \|^2] \notag \\
		& \leq (1-\beta_x\eta^2)  \mathbb{E}[\|\mathcal{G}_{u, 1, t} \|^2] + 2L_{f}^{2} \frac{1}{K^2}\sum_{n=1}^{N}\mathbb{E}[\| x^{(k)}_{t+1} - x^{(k)}_{t} \|^2]  \notag \\
		& \quad + 2L_{f}^{2} \frac{1}{K^2}\sum_{n=1}^{N}\mathbb{E}[\| y^{(k)}_{t+1} - y^{(k)}_{t} \|^2] + 2\beta_x^2\eta^4\sigma^2 \frac{1}{K} \ .
	\end{align} 
    Following the proof of Eq.~(\ref{eq:global-grad-var-1}), we can prove the other inequalities. Therefore, we omit the detailed proof to save space. 
\end{proof}

\begin{lemma}\label{lemma:grad_new}
Given Assumptions~\ref{assumption:f-smooth}-\ref{assumption:variance}, by setting $\eta\leq \min\{\frac{1}{\sqrt{\beta_x}}, \frac{1}{\sqrt{\beta_y}}\}$,  we have the following inequalities:
    \begin{align}
	\mathbb{E}[\|\mathcal{G}_{u, 2, t+1} \|^2] & \leq (1-\beta_x\eta^2) \mathbb{E}[\| \mathcal{G}_{u, 2, t}  \|^2] + 2L_{g}^{2} \frac{1}{K^2}\sum_{k=1}^{K}\mathbb{E}[\| x^{(k)}_{t+1} - x^{(k)}_{t} \|^2] \notag \\
	&  \quad  + 2L_{g}^{2} \frac{1}{K^2}\sum_{k=1}^{K}\mathbb{E}[\| y^{(k)}_{t+1} - y^{(k)}_{t} \|^2] + 2\beta_x^2\eta^4\sigma^2 \frac{1}{K} \ , \notag 
\end{align}
\begin{align}
	\mathbb{E}[\|\mathcal{G}_{v, 2, t+1} \|^2] & \leq (1-\beta_y\eta^2) \mathbb{E}[\| \mathcal{G}_{v, 2, t} \|^2] + 2L_{g}^{2} \frac{1}{K^2}\sum_{k=1}^{K}\mathbb{E}[\| x^{(k)}_{t+1} - x^{(k)}_{t} \|^2] \notag \\
	& \quad  + 2L_{g}^{2} \frac{1}{K^2}\sum_{k=1}^{K}\mathbb{E}[\| y^{(k)}_{t+1} - y^{(k)}_{t} \|^2] + 2\beta_y^2\eta^4\sigma^2 \frac{1}{K}  \ . \notag 
\end{align}
\begin{align}
	\mathbb{E}[\|\mathcal{G}_{w, t+1} \|^2] & \leq (1-\beta_z\eta^2) \mathbb{E}[\| \mathcal{G}_{w, t} \|^2]   + 2L_{g}^{2} \frac{1}{K^2}\sum_{k=1}^{K}\mathbb{E}[\| x^{(k)}_{t+1} - x^{(k)}_{t} \|^2] \notag \\
	&  \quad  + 2L_{g}^{2} \frac{1}{K^2}\sum_{k=1}^{K}\mathbb{E}[\| z^{(k)}_{t+1} - z^{(k)}_{t} \|^2] + 2\beta_z^2\eta^4\sigma^2 \frac{1}{K} \  . \notag
\end{align}
\end{lemma}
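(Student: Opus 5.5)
I will follow exactly the template of the proof of Lemma~\ref{lemma:grad_orig} (Eq.~(\ref{eq:global-grad-var-1})), treating each of the three estimators $u_{2}$, $v_{2}$, $w$ as a STORM recursion. \textbf{Step 1 (one-step identity).} For $\mathcal{G}_{u,2,t+1}$, write $\Delta^{(k)}_{t+1}=\nabla_1 g^{(k)}_{\gamma_1}(x^{(k)}_{t+1},y^{(k)}_{t+1};p^{(k)}_{t+1},q^{(k)}_{t+1})$ for the exact local gradient. Substituting the update of $u^{(k)}_{2,t+1}$ gives
\begin{align}
\frac{1}{K}\sum_{k=1}^{K}\big(\Delta^{(k)}_{t+1}-u^{(k)}_{2,t+1}\big) = (1-\beta_x\eta^2)\frac{1}{K}\sum_{k=1}^{K}\big(\Delta^{(k)}_{t}-u^{(k)}_{2,t}\big) + \frac{1}{K}\sum_{k=1}^{K} e^{(k)}_{t+1} \ , \notag
\end{align}
where $e^{(k)}_{t+1}$ is the sum of $(1-\beta_x\eta^2)$ times the deviation of the stochastic from the exact gradient difference between steps $t+1$ and $t$, and $\beta_x\eta^2$ times the noise at step $t+1$. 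Since averaging at communication rounds leaves $\bar u$ and $\bar{x},\bar{y},\ldots$ unchanged, the same identity holds for averaged quantities.

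\textbf{Step 2 (expectation and variance).} Conditioned on the past, $e^{(k)}_{t+1}$ has zero mean and the samples are independent across $k$. Therefore the cross term vanishes and the noise sum contributes a $1/K^2$ sum of second moments. I split each second moment by $\|a+b\|^2\le 2\|a\|^2+2\|b\|^2$ and use $(1-\beta_x\eta^2)^2\le 1-\beta_x\eta^2\le 1$ (valid since $\eta\le 1/\sqrt{\beta_x}$). Removing the mean only reduces variance, so the stochastic-difference term is bounded by the raw second moment of the sampled gradient difference. The noise term is bounded by $\sigma^2$ via Assumption~\ref{assumption:variance}, giving $2\beta_x^2\eta^4\sigma^2/K$.

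\textbf{Step 3 (smoothness of the smoothed components).} This is the only new ingredient relative to Lemma~\ref{lemma:grad_orig}, and it is the main obstacle. The sampled gradient difference equals $\nabla_1 g^{(k)}(\cdot;\zeta)$ at $t+1$ minus its value at $t$, plus the deterministic proximal part $\gamma_1\big((x_{t+1}-p_{t+1})-(x_t-p_t)\big)$, which carries no noise. The $g$ part is controlled by mean-squared smoothness (Assumption~\ref{assumption:g-smooth}), giving $L_g^2(\|x_{t+1}-x_t\|^2+\|y_{t+1}-y_t\|^2)$. The proximal part does not vanish in general, so the honest bound picks up extra terms of order $\gamma_1^2(\|\Delta x\|^2+\|\Delta p\|^2)$ and $\gamma_1^2\|\Delta q\|^2$.

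To reach the stated constant $2L_g^2$, I would argue that these deterministic proximal parts enter the STORM recursion with zero variance contribution. Indeed, the proximal term in the estimator is updated exactly rather than estimated, so its contribution to $\Delta-u$ telescopes: it is tracked exactly up to the factor $(1-\beta_x\eta^2)$. If that argument fails, I would accept additional $\gamma_1^2$ terms in $\|x_{t+1}-x_t\|^2$, $\|p_{t+1}-p_t\|^2$ and the analogues. Because $\gamma_1=O(L_\rho)=O(L_g)$, these can be absorbed into a constant multiple of $L_g^2$ (with $\|p_{t+1}-p_t\|\le\alpha_p\eta\|x_{t+1}-p_t\|$), at the cost of modified constants.

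\textbf{Step 4 (the other two estimators).} The $v_2$ bound follows identically, using $\nabla_2 g_{\gamma_1}$ and $\beta_y$. For $w$, the proximal part is $\gamma_2(z-r)$, and $\nabla_2 g$ depends on $(x,z)$, which yields the $\|x_{t+1}-x_t\|^2$ and $\|z_{t+1}-z_t\|^2$ terms with $\beta_z$.
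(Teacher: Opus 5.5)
Your proposal is correct and follows the paper's proof: the STORM error recursion, conditionally zero-mean noise that is independent across clients (which gives the $\frac{1}{K^2}\sum_k$ structure and the $2\beta_x^2\eta^4\sigma^2/K$ term), and mean-squared smoothness of $g^{(k)}$. The Step~3 ``obstacle'' does not arise: since $\nabla_1 g^{(k)}_{\gamma_1}(\cdot;\zeta)-\nabla_1 g^{(k)}_{\gamma_1}(\cdot)=\nabla_1 g^{(k)}(\cdot;\zeta)-\nabla_1 g^{(k)}(\cdot)$ holds identically, the proximal terms cancel inside $e^{(k)}_{t+1}$ before you drop the mean (exactly the cancellation the paper writes out), so your main argument gives $2L_g^2$ directly, and the fallback is unnecessary --- it would in any case leave $\|p^{(k)}_{t+1}-p^{(k)}_t\|^2$-type terms that do not appear on the stated right-hand side.
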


\begin{proof}
According to the update rule of $u^{(k)}_{2, t+1}$, we have
    \begin{align}
		& \quad \mathbb{E}[\|\mathcal{G}_{u, 2, t+1} \|^2] = \mathbb{E}[\|\frac{1}{K}\sum_{k=1}^{K}\nabla_1 g_{\gamma_1}^{(k)}(x^{(k)}_{t+1}, y^{(k)}_{t+1}; p^{(k)}_{t+1}, q^{(k)}_{t+1}))- \frac{1}{K}\sum_{k=1}^{K} u^{(k)}_{2, t+1} \|^2]    \notag \\ 
		& = \mathbb{E}[\| \frac{1}{K}\sum_{k=1}^{K} ((1-\beta_x\eta^2)(u^{(k)}_{2, t} - \nabla_1 g_{\gamma_1}^{(k)}(x^{(k)}_{t}, y^{(k)}_{t}; p^{(k)}_{t},q^{(k)}_{t}; \zeta^{(k)}_{t+1})) \notag \\
        & \qquad + \nabla_1 g_{\gamma_1}^{(k)}(x^{(k)}_{t+1}, y^{(k)}_{t+1}; p^{(k)}_{t+1}, q^{(k)}_{t+1}; \zeta^{(k)}_{t+1}))  - \frac{1}{K}\sum_{k=1}^{K}\nabla_{1} g_{\gamma_1}^{(k)}(x^{(k)}_{t+1}, y^{(k)}_{t+1}; p^{(k)}_{t+1}, q^{(k)}_{t+1}) \|^2] \notag \\
		& \leq (1-\beta_x\eta^2)^2\mathbb{E}[\| \bar{u}_{2, t} - \frac{1}{K}\sum_{k=1}^{K}\nabla_1 g_{\gamma_1}^{(k)}(x^{(k)}_{t}, y^{(k)}_{t}; p^{(k)}_{t}, q^{(k)}_{t}) \|^2] \notag \\
        & \quad + 2\frac{1}{K^2}\sum_{k=1}^{K}\mathbb{E}[\|  \nabla_1 g_{\gamma_1}^{(k)}(x^{(k)}_{t}, y^{(k)}_{t}; p^{(k)}_{t}, q^{(k)}_{t})  - \nabla_1 g_{\gamma_1}^{(k)}(x^{(k)}_{t}, y^{(k)}_{t}; p^{(k)}_{t}, q^{(k)}_{t}; \zeta^{(k)}_{t+1}) \notag \\
		& \qquad \qquad + \nabla_1 g_{\gamma_1}^{(k)}(x^{(k)}_{t+1}, y^{(k)}_{t+1}; p^{(k)}_{t+1}, q^{(k)}_{t+1}; \zeta^{(k)}_{t+1}) - \nabla_{1} g_{\gamma_1}^{(k)}(x^{(k)}_{t+1}, y^{(k)}_{t+1}; p^{(k)}_{t+1}, q^{(k)}_{t+1})\|^2]  \notag \\
		& \quad + 2\beta_x^2\eta^4\frac{1}{K^2}\sum_{k=1}^{K} \mathbb{E}[\| \nabla_1 g_{\gamma_1}^{(k)}(x^{(k)}_{t}, y^{(k)}_{t}; p^{(k)}_{t}, q^{(k)}_{t}; \zeta^{(k)}_{t+1}) - \nabla_1 g_{\gamma_1}^{(k)}(x^{(k)}_{t}, y^{(k)}_{t}; p^{(k)}_{t}, q^{(k)}_{t}) \|^2] \ . \notag 
    \end{align}
    Follows Assumption~\ref{assumption:variance}, the last term can be bounded by $2\beta_x^2\eta^4\sigma^2 \frac{1}{K} $. 
    
    In addition the second term can be bounded as follows
    \begin{align}
        & \quad  2\frac{1}{K^2}\sum_{k=1}^{K}\mathbb{E}[\|  \nabla_1 g_{\gamma_1}^{(k)}(x^{(k)}_{t}, y^{(k)}_{t}; p^{(k)}_{t}, q^{(k)}_{t})  - \nabla_1 g_{\gamma_1}^{(k)}(x^{(k)}_{t}, y^{(k)}_{t}; p^{(k)}_{t}, q^{(k)}_{t}; \zeta^{(k)}_{t+1}) \notag  \\
		& \qquad \qquad  + \nabla_1 g_{\gamma_1}^{(k)}(x^{(k)}_{t+1}, y^{(k)}_{t+1}; p^{(k)}_{t+1}, q^{(k)}_{t+1}; \zeta^{(k)}_{t+1}) - \nabla_{1} g_{\gamma_1}^{(k)}(x^{(k)}_{t+1}, y^{(k)}_{t+1}; p^{(k)}_{t+1}, q^{(k)}_{t+1})\|^2]  \notag \\
        & =  2\frac{1}{K^2}\sum_{k=1}^{K}\mathbb{E}[\|  \nabla_1 g^{(k)}(x^{(k)}_{t}, y^{(k)}_{t}) + \gamma_1(x^{(k)}_{t}-p^{(k)}_{t}) - \nabla_1 g^{(k)}(x^{(k)}_{t}, y^{(k)}_{t}; \zeta^{(k)}_{t+1}) - \gamma_1(x^{(k)}_{t}-p^{(k)}_{t})\notag \\
		& \qquad \qquad  + \nabla_1 g^{(k)}(x^{(k)}_{t+1}, y^{(k)}_{t+1}; \zeta^{(k)}_{t+1}) + \gamma_1(x^{(k)}_{t+1}-p^{(k)}_{t+1})  - \nabla_{1} g^{(k)}(x^{(k)}_{t+1}, y^{(k)}_{t+1})  - \gamma_1(x^{(k)}_{t+1}-p^{(k)}_{t+1})  \|^2]  \notag \\
        & =  2\frac{1}{K^2}\sum_{k=1}^{K}\mathbb{E}[\|  \nabla_1 g^{(k)}(x^{(k)}_{t}, y^{(k)}_{t}) - \nabla_1 g^{(k)}(x^{(k)}_{t}, y^{(k)}_{t}; \zeta^{(k)}_{t+1}) \notag \\
        & \qquad \qquad + \nabla_1 g^{(k)}(x^{(k)}_{t+1}, y^{(k)}_{t+1}; \zeta^{(k)}_{t+1})  - \nabla_{1} g^{(k)}(x^{(k)}_{t+1}, y^{(k)}_{t+1}) \|^2] \notag \\
        & \leq  2\frac{1}{K^2}\sum_{k=1}^{K}\mathbb{E}[\|  \nabla_1 g^{(k)}(x^{(k)}_{t+1}, y^{(k)}_{t+1}; \zeta^{(k)}_{t+1}) - \nabla_1 g^{(k)}(x^{(k)}_{t}, y^{(k)}_{t}; \zeta^{(k)}_{t+1})\|^2] \notag \\
        & \leq 2L_g^2\frac{1}{K^2}\sum_{k=1}^{K}\mathbb{E}[\| x^{(k)}_{t+1} - x^{(k)}_{t} \|^2]  + 2L_{g}^{2} \frac{1}{K^2}\sum_{k=1}^{K}\mathbb{E}[\| y^{(k)}_{t+1} - y^{(k)}_{t} \|^2] \ . \notag 
    \end{align}
Combining these inequalities completes the proof.
    
Using a similar argument, we can establish the other inequalities. Hence, we omit the details.
\end{proof}

\begin{lemma}\label{lemma:consensus-error-momentum-each-step-org}
Given Assumptions~\ref{assumption:f-smooth}-\ref{assumption:heterogeneous}, by setting $\eta\leq \min\{\frac{1}{\sqrt{\beta_x}}, \frac{1}{\sqrt{\beta_y}}\}$,  we have the following inequalities: 
	\begin{align}
	    (i). \ & \frac{1}{K}\sum_{k=1}^{K}\mathbb{E}[\|u^{(k)}_{1, t+1} -\bar{u}_{1, t+1}\|^2]  \leq (1+\frac{1}{\tau}+24\tau L_{f}^2\alpha_x^2\eta^2\rho^2)\frac{1}{K}\sum_{k=1}^{K}\mathbb{E}[\|u^{(k)}_{1, t} - \bar{u}_{1, t}\|^2] \notag \\
    	&  +   24\tau L_{f}^2\alpha_x^2\eta^2\frac{1}{K}\sum_{k=1}^{K}\mathbb{E}[\| u^{(k)}_{2, t} - \bar{u}_{2, t} \|^2] + 24\tau L_{f}^2\alpha_x^2\eta^2 \frac{1}{K}\sum_{k=1}^{K}\mathbb{E}[\| u^{(k)}_{3, t} - \bar{u}_{3, t} \|^2] \notag \\
        & + 16\tau L_{f}^2\alpha_y^2\eta^2\rho^2 \frac{1}{K}\sum_{k=1}^{K}\mathbb{E}[\| v^{(k)}_{1, t} - \bar{v}_{1, t} \|^2] + 16\tau L_{f}^2\alpha_y^2\eta^2 \frac{1}{K}\sum_{k=1}^{K}\mathbb{E}[\| v^{(k)}_{2, t} - \bar{v}_{2, t} \|^2] \notag \\
        & + 24\tau \beta_x^2\eta^4 L_{f}^{2} \frac{1}{K}\sum_{k=1}^{K}\mathbb{E}[\| x^{(k)}_{t} - \bar{x}_{t} \|^2] + 24\tau \beta_x^2\eta^4 L_{f}^{2} \frac{1}{K}\sum_{k=1}^{K}\mathbb{E}[\| y^{(k)}_{t} - \bar{y}_{t} \|^2]  \notag \\
    	&  + 24\tau L_{f}^2\alpha_x^2\eta^2 \mathbb{E}[\| \bar{u}_{t} \|^2]   + 16\tau L_{f}^2\alpha_y^2\eta^2 \mathbb{E}[\| \bar{v}_{t} \|^2] + 12\tau \beta_x^2\eta^4 \delta^2 + 8\tau \beta_x^2\eta^4  \sigma^{2} \ , \notag
	\end{align}
    \begin{align}
	  (ii). \  & \frac{1}{K}\sum_{k=1}^{K}\mathbb{E}[\|v^{(k)}_{1, t+1} -\bar{v}_{1, t+1}\|^2]  \leq (1+\frac{1}{\tau}+16\tau L_{f}^2\alpha_y^2\eta^2\rho^2 )\frac{1}{K}\sum_{k=1}^{K}\mathbb{E}[\|v^{(k)}_{1, t} - \bar{v}_{1, t}\|^2] \notag \\
    &  + 24\tau L_{f}^2\alpha_x^2\eta^2\rho^2 \frac{1}{K}\sum_{k=1}^{K}\mathbb{E}[\| u^{(k)}_{1, t} - \bar{u}_{1, t} \|^2]   + 24\tau L_{f}^2\alpha_x^2\eta^2 \frac{1}{K}\sum_{k=1}^{K}\mathbb{E}[\| u^{(k)}_{2, t} - \bar{u}_{2, t} \|^2]\notag \\
	&  + 24\tau L_{f}^2\alpha_x^2\eta^2 \frac{1}{K}\sum_{k=1}^{K}\mathbb{E}[\| u^{(k)}_{3, t} - \bar{u}_{3, t} \|^2] + 16\tau L_{f}^2\alpha_y^2\eta^2 \frac{1}{K}\sum_{k=1}^{K}\mathbb{E}[\| v^{(k)}_{2, t} - \bar{v}_{2, t} \|^2]    \notag \\
	& + 24\tau \beta_y^2\eta^4 L_{f}^{2} \frac{1}{K}\sum_{k=1}^{K}\mathbb{E}[\| x^{(k)}_{t} - \bar{x}_{t} \|^2] + 24\tau \beta_y^2\eta^4 L_{f}^{2} \frac{1}{K}\sum_{k=1}^{K}\mathbb{E}[\| y^{(k)}_{t} - \bar{y}_{t} \|^2] \notag \\
    & + 24\tau L_{f}^2\alpha_x^2\eta^2 \mathbb{E}[\| \bar{u}_{t} \|^2] + 16\tau L_{f}^2\alpha_y^2\eta^2 \mathbb{E}[\| \bar{v}_{t} \|^2]  + 12\tau \beta_y^2\eta^4 \delta^2 + 8\tau \beta_y^2\eta^4  \sigma^{2} \ , \notag 
\end{align}
	\begin{align}
		(iii). \ & \frac{1}{K}\sum_{k=1}^{K}\mathbb{E}[\|u^{(k)}_{3, t+1} -\bar{u}_{3, t+1}\|^2]\leq (1+\frac{1}{\tau}+ 24\tau L_{g}^2\alpha_x^2\eta^2)\frac{1}{K}\sum_{k=1}^{K}\mathbb{E}[\|u^{(k)}_{3, t} - \bar{u}_{3, t}\|^2]\notag \\
	    &  + 24\tau L_{g}^2\alpha_x^2\eta^2\rho^2 \frac{1}{K}\sum_{k=1}^{K}\mathbb{E}[\| u^{(k)}_{1, t} - \bar{u}_{1, t} \|^2] + 24\tau L_{g}^2\alpha_x^2\eta^2 \frac{1}{K}\sum_{k=1}^{K}\mathbb{E}[\| u^{(k)}_{2, t} - \bar{u}_{2, t} \|^2] \notag \\
        & + 8\tau L_{g}^2\alpha_z^2\eta^2 \frac{1}{K}\sum_{k=1}^{K}\mathbb{E}[\|    w^{(k)}_{t} -  \bar{w}_{t}   \|^2]  + 24\tau L_{g}^2\alpha_x^2\eta^2 \mathbb{E}[\| \bar{u}_{t} \|^2] + 8\tau L_{g}^2\alpha_z^2\eta^2 \mathbb{E}[\|   \bar{w}_{t}  \|^2]  \notag \\
        &  + 24\tau \beta_x^2\eta^4L_{g}^{2} \frac{1}{K}\sum_{k=1}^{K}\mathbb{E}[\| x^{(k)}_{t} - \bar{x}_{t} \|^2] + 24\tau \beta_x^2\eta^4L_{g}^{2} \frac{1}{K}\sum_{k=1}^{K}\mathbb{E}[\| z^{(k)}_{t} - \bar{z}_{t} \|^2] \notag \\
        & + 12\tau \beta_x^2\eta^4\delta^2 +8\tau \beta_x^2\eta^4\sigma^{2} \ ,   \notag 
	\end{align}
\end{lemma}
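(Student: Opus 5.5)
The plan is to derive (i) directly from the STORM recursion for $u^{(k)}_{1,t+1}$; (ii) and (iii) then follow by the same computation with the appropriate Lipschitz constants.

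\textbf{Step 1: split by communication.} If $\mathrm{mod}(t+1,\tau)=0$, averaging makes the left-hand side zero and there is nothing to prove. So assume no communication occurs at step $t+1$.

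\textbf{Step 2: write the local deviation.} Set $e^{(k)}_{t} = u^{(k)}_{1,t}-\bar u_{1,t}$. Using the update rule, we get
\begin{align}
e^{(k)}_{t+1} = (1-\beta_x\eta^2)e^{(k)}_{t} + \big(D^{(k)}_t - \bar D_t\big) \ , \notag
\end{align}
where
\begin{align}
D^{(k)}_t = \nabla_1 f^{(k)}(x^{(k)}_{t+1},y^{(k)}_{t+1};\xi^{(k)}_{t+1}) - (1-\beta_x\eta^2)\nabla_1 f^{(k)}(x^{(k)}_{t},y^{(k)}_{t};\xi^{(k)}_{t+1}) \ . \notag
\end{align}
Applying Young's inequality $\|a+b\|^2\le(1+1/\tau)\|a\|^2+(1+\tau)\|b\|^2$ and using $(1-\beta_x\eta^2)^2\le1$, we obtain the leading factor $(1+\frac1\tau)$. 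Since centering only reduces the average squared norm, $\frac1K\sum_k\|D^{(k)}_t-\bar D_t\|^2\le\frac1K\sum_k\|D^{(k)}_t - c\|^2$ for any $c$. We pick $c=\beta_x\eta^2\cdot\frac1K\sum_{k'}\nabla_1 f^{(k')}(\bar x_t,\bar y_t)$.

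\textbf{Step 3: decompose $D^{(k)}_t - c$.} We split it into three pieces.
\begin{itemize}
\item[(a)] The stochastic difference $\nabla_1 f^{(k)}(x_{t+1}^{(k)},y_{t+1}^{(k)};\xi)-\nabla_1 f^{(k)}(x_t^{(k)},y_t^{(k)};\xi)$. By mean-squared smoothness its second moment is at most $L_f^2(\|x^{(k)}_{t+1}-x^{(k)}_t\|^2+\|y^{(k)}_{t+1}-y^{(k)}_t\|^2)$.
\item[(b)] The noise $\beta_x\eta^2(\nabla_1 f^{(k)}(\cdot;\xi)-\nabla_1 f^{(k)}(\cdot))$, bounded by $\beta_x^2\eta^4\sigma^2$ via Assumption~\ref{assumption:variance}.
\item[(c)] $\beta_x\eta^2$ times the deviation of $\nabla_1 f^{(k)}(x^{(k)}_t,y^{(k)}_t)$ from the averaged gradient at $(\bar x_t,\bar y_t)$. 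We route this through $\nabla_1 f^{(k)}(\bar x_t,\bar y_t)$, which produces $L_f^2(\|x^{(k)}_t-\bar x_t\|^2+\|y^{(k)}_t-\bar y_t\|^2)$ plus the heterogeneity term $\delta^2$ from Assumption~\ref{assumption:heterogeneous}.
\end{itemize}
A constant-factor Young split over these pieces, together with $(1+\tau)\le2\tau$, yields the prefactors $8\tau$, $12\tau$ and $24\tau$.

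\textbf{Step 4: substitute the increment bounds.} We insert Lemma~\ref{lemma:incremental-x-y-z} for the averaged increments $\frac1K\sum_k\|x^{(k)}_{t+1}-x^{(k)}_t\|^2$ and $\frac1K\sum_k\|y^{(k)}_{t+1}-y^{(k)}_t\|^2$. The $x$-increment bound contributes $6\alpha_x^2\eta^2$ times $\rho^2\|u_1-\bar u_1\|^2$, $\|u_2-\bar u_2\|^2$, $\|u_3-\bar u_3\|^2$ and $\|\bar u_t\|^2$. The $y$-increment bound contributes $4\alpha_y^2\eta^2$ times the corresponding $v$ terms. Multiplied by the $4\tau L_f^2$ factor from Step 3, these give exactly the $24\tau$ and $16\tau$ coefficients in the statement, and the $\rho^2\|u_1-\bar u_1\|^2$ piece is absorbed into the leading coefficient.

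\textbf{Step 5: the other two inequalities.} For (ii), we repeat Steps 2--4 with $\beta_y$ and $\nabla_2 f$; this time the $v_1$ term is the one absorbed into the leading coefficient. For (iii), we use $\nabla_1 g(x,z)$ with constant $L_g$, and the $z$-increment bound from Lemma~\ref{lemma:incremental-x-y-z} supplies the $8\tau L_g^2\alpha_z^2\eta^2$ terms.

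\textbf{Main obstacle.} The delicate point is Step 3(c). Heterogeneity must enter only through the $\beta_x\eta^2$-weighted piece, so that the $\delta^2$ term carries the factor $\beta_x^2\eta^4$. This is why $c$ is chosen as above, rather than centering at the unweighted mean of the stochastic gradient differences. The same choice is also what confines the consensus errors $\|x^{(k)}_t-\bar x_t\|^2$ to the small $\beta_x^2\eta^4$ factor. Conditional independence of $\xi$ across workers is not needed here, because we only use crude per-worker bounds.
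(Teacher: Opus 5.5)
Your proposal is correct and follows essentially the paper's argument: the same ingredients (Young's inequality with $(1+\frac1\tau)$ and $(1+\tau)$, mean-squared smoothness on the stochastic difference, variance/heterogeneity/Lipschitz-consensus bounds on the $\beta_x\eta^2$-weighted piece, $(1+\tau)\le 2\tau$, then Lemma~\ref{lemma:incremental-x-y-z}) appear there, and the only variation is that you center everything at the single reference $c=\beta_x\eta^2\frac{1}{K}\sum_{k'}\nabla_1 f^{(k')}(\bar x_t,\bar y_t)$ rather than centering the stochastic difference and the $\beta_x\eta^2$-weighted stochastic gradient separately and splitting the latter three ways. One bookkeeping check: with your four pieces (stochastic difference, noise, Lipschitz part, heterogeneity part), the stated constants $4\tau L_f^2$, $8\tau$, $24\tau$, $12\tau$ come out exactly only with Young weights $\frac12,\frac14,\frac1{12},\frac16$ (an equal split overshoots the $24\tau$/$16\tau$ coefficients), unless you additionally use that the noise piece has zero conditional mean given the past, which leaves slack.
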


\begin{proof}
    According to the update rule of $u^{(k)}_{1, t+1}$ and $\bar{u}_{1, t+1}$, we have
     \begin{align}\label{eq:cons_u_1}
	& \quad \frac{1}{K}\sum_{k=1}^{K}\mathbb{E}[\|u^{(k)}_{1, t+1} -\bar{u}_{1, t+1}\|^2] \notag \\
	& = \frac{1}{K}\sum_{k=1}^{K}\mathbb{E}[\|\Big((1-\beta_x\eta^2)(u^{(k)}_{1, t} - \nabla_1 f^{(k)}(x^{(k)}_{t}, y^{(k)}_{t}; \xi^{(k)}_{t+1})) + \nabla_1 f^{(k)}(x^{(k)}_{t+1}, y^{(k)}_{t+1}; \xi^{(k)}_{t+1})\Big) \notag \\
	& \quad- \frac{1}{K}\sum_{k'=1}^{K}\Big((1-\beta_x\eta^2)(u^{(k')}_{1, t} - \nabla_1 f^{(k')}(x^{(k')}_{t}, y^{(k')}_{t}; \xi^{(k')}_{t+1})) + \nabla_1 f^{(k')}(x^{(k')}_{t+1}, y^{(k')}_{t+1}; \xi^{(k')}_{t+1})\Big) \|^2] \notag \\
	& \leq (1-\beta_x\eta^2)^2(1+\frac{1}{\tau})\frac{1}{K}\sum_{k=1}^{K}\mathbb{E}[\|u^{(k)}_{1, t} - \frac{1}{K}\sum_{k'=1}^{K} u^{(k')}_{1, t}\|^2] \notag \\
	& \quad+ 2(1+\tau)\frac{1}{K}\sum_{k=1}^{K}\mathbb{E}[\|- \nabla_1 f^{(k)}(x^{(k)}_{t}, y^{(k)}_{t}; \xi^{(k)}_{t+1}) + \nabla_1 f^{(k)}(x^{(k)}_{t+1}, y^{(k)}_{t+1}; \xi^{(k)}_{t+1}) \|^2] \notag \\
	& \quad+ 2(1+\tau)\beta_x^2\eta^4 \frac{1}{K}\sum_{k=1}^{K}\mathbb{E}[\|  \nabla_1 f^{(k)}(x^{(k)}_{t}, y^{(k)}_{t}; \xi^{(k)}_{t+1}) - \frac{1}{K}\sum_{k'=1}^{K} \nabla_1 f^{(k')}(x^{(k')}_{t}, y^{(k')}_{t}; \xi^{(k')}_{t+1}) \|^2] \notag \\
	& \leq (1-\beta_x\eta^2)^2(1+\frac{1}{\tau})\frac{1}{K}\sum_{k=1}^{K}\mathbb{E}[\|u^{(k)}_{1, t} - \bar{u}_{1, t}\|^2] \notag \\
	& \quad+ 2(1+\tau)L_{f}^2\frac{1}{K}\sum_{k=1}^{K}\mathbb{E}[\| x^{(k)}_{t+1} - x^{(k)}_{t} \|^2] + 2(1+\tau)L_{f}^2\frac{1}{K}\sum_{k=1}^{K}\mathbb{E}[\| y^{(k)}_{t+1} - y^{(k)}_{t} \|^2]  \\
	& \quad+ 2(1+\tau)\beta_x^2\eta^4 \frac{1}{K}\sum_{k=1}^{K}\mathbb{E}[\|  \nabla_1 f^{(k)}(x^{(k)}_{t}, y^{(k)}_{t}; \xi^{(k)}_{t+1}) - \frac{1}{K}\sum_{k'=1}^{K} \nabla_1 f^{(k')}(x^{(k')}_{t}, y^{(k')}_{t}; \xi^{(k')}_{t+1}) \|^2] \ . \notag
    \end{align}
    For the last term in Eq.~(\ref{eq:cons_u_1}), we obtain
    \begin{align}\label{eq:cons_u_1_last}
        & \quad \frac{1}{K}\sum_{k=1}^{K}\mathbb{E}[\|  \nabla_1 f^{(k)}(x^{(k)}_{t}, y^{(k)}_{t}; \xi^{(k)}_{t+1}) - \frac{1}{K}\sum_{k'=1}^{K} \nabla_1 f^{(k')}(x^{(k')}_{t}, y^{(k')}_{t}; \xi^{(k')}_{t+1}) \|^2] \notag \\
        & \leq  \frac{1}{K}\sum_{k=1}^{K}\mathbb{E}[\|  \nabla_1 f^{(k)}(x^{(k)}_{t}, y^{(k)}_{t}; \xi^{(k)}_{t+1}) - \nabla_1 f^{(k)}(x^{(k)}_{t}, y^{(k)}_{t}) + \nabla_1 f^{(k)}(x^{(k)}_{t}, y^{(k)}_{t}) \notag \\
        & \qquad - \frac{1}{K}\sum_{k'=1}^{K}\nabla_1 f^{(k')}(x^{(k')}_{t}, y^{(k')}_{t}) + \frac{1}{K}\sum_{k'=1}^{K}\nabla_1 f^{(k')}(x^{(k')}_{t}, y^{(k')}_{t}) \notag \\
        & \qquad - \frac{1}{K}\sum_{k'=1}^{K}\nabla_1 f^{(k')}(x^{(k')}_{t}, y^{(k')}_{t}; \xi^{(k')}_{t+1}) \|^2] \notag \\
        & \leq 2\sigma^2 + \frac{1}{K}\sum_{k=1}^{K}\mathbb{E}[\| \nabla_1 f^{(k)}(x^{(k)}_{t}, y^{(k)}_{t})  -   \nabla_1 f^{(k)}(\bar{x}, \bar{y}) + \nabla_1 f^{(k)}(\bar{x}, \bar{y}) -  \frac{1}{K}\sum_{k'=1}^{K}\nabla_1 f^{(k')}(\bar{x}_{t}, \bar{y}_{t}) \notag \\
        & \qquad \qquad \qquad + \frac{1}{K}\sum_{k'=1}^{K}\nabla_1 f^{(k')}(\bar{x}_{t}, \bar{y}_{t})  - \frac{1}{K}\sum_{k'=1}^{K}\nabla_1 f^{(k')}(x^{(k')}_{t}, y^{(k')}_{t})  \|^2] \notag \\
        & \leq 2\sigma^2 + 3\frac{1}{K}\sum_{k=1}^{K}\mathbb{E}[\| \nabla_1 f^{(k)}(x^{(k)}_{t}, y^{(k)}_{t})  -   \nabla_1 f^{(k)}(\bar{x}, \bar{y}) \|^2] \notag \\
        & \quad + 3\frac{1}{K}\sum_{k=1}^{K}\mathbb{E}[\| \nabla_1 f^{(k)}(\bar{x}, \bar{y}) -  \frac{1}{K}\sum_{k'=1}^{K}\nabla_1 f^{(k')}(\bar{x}_{t}, \bar{y}_{t}) \|^2] \notag \\
        & \quad + 3\frac{1}{K}\sum_{k=1}^{K}\mathbb{E}[\| \frac{1}{K}\sum_{k'=1}^{K}\nabla_1 f^{(k')}(\bar{x}_{t}, \bar{y}_{t})  - \frac{1}{K}\sum_{k'=1}^{K}\nabla_1 f^{(k')}(x^{(k')}_{t}, y^{(k')}_{t})  \|^2] \notag \\
        & \leq 2\sigma^2 + 6L_f^2\frac{1}{K}\sum_{k=1}^{K}\mathbb{E}[\| x^{(k)}_{t} - \bar{x}_{t}\|^2] + 6L_f^2\frac{1}{K}\sum_{k=1}^{K}\mathbb{E}[\| y^{(k)}_{t} - \bar{y}_{t}\|^2] + 3\delta^2 \ .  
    \end{align}
    Thus, by substituting these inequalities into Eq.~(\ref{eq:cons_u_1}) with $1 + \tau \leq 2\tau$, and applying Lemma~\ref{lemma:incremental-x-y-z}, the proof is complete.
    
    The other inequalities can be proved in the same way. Therefore, we omit the detailed proof. 
\end{proof}

\begin{lemma}\label{lemma:consensus-error-momentum-each-step-new}
Given Assumptions~\ref{assumption:f-smooth}-\ref{assumption:heterogeneous}, by setting $\eta\leq \min\{\frac{1}{\sqrt{\beta_x}}, \frac{1}{\sqrt{\beta_y}}\}$,  we have the following inequalities: 
	\begin{align}
		(i). \  &  \frac{1}{K}\sum_{k=1}^{K}\mathbb{E}[\|u^{(k)}_{2, t+1} -\bar{u}_{2, t+1}\|^2]  \leq (1+\frac{1}{\tau} + 72\tau(L_g^2+\gamma_1^2)\alpha_x^2\eta^2 )\frac{1}{K}\sum_{k=1}^{K}\mathbb{E}[\|u^{(k)}_{2, t} - \bar{u}_{2, t}\|^2]  \notag \\
        &  + 72\tau(L_g^2+\gamma_1^2)\alpha_x^2\eta^2\rho^2 \frac{1}{K}\sum_{k=1}^{K}\mathbb{E}[\| u^{(k)}_{1, t} - \bar{u}_{1, t} \|^2] + 72\tau(L_g^2+\gamma_1^2) \alpha_x^2\eta^2 \frac{1}{K}\sum_{k=1}^{K}\mathbb{E}[\| u^{(k)}_{3, t} - \bar{u}_{3, t} \|^2] \notag \\
        & + 48\tau L_g^2\alpha_y^2\eta^2\rho^2 \frac{1}{K}\sum_{k=1}^{K}\mathbb{E}[\| v^{(k)}_{1, t} - \bar{v}_{1, t} \|^2] + 48\tau L_g^2\alpha_y^2\eta^2 \frac{1}{K}\sum_{k=1}^{K}\mathbb{E}[\| v^{(k)}_{2, t} - \bar{v}_{2, t} \|^2] \notag \\
        &  + 36\tau\gamma_1^2 \alpha_{p}^2\eta^2 \frac{1}{K}\sum_{k=1}^{K} \mathbb{E}[\| x^{(k)}_{t+1} - \bar{x}_{t+1} \|^2] + 36\tau\gamma_1^2  \mathbb{E}[\| \bar{p}_{t+1} - \bar{p}_{t} \|^2] \notag \\
        & +  12\tau\beta_x^2\eta^4(6L_g^2 + \gamma_1^2) \frac{1}{K}\sum_{k=1}^{K}\mathbb{E}[\| x^{(k)}_{t} - \bar{x}_{t}\|^2]+ 72\tau\beta_x^2\eta^4L_g^2 \frac{1}{K}\sum_{k=1}^{K}\mathbb{E}[\| y^{(k)}_{t} - \bar{y}_{t}\|^2]  \notag \\
        & + 12\tau\gamma_1^2\Big(3\alpha_{p}^2\eta^2 + \beta_{x}^2\eta^4\Big)\frac{1}{K}\sum_{k=1}^{K} \mathbb{E}[\| \bar{p}_{t} - p^{(k)}_{t} \|^2] + 72\tau(L_g^2+\gamma_1^2) \alpha_x^2\eta^2 \mathbb{E}[\| \bar{u}_{t} \|^2] \notag \\
        &  + 48\tau L_g^2\alpha_y^2\eta^2  \mathbb{E}[\| \bar{v}_{t} \|^2]  + 24\tau\beta_x^2\eta^4\sigma^2 + 36\tau\beta_x^2\eta^4\delta^2  \ . \notag 
	\end{align}
	\begin{align}
		(ii). \   &  \frac{1}{K}\sum_{k=1}^{K}\mathbb{E}[\|v^{(k)}_{2, t+1} -\bar{v}_{2, t+1}\|^2]  \leq (1+\frac{1}{\tau} + 48\tau(L_g^2+\gamma_1^2)\alpha_y^2\eta^2 )\frac{1}{K}\sum_{k=1}^{K}\mathbb{E}[\|v^{(k)}_{2, t} - \bar{v}_{2, t}\|^2] \notag \\
        &  + 72\tau L_g^2\alpha_x^2\eta^2\rho^2 \frac{1}{K}\sum_{k=1}^{K}\mathbb{E}[\| u^{(k)}_{1, t} - \bar{u}_{1, t} \|^2] + 72\tau L_g^2 \alpha_x^2\eta^2 \frac{1}{K}\sum_{k=1}^{K}\mathbb{E}[\| u^{(k)}_{2, t} - \bar{u}_{2, t} \|^2] \notag \\
        & + 72\tau L_g^2 \alpha_x^2\eta^2 \frac{1}{K}\sum_{k=1}^{K}\mathbb{E}[\| u^{(k)}_{3, t} - \bar{u}_{3, t} \|^2] + 48\tau(L_g^2+\gamma_1^2)\alpha_y^2\eta^2\rho^2 \frac{1}{K}\sum_{k=1}^{K}\mathbb{E}[\| v^{(k)}_{1, t} - \bar{v}_{1, t} \|^2] \notag \\
        & + 36\tau\gamma_1^2\alpha_{q}^2\eta^2 \frac{1}{K}\sum_{k=1}^{K} \mathbb{E}[\| y^{(k)}_{t+1} - \bar{y}_{t+1} \|^2] + 36\tau\gamma_1^2  \mathbb{E}[\| \bar{q}_{t+1} - \bar{q}_{t} \|^2] \notag \\
        & +  72\tau\beta_y^2\eta^4L_g^2\frac{1}{K}\sum_{k=1}^{K}\mathbb{E}[\| x^{(k)}_{t} - \bar{x}_{t}\|^2]+ 12\tau\beta_y^2\eta^4(6L_g^2 +   \gamma_1^2)\frac{1}{K}\sum_{k=1}^{K}\mathbb{E}[\| y^{(k)}_{t} - \bar{y}_{t}\|^2] \notag \\
        &  + 12\tau\gamma_1^2\Big( 3\alpha_{q}^2\eta^2 + \beta_{y}^2\eta^4\Big)\frac{1}{K}\sum_{k=1}^{K} \mathbb{E}[\| \bar{q}_{t} - q^{(k)}_{t} \|^2]+ 72\tau L_g^2\alpha_x^2\eta^2 \mathbb{E}[\| \bar{u}_{t} \|^2] \notag \\
        &   + 48\tau(L_g^2+\gamma_1^2)\alpha_y^2\eta^2  \mathbb{E}[\| \bar{v}_{t} \|^2] + 24\tau\beta_y^2\eta^4\sigma^2 + 36\tau\beta_y^2\eta^4\delta^2 \ . \notag 
	\end{align}
    \begin{align}
		(iii). \ & \frac{1}{K}\sum_{k=1}^{K}\mathbb{E}[\|w^{(k)}_{t+1} -\bar{w}_{t+1}\|^2]  \leq \Big(1+\frac{1}{\tau} + 24\tau(L_g^2+\gamma_2^2)\alpha_z^2\eta^2 \Big)\frac{1}{K}\sum_{k=1}^{K}\mathbb{E}[\|w^{(k)}_{t} - \bar{w}_{t}\|^2]  \notag \\
		&  + 72\tau L_g^2\alpha_x^2\eta^2\rho^2 \frac{1}{K}\sum_{k=1}^{K}\mathbb{E}[\| u^{(k)}_{1, t} - \bar{u}_{1, t} \|^2] + 72\tau L_g^2 \alpha_x^2\eta^2 \frac{1}{K}\sum_{k=1}^{K}\mathbb{E}[\| u^{(k)}_{2, t} - \bar{u}_{2, t} \|^2] \notag \\
		& + 72\tau L_g^2 \alpha_x^2\eta^2 \frac{1}{K}\sum_{k=1}^{K}\mathbb{E}[\| u^{(k)}_{3, t} - \bar{u}_{3, t} \|^2] + 36\tau\gamma_2^2 \alpha_{r}^2\eta^2 \frac{1}{K}\sum_{k=1}^{K} \mathbb{E}[\| z^{(k)}_{t+1} - \bar{z}_{t+1} \|^2]\notag \\
		&   + 36\tau\gamma_2^2  \mathbb{E}[\| \bar{r}_{t+1} - \bar{r}_{t} \|^2] + 12\tau\gamma_2^2\Big(3\alpha_{r}^2\eta^2 + \beta_{z}^2\eta^4\Big)\frac{1}{K}\sum_{k=1}^{K} \mathbb{E}[\| \bar{r}_{t} - r^{(k)}_{t} \|^2] \notag \\
		& + 72\tau\beta_z^2\eta^4 L_g^2 \frac{1}{K}\sum_{k=1}^{K}\mathbb{E}[\| x^{(k)}_{t} - \bar{x}_{t}\|^2]+ 12\tau\beta_z^2\eta^4(6L_g^2+\gamma_2^2) \frac{1}{K}\sum_{k=1}^{K}\mathbb{E}[\| z^{(k)}_{t} - \bar{z}_{t}\|^2]  \notag \\
		& + 72\tau L_g^2 \alpha_x^2\eta^2 \mathbb{E}[\| \bar{u}_{t} \|^2] + 24\tau(L_g^2+\gamma_2^2)\alpha_z^2\eta^2  \mathbb{E}[\| \bar{w}_{t} \|^2]  + 24\tau\beta_z^2\eta^4\sigma^2 + 36\tau\beta_z^2\eta^4\delta^2  \ . \notag
	\end{align}
\end{lemma}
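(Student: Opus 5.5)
I will follow the template of Lemma~\ref{lemma:consensus-error-momentum-each-step-org}, treating (i) in detail; (ii) and (iii) follow by the same steps with $(y,q,\alpha_y,\alpha_q,\beta_y)$, respectively $(z,r,\alpha_z,\alpha_r,\gamma_2,\beta_z)$, substituted.

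Within a communication round, I write $u^{(k)}_{2,t+1}-\bar u_{2,t+1}$ using the STORM update. The new estimator $u^{(k)}_{2,t+1}$ equals
\begin{align}
(1-\beta_x\eta^2)u^{(k)}_{2,t} + \big(\nabla_1 g_{\gamma_1}^{(k)}(\cdot_{t+1};\zeta) - \nabla_1 g_{\gamma_1}^{(k)}(\cdot_{t};\zeta)\big) + \beta_x\eta^2\nabla_1 g_{\gamma_1}^{(k)}(\cdot_{t};\zeta) \ . \notag
\end{align}
After subtracting the average and using that centering does not increase the mean square, I apply Young's inequality with parameters $(1+\frac1\tau,\,2(1+\tau))$. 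Since $1+\tau\le 2\tau$ and $(1-\beta_x\eta^2)^2\le 1$ (valid because $\eta\le 1/\sqrt{\beta_x}$), this gives the leading factor $(1+\frac1\tau)$ on the previous consensus error. Two remainder terms of size $O(\tau)$ are left: the increment term and the $\beta_x^2\eta^4$-weighted term. At synchronization steps the left-hand side is zero, so the bound holds trivially.

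\textbf{Increment term.} Unlike Lemma~\ref{lemma:grad_new}, the $\gamma_1(x-p)$ parts do not cancel here, because each client has its own $x^{(k)},p^{(k)}$. I therefore split the increment as
\begin{align}
\nabla_1 g^{(k)}(x^{(k)}_{t+1},y^{(k)}_{t+1};\zeta)-\nabla_1 g^{(k)}(x^{(k)}_t,y^{(k)}_t;\zeta)+\gamma_1\big((x^{(k)}_{t+1}-x^{(k)}_t)-(p^{(k)}_{t+1}-p^{(k)}_t)\big) \ . \notag
\end{align}
The $g$-part is bounded by $L_g^2(\|\Delta x\|^2+\|\Delta y\|^2)$ via mean-squared smoothness, and the $\gamma_1\Delta x$ part by $\gamma_1^2\|\Delta x\|^2$. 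Both increments are then controlled with Lemma~\ref{lemma:incremental-x-y-z}, which yields the terms $\tau(L_g^2+\gamma_1^2)\alpha_x^2\eta^2$ and $\tau L_g^2\alpha_y^2\eta^2$ multiplying the $u,v$ consensus errors and $\|\bar u_t\|^2,\|\bar v_t\|^2$.

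For the $p$-increment, I use the update $p^{(k)}_{t+1}-p^{(k)}_t=\alpha_p\eta(x^{(k)}_{t+1}-p^{(k)}_t)$ and decompose it as
\begin{align}
\alpha_p\eta(x^{(k)}_{t+1}-\bar x_{t+1}) + \alpha_p\eta(\bar p_t - p^{(k)}_t) + (\bar p_{t+1}-\bar p_t) \ , \notag
\end{align}
where the last piece uses $\alpha_p\eta(\bar x_{t+1}-\bar p_t)=\bar p_{t+1}-\bar p_t$. This produces the terms $36\tau\gamma_1^2\alpha_p^2\eta^2\|x^{(k)}_{t+1}-\bar x_{t+1}\|^2$, $36\tau\gamma_1^2\|\bar p_{t+1}-\bar p_t\|^2$ and $\alpha_p^2\eta^2\|p^{(k)}_t-\bar p_t\|^2$ in the statement. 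The constants $72,48,36$ come from three-way Young splits combined with the $6$ and $4$ factors in Lemma~\ref{lemma:incremental-x-y-z}.

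\textbf{Heterogeneity term.} I bound the dispersion of $\nabla_1 g_{\gamma_1}^{(k)}(x^{(k)}_t,y^{(k)}_t;p^{(k)}_t,q^{(k)}_t;\zeta)$ around its client average exactly as in Eq.~(\ref{eq:cons_u_1_last}). The variance contributes $\sigma^2$, Assumption~\ref{assumption:heterogeneous} contributes $\delta^2$, and $L_g$-smoothness contributes the consensus errors of $x$ and $y$. The extra piece $\gamma_1(x^{(k)}_t-p^{(k)}_t)-\gamma_1(\bar x_t-\bar p_t)$ adds $\gamma_1^2\|x^{(k)}_t-\bar x_t\|^2$ and $\gamma_1^2\|p^{(k)}_t-\bar p_t\|^2$. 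This accounts for $12\tau\beta_x^2\eta^4(6L_g^2+\gamma_1^2)$ and the $\beta_x^2\eta^4$ part of the $p$-consensus coefficient.

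I expect the main obstacle to be the bookkeeping of the $p$-increment. The goal is to express it through consensus errors and the averaged increment $\|\bar p_{t+1}-\bar p_t\|^2$, which the potential $\Psi$ in Lemma~\ref{lemma:potential_psi} can absorb, rather than through raw $\|x^{(k)}_{t+1}-p^{(k)}_t\|^2$ terms that have no matching negative counterpart. Matching the exact numerical constants is routine once this decomposition is fixed.
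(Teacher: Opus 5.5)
Your proposal is correct and follows the paper's proof essentially step for step. Both use the same $(1+\frac{1}{\tau},\,2(1+\tau))$ Young split with $1+\tau\le 2\tau$, the same three-way split of the increment into the $g$-part, $\gamma_1\Delta x$ and $\gamma_1\Delta p$, the same rewriting $\alpha_p\eta(\bar x_{t+1}-\bar p_t)=\bar p_{t+1}-\bar p_t$ for the $p$-increment, and the same reuse of Eq.~(\ref{eq:cons_u_1_last}) together with Lemma~\ref{lemma:incremental-x-y-z}, so all the stated constants come out the same. One small caveat, which applies to the paper as well: the substitution for (iii) tacitly requires $\eta\le 1/\sqrt{\beta_z}$ to get $(1-\beta_z\eta^2)^2\le 1$, although the stated hypothesis lists only $\beta_x$ and $\beta_y$.
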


\begin{proof}
According to the update rule of $u^{(k)}_{2, t+1}$ and $\bar{u}_{2, t+1}$, we have
 \begin{align}\label{eq:cons_u_2}
	& \quad \frac{1}{K}\sum_{k=1}^{K}\mathbb{E}[\|u^{(k)}_{2, t+1} -\bar{u}_{2, t+1}\|^2] \notag   \\
	& = \frac{1}{K}\sum_{k=1}^{K}\mathbb{E}[\|\Big((1-\beta_x\eta^2)(u^{(k)}_{2, t} - \nabla_1 g_{\gamma_1}^{(k)}(x^{(k)}_{t}, y^{(k)}_{t}; p^{(k)}_{t}, q^{(k)}_{t}; \zeta^{(k)}_{t+1})) \notag \\
    & \qquad  + \nabla_1 g_{\gamma_1}^{(k)}(x^{(k)}_{t+1}, y^{(k)}_{t+1}; p^{(k)}_{t+1}, q^{(k)}_{t+1}; \zeta^{(k)}_{t+1})\Big) - \frac{1}{K}\sum_{k=1}^{N}\Big((1-\beta_x\eta^2)(u^{(k')}_{2, t} \notag \\
	& \qquad \quad   - \nabla_1 g_{\gamma_1}^{(k')}(x^{(k')}_{t}, y^{(k')}_{t}; p^{(k')}_{t}, q^{(k')}_{t}; \zeta^{(k')}_{t+1})) + \nabla_1 g_{\gamma_1}^{(k')}(x^{(k')}_{t+1}, y^{(k')}_{t+1}; p^{(k')}_{t+1}, q^{(k')}_{t+1}; \zeta^{(k')}_{t+1})\Big) \|^2] \notag \\
	& \leq (1-\beta_x\eta^2)^2(1+\frac{1}{\tau})\frac{1}{K}\sum_{k=1}^{K}\mathbb{E}[\|u^{(k)}_{2, t} - \bar{u}_{2, t}\|^2] \notag \\
	& \quad+ 2(1+\tau)\frac{1}{K}\sum_{k=1}^{K}\mathbb{E}[\| \nabla_1 g_{\gamma_1}^{(k)}(x^{(k)}_{t+1}, y^{(k)}_{t+1}; p^{(k)}_{t+1}, q^{(k)}_{t+1}; \zeta^{(k)}_{t+1}) \notag \\
    & \qquad \quad - \nabla_1 g_{\gamma_1}^{(k)}(x^{(k)}_{t}, y^{(k)}_{t}; p^{(k)}_{t}, q^{(k)}_{t}; \zeta^{(k)}_{t+1}) \|^2] \notag  \\
	& \quad+ 2\beta_x^2\eta^4(1+\tau) \frac{1}{K}\sum_{k=1}^{K}\mathbb{E}[\|  \nabla_1 g_{\gamma_1}^{(k)}(x^{(k)}_{t}, y^{(k)}_{t}; p^{(k)}_{t}, q^{(k)}_{t}; \zeta^{(k)}_{t+1}) \notag \\
    & \qquad \quad - \frac{1}{K}\sum_{k'=1}^{K}\nabla_1 g_{\gamma_1}^{(k')}(x^{(k')}_{t}, y^{(k')}_{t}; p^{(k')}_{t}, q^{(k')}_{t}; \zeta^{(k')}_{t+1}) \|^2] \ .   
\end{align}
For the second term in Eq.~(\ref{eq:cons_u_2}), we obtain
\begin{align}
   & \quad \frac{1}{K}\sum_{k=1}^{K} \mathbb{E}[\| \nabla_1 g_{\gamma_1}^{(k)}(x^{(k)}_{t+1}, y^{(k)}_{t+1}; p^{(k)}_{t+1}, q^{(k)}_{t+1}; \zeta^{(k)}_{t+1})  - \nabla_1 g_{\gamma_1}^{(k)}(x^{(k)}_{t}, y^{(k)}_{t}; p^{(k)}_{t}, q^{(k)}_{t}; \zeta^{(k)}_{t+1}) \|^2] \notag \\
   & =  \frac{1}{K}\sum_{k=1}^{K} \mathbb{E}[\| \nabla_1 g^{(k)}(x^{(k)}_{t+1}, y^{(k)}_{t+1}; \zeta^{(k)}_{t+1}) + \gamma_1(x^{(k)}_{t+1}-p^{(k)}_{t+1}) - \nabla_1 g^{(k)}(x^{(k)}_{t}, y^{(k)}_{t}; \zeta^{(k)}_{t+1}) \notag \\
   & \qquad \qquad  - \gamma_1(x^{(k)}_{t}-p^{(k)}_{t})  \|^2] \notag \\
   & \leq 3\frac{1}{K}\sum_{k=1}^{K} \mathbb{E}[\| \nabla_1 g^{(k)}(x^{(k)}_{t+1}, y^{(k)}_{t+1}; \zeta^{(k)}_{t+1}) - \nabla_1 g^{(k)}(x^{(k)}_{t}, y^{(k)}_{t}; \zeta^{(k)}_{t+1})\|^2] \notag  \\
   & \quad + 3\gamma_1^2\frac{1}{K}\sum_{k=1}^{K} \mathbb{E}[\| x^{(k)}_{t+1} - x^{(k)}_{t} \|^2]   + 3\gamma_1^2\frac{1}{K}\sum_{k=1}^{K} \mathbb{E}[\| p^{(k)}_{t+1} - p^{(k)}_{t} \|^2] \notag \\
   &  \leq  3(L_g^2+\gamma_1^2)\frac{1}{K}\sum_{k=1}^{K} \mathbb{E}[\| x^{(k)}_{t+1} - x^{(k)}_{t} \|^2] + 3L_g^2 \frac{1}{K}\sum_{k=1}^{K} \mathbb{E}[\| x^{(k)}_{t+1} - x^{(k)}_{t} \|^2] \notag \\
   & \quad + 3\gamma_1^2\frac{1}{K}\sum_{k=1}^{K} \mathbb{E}[\| p^{(k)}_{t+1} - p^{(k)}_{t} \|^2]  \ .  \notag 
\end{align}
Moreover, we have
\begin{align}
    & \quad \frac{1}{K}\sum_{k=1}^{K} \mathbb{E}[\| p^{(k)}_{t+1} - p^{(k)}_{t} \|^2] = \alpha_{p}^2\eta^2 \frac{1}{K}\sum_{k=1}^{K} \mathbb{E}[\| x^{(k)}_{t+1} - p^{(k)}_{t} \|^2] \notag \\
    & \leq 3\alpha_{p}^2\eta^2 \frac{1}{K}\sum_{k=1}^{K} \mathbb{E}[\| x^{(k)}_{t+1} - \bar{x}_{t+1} \|^2] + 3\alpha_{p}^2\eta^2  \mathbb{E}[\| \bar{x}_{t+1} - \bar{p}_{t} \|^2] + 3\alpha_{p}^2\eta^2 \frac{1}{K}\sum_{k=1}^{K} \mathbb{E}[\| \bar{p}_{t} - p^{(k)}_{t} \|^2] \notag \\
    & = 3\alpha_{p}^2\eta^2 \frac{1}{K}\sum_{k=1}^{K} \mathbb{E}[\| x^{(k)}_{t+1} - \bar{x}_{t+1} \|^2] + 3  \mathbb{E}[\| \bar{p}_{t+1} - \bar{p}_{t} \|^2] + 3\alpha_{p}^2\eta^2 \frac{1}{K}\sum_{k=1}^{K} \mathbb{E}[\| \bar{p}_{t} - p^{(k)}_{t} \|^2] \ . \notag 
\end{align}

For the third term in Eq.~(\ref{eq:cons_u_2}), we obtain
\begin{align}
    & \quad \frac{1}{K}\sum_{k=1}^{K}\mathbb{E}[\|  \nabla_1 g_{\gamma_1}^{(k)}(x^{(k)}_{t}, y^{(k)}_{t}; p^{(k)}_{t}, q^{(k)}_{t}; \zeta^{(k)}_{t+1}) - \frac{1}{K}\sum_{k'=1}^{K}\nabla_1 g_{\gamma_1}^{(k')}(x^{(k')}_{t}, y^{(k')}_{t}; p^{(k')}_{t}, q^{(k')}_{t}; \zeta^{(k')}_{t+1}) \|^2] \notag  \\
    & \leq \frac{1}{K}\sum_{k=1}^{K}\mathbb{E}[\|  \nabla_1 g^{(k)}(x^{(k)}_{t}, y^{(k)}_{t}; \zeta^{(k)}_{t+1}) + \gamma_1(x^{(k)}_{t} - p^{(k)}_{t})  -  \frac{1}{K}\sum_{k'=1}^{K}\nabla_1 g^{(k')}(x^{(k')}_{t}, y^{(k')}_{t}; \zeta^{(k')}_{t+1})\notag \\
    & \quad  - \frac{1}{K}\sum_{k'=1}^{K}\gamma_1(x^{(k')}_{t} - p^{(k')}_{t})   \|^2] \notag \\
    & \leq 3\frac{1}{K}\sum_{k=1}^{K}\mathbb{E}[\|  \nabla_1 g^{(k)}(x^{(k)}_{t}, y^{(k)}_{t}; \zeta^{(k)}_{t+1}) - \frac{1}{K}\sum_{k'=1}^{K}\nabla_1 g^{(k')}(x^{(k')}_{t}, y^{(k')}_{t}; \zeta^{(k')}_{t+1}) \|^2] \notag \\
    & \quad + 3\gamma_1^2\frac{1}{K}\sum_{k=1}^{K}\mathbb{E}[\| x^{(k)}_{t} - \bar{x}_{t} \|^2] + 3\gamma_1^2\frac{1}{K}\sum_{k=1}^{K}\mathbb{E}[\| p^{(k)}_{t} - \bar{p}_{t} \|^2]  \notag \\
    & \leq 6\sigma^2 + 9\delta^2 +  (18L_g^2 + 3\gamma_1^2) \frac{1}{K}\sum_{k=1}^{K}\mathbb{E}[\| x^{(k)}_{t} - \bar{x}_{t}\|^2]+ 18L_g^2 \frac{1}{K}\sum_{k=1}^{K}\mathbb{E}[\| y^{(k)}_{t} - \bar{y}_{t}\|^2] \notag \\
    & \quad + 3\gamma_1^2\frac{1}{K}\sum_{k=1}^{K}\mathbb{E}[\| p^{(k)}_{t} - \bar{p}_{t} \|^2]  \ , \notag
\end{align}
where the last step follows Eq.~(\ref{eq:cons_u_1_last}). 
Thus, by substituting these inequalities into Eq.~(\ref{eq:cons_u_2}) with $1 + \tau \leq 2\tau$, and applying Lemma~\ref{lemma:incremental-x-y-z}, the proof is complete.

Using a similar argument, we can establish the other inequalities. Hence, we omit the details.


\end{proof}

\begin{lemma}\label{lemma:consensus_total}
	Given Assumptions~\ref{assumption:f-smooth}-\ref{assumption:heterogeneous},  by defining $L_{\rho,\gamma_1}^2 = L_{f}^2\rho^2+  4L_{g}^2 +  3\gamma_1^2$, and setting
    \begin{align}\label{eq:eta_alpha}
     & \eta \leq  \frac{1}{360 \tau L_{\rho,\gamma_1}}  \  ,\alpha_x\leq 10 \ ,  \alpha_y\leq 10 \ ,  \alpha_z\leq 10 \ , \alpha_{p} \leq 10\gamma_1 \ , \alpha_{q} \leq 10\gamma_1 \ ,  \notag \\
     &  \beta_x\leq \frac{L_{\rho,\gamma_1}^2}{K} \ , \beta_y\leq \frac{L_{\rho,\gamma_1}^2}{K} \ ,  \beta_z\leq \frac{L_{\rho,\gamma_1}^2}{K} \ , 
\end{align}
we have
{
\begin{align}
	& \quad \frac{1}{T}\sum_{t=0}^{T-1}\Bigg(\rho^2\frac{1}{K}\sum_{k=1}^{K}\mathbb{E}[\|u^{(k)}_{1, t} -\bar{u}_{1, t}\|^2]  + \frac{1}{K}\sum_{k=1}^{K}\mathbb{E}[\|u^{(k)}_{2, t} -\bar{u}_{2, t}\|^2]  +\frac{1}{K}\sum_{k=1}^{K}\mathbb{E}[\|u^{(k)}_{3, t} -\bar{u}_{3, t}\|^2]\notag \\
	& \qquad \qquad + \rho^2\frac{1}{K}\sum_{k=1}^{K}\mathbb{E}[\|v^{(k)}_{1, t} -\bar{v}_{1, t}\|^2]  + \frac{1}{K}\sum_{k=1}^{K}\mathbb{E}[\|v^{(k)}_{2, t} -\bar{v}_{2, t}\|^2]  + \frac{1}{K}\sum_{k=1}^{K}\mathbb{E}[\|w^{(k)}_{t} -\bar{w}_{t}\|^2] \Bigg)\notag \\
    & \leq 288\tau^2 \alpha_x^2\eta^2L_{\rho,\gamma}^2 \frac{1}{T}\sum_{t=0}^{T-1}\mathbb{E}[\| \bar{u}_{t} \|^2]  + 192\tau^2\alpha_y^2\eta^2L_{\rho,\gamma}^2 \frac{1}{T}\sum_{t=0}^{T-1}\mathbb{E}[\| \bar{v}_{t} \|^2] +  48\tau^2 \alpha_z^2\eta^2L_{\rho,\gamma}^2 \frac{1}{T}\sum_{t=0}^{T-1}\mathbb{E}[\|   \bar{w}_{t}  \|^2]  \notag \\
    & \quad  + 432\tau^2\gamma_1^2 \frac{1}{T}\sum_{t=0}^{T-1} \mathbb{E}[\| \bar{p}_{t+1} - \bar{p}_{t} \|^2] + 432\tau^2\gamma_1^2\frac{1}{T}\sum_{t=0}^{T-1}  \mathbb{E}[\| \bar{q}_{t+1} - \bar{q}_{t} \|^2] \notag \\
    & \quad + 432\tau^2\gamma_2^2\frac{1}{T}\sum_{t=0}^{T-1}  \mathbb{E}[\| \bar{r}_{t+1} - \bar{r}_{t} \|^2] + 3\tau S \ , \notag 
\end{align}}
where the definition of $S$ is in Eq.~(\ref{eq:s_def}).
\end{lemma}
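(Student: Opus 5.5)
Define the aggregated momentum consensus error
\[
E_t:=\rho^2\frac{1}{K}\sum_{k}\|u^{(k)}_{1,t}-\bar u_{1,t}\|^2+\frac{1}{K}\sum_k\|u^{(k)}_{2,t}-\bar u_{2,t}\|^2+\cdots+\frac{1}{K}\sum_k\|w^{(k)}_t-\bar w_t\|^2,
\]
that is, the weighted sum of the six consensus errors appearing on the left-hand side of Lemma~\ref{lemma:consensus_total}. The plan is to derive a one-step recursion for $\mathbb{E}[E_t]$ and unroll it within each communication round. Summing inequalities (i)--(iii) of Lemma~\ref{lemma:consensus-error-momentum-each-step-org} (with (i),(ii) multiplied by $\rho^2$) and (i)--(iii) of Lemma~\ref{lemma:consensus-error-momentum-each-step-new} gives a bound of the form
\[
\mathbb{E}[E_{t+1}]\le\big(1+\tfrac1\tau+c\,\tau L_{\rho,\gamma_1}^2\eta^2\big)\mathbb{E}[E_t]+\tau\big(\cdots\big).
\]
Every cross term like $24\tau L_f^2\alpha_x^2\eta^2\|u_2-\bar u_2\|^2$ is absorbed into the diagonal coefficient, since $\rho^2L_f^2,\,L_g^2,\,\gamma_1^2\le L_{\rho,\gamma_1}^2$. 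The remaining forcing terms are:
\begin{itemize}
\item the drift terms $\tau\alpha_x^2\eta^2L^2\|\bar u_t\|^2$, $\tau\alpha_y^2\eta^2L^2\|\bar v_t\|^2$ and $\tau\alpha_z^2\eta^2L^2\|\bar w_t\|^2$;
\item the smoothing-variable increments $36\tau\gamma_1^2\|\bar p_{t+1}-\bar p_t\|^2$ and the analogous $q,r$ terms;
\item variable consensus errors $\|x^{(k)}_t-\bar x_t\|^2,\ldots$ and $\|p^{(k)}_t-\bar p_t\|^2,\ldots$ multiplied by small factors $\beta^2\eta^4$ or $\alpha_p^2\eta^2$;
\item constants $\tau\beta^2\eta^4(\sigma^2+\delta^2)$.
\end{itemize}
The stepsize condition $\eta\le 1/(360\tau L_{\rho,\gamma_1})$ together with $\alpha\le10$ makes $c\,\tau L^2\alpha^2\eta^2\le 1/\tau$, so the contraction factor is at most $1+2/\tau$.

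Next, the consensus errors of $x,y,z$ at times $t$ and $t+1$ are replaced using Lemma~\ref{lemma:consensus-error-variables} (or its per-step form), which bounds them by $\tau^2\alpha^2\eta^2$ times $E$ over the current round. Combined with $\beta\le L^2/K$ and the tiny $\eta$, their coefficient is $O(\tau^3\eta^2\cdot\text{small})\le 1/\tau$ after averaging, so they can be absorbed as well. For the $p,q,r$ consensus errors I would prove an analogous bound: $p^{(k)}_t-\bar p_t$ is a convex-combination recursion driven by $x^{(k)}_{t+1}-\bar x_{t+1}$ and reset at synchronization, so $\frac1K\sum_k\|p^{(k)}_t-\bar p_t\|^2\le\max_{t'}\frac1K\sum_k\|x^{(k)}_{t'}-\bar x_{t'}\|^2$ within the round. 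This again reduces to $E$. All terms that are neither absorbed nor among the displayed drift/increment terms, namely the $\sigma^2,\delta^2$ constants, are collected into the quantity $S$ of Eq.~(\ref{eq:s_def}).

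Finally, since all local quantities are averaged at $t=s\tau$, $E_{s\tau}=0$. Unrolling over $t\in[s\tau,(s+1)\tau)$ with factor $(1+2/\tau)^{\tau}\le e^2<8$ gives
\[
\mathbb{E}[E_t]\le 8\sum_{t'=s\tau}^{t-1}(\text{forcing}_{t'}).
\]
Summing over $t$ in the round contributes another factor $\tau$, which multiplies the per-step $\tau$ to give the $\tau^2$ prefactors. Averaging over $T$ and matching constants (e.g.\ $8\cdot 36\cdot\ldots$ leading to $288$, $192$, $48$, $432$) then yields the stated bound, with the constant part giving $3\tau S$. The main obstacle is the absorption step: the feedback through variable consensus errors and especially through the $p,q,r$ consensus errors (which have no direct lemma) must be shown to have coefficient below, say, $1/(2\tau)$. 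I would handle it first by a crude bound of $\|p^{(k)}-\bar p\|$ via $x$-consensus, then tune the constant $360$.
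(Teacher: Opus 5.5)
Your plan follows the paper's proof step for step. You sum the six per-step inequalities of Lemmas~\ref{lemma:consensus-error-momentum-each-step-org} and~\ref{lemma:consensus-error-momentum-each-step-new} and control the extra growth by the stepsize. You bound the $x,y,z$ consensus errors through Lemma~\ref{lemma:consensus-error-variables} and the $p,q,r$ consensus errors through their convex-combination recursion; your ``max'' bound is an acceptable substitute for the paper's $\alpha_p\eta\sum_{\hat t}$ bound, since $S_p,S_q,S_r$ are already small. You then unroll from the synchronization step, sum over the round, and absorb the feedback.

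What fails as written is the final ``matching constants'' step. After summing, the forcing coefficients are $48\tau\alpha_x^2\eta^2L_{\rho,\gamma}^2$ on $\|\bar u_t\|^2$, $32\tau\alpha_y^2\eta^2L_{\rho,\gamma}^2$ on $\|\bar v_t\|^2$, $8\tau\alpha_z^2\eta^2L_{\rho,\gamma}^2$ on $\|\bar w_t\|^2$, and $36\tau\gamma_1^2$ on $\|\bar p_{t+1}-\bar p_t\|^2$. With your unrolling factor $e^2<8$, the $\|\bar u_t\|^2$ term is at least $8\cdot 48\,\tau^2=384\tau^2$, already above $288\tau^2$ before any absorption cost. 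Likewise you get $256>192$ and $64>48$. Your ``$8\cdot 36\to 288$'' attaches the $\bar p$-increment coefficient to the $\bar u$ constant.

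You need to use the stepsize condition sharply. With $\alpha\le 10$ and $\eta\le 1/(360\tau L_{\rho,\gamma})$ one has $48\tau\alpha^2\eta^2L_{\rho,\gamma}^2\le \frac{1}{25\tau}$. So the growth factor is $1+\frac{26}{25\tau}$, and $(1+\frac{26}{25\tau})^{\tau}\le e^{26/25}\le 3$. The stated constants then come out as $288=2\cdot 3\cdot 48$, $192=2\cdot3\cdot32$ and $48=2\cdot3\cdot 8$, the factor $2$ being the price of absorbing the feedback.

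Two smaller points. First, the feedback through the $x,y,z,p,q,r$ consensus errors cannot be folded into the one-step contraction factor. At time $t$ these errors are controlled by $E_{t'}$ for all earlier $t'$ in the round, not by $E_t$. As in the paper, unroll first, sum over the round, and then move the resulting term $c\sum_{t \text{ in the round}}\mathbb{E}[E_t]$, with $c\le \frac12$ (the paper's $S_x'\le \frac16$, etc.), to the left-hand side.

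Second, $L_{\rho,\gamma_1}$ does not dominate the $\gamma_2^2$ terms produced by the $w$-recursion. These are the diagonal $24\tau(L_g^2+\gamma_2^2)\alpha_z^2\eta^2$, the $\|\bar w_t\|^2$ and $\|\bar r_{t+1}-\bar r_t\|^2$ forcing terms, and $S_z,S_r$. The argument needs $L_{\rho,\gamma}^2=L_f^2\rho^2+5L_g^2+3(\gamma_1^2+\gamma_2^2)$, with $\eta\le 1/(360\tau L_{\rho,\gamma})$. This is the constant the paper's proof actually uses, and it is the one appearing on the right-hand side of the lemma.
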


\begin{proof}
From Lemma~\ref{lemma:consensus-error-momentum-each-step-org}, Lemma~\ref{lemma:consensus-error-momentum-each-step-new}, and define $L_{\rho,\gamma}^2 = L_{f}^2\rho^2+  5L_{g}^2 +  3(\gamma_1^2+\gamma_2^2)$,  we obtain
\begin{align}
	& \quad \rho^2\frac{1}{K}\sum_{k=1}^{K}\mathbb{E}[\|u^{(k)}_{1, t+1} -\bar{u}_{1, t+1}\|^2]  + \frac{1}{K}\sum_{k=1}^{K}\mathbb{E}[\|u^{(k)}_{2, t+1} -\bar{u}_{2, t+1}\|^2]  +\frac{1}{K}\sum_{k=1}^{K}\mathbb{E}[\|u^{(k)}_{3, t+1} -\bar{u}_{3, t+1}\|^2]\notag \\
	& \quad + \rho^2\frac{1}{K}\sum_{k=1}^{K}\mathbb{E}[\|v^{(k)}_{1, t+1} -\bar{v}_{1, t+1}\|^2]  + \frac{1}{K}\sum_{k=1}^{K}\mathbb{E}[\|v^{(k)}_{2, t+1} -\bar{v}_{2, t+1}\|^2]  + \frac{1}{K}\sum_{k=1}^{K}\mathbb{E}[\|w^{(k)}_{t+1} -\bar{w}_{t+1}\|^2] \notag  \\
& \leq \Big(1+\frac{1}{\tau} + 48\tau \alpha_x^2\eta^2L_{\rho,\gamma}^2\Big)\rho^2\frac{1}{K}\sum_{k=1}^{K}\mathbb{E}[\|u^{(k)}_{1, t} - \bar{u}_{1, t}\|^2] + 48\tau \alpha_x^2\eta^2L_{\rho,\gamma}^2 \mathbb{E}[\| \bar{u}_{t} \|^2]  \notag \\
& \quad + \Big(1+\frac{1}{\tau} + 48\tau \alpha_x^2\eta^2L_{\rho,\gamma}^2\Big)\frac{1}{K}\sum_{k=1}^{K}\mathbb{E}[\| u^{(k)}_{2, t} - \bar{u}_{2, t} \|^2]    + 32\tau\alpha_y^2\eta^2L_{\rho,\gamma}^2 \mathbb{E}[\| \bar{v}_{t} \|^2]  \notag \\
& \quad + \Big(1+\frac{1}{\tau} + 48\tau \alpha_x^2\eta^2L_{\rho,\gamma}^2\Big) \frac{1}{K}\sum_{k=1}^{K} \mathbb{E}[\| u^{(k)}_{3, t} - \bar{u}_{3, t} \|^2] + 8\tau L_{\rho,\gamma}^2\alpha_z^2\eta^2 \mathbb{E}[\|   \bar{w}_{t}  \|^2]  \notag \\
& \quad + \Big(1+\frac{1}{\tau}+32\tau\alpha_y^2\eta^2L_{\rho,\gamma}^2  \Big) \rho^2 \frac{1}{K}\sum_{k=1}^{K}\mathbb{E}[\| v^{(k)}_{1, t} - \bar{v}_{1, t} \|^2] + S_{x}\frac{1}{K}\sum_{k=1}^{K}\mathbb{E}[\| x^{(k)}_{t} - \bar{x}_{t} \|^2] \notag \\
& \quad + \Big( 1+\frac{1}{\tau}+32\tau\alpha_y^2\eta^2L_{\rho,\gamma}^2\Big) \frac{1}{K}\sum_{k=1}^{K}\mathbb{E}[\| v^{(k)}_{2, t} - \bar{v}_{2, t} \|^2]  + S_{y}\frac{1}{K}\sum_{k=1}^{K}\mathbb{E}[\| y^{(k)}_{t} - \bar{y}_{t} \|^2]  \notag \\
& \quad    + \Big(1+\frac{1}{\tau}+ 8\tau  L_{\rho,\gamma}^2\alpha_z^2\eta^2 \Big)\frac{1}{K}\sum_{k=1}^{K}\mathbb{E}[\|    w^{(k)}_{t} -  \bar{w}_{t}   \|^2]    + S_{z}\frac{1}{K}\sum_{k=1}^{K}\mathbb{E}[\| z^{(k)}_{t} - \bar{z}_{t} \|^2]  \notag \\
& \quad + S_{p}\frac{1}{K}\sum_{k=1}^{K} \mathbb{E}[\| \bar{p}_{t} - p^{(k)}_{t} \|^2] + S_{q}\frac{1}{K}\sum_{k=1}^{K} \mathbb{E}[\| \bar{q}_{t} - q^{(k)}_{t} \|^2] + S_{r}\frac{1}{K}\sum_{k=1}^{K} \mathbb{E}[\| \bar{r}_{t} - r^{(k)}_{t} \|^2] \notag \\
& \quad + 36\tau\gamma_1^2 \alpha_{p}^2\eta^2 \frac{1}{K}\sum_{k=1}^{K} \mathbb{E}[\| x^{(k)}_{t+1} - \bar{x}_{t+1} \|^2] + 36\tau\gamma_1^2\alpha_{q}^2\eta^2 \frac{1}{K}\sum_{k=1}^{K} \mathbb{E}[\| y^{(k)}_{t+1} - \bar{y}_{t+1} \|^2]  \notag \\
& \quad  +  36\tau\gamma_2^2\alpha_{r}^2\eta^2 \frac{1}{K}\sum_{k=1}^{K} \mathbb{E}[\| z^{(k)}_{t+1} - \bar{z}_{t+1} \|^2] + 36\tau\gamma_1^2  \mathbb{E}[\| \bar{p}_{t+1} - \bar{p}_{t} \|^2] \notag \\
& \quad  + 36\tau\gamma_1^2  \mathbb{E}[\| \bar{q}_{t+1} - \bar{q}_{t} \|^2]   + 36\tau\gamma_2^2  \mathbb{E}[\| \bar{r}_{t+1} - \bar{r}_{t} \|^2] + S  \ . 
\end{align}
where we denote
\begin{align}\label{eq:s_def}
    & S_{x} = 12\tau\eta^4 (2\beta_x^2L_{f}^{2}\rho^2 + 2\beta_y^2 L_{f}^{2}\rho^2 + 8\beta_x^2 L_{g}^{2} + 6\beta_y^2 L_{g}^{2} + 6\beta_z^2L_{g}^{2} + \beta_x^2\gamma_1^2)  \ , \notag \\
    & S_{y}  =  12\tau\eta^4 (2\beta_x^2 L_{f}^{2}\rho^2 + 2\beta_y^2 L_{f}^{2}\rho^2 + 6\beta_x^2L_g^2 + 6\beta_y^2L_g^2 + \beta_y^2\gamma_1^2)  \ , \notag \\
    & S_{z} =  12\tau\eta^4(2\beta_x^2L_{g}^{2} + 6\beta_z^2L_{g}^{2}+\beta_z^2\gamma_2^2)  \ , \notag \\
    & S_{p} = 36\tau\gamma_1^2\alpha_{p}^2\eta^2 + 12\tau\gamma_1^2\beta_{x}^2\eta^4  \ ,  \\
    & S_{q} = 36\tau\gamma_1^2\alpha_{q}^2\eta^2 + 12\tau\gamma_1^2\beta_{y}^2\eta^4 \ , \notag \\
    & S_{r} = 36\tau\gamma_2^2\alpha_{r}^2\eta^2 + 12\tau\gamma_2^2\beta_{z}^2\eta^4 \ , \notag \\
    & S = 12\tau (\beta_x^2+\beta_y^2)\eta^4 \delta^2\rho^2 + 8\tau (\beta_x^2+\beta_y^2)\eta^4  \sigma^{2}\rho^2 \notag + 12\tau(4\beta_x^2 + 3\beta_y^2+\beta_z^2)\eta^4\delta^2 \notag \\
    & \quad \quad +8\tau (4\beta_x^2 + 3\beta_y^2+\beta_z^2)\eta^4\sigma^{2}  \ . \notag 
\end{align}
By setting $\alpha_x\leq 10$, $\alpha_y<10$, $\alpha_z<10$,  $ \eta \leq \frac{1}{360 \tau L_{\rho,\gamma} }$, we obtain
\begin{align}
     & 48\tau \alpha_x^2\eta^2L_{\rho,\gamma}^2 \leq 4800\tau\eta^2L_{\rho,\gamma}^2 \leq \frac{4800\tau}{360\times360\tau^2} \leq \frac{1}{25\tau} \ ,  \notag \\
     & 32\tau\alpha_y^2\eta^2L_{\rho,\gamma}^2 \leq 3200\tau\eta^2L_{\rho,\gamma}^2 \leq \frac{3200\tau}{360\times360\tau^2}\leq \frac{1}{25\tau} \ ,  \notag \\
     & 8\tau  L_{\rho,\gamma}^2\alpha_z^2\eta^2 \leq 800\tau L_{\rho,\gamma}^2 \eta^2 \leq \frac{800\tau}{360\times360\tau^2}\leq \frac{1}{25\tau} \ . \notag 
\end{align}
Moreover, we derive
\begin{align}\label{eq:consensus_xavg}
    & \quad \frac{1}{K}\sum_{k=1}^{K} \mathbb{E}[\| x^{(k)}_{t+1} - \bar{x}_{t+1} \|^2] = \frac{1}{K}\sum_{k=1}^{K} \mathbb{E}[\| x^{(k)}_{t\tau} - \alpha_{x}\eta\sum_{t'=s_{t}\tau}^{t} u^{(k)}_{t'} - \bar{x}_{t\tau} + \alpha_{x}\eta\sum_{t'=s_{t}\tau}^{t} \bar{u}_{t'}  \|^2] \notag \\
    & \leq  \tau\alpha_{x}^2\eta^2 \sum_{\hat{t}=s_{t}\tau}^{t} \frac{1}{K}\sum_{k=1}^{K} \mathbb{E}[\| u^{(k)}_{\hat{t}} - \bar{u}_{\hat{t}}\|^2] \ , 
\end{align}
and
        \begin{align}\label{eq:consensus_pavg}
            & \quad \frac{1}{K}\sum_{k=1}^{K}\mathbb{E}[\| p^{(k)}_{t}- \bar{p}_{t}\|^2]  = \frac{1}{K}\sum_{k=1}^{K}\mathbb{E}[\| p^{(k)}_{t-1} + \alpha_{p}\eta(x^{(k)}_{t}- {p}^{(k)}_{t-1}) - \bar{p}_{t-1} - \alpha_{p}\eta(\bar{x}_{t} - \bar{p}_{t-1}) \|^2] \notag \\
            & = \frac{1}{K}\sum_{k=1}^{K}\mathbb{E}[\| (1-\alpha_{p}\eta)(p^{(k)}_{t-1} -\bar{p}_{t-1} ) + \alpha_{p}\eta (x^{(k)}_{t}  -  \bar{x}_{t})  \|^2] \notag \\
            & \leq (1+\frac{\alpha_{p}\eta}{1-\alpha_{p}\eta})(1-\alpha_{p}\eta)^2\frac{1}{K}\sum_{k=1}^{K}\mathbb{E}[\| p^{(k)}_{t-1} -\bar{p}_{t-1} \|^2] \notag \\
            & \quad + (1+\frac{1-\alpha_{p}\eta}{\alpha_{p}\eta})\alpha_{p}^2\eta^2 \frac{1}{K}\sum_{k=1}^{K}\mathbb{E}[\| x^{(k)}_{t}  -  \bar{x}_{t}  \|^2] \notag \\
            & = (1-\alpha_{p}\eta)\frac{1}{K}\sum_{k=1}^{K}\mathbb{E}[\| p^{(k)}_{t-1} -\bar{p}_{t-1} \|^2] +\alpha_{p}\eta \frac{1}{K}\sum_{k=1}^{K}\mathbb{E}[\| x^{(k)}_{t}  -  \bar{x}_{t}  \|^2] \notag \\
            & \leq \sum_{\hat{t}=s_t\tau}^{t}(1-\alpha_{p}\eta)^{t-\hat{t}}\alpha_{p}\eta \frac{1}{K}\sum_{k=1}^{K}\mathbb{E}[\| x^{(k)}_{\hat{t}}  -  \bar{x}_{\hat{t}}  \|^2] \leq \alpha_{p}\eta\sum_{\hat{t}=s_t\tau}^{t} \frac{1}{K}\sum_{k=1}^{K}\mathbb{E}[\| x^{(k)}_{\hat{t}}  -  \bar{x}_{\hat{t}}  \|^2] \ ,
       \end{align}
where $s_t$ denotes the index of the latest communication round before iteration $t$.

Similarly, we have
\begin{align}\label{eq:consensus_qavg}
        &  \frac{1}{K}\sum_{k=1}^{K} \mathbb{E}[\| y^{(k)}_{t+1} - \bar{y}_{t+1} \|^2] \leq  \tau\alpha_{y}^2\eta^2 \sum_{\hat{t}=s_{t}\tau}^{t} \frac{1}{K}\sum_{k=1}^{K} \mathbb{E}[\| v^{(k)}_{\hat{t}} - \bar{v}_{\hat{t}}\|^2] \ , \notag \\
         &   \frac{1}{K}\sum_{k=1}^{K}\mathbb{E}[\| q^{(k)}_{t}- \bar{q}_{t}\|^2]  \leq \alpha_{q}\eta\sum_{\hat{t}=s_t\tau}^{t} \frac{1}{K}\sum_{k=1}^{K}\mathbb{E}[\| y^{(k)}_{\hat{t}}  -  \bar{y}_{\hat{t}}  \|^2]  \ , \notag \\
         &  \frac{1}{K}\sum_{k=1}^{K} \mathbb{E}[\| z^{(k)}_{t+1} - \bar{z}_{t+1} \|^2] \leq  \tau\alpha_{z}^2\eta^2 \sum_{\hat{t}=s_{t}\tau}^{t} \frac{1}{K}\sum_{k=1}^{K} \mathbb{E}[\| w^{(k)}_{\hat{t}} - \bar{w}_{\hat{t}}\|^2] \ , \notag \\
         &   \frac{1}{K}\sum_{k=1}^{K}\mathbb{E}[\| r^{(k)}_{t}- \bar{r}_{t}\|^2]  \leq \alpha_{r}\eta\sum_{\hat{t}=s_t\tau}^{t} \frac{1}{K}\sum_{k=1}^{K}\mathbb{E}[\| z^{(k)}_{\hat{t}}  -  \bar{z}_{\hat{t}}  \|^2]  \ , 
\end{align}
where $s_t$ denotes the index of the latest communication round before iteration $t$.

Thus, by setting $s_t = \left\lfloor\frac{t}{\tau} \right\rfloor$, applying Eq.(\ref{eq:consensus_xavg}), (\ref{eq:consensus_pavg}), and (\ref{eq:consensus_qavg}), we obtain:
\begin{align}
	& \quad \rho^2\frac{1}{K}\sum_{k=1}^{K}\mathbb{E}[\|u^{(k)}_{1, t+1} -\bar{u}_{1, t+1}\|^2]  + \frac{1}{K}\sum_{k=1}^{K}\mathbb{E}[\|u^{(k)}_{2, t+1} -\bar{u}_{2, t+1}\|^2]  +\frac{1}{K}\sum_{k=1}^{K}\mathbb{E}[\|u^{(k)}_{3, t+1} -\bar{u}_{3, t+1}\|^2]\notag \\
	& \quad + \rho^2\frac{1}{K}\sum_{k=1}^{K}\mathbb{E}[\|v^{(k)}_{1, t+1} -\bar{v}_{1, t+1}\|^2]  + \frac{1}{K}\sum_{k=1}^{K}\mathbb{E}[\|v^{(k)}_{2, t+1} -\bar{v}_{2, t+1}\|^2]  + \frac{1}{K}\sum_{k=1}^{K}\mathbb{E}[\|w^{(k)}_{t+1} -\bar{w}_{t+1}\|^2]\notag \\
    & \leq 48\tau \alpha_x^2\eta^2L_{\rho,\gamma}^2 \sum_{t'=s_t\tau}^{t-1}(1+\frac{26}{25\tau})^{t-1-t'}\mathbb{E}[\| \bar{u}_{t'} \|^2] + 32\tau\alpha_y^2\eta^2L_{\rho,\gamma}^2 \sum_{t'=s_t\tau}^{t-1}(1+\frac{26}{25\tau})^{t-1-t'}\mathbb{E}[\| \bar{v}_{t'} \|^2] \notag \\
    & \quad  +  8\tau L_{\rho,\gamma}^2 \alpha_z^2\eta^2 \sum_{t'=s_t\tau}^{t-1}(1+\frac{26}{25\tau})^{t-1-t'}\mathbb{E}[\|   \bar{w}_{t'}  \|^2] \notag \\
    & \quad + ( \tau\alpha_{p}\eta S_{p}  + S_{x}) \sum_{t'=s_t\tau}^{t-1}(1+\frac{26}{25\tau})^{t-1-t'}\frac{1}{K}\sum_{k=1}^{K}\mathbb{E}[\| x^{(k)}_{t'} - \bar{x}_{t'} \|^2]\notag \\
    & \quad + ( \tau\alpha_{q}\eta S_{q} + S_{y} ) \sum_{t'=s_t\tau}^{t-1}(1+\frac{26}{25\tau})^{t-1-t'}\frac{1}{K}\sum_{k=1}^{K}\mathbb{E}[\| y^{(k)}_{t'} - \bar{y}_{t'} \|^2]  \notag \\
    & \quad +  (\tau\alpha_{r}\eta S_{r}+ S_{z})\sum_{t'=s_t\tau}^{t-1}(1+\frac{26}{25\tau})^{t-1-t'}\frac{1}{K}\sum_{k=1}^{K}\mathbb{E}[\| z^{(k)}_{t'} - \bar{z}_{t'} \|^2] \notag \\
    & \quad + 72\tau^3\gamma_1^2 \alpha_{x}^2\alpha_{p}^2\eta^4\sum_{t'=s_t\tau}^{t-1}(1+\frac{26}{25\tau})^{t-1-t'} \frac{1}{K}\sum_{k=1}^{K} \mathbb{E}[\| u^{(k)}_{t'} - \bar{u}_{t'} \|^2] \notag \\
    & \quad + 72\tau^3\gamma_1^2\alpha_{y}^2\alpha_{q}^2\eta^4 \sum_{t'=s_t\tau}^{t-1}(1+\frac{26}{25\tau})^{t-1-t'} \frac{1}{K}\sum_{k=1}^{K} \mathbb{E}[\| v^{(k)}_{t'} - \bar{v}_{t'} \|^2]   \notag \\
     & \quad + 72\tau^3\gamma_2^2\alpha_{z}^2\alpha_{r}^2\eta^4 \sum_{t'=s_t\tau}^{t-1}(1+\frac{26}{25\tau})^{t-1-t'} \frac{1}{K}\sum_{k=1}^{K} \mathbb{E}[\| w^{(k)}_{t'} - \bar{w}_{t'} \|^2]   \notag \\
    & \quad + 72\tau\gamma_1^2 \sum_{t'=s_t\tau}^{t-1}(1+\frac{26}{25\tau})^{t-1-t'} \mathbb{E}[\| \bar{p}_{t'+1} - \bar{p}_{t'} \|^2] + 72\tau\gamma_1^2 \sum_{t'=s_t\tau}^{t-1}(1+\frac{26}{25\tau})^{t-1-t'} \mathbb{E}[\| \bar{q}_{t'+1} - \bar{q}_{t'} \|^2]\notag \\
    & \quad  + 72\tau\gamma_2^2 \sum_{t'=s_t\tau}^{t-1}(1+\frac{26}{25\tau})^{t-1-t'} \mathbb{E}[\| \bar{r}_{t'+1} - \bar{r}_{t'} \|^2] + S\sum_{t'=s_t\tau}^{t-1}(1+\frac{26}{25\tau})^{t-1-t'} \ . \notag 
\end{align}
Moreover, since $ (1+\frac{26}{25\tau})^{t-1-t'} \leq  (1+\frac{26}{25\tau})^{\tau} \leq 3$, denoting
\begin{align}
     & S_{x}' = 3\tau^3\alpha_{x}^2\eta^2\Big(\tau\alpha_{p}\eta S_{p}  + S_{x} + 72\tau\gamma_1^2 \alpha_{p}^2\eta^2 \Big) \ , \notag \\
     & S_{y}' = 3\tau^3\alpha_{y}^2\eta^2\Big(\tau\alpha_{q}\eta S_{q}  + S_{y} + 72\tau\gamma_1^2 \alpha_{q}^2\eta^2 \Big) \ , \notag \\
     & S_{z}' = 3\tau^3\alpha_{z}^2\eta^2\Big(\tau\alpha_{r}\eta S_{r}  + S_{y} + 72\tau\gamma_2^2 \alpha_{r}^2\eta^2 \Big)\ , \notag 
\end{align}
applying Lemma~\ref{lemma:consensus-error-variables} and summing up $t'$ from $s_t\tau$ to $(s_t+1)\tau -1$, we obtain
\begin{align}
	& \quad \sum_{t'=s_t\tau}^{(s_t+1)\tau-1}\Bigg(\rho^2\frac{1}{K}\sum_{k=1}^{K}\mathbb{E}[\|u^{(k)}_{1, t'} -\bar{u}_{1, t'}\|^2]  + \frac{1}{K}\sum_{k=1}^{K}\mathbb{E}[\|u^{(k)}_{2, t'} -\bar{u}_{2, t'}\|^2]  +\frac{1}{K}\sum_{k=1}^{K}\mathbb{E}[\|u^{(k)}_{3, t'} -\bar{u}_{3, t'}\|^2] \notag  \\
	& \qquad \qquad + \rho^2\frac{1}{K}\sum_{k=1}^{K}\mathbb{E}[\|v^{(k)}_{1, t'} -\bar{v}_{1, t'}\|^2]  + \frac{1}{K}\sum_{k=1}^{K}\mathbb{E}[\|v^{(k)}_{2, t'} -\bar{v}_{2, t'}\|^2]  + \frac{1}{K}\sum_{k=1}^{K}\mathbb{E}[\|w^{(k)}_{t'} -\bar{w}_{t'}\|^2] \Bigg)\notag \\
    & \leq 144\tau^2 \alpha_x^2\eta^2L_{\rho,\gamma}^2 \sum_{t'=s_t\tau}^{(s_t+1)\tau-1}\mathbb{E}[\| \bar{u}_{t'} \|^2]  + 96\tau^2\alpha_y^2\eta^2L_{\rho,\gamma}^2 \sum_{t'=s_t\tau}^{(s_t+1)\tau-1}\mathbb{E}[\| \bar{v}_{t'} \|^2]  \notag \\
    & \quad +  24\tau^2 \alpha_z^2\eta^2L_{\rho,\gamma}^2 \sum_{t'=s_t\tau}^{t-1}\mathbb{E}[\|   \bar{w}_{t'}  \|^2] +  3\tau S\sum_{t'=s_t\tau}^{(s_t+1)\tau-1} + 216\tau^2\gamma_1^2 \sum_{t'=s_t\tau}^{(s_t+1)\tau-1} \mathbb{E}[\| \bar{p}_{t'+1} - \bar{p}_{t'} \|^2]  \notag \\
    & \quad  + 216\tau^2\gamma_1^2\sum_{t'=s_t\tau}^{(s_t+1)\tau-1}  \mathbb{E}[\| \bar{q}_{t'+1} - \bar{q}_{t'} \|^2] + 216\tau^2\gamma_2^2\sum_{t'=s_t\tau}^{(s_t+1)\tau-1}  \mathbb{E}[\| \bar{r}_{t'+1} - \bar{r}_{t'} \|^2] \notag \\
    & \quad +  3S_{x}'\rho^2\sum_{t'=s_t\tau}^{(s_t+1)\tau-1} \frac{1}{K}\sum_{k=1}^{K}\mathbb{E}[\| u^{(k)}_{1,t'} - \bar{u}_{1,t'} \|^2] +  3S_{x}'\sum_{t'=s_t\tau}^{(s_t+1)\tau-1} \frac{1}{K}\sum_{k=1}^{K}\mathbb{E}[\| u^{(k)}_{2,t'} - \bar{u}_{2,t'} \|^2] \notag \\
    & \quad +  3S_{x}'\sum_{t'=s_t\tau}^{(s_t+1)\tau-1} \frac{1}{K}\sum_{k=1}^{K}\mathbb{E}[\| u^{(k)}_{3,t'} - \bar{u}_{3,t'} \|^2] +  2S_{y}'\rho^2\sum_{t'=s_t\tau}^{(s_t+1)\tau-1} \frac{1}{K}\sum_{k=1}^{K}\mathbb{E}[\| v^{(k)}_{1,t'} - \bar{v}_{1,t'} \|^2] \notag \\
    & \quad +  2S_{y}'\sum_{t'=s_t\tau}^{(s_t+1)\tau-1} \frac{1}{K}\sum_{k=1}^{K}\mathbb{E}[\| v^{(k)}_{2,t'} - \bar{v}_{2,t'} \|^2]  + S_{z}' \sum_{t'=s_t\tau}^{(s_t+1)\tau-1} \frac{1}{K}\sum_{k=1}^{K}\mathbb{E}[\| w^{(k)}_{t'} - \bar{w}_{t'} \|^2]  \ . \notag 
\end{align}
Next, we would like to enforce $S_{x}' \leq \frac{1}{6}$, and make sure $1 - 3S_{x}' \geq \frac{1}{2}$. In particular, we decompose $S_{x}'$ as follows:
\begin{align}
    S_{x}' & = 3\tau^3\alpha_{x}^2\eta^2\Big(\tau\alpha_{p}\eta S_{p}  + S_{x} + 72\tau\gamma_1^2 \alpha_{p}^2\eta^2 \Big) \notag \\
    & = 3\tau^4\alpha_{x}^2\eta^2\Big(\underbrace{\tau\alpha_{p}\eta 36\gamma_1^2\alpha_{p}^2\eta^2}_{S_{x,1}} + \underbrace{\tau\alpha_{p}\eta12\gamma_1^2\beta_{x}^2\eta^4}_{S_{x,2}}  \\
    & + \underbrace{24\eta^4 (\beta_x^2L_{f}^{2}\rho^2 + \beta_y^2 L_{f}^{2}\rho^2 + 4\beta_x^2 L_{g}^{2} +3\beta_y^2 L_{g}^{2} + 3\beta_z^2L_{g}^{2})}_{S_{x,3}} +  \underbrace{12\gamma_1^2\eta^4\beta_x^2 }_{S_{x,4}} +  \underbrace{72\gamma_1^2 \alpha_{p}^2\eta^2}_{S_{x,5}} \Big) \ . \notag 
\end{align}
By imposing $\alpha_{pq} \leq 10\gamma$, and recalling the definition of $\alpha_{pq}$ in Eq.~(\ref{eq:def_y_z_rho}), we obtain $\alpha_{x} \leq \frac{96\times1280\gamma_1^2L_{\rho}^2}{(\gamma_1-L_{\rho})^3\mu} \leq \frac{96\times1280(\gamma_1 + L_{\rho})^4}{(\gamma-L_{\rho})^3\mu}$. Since $\alpha_{x} \leq 10$ holds is enforced by our parameter choice, the above inequality is automatically satisfied. Therefore, with $\eta \leq \frac{1}{360 \tau L_{\rho,\gamma} }$, we have
\begin{align}
     &  3\tau^4\alpha_{x}^2\eta^2 S_{x,1} = 3\tau^4\alpha_{x}^2\eta^2\tau\gamma_1^2 \times 36\alpha_{p}^3\eta^3 \leq \frac{108\times100\tau^5\gamma_1^2\times1000\gamma_1^3}{360^5\tau^5L_{\rho,\gamma}^{5}} \leq \frac{1}{30} \ . \notag 
\end{align}
Similarly, by setting $\beta_{x} \leq \frac{L_{\rho,\gamma}^2}{K}$, we derive
\begin{align}
    &  3\tau^4\alpha_{x}^2\eta^2 S_{x,2} = 36\tau^4\alpha_{x}^2\eta^2\tau\gamma^2\beta_{x}^2\eta^4\alpha_{p}\eta \leq \frac{36\times100\tau^5\gamma_1^2\times10\gamma L_{\rho,\gamma}^4}{360^7\tau^7L_{\rho,\gamma}^{7}\times K^2} \leq \frac{1}{30} \ . \notag 
\end{align}
In addition, by setting $\beta_{y} \leq \frac{L_{\rho,\gamma}}{K}$, $\beta_{z} \leq \frac{L_{\rho,\gamma}}{K}$, we obtain
\begin{align}
    & \quad 3\tau^4\alpha_{x}^2\eta^2 S_{x,3} =  72\tau^4\alpha_{x}^2\eta^6 (\beta_x^2L_{f}^{2}\rho^2 + \beta_y^2 L_{f}^{2}\rho^2 + 4\beta_x^2 L_{g}^{2} + 3\beta_y^2 L_{g}^{2} + 3\beta_z^2L_{g}^{2}) \notag \\
    & \leq \frac{72\times200\tau^4(L_f^2\rho^2+5L_g^2) L_{\rho,\gamma}^4}{360^6\tau^6L_{\rho,\gamma}^{6}\times K^2} \leq \frac{1}{30} \ . \notag 
\end{align}
For the remaining two terms, we also derive
\begin{align}
    & 36\tau^4\alpha_{x}^2\eta^2\gamma_1^2\eta^4\beta_x^2 \leq \frac{3600\tau^4\gamma_1^2L_{\rho,\gamma}^4}{360^6\tau^6L_{\rho,\gamma}^{6}\times K^2} \leq \frac{1}{30} \ , \notag \\
    & 216\tau^4\alpha_{x}^2\eta^26\gamma_1^2\alpha_{p}^2\eta^2 \leq \frac{216\times100\tau^4\gamma_1^2\times100\gamma_1^2}{360^4\tau^4L_{\rho,\gamma}^{4}} \leq \frac{1}{30} \ . \notag 
\end{align}
In a similar manner, we can also make sure $1 - 3S_{y}' \geq \frac{1}{2}$,  $1 - 3S_{z}' \geq \frac{1}{2}$. 

Therefore, we obtain
\begin{align}
	& \quad \frac{1}{T}\sum_{t=0}^{T-1}\Bigg(\rho^2\frac{1}{K}\sum_{k=1}^{K}\mathbb{E}[\|u^{(k)}_{1, t} -\bar{u}_{1, t}\|^2]  + \frac{1}{K}\sum_{k=1}^{K}\mathbb{E}[\|u^{(k)}_{2, t} -\bar{u}_{2, t}\|^2]  +\frac{1}{K}\sum_{k=1}^{K}\mathbb{E}[\|u^{(k)}_{3, t} -\bar{u}_{3, t}\|^2]\notag \\
	& \qquad \qquad + \rho^2\frac{1}{K}\sum_{k=1}^{K}\mathbb{E}[\|v^{(k)}_{1, t} -\bar{v}_{1, t}\|^2]  + \frac{1}{K}\sum_{k=1}^{K}\mathbb{E}[\|v^{(k)}_{2, t} -\bar{v}_{2, t}\|^2]  + \frac{1}{K}\sum_{k=1}^{K}\mathbb{E}[\|w^{(k)}_{t} -\bar{w}_{t}\|^2] \Bigg)\notag \\
    & \leq 288\tau^2 \alpha_x^2\eta^2L_{\rho,\gamma}^2 \frac{1}{T}\sum_{t=0}^{T-1}\mathbb{E}[\| \bar{u}_{t} \|^2]  + 192\tau^2\alpha_y^2\eta^2L_{\rho,\gamma}^2 \frac{1}{T}\sum_{t=0}^{T-1}\mathbb{E}[\| \bar{v}_{t} \|^2] +  48\tau^2 \alpha_z^2\eta^2L_{\rho,\gamma}^2 \frac{1}{T}\sum_{t=0}^{T-1}\mathbb{E}[\|   \bar{w}_{t}  \|^2]  \notag \\
    & \quad  + 432\tau^2\gamma_1^2 \frac{1}{T}\sum_{t=0}^{T-1} \mathbb{E}[\| \bar{p}_{t+1} - \bar{p}_{t} \|^2] + 432\tau^2\gamma_1^2\frac{1}{T}\sum_{t=0}^{T-1}  \mathbb{E}[\| \bar{q}_{t+1} - \bar{q}_{t} \|^2] \notag \\
    & \quad + 432\tau^2\gamma_2^2\frac{1}{T}\sum_{t=0}^{T-1}  \mathbb{E}[\| \bar{r}_{t+1} - \bar{r}_{t} \|^2] + 3\tau S \ . \notag 
\end{align}
The proof is completed.
\end{proof}

\section{Proof of Theorem~\ref{theorem:main}}\label{app:proof}
We propose a novel potential function as follows:
\begin{align}\label{eq:potential_total}
    & \mathcal{P}_{t} = \Psi_{t} + c_1\rho^2\mathcal{G}_{u, 1, t}  + c_2\mathcal{G}_{u, 2, t}   + c_3\mathcal{G}_{u, 3, t}   + c_4\rho^2\mathcal{G}_{v, 1, t} + c_5 \mathcal{G}_{v, 2, t}  + c_6\mathcal{G}_{w, t}  \ . 
\end{align}
In particular, we have
\begin{align}
    & \quad \mathbb{E}[\| \nabla_{x} F(\bar{x}_{t}, \bar{y}_{t}, \bar{z}_{t}; \bar{p}_{t}, \bar{q}_{t}, \bar{r}_{t}) - \bar{u}_{t}\|^2] \notag \\
    & =  \mathbb{E}[\| \rho \nabla_{1} f(\bar{x}_{t}, \bar{y}_{t}) + \nabla_{1} g_{\gamma_1}(\bar{x}_{t}, \bar{y}_{t};\bar{p}_{t}, \bar{q}_{t}) - \nabla_{1} g_{\gamma_2}(\bar{x}_{t}, \bar{z}_{t}; \bar{r}_{t})  - \bar{u}_{t}\|^2] \notag \\
    & \leq 2\mathbb{E}[\| \rho \nabla_{1} f(\bar{x}_{t}, \bar{y}_{t}) + \nabla_{1} g_{\gamma_1}(\bar{x}_{t}, \bar{y}_{t};\bar{p}_{t}, \bar{q}_{t}) - \nabla_{1} g_{\gamma_2}(\bar{x}_{t}, \bar{z}_{t}; \bar{r}_{t}) \notag \\
    & \quad \underbrace{ - \frac{1}{K}\sum_{k=1}^{K}\Big( \rho \nabla_{1} f^{(k)}({x}_{t}^{(k)}, {y}_{t}^{(k)}) + \nabla_{1} g_{\gamma_1}^{(k)}({x}_{t}^{(k)}, {y}_{t}^{(k)};{p}_{t}^{(k)}, {q}_{t}^{(k)}) - \nabla_{1} g_{\gamma_1}^{(k)}({x}_{t}^{(k)}, {z}_{t}^{(k)}; {r}_{t}^{(k)})  \Big) \|^2] }_{T_1} \notag \\
    &   + 2\underbrace{\mathbb{E}[\|\frac{1}{K}\sum_{k=1}^{K} \Big( \rho \nabla_{1} f^{(k)}({x}_{t}^{(k)}, {y}_{t}^{(k)}) + \nabla_{1} g_{\gamma_1}^{(k)}({x}_{t}^{(k)}, {y}_{t}^{(k)};{p}_{t}^{(k)}, {q}_{t}^{(k)}) - \nabla_{1} g_{\gamma_1}^{(k)}({x}_{t}^{(k)}, {z}_{t}^{(k)}; {r}_{t}^{(k)})  \Big)  - \bar{u}_{t} \|^2]}_{T_2} \ . \notag 
\end{align}
The second term can be simply bounded as
\begin{align}
    & T_2 \leq 3\rho^2\mathcal{G}_{u, 1, t}  +  3\mathcal{G}_{u, 2, t}   +  3\mathcal{G}_{u, 3, t} \ . \notag 
\end{align}
For the first term, from the following inequality
\begin{align}
    & \quad  \mathbb{E}[\| \nabla_{1} g_{\gamma_1}(\bar{x}_{t}, \bar{y}_{t};\bar{p}_{t}, \bar{q}_{t})  - \frac{1}{K}\sum_{k=1}^{K}\nabla_{1} g_{\gamma_1}({x}_{t}^{(k)}, {y}_{t}^{(k)};{p}_{t}^{(k)}, {q}_{t}^{(k)}) \|^2] \notag   \\
    & =  \mathbb{E}[\| \nabla_{1} g(\bar{x}_{t}, \bar{y}_{t}) + \gamma(\bar{x}_{t} - \bar{p}_{t}) - \frac{1}{K}\sum_{k=1}^{K} \nabla_{1} g^{(k)}({x}_{t}^{(k)}, {y}_{t}^{(k)}) - \frac{1}{K}\sum_{k=1}^{K} \gamma({x}_{t}^{(k)} - {p}_{t}^{(k)}) \|^2] \notag \\
    & \leq \frac{1}{K}\sum_{k=1}^{K} \mathbb{E}[\| \nabla_{1} g^{(k)}(\bar{x}_{t}, \bar{y}_{t})- \ \nabla_{1} g({x}_{t}^{(k)}, {y}_{t}^{(k)})\|^2] \notag \\
    & \leq L_g^2\frac{1}{K}\sum_{k=1}^{K} \mathbb{E}[\| \bar{x}_{t} - {x}_{t}^{(k)} \|^2] + L_g^2\frac{1}{K}\sum_{k=1}^{K} \mathbb{E}[\| \bar{y}_{t} - {y}_{t}^{(k)} \|^2] \ , \notag 
\end{align}
we obtain
\begin{align}
    & T_1 \leq 3\rho^2 \mathbb{E}[\| \nabla_{1} f(\bar{x}_{t}, \bar{y}_{t}) - \frac{1}{K}\sum_{k=1}^{K}\nabla_{1} f^{(k)}({x}_{t}^{(k)}, {y}_{t}^{(k)}) \|^2] \notag  \\
    & \quad + 3\mathbb{E}[\| \nabla_{1} g_{\gamma_1}(\bar{x}_{t}, \bar{y}_{t};\bar{p}_{t}, \bar{q}_{t})  - \frac{1}{K}\sum_{k=1}^{K}\nabla_{1} g_{\gamma_1}^{(k)}({x}_{t}^{(k)}, {y}_{t}^{(k)};{p}_{t}^{(k)}, {q}_{t}^{(k)}) \|^2]  \notag \\
     & \quad + 3\mathbb{E}[\|  \nabla_{1}  g_{\gamma_2}(\bar{x}_{t}, \bar{z}_{t};  \bar{r}_{t})  - \frac{1}{K}\sum_{k=1}^{K} \nabla_{1} g_{\gamma_2}^{(k)}({x}_{t}^{(k)}, {z}_{t}^{(k)}; {r}_{t}^{(k)}) \|^2]  \notag \\
    & \leq 3(\rho^2L_f^2+2L_g^2) \frac{1}{K}\sum_{k=1}^{K} \mathbb{E}[\| \bar{x}_{t} - {x}_{t}^{(k)} \|^2] + 3(\rho^2L_f^2+L_g^2)\frac{1}{K}\sum_{k=1}^{K} \mathbb{E}[\| \bar{y}_{t} - {y}_{t}^{(k)} \|^2] \notag \\
    & \quad + 3L_g^2 \frac{1}{K}\sum_{k=1}^{K}\mathbb{E}[\| \bar{z}_{t} - {z}_{t}^{(k)} \|^2]   \ .  \notag 
\end{align}
Thus, combing $T_1$ and $T_2$, we derive
\begin{align}
    &  \mathbb{E}[\| \nabla_{x} F(\bar{x}_{t}, \bar{y}_{t}, \bar{z}_{t}; \bar{p}_{t}, \bar{q}_{t}, \bar{r}_{t}) - \bar{u}_{t}\|^2] \leq 6(\rho^2L_f^2+2L_g^2) \frac{1}{K}\sum_{k=1}^{K} \mathbb{E}[\| \bar{x}_{t} - {x}_{t}^{(k)} \|^2] \notag \\
    & + 6(\rho^2L_f^2+L_g^2)\frac{1}{K}\sum_{k=1}^{K} \mathbb{E}[\| \bar{y}_{t} - {y}_{t}^{(k)} \|^2] + 6L_g^2 \frac{1}{K}\sum_{k=1}^{K}\mathbb{E}[\| \bar{z}_{t} - {z}_{t}^{(k)} \|^2]  \notag \\
    & + 6\rho^2\mathcal{G}_{u, 1, t}  +  6\mathcal{G}_{u, 2, t}   +  6\mathcal{G}_{u, 3, t}  \ . \notag 
\end{align}
Similarly, we obtain
\begin{align}
    & \quad \mathbb{E}[\|\nabla_{y}  F(\bar{x}_{t}, \bar{y}_{t}, \bar{z}_{t}; \bar{p}_{t}, \bar{q}_{t}, \bar{r}_{t}) -  \bar{v}_{t}\|^2] =  \mathbb{E}[\|\nabla_{2}  f(\bar{x}_{t}, \bar{y}_{t}) + \nabla_{2}  g_{\gamma_1}(\bar{x}_{t}, \bar{y}_{t}; \bar{p}_{t}, \bar{q}_{t}) -  \bar{v}_{t}\|^2] \notag  \\
    & \leq 4(\rho^2L_f^2+L_g^2) \frac{1}{K}\sum_{k=1}^{K} \mathbb{E}[\| \bar{x}_{t} - {x}_{t}^{(k)} \|^2] +  4(\rho^2L_f^2+L_g^2)\frac{1}{K}\sum_{k=1}^{K} \mathbb{E}[\| \bar{y}_{t} - {y}_{t}^{(k)} \|^2]    \notag \\
    & \quad + 4\rho^2\mathcal{G}_{v, 1, t} +  4\mathcal{G}_{v, 2, t} \ , \notag 
\end{align}
and 
\begin{align}
     & \quad \mathbb{E}[\|\nabla_{z}  F(\bar{x}_{t}, \bar{y}_{t}, \bar{z}_{t}; \bar{p}_{t}, \bar{q}_{t}, \bar{r}_{t}) -  \bar{w}_{t}\|^2] = \mathbb{E}[\| \nabla_{2}  g_{\gamma_2}(\bar{x}_{t}, \bar{z}_{t}; \bar{r}_{t}) -  \bar{w}_{t}\|^2] \notag \\
     & \leq 2L_g^2 \frac{1}{K}\sum_{k=1}^{K} \mathbb{E}[\| \bar{x}_{t} - {x}_{t}^{(k)} \|^2] + 2L_g^2 \frac{1}{K}\sum_{k=1}^{K}\mathbb{E}[\| \bar{z}_{t} - {z}_{t}^{(k)} \|^2] +  2\mathcal{G}_{w, t} \ . \notag 
\end{align}

Then, from Lemma~\ref{lemma:potential_psi}, \ref{lemma:grad_orig}, \ref{lemma:grad_new}, by setting 
\begin{align}\label{eq:c_value}
    c_1 = c_2 = c_3 = \frac{6\alpha_{x}}{\beta_{x}\eta} \ , \quad c_4  = c_5 = \frac{2\alpha_{y}}{\beta_{y}\eta} \ , \quad c_6 = \frac{3\alpha_{z}}{\beta_{z}\eta}  \ , 
\end{align}
we can eliminate the gradient error terms, with denotation 
\begin{align}\label{eq:def_mc}
    & \mathcal{M}_1 = \frac{1}{K} (2L_{f}^{2}\rho^2c_1 +2L_{g}^{2}c_2+ 2L_{g}^{2}c_3  +  2L_{f}^{2}\rho^2c_4 + 2L_{g}^{2}c_5 + 2L_{g}^{2}c_6)\ , \notag \\
    & \mathcal{M}_2 = \frac{1}{K}  (2L_{f}^{2}\rho^2c_1 +  2L_{g}^{2}c_2 + 2L_{f}^{2}\rho^2c_4  + 2L_{g}^{2}c_5)\ , \notag \\
    & \mathcal{M}_3 = \frac{1}{K}( 2L_{g}^{2}c_3 + 2L_{g}^{2}c_6) \ ,  \notag \\
    & \mathcal{M}_4 =  6\alpha_x\eta(\rho^2L_f^2+2L_g^2)+  2\alpha_{y}\eta(\rho^2L_f^2+L_g^2) + 3\alpha_z\eta L_g^2  \ ,  \\
    & \mathcal{M}_5 =  6\alpha_x\eta(\rho^2L_f^2+L_g^2) + 2\alpha_{y}\eta(\rho^2L_f^2+L_g^2)  \ , \notag \\
    & \mathcal{M}_6 = 6\alpha_x\eta L_g^2 + 3\alpha_z\eta L_g^2 \ , \notag \\
    & \mathcal{M}_0 = \frac{2\sigma^2}{K}\Big(\beta_x^2\eta^4\rho^2c_1 + \beta_x^2\eta^4c_2 + \beta_x^2\eta^4c_3  + \beta_y^2\eta^4\rho^2c_4 + \beta_y^2\eta^4c_5  + \beta_z^2\eta^4c_6 \Big) \ , \notag 
\end{align}  
we obtain
\begin{align}
    &  \mathcal{P}_{t+1} -  \mathcal{P}_{t} \leq - \frac{\alpha_x \eta}{8}  \mathbb{E}[\| \nabla_{x} F(\bar{x}_{t}, \bar{y}_{t}, \bar{z}_{t}; \bar{p}_{t}, \bar{q}_{t}, \bar{r}_{t}) \|^2] - \frac{\alpha_{y}\eta}{2} \mathbb{E}[\|\nabla_{y}  F(\bar{x}_{t}, \bar{y}_{t}, \bar{z}_{t}; \bar{p}_{t}, \bar{q}_{t}, \bar{r}_{t}) \|^2] \notag \\
    &  - \frac{\alpha_z\eta}{2} \mathbb{E}[\|\nabla_{z} F(\bar{x}_{t}, \bar{y}_{t}, \bar{z}_{t}; \bar{p}_{t}, \bar{q}_{t}, \bar{r}_{t})\|^2 ]  +   \Big( 16\gamma_1\alpha_{pq}\eta\big(5\omega_2^2\omega_3^2 + C_r\big) - \frac{\gamma_2}{2\alpha_r\eta}\Big) \mathbb{E}[\|\bar{r}_{t+1}-\bar{r}_{t}\|^2] \notag \\
    &  + \Big(1 + \omega^2_1 + 8\alpha_{pq}\eta\omega_5^2\big( 5\omega_2^2 +\frac{2\gamma_2^2}{\mu(\gamma_1-L_\rho)}\big)- \frac{1}{4\alpha_{pq}\eta}\Big)\gamma_1  \big(\mathbb{E}[\|\bar{p}_{t+1}- \bar{p}_{t}\|^2] + \mathbb{E}[\|\bar{q}_{t+1}-\bar{q}_t\|^2]\big) \notag \\
    &  + \Big(4 \alpha_{x}^2\alpha_z\eta^3 L_{g}^2 -  \frac{\alpha_x \eta}{4} \Big) \mathbb{E}[\| \bar{u}_{t}\|^2]   + \Big(\frac{3\alpha_x\alpha_{y}^2 \eta^3L_{\rho}^2}{2}  -  \frac{\alpha_{y}\eta}{4}\Big)  \mathbb{E}[\|  \bar{v}_{t} \|^2]  \notag \\
    & + \Big(\frac{ \alpha^2_z\eta^2 }{2}(L_{F,2}+2L_d+ 2C_\rho)  - \frac{\alpha_z\eta }{16} \Big) \mathbb{E}[\|\bar{w}_{t}\|^2] \notag \\
    &  + \mathcal{M}_1\frac{1}{K}\sum_{k=1}^{K}\mathbb{E}[\| x^{(k)}_{t+1} - x^{(k)}_{t} \|^2] + \mathcal{M}_2\frac{1}{K}\sum_{k=1}^{K}\mathbb{E}[\| y^{(k)}_{t+1} - y^{(k)}_{t} \|^2]  + \mathcal{M}_3\frac{1}{K}\sum_{k=1}^{K}\mathbb{E}[\| z^{(k)}_{t+1} - z^{(k)}_{t} \|^2] \notag \\
    &  +  \mathcal{M}_4 \frac{1}{K}\sum_{k=1}^{K} \mathbb{E}[\| \bar{x}_{t} - {x}_{t}^{(k)} \|^2]  + \mathcal{M}_5 \frac{1}{K}\sum_{k=1}^{K} \mathbb{E}[\| \bar{y}_{t} - {y}_{t}^{(k)} \|^2] + \mathcal{M}_6 \frac{1}{K}\sum_{k=1}^{K}\mathbb{E}[\| \bar{z}_{t} - {z}_{t}^{(k)} \|^2] + \mathcal{M}_0 \ . \notag 
\end{align}

Then, by applying Lemma~\ref{lemma:incremental-x-y-z}, we obtain
\begin{align}
    &  \mathcal{P}_{t+1} -  \mathcal{P}_{t} \leq - \frac{\alpha_x \eta}{8}  \mathbb{E}[\| \nabla_{x} F(\bar{x}_{t}, \bar{y}_{t}, \bar{z}_{t}; \bar{p}_{t}, \bar{q}_{t}, \bar{r}_{t}) \|^2] - \frac{\alpha_{y}\eta}{2} \mathbb{E}[\|\nabla_{y}  F(\bar{x}_{t}, \bar{y}_{t}, \bar{z}_{t}; \bar{p}_{t}, \bar{q}_{t}, \bar{r}_{t}) \|^2] \notag   \\
    & - \frac{\alpha_z\eta}{2} \mathbb{E}[\|\nabla_{z} F(\bar{x}_{t}, \bar{y}_{t}, \bar{z}_{t}; \bar{p}_{t}, \bar{q}_{t}, \bar{r}_{t})\|^2 ]  +   \Big( 16\gamma_1\alpha_{pq}\eta\big(5\omega_2^2\omega_3^2 + C_r\big) - \frac{\gamma_2}{2\alpha_r\eta}\Big) \mathbb{E}[\|\bar{r}_{t+1}-\bar{r}_{t}\|^2] \notag \\
    &  + \Big(1 + \omega^2_1 + 8\alpha_{pq}\eta\omega_5^2\big( 5\omega_2^2 +\frac{2\gamma_2^2}{\mu(\gamma_1-L_\rho)}\big)- \frac{1}{4\alpha_{pq}\eta}\Big)\gamma_1  \big(\mathbb{E}[\|\bar{p}_{t+1}- \bar{p}_{t}\|^2] + \mathbb{E}[\|\bar{q}_{t+1}-\bar{q}_t\|^2]\big) \notag \\
    & + \Big(4 \alpha_{x}^2\alpha_z\eta^3 L_{g}^2 +  6\alpha_x^2\eta^2\mathcal{M}_1 -  \frac{\alpha_x \eta}{4} \Big) \mathbb{E}[\| \bar{u}_{t}\|^2]  + \Big(\frac{3\alpha_x\alpha_{y}^2 \eta^3L_{\rho}^2}{2}   + 4\alpha_y^2\eta^2\mathcal{M}_2 -  \frac{\alpha_{y}\eta}{4}\Big)  \mathbb{E}[\|  \bar{v}_{t} \|^2]  \notag \\
    &  + \Big(\frac{ \alpha^2_z\eta^2 }{2}(L_{F,2}+2L_d+ 2C_\rho) +  2\alpha_z^2\eta^2\mathcal{M}_3 - \frac{\alpha_z\eta }{16}\Big) \mathbb{E}[\|\bar{w}_{t}\|^2]  \notag \\
    &   + 6\alpha_x^2\eta^2\mathcal{M}_1 \Big(\rho^2 \frac{1}{K}\sum_{k=1}^{K}\mathbb{E}[\| u^{(k)}_{1, t} - \bar{u}_{1, t} \|^2] + \frac{1}{K}\sum_{k=1}^{K}\mathbb{E}[\| u^{(k)}_{2, t} - \bar{u}_{2, t} \|^2]  + \frac{1}{K}\sum_{k=1}^{K}\mathbb{E}[\| u^{(k)}_{3, t} - \bar{u}_{3, t} \|^2] \Big)\notag \\
    & + 4\alpha_y^2\eta^2\mathcal{M}_2 \Big(\rho^2\frac{1}{K}\sum_{k=1}^{K}\mathbb{E}[\| v^{(k)}_{1, t} - \bar{v}_{1, t} \|^2]  +  \frac{1}{K}\sum_{k=1}^{K}\mathbb{E}[\| v^{(k)}_{2, t} - \bar{v}_{2, t} \|^2] \Big)\notag \\
    &    +  2\alpha_z^2\eta^2\mathcal{M}_3 \frac{1}{K}\sum_{k=1}^{K}\mathbb{E}[\|    w^{(k)}_{t} -  \bar{w}_{t}   \|^2]  + \mathcal{M}_4 \frac{1}{K}\sum_{k=1}^{K} \mathbb{E}[\| \bar{x}_{t} - {x}_{t}^{(k)} \|^2]  \notag \\
    & +\mathcal{M}_5  \frac{1}{K}\sum_{k=1}^{K} \mathbb{E}[\| \bar{y}_{t} - {y}_{t}^{(k)} \|^2] + \mathcal{M}_6 \frac{1}{K}\sum_{k=1}^{K}\mathbb{E}[\| \bar{z}_{t} - {z}_{t}^{(k)} \|^2] + \mathcal{M}_0 \ . \notag 
\end{align}

By summing over $t$ from $0$ to $T -1$, applying Lemma~\ref{lemma:consensus-error-variables} 
we obtain
\begin{align}
    & \frac{\mathcal{P}_{T} -  \mathcal{P}_{0}}{ T} \leq - \frac{\alpha_x \eta}{8} \frac{1}{T}\sum_{t=0}^{T-1}  \mathbb{E}[\| \nabla_{x} F(\bar{x}_{t}, \bar{y}_{t}, \bar{z}_{t}; \bar{p}_{t}, \bar{q}_{t}, \bar{r}_{t}) \|^2] \notag \\
    & - \frac{\alpha_{y}\eta}{2} \frac{1}{T}\sum_{t=0}^{T-1} \mathbb{E}[\|\nabla_{y}  F(\bar{x}_{t}, \bar{y}_{t}, \bar{z}_{t}; \bar{p}_{t}, \bar{q}_{t}, \bar{r}_{t}) \|^2] - \frac{\alpha_z\eta}{2} \frac{1}{T}\sum_{t=0}^{T-1} \mathbb{E}[\|\nabla_{z} F(\bar{x}_{t}, \bar{y}_{t}, \bar{z}_{t}; \bar{p}_{t}, \bar{q}_{t}, \bar{r}_{t})\|^2 ]  \notag  \\
    &  +   \Big( 16\gamma_1\alpha_{pq}\eta\big(5\omega_2^2\omega_3^2 + C_r\big) - \frac{\gamma_2}{2\alpha_r\eta}\Big) \frac{1}{T}\sum_{t=0}^{T-1} \mathbb{E}[\|\bar{r}_{t+1}-\bar{r}_{t}\|^2] \notag \\
    &  + \Big(1 + \omega^2_1 + 8\alpha_{pq}\eta\omega_5^2\big( 5\omega_2^2 +\frac{2\gamma_2^2}{\mu(\gamma_1-L_\rho)}\big)- \frac{1}{4\alpha_{pq}\eta}\Big)\gamma_1  \frac{1}{T}\sum_{t=0}^{T-1} \big(\mathbb{E}[\|\bar{p}_{t+1}- \bar{p}_{t}\|^2] + \mathbb{E}[\|\bar{q}_{t+1}-\bar{q}_t\|^2]\big) \notag \\
    & + \Big(4 \alpha_{x}^2\alpha_z\eta^3 L_{g}^2 +  6\alpha_x^2\eta^2\mathcal{M}_1 -  \frac{\alpha_x \eta}{4} \Big) \frac{1}{T}\sum_{t=0}^{T-1}\mathbb{E}[\| \bar{u}_{t}\|^2] \notag \\
    &  + \Big(\frac{3\alpha_x\alpha_{y}^2 \eta^3L_{\rho}^2}{2} + 4\alpha_y^2\eta^2\mathcal{M}_2 -  \frac{\alpha_{y}\eta}{4}\Big)  \frac{1}{T}\sum_{t=0}^{T-1}\mathbb{E}[\|  \bar{v}_{t} \|^2]  \notag \\
    &  + \Big(\frac{ \alpha^2_z\eta^2 }{2}(L_{F,2}+2L_d+ 2C_\rho) +  2\alpha_z^2\eta^2\mathcal{M}_3 - \frac{\alpha_z\eta }{16}\Big) \frac{1}{T}\sum_{t=0}^{T-1}\mathbb{E}[\|\bar{w}_{t}\|^2]  \notag \\
	&   + \alpha_x^2\eta^2(6\mathcal{M}_1+3 \tau^2\mathcal{M}_4) \frac{1}{T}\sum_{t=0}^{T-1}\Big(\rho^2\frac{1}{K}\sum_{k=1}^{K}\mathbb{E}[\| u^{(k)}_{1, t} - \bar{u}_{1, t} \|^2] + \frac{1}{K}\sum_{k=1}^{K}\mathbb{E}[\| u^{(k)}_{2, t} - \bar{u}_{2, t} \|^2] \notag \\
    & \qquad \qquad + \frac{1}{K}\sum_{k=1}^{K}\mathbb{E}[\| u^{(k)}_{3, t} - \bar{u}_{3, t} \|^2] \Big) \notag \\
    & + \alpha_y^2\eta^2(4\mathcal{M}_2+2\tau^2\mathcal{M}_5)\frac{1}{T}\sum_{t=0}^{T-1}\Big( \rho^2 \frac{1}{K}\sum_{k=1}^{K}\mathbb{E}[\| v^{(k)}_{1, t} - \bar{v}_{1, t} \|^2] + \frac{1}{K}\sum_{k=1}^{K}\mathbb{E}[\| v^{(k)}_{2, t} - \bar{v}_{2, t} \|^2] \Big)  \notag \\
    & + \alpha_z^2\eta^2(2\mathcal{M}_3+\tau^2\mathcal{M}_6)\frac{1}{T}\sum_{t=0}^{T-1} \frac{1}{K}\sum_{k=1}^{K}\mathbb{E}[\|    w^{(k)}_{t} -  \bar{w}_{t}   \|^2] + \mathcal{M}_{0} \ . \notag 
\end{align}
Moreover, by defining
\begin{align}\label{eq:def_m}
    \mathcal{M} & = \frac{6(\alpha_x^2+\alpha_y^2+\alpha_z^2)\eta^2}{K} (2L_{f}^{2}\rho^2c_1 +2L_{g}^{2}c_2+ 2L_{g}^{2}c_3  +  2L_{f}^{2}\rho^2c_4 + 2L_{g}^{2}c_5 + 2L_{g}^{2}c_6) \notag \\
    & \quad + 3 \tau^2(\alpha_x^2+\alpha_y^2+\alpha_z^2)\eta^2 \Big(6\alpha_x\eta(\rho^2L_f^2+2L_g^2)+  2\alpha_{y}\eta(\rho^2L_f^2+L_g^2) + 6\alpha_z\eta L_g^2\Big) \ , 
\end{align}
and applying Lemma~\ref{lemma:consensus_total}, we obtain
\begin{align}
    & \frac{\mathcal{P}_{T} -  \mathcal{P}_{0}}{ T} \leq - \frac{\alpha_x \eta}{8} \frac{1}{T}\sum_{t=0}^{T-1}  \mathbb{E}[\| \nabla_{x} F(\bar{x}_{t}, \bar{y}_{t}, \bar{z}_{t}; \bar{p}_{t}, \bar{q}_{t}, \bar{r}_{t}) \|^2] \notag \\
    & - \frac{\alpha_{y}\eta}{2} \frac{1}{T}\sum_{t=0}^{T-1} \mathbb{E}[\|\nabla_{y}  F(\bar{x}_{t}, \bar{y}_{t}, \bar{z}_{t}; \bar{p}_{t}, \bar{q}_{t}, \bar{r}_{t}) \|^2]  - \frac{\alpha_z\eta}{2} \frac{1}{T}\sum_{t=0}^{T-1} \mathbb{E}[\|\nabla_{z} F(\bar{x}_{t}, \bar{y}_{t}, \bar{z}_{t}; \bar{p}_{t}, \bar{q}_{t}, \bar{r}_{t})\|^2 ] \notag \\
    &  +   \Big( 16\gamma_1\alpha_{pq}\eta\big(5\omega_2^2\omega_3^2 + C_r\big)+ 432\tau^2\gamma_2^2\mathcal{M} - \frac{\gamma_2}{2\alpha_r\eta}\Big) \frac{1}{T}\sum_{t=0}^{T-1} \mathbb{E}[\|\bar{r}_{t+1}-\bar{r}_{t}\|^2] \notag \\
    &  + \Big(1 + \omega^2_1 + 8\alpha_{pq}\eta\omega_5^2\big( 5\omega_2^2 +\frac{2\gamma_2^2}{\mu(\gamma_1-L_\rho)}\big) + 432\tau^2\gamma_1\mathcal{M}- \frac{1}{4\alpha_{pq}\eta}\Big)\gamma_1  \frac{1}{T}\sum_{t=0}^{T-1} \big(\mathbb{E}[\|\bar{p}_{t+1}- \bar{p}_{t}\|^2] \notag \\
    & \qquad \qquad \qquad + \mathbb{E}[\|\bar{q}_{t+1}-\bar{q}_t\|^2]\big) \notag \\
    & + \Big(4 \alpha_{x}^2\alpha_z\eta^3 L_{g}^2 +  6\alpha_x^2\eta^2\mathcal{M}_1 + 288\tau^2 \alpha_x^2\eta^2L_{\rho,\gamma}^2\mathcal{M} -  \frac{\alpha_x \eta}{4} \Big) \frac{1}{T}\sum_{t=0}^{T-1}\mathbb{E}[\| \bar{u}_{t}\|^2] \notag \\
    &  + \Big( \frac{3\alpha_x\alpha_{y}^2 \eta^3L_{\rho}^2}{2} + 4\alpha_y^2\eta^2\mathcal{M}_2 + 192\tau^2\alpha_y^2\eta^2L_{\rho,\gamma}^2 \mathcal{M} -  \frac{\alpha_{y}\eta}{4}\Big)  \frac{1}{T}\sum_{t=0}^{T-1}\mathbb{E}[\|  \bar{v}_{t} \|^2] + \mathcal{M}_{0}  \notag  \\
    &  + \Big(\frac{ \alpha^2_z\eta^2 }{2}(L_{F,2}+2L_d+ 2C_\rho) +  2\alpha_z^2\eta^2\mathcal{M}_3  +  48\tau^2 \alpha_z^2\eta^2L_{g}^2 \mathcal{M} - \frac{\alpha_z\eta }{16}\Big) \frac{1}{T}\sum_{t=0}^{T-1}\mathbb{E}[\|\bar{w}_{t}\|^2] + 3\tau S\mathcal{M} \ . \notag 
\end{align}
To eliminate $\frac{1}{T}\sum_{t=0}^{T-1} \mathbb{E}[\|\bar{p}_{t+1}- \bar{p}_{t}\|^2] $ and $\frac{1}{T}\sum_{t=0}^{T-1} \mathbb{E}[\|\bar{q}_{t+1}- \bar{q}_{t}\|^2] $, we enforce
\begin{align}
   1 + \omega^2_1 + 8\alpha_{pq}\eta\omega_5^2\big( 5\omega_2^2 +\frac{2\gamma_2^2}{\mu(\gamma_1-L_\rho)}\big) + 432\tau^2\gamma_1^2\mathcal{M} - \frac{1}{4\alpha_{pq}\eta} \leq -\frac{1}{8\alpha_{pq}\eta} \ . \notag 
\end{align}
Specifically, we enforce
\begin{align}
   &  1 + \omega^2_1 \leq \frac{1}{24\alpha_{pq}\eta}  \ , \quad 8\alpha_{pq}\eta\omega_5^2\big( 5\omega_2^2 +\frac{2\gamma_2^2}{\mu(\gamma_1-L_\rho)}\big) \leq \frac{1}{24\alpha_{pq}\eta}  \ , \quad  432\tau^2\gamma\mathcal{M} \leq \frac{1}{24\alpha_{pq}\eta}  \ .  \notag 
\end{align}
By solving the first two inequalities, we derive
\begin{align}
    & \alpha_{x} \leq  \min\{ \frac{1}{24c_{\alpha_{pq}}(1 + \omega^2_1)} \ , \frac{1}{8\sqrt{3}c_{\alpha_{pq}}\omega_5\sqrt{5\omega_2^2+\frac{2\gamma_2^2}{\mu(\gamma_1-L_\rho)}}} \} \ . \notag 
\end{align}
To solve the last inequality, from Eq.~(\ref{eq:eta_alpha}), we derive:
\begin{align}
    (\alpha_x^2+\alpha_y^2+\alpha_z^2)\eta^2 \leq \frac{300}{360^2\tau^2L_{\rho,\gamma}^2} \ , \notag 
\end{align}
and from the definition of $\mathcal{M}$ in Eq.~(\ref{eq:def_m}), we enforce: 
\begin{align}\label{eq:M_alpha_rho}
    & 432\tau^2\gamma_1\frac{6\times300(2L_{f}^{2}\rho^2c_1 +2L_{g}^{2}c_2+ 2L_{g}^{2}c_3) }{360^2\tau^2L_{\rho,\gamma}^2K} \leq \frac{1}{24\times6\alpha_{pq}\eta} \ , \notag \\
    & 432\tau^2\gamma_1\frac{6\times300(2L_{f}^{2}\rho^2c_4 + 2L_{g}^{2}c_5 )}{360^2\tau^2L_{\rho,\gamma}^2K}  \leq \frac{1}{24\times6\alpha_{pq}\eta} \ , \notag \\
    & 432\tau^2\gamma_1 \frac{6\times300\times2L_{g}^{2}c_6}{360^2\tau^2L_{\rho,\gamma}^2K}\leq \frac{1}{24\times6\alpha_{pq}\eta} \ , \notag \\
    & 432\tau^2\gamma_1\frac{300\times 3\tau^2 \times 6\alpha_x\eta(\rho^2L_f^2+2L_g^2)}{360^2\tau^2L_{\rho,\gamma}^2}   \leq \frac{1}{24\times6\alpha_{pq}\eta} \ , \notag \\
    & 432\tau^2\gamma_1 \frac{300\times 3 \tau^2\times 2\alpha_{y}\eta(\rho^2L_f^2+L_g^2) }{360^2\tau^2L_{\rho,\gamma}^2}    \leq \frac{1}{24\times6\alpha_{pq}\eta} \ , \notag \\
    & 432\tau^2\gamma_1 \frac{300\times3 \tau^2\times3\alpha_z\eta L_g^2 }{360^2\tau^2L_{\rho,\gamma}^2}   \leq \frac{1}{24\times6\alpha_{pq}\eta} \ .  
\end{align}

To solve these inequalities, we apply the value of $\{c_i\}_{i=1}^6$ in Eq.~(\ref{eq:c_value}), with $\alpha_{y} = \alpha_{x}$, $\alpha_{z} = \alpha_{x}c_{\alpha_{z}}$, $\alpha_{pq} = \alpha_{x}c_{\alpha_{pq}} $, and obtain
\begin{align}
     \alpha_{x} \leq \min\{ \frac{\sqrt{\beta_{x}K}}{72\sqrt{2\gamma_1 c_{\alpha_{pq}}}} \ , \frac{\sqrt{\beta_{y}K}}{24\sqrt{6\gamma_1 c_{\alpha_{pq}}}} \ ,  \frac{\sqrt{\beta_{z}K}}{72\sqrt{\gamma_1 c_{\alpha_{z}}c_{\alpha_{pq}}}} \ , \frac{1}{36\tau\eta\sqrt{2\gamma_1 c_{\alpha_{pq}}}}  \ ,\frac{1}{36\tau\eta\sqrt{\gamma_1 c_{\alpha_{z}}c_{\alpha_{pq}}}} \} \ . \notag 
\end{align}

Similarly, to eliminate $\frac{1}{T}\sum_{t=0}^{T-1} \mathbb{E}[\|\bar{r}_{t+1}- \bar{r}_{t}\|^2]$, we enforce
\begin{align}
  16\gamma_1\alpha_{pq}\eta\big(5\omega_2^2\omega_3^2 + C_r\big)+ 432\tau^2\gamma_2^2\mathcal{M} - \frac{\gamma_2}{2\alpha_r\eta} \leq -\frac{\gamma_2}{4\alpha_{r}\eta} \ . \notag 
\end{align}
Specifically, from $C_r = \frac{4\gamma_2^2}{\mu(\gamma_1-L_\rho)}\big(\omega_3^2+\frac{(1-\alpha_r\eta)^2}{\alpha_r^2\eta^2}\big)$, we enforce
\begin{align}
   16\gamma_1\alpha_{pq}\eta\times5\omega_2^2\omega_3^2 \leq \frac{\gamma_2}{16\alpha_{r}\eta} \ , & \quad \frac{16\gamma_1\alpha_{pq}\eta \times 4\gamma_2^2}{\mu(\gamma_1-L_\rho)}\omega_3^2 \leq \frac{\gamma_2}{16\alpha_{r}\eta} \ , \notag \\
    \frac{16\gamma_1\alpha_{pq}\eta\times 4\gamma_2^2}{\mu(\gamma_1-L_\rho)}\frac{(1-\alpha_r\eta)^2}{\alpha_r^2\eta^2} \leq \frac{\gamma_2}{16\alpha_{r}\eta} \ , & \quad 432\tau^2\gamma_2^2\mathcal{M}  \leq \frac{\gamma_2}{16\alpha_{r}\eta}  \ .  \notag 
\end{align}
To solve the first and second inequality, we obtain
\begin{align}
    \alpha_x \leq \Big\{ \frac{(\gamma_1-L_{\rho})(\gamma_2-L_{\rho})}{16\sqrt{5\gamma_1\gamma_2c_{\alpha_{pq}}c_{\alpha_{r}}}L_g} \ , \frac{\sqrt{\mu(\gamma_1-L_\rho)}(\gamma_2-L_{\rho})}{32\gamma_2\sqrt{\gamma_1\gamma_2c_{\alpha_{pq}}c_{\alpha_{r}}}} \Big\} \ . \notag 
\end{align}
To solve the third inequality, from the third inequality in Eq.~(\ref{eq:nabla_xF_alpha_x}), we derive
\begin{align}
    \frac{16\gamma_1\alpha_{pq}\eta\times 4\gamma_2^2}{\mu(\gamma_1-L_\rho)}\frac{(1-\alpha_r\eta)^2}{\alpha_r^2\eta^2} \leq \frac{\alpha_x\eta(\gamma_1-L_{\rho})^2(\gamma_2-L_{\rho})^2}{72\times24L_g^2} \leq \frac{\gamma_2}{16\alpha_{r}\eta} \ , \notag
\end{align}
thus, we obtain $\alpha_x\leq \frac{6\sqrt{3\gamma_2}L_g}{\sqrt{c_{\alpha_{r}}}(\gamma_1-L_{\rho})(\gamma_2-L_{\rho})}$. 
To solve the last inequality, following Eq.~(\ref{eq:M_alpha_rho}), we obtain
\begin{align}
     \alpha_{x} \leq \min\Big\{ \frac{\sqrt{\beta_{x}K}}{48\sqrt{3\gamma_2 c_{\alpha_{r}}}} \ , \frac{\sqrt{\beta_{y}K}}{48\sqrt{\gamma_2 c_{\alpha_{r}}}} \ ,  \frac{\sqrt{\beta_{z}K}}{24\sqrt{6\gamma_2 c_{\alpha_{z}}c_{\alpha_{r}}}} \ , \frac{1}{24\tau\eta\sqrt{3\gamma_2 c_{\alpha_{r}}}}  \ ,\frac{1}{12\tau\eta\sqrt{6\gamma_2 c_{\alpha_{z}}c_{\alpha_{r}}}} \Big\} \ . \notag
\end{align}

Then, to eliminate $\frac{1}{T}\sum_{t=0}^{T-1}\mathbb{E}[\| \bar{u}_{t}\|^2]$, we enforce
\begin{align}
    4 \alpha_{x}^2\alpha_z\eta^3 L_{g}^2 +  6\alpha_x^2\eta^2\mathcal{M}_1 + 288\tau^2 \alpha_x^2\eta^2L_{\rho,\gamma}^2\mathcal{M} -  \frac{\alpha_x \eta}{4} \leq -\frac{\alpha_x \eta}{8} \ . \notag 
\end{align}
In particular, we enforce
\begin{align}\label{eq:enforce_u}
     4\alpha_{x}^2\alpha_z\eta^3 L_{g}^2 \leq \frac{\alpha_x \eta}{48} \ ,  \quad 6\alpha_x^2\eta^2\mathcal{M}_1  \leq \frac{\alpha_x \eta}{24} \ ,  \quad 288\tau^2 \alpha_x^2\eta^2L_{\rho,\gamma}^2\mathcal{M} \leq \frac{\alpha_x \eta}{24} \ .  
\end{align}
By solving the first inequality, we derive $ \alpha_{x} \leq \frac{1}{8\sqrt{3c_{\alpha_z}}L_{g}}$.
To solve the second inequality, from the definition of $\mathcal{M}_1$ in Eq. (\ref{eq:def_mc}), we enforce:
\begin{align}
    & \frac{6\alpha_x^2\eta^2}{K} (2L_{f}^{2}\rho^2c_1 +2L_{g}^{2}c_2+ 2L_{g}^{2}c_3) \leq \frac{\alpha_x \eta}{72} \ , \notag \\
    & \frac{6\alpha_x^2\eta^2}{K} (2L_{f}^{2}\rho^2c_4 + 2L_{g}^{2}c_5)  \leq \frac{\alpha_x \eta}{72} \ , \qquad \frac{6\alpha_x^2\eta^2}{K} 2L_{g}^{2}c_6 \leq \frac{\alpha_x \eta}{72} \ . \notag
\end{align}

To solve these inequalities, we apply the value of $\{c_i\}_{i=1}^6$ in Eq.~(\ref{eq:c_value}), and obtain
\begin{align}
    \alpha_{x} \leq \min\{ \frac{\sqrt{\beta_{x}K}}{72L_{\rho}} \ , \frac{\sqrt{\beta_{y}K}}{24\sqrt{3}L_{\rho}} \ , \frac{\sqrt{\beta_{z}K}}{36\sqrt{2c_{\alpha_{z}}}L_g} \} \ .  \notag  
\end{align}
Similarly, to solve the third inequality, following Eq.~(\ref{eq:M_alpha_rho}), we obtain
\begin{align}
    \alpha_{x} \leq \{  \frac{\sqrt{\beta_{x}K}}{48\sqrt{3}L_{\rho}} \ , \frac{\sqrt{\beta_{y}K}}{48L_{\rho}} \ , \frac{\sqrt{\beta_{z}K}}{24\sqrt{6c_{\alpha_{z}}}L_g}  \ ,  \frac{1}{24\sqrt{3}\tau\eta L_{\rho}} \ ,\frac{1}{12\sqrt{6c_{\alpha_{z}}}\tau\eta L_g} \} \ .  \notag 
\end{align}
In addition, to eliminate $\frac{1}{T}\sum_{t=0}^{T-1}\mathbb{E}[\| \bar{v}_{t}\|^2]$, we enforce
\begin{align}
    \frac{3\alpha_x\alpha_{y}^2 \eta^3L_{\rho}^2}{2} + 4\alpha_y^2\eta^2\mathcal{M}_2 + 192\tau^2\alpha_y^2\eta^2( L_{f}^2\rho^2 + 3L_g^2+3\gamma^2) \mathcal{M} -  \frac{\alpha_{y}\eta}{4} \leq  - \frac{\alpha_{y}\eta}{8}  \ .  \notag 
\end{align}
Specifically, we enforce
\begin{align}
   & \frac{3\alpha_x\alpha_{y}^2 \eta^3L_{\rho}^2}{2} \leq \frac{\alpha_{y}\eta}{48} \ , \quad   4\alpha_y^2\eta^2\mathcal{M}_2 \leq \frac{\alpha_{y}\eta}{24}\ , \notag \\
   & 192\tau^2\alpha_y^2\eta^2( L_{f}^2\rho^2 + 3L_g^2+3\gamma^2) \mathcal{M} \leq \frac{\alpha_{y}\eta}{24} \ .  \notag 
\end{align}
From the definitions of $\mathcal{M}_2$ and $\mathcal{M}$, together with the condition $\alpha_y = \alpha_x$, and noting that Eq. (\ref{eq:enforce_u}) holds, the last two inequalities above follow directly.

By solving the first inequality above, we derive $\alpha_{x} \leq \frac{1}{6\sqrt{2}L_{\rho}}$.

At last, to eliminate $\frac{1}{T}\sum_{t=0}^{T-1}\mathbb{E}[\| \bar{w}_{t}\|^2]$, we enforce
\begin{align}
    \frac{ \alpha^2_z\eta^2 }{2}(L_{F,2}+2L_d+ 2C_\rho) +  2\alpha_z^2\eta^2\mathcal{M}_3  +  48\tau^2 \alpha_z^2\eta^2L_{g}^2 \mathcal{M} - \frac{\alpha_z\eta }{16} \leq - \frac{\alpha_z\eta }{32} \ .  \notag 
\end{align}
In particular, we enforce
\begin{align}
    \frac{ \alpha^2_z\eta^2 }{2} (L_F + 2L_d ) \leq \frac{\alpha_z\eta }{32\times6} \ , & \quad  \alpha_z^2\eta^2C_\rho  \leq  \frac{\alpha_z\eta }{32\times6} \ , \notag \\
    \quad 2\alpha_z^2\eta^2\mathcal{M}_3 \leq  \frac{\alpha_z\eta }{96} \ , & \quad  48\tau^2 \alpha_z^2\eta^2L_{g}^2 \mathcal{M}  \leq  \frac{\alpha_z\eta }{96}  \ .  \notag 
\end{align}
By solving the first inequality above, we obtain $ \alpha_{x} \leq \frac{1}{24c_{\alpha_{z}}(L_F + 2L_d)}$.

To solve the second inequality, from Eq.~(\ref{eq:nabla_xF_alpha_x}), we obtain $ \alpha_x \leq \frac{1}{2\sqrt{3}c_{\alpha_{z}}(\gamma_2-L_\rho)}$.

To solve the third inequality, from the definition of $\mathcal{M}_3$ in Eq. (\ref{eq:def_mc}), we enforce:
\begin{align}
    \frac{2\alpha_z^2\eta^2}{K}2L_{g}^{2}c_3 \leq  \frac{\alpha_z\eta }{192} \ , \quad \frac{2\alpha_z^2\eta^2}{K}2L_{g}^{2}c_6 \leq  \frac{\alpha_z\eta }{192} \ ,  \notag 
\end{align}
and we obtain
\begin{align}
    \alpha_{x} \leq \min\{ \frac{\sqrt{\beta_{x}K}}{48\sqrt{2c_{\alpha_{z}}}L_g} \ , \frac{\sqrt{\beta_{z}K}}{24\sqrt{2}c_{\alpha_{z}}L_g} \} \ .  \notag 
\end{align}
Similarly, to solve the last inequality, following Eq.~(\ref{eq:M_alpha_rho}), we obtain
\begin{align}
    \alpha_{x} \leq \min \{\frac{\sqrt{\beta_{x}K}}{48\sqrt{2c_{\alpha_{z}}}L_g} \ ,  \frac{\sqrt{\beta_{y}K}}{16\sqrt{6c_{\alpha_{z}}}L_g} \ ,   \frac{\sqrt{\beta_{z}K}}{48c_{\alpha_{z}}L_g} \ ,  \frac{1}{24\sqrt{2c_{\alpha_{z}}}\tau\eta L_g} \ , \frac{1}{24c_{\alpha_{z}}\tau\eta L_g}  \} \ . \notag 
\end{align}

In summary, by setting
\begin{align}\label{eq:upper-bound-beta-x}
    & \alpha_{x} \leq \min \Big\{\frac{9}{80\gamma_1} \ , \frac{\sqrt{\mu(\gamma_1-L_{\rho})^3}(\gamma_2-L_{\rho})^2}{24\times16\sqrt{6\gamma_1c_{\alpha_{pq}}}L_gc_{\alpha_r}\gamma_2^2}  \ , \frac{1}{24c_{\alpha_{pq}}(1 + \omega^2_1)} \ , \frac{1}{8\sqrt{3}c_{\alpha_{pq}}\omega_5\sqrt{5\omega_2^2+\frac{2\gamma_2^2}{\mu(\gamma_1-L_\rho)}}} \notag \\
    & \qquad \frac{\sqrt{\beta_{x}K}}{72\sqrt{2\gamma_1 c_{\alpha_{pq}}}} \ , \frac{\sqrt{\beta_{y}K}}{24\sqrt{6\gamma_1 c_{\alpha_{pq}}}} \ ,  \frac{\sqrt{\beta_{z}K}}{72\sqrt{\gamma_1 c_{\alpha_{z}}c_{\alpha_{pq}}}} \ , \frac{1}{36\tau\eta\sqrt{2\gamma_1 c_{\alpha_{pq}}}}  \ ,\frac{1}{36\tau\eta\sqrt{\gamma_1 c_{\alpha_{z}}c_{\alpha_{pq}}}} \ , \notag \\
    & \qquad \frac{(\gamma_1-L_{\rho})(\gamma_2-L_{\rho})}{16\sqrt{5\gamma_1\gamma_2c_{\alpha_{pq}}c_{\alpha_{r}}}L_g} \ , \frac{\sqrt{\mu(\gamma_1-L_\rho)}(\gamma_2-L_{\rho})}{32\gamma_2\sqrt{\gamma_1\gamma_2c_{\alpha_{pq}}c_{\alpha_{r}}}} \ ,  \frac{6\sqrt{3\gamma_2}L_g}{\sqrt{c_{\alpha_{r}}}(\gamma_1-L_{\rho})(\gamma_2-L_{\rho})} \ , \notag \\
    & \qquad \frac{\sqrt{\beta_{x}K}}{48\sqrt{3\gamma_2 c_{\alpha_{r}}}} \ , \frac{\sqrt{\beta_{y}K}}{48\sqrt{\gamma_2 c_{\alpha_{r}}}} \ ,  \frac{\sqrt{\beta_{z}K}}{24\sqrt{6\gamma_2 c_{\alpha_{z}}c_{\alpha_{r}}}} \ , \frac{1}{24\tau\eta\sqrt{3\gamma_2 c_{\alpha_{r}}}}  \ ,\frac{1}{12\tau\eta\sqrt{6\gamma_2 c_{\alpha_{z}}c_{\alpha_{r}}}} \ , \notag \\
    &  \qquad \frac{1}{8\sqrt{3c_{\alpha_z}}L_{g}} \ , \frac{\sqrt{\beta_{x}K}}{48\sqrt{3}L_{\rho}} \ , \frac{\sqrt{\beta_{y}K}}{48L_{\rho}} \ , \frac{\sqrt{\beta_{z}K}}{24\sqrt{6c_{\alpha_{z}}}L_g} \ ,  \frac{1}{24\sqrt{3}\tau\eta L_{\rho}} \ ,\frac{1}{12\sqrt{6c_{\alpha_{z}}}\tau\eta L_g} \ , \notag \\
    & \qquad \frac{1}{6\sqrt{2}L_{\rho}} \ , \frac{1}{24c_{\alpha_{z}}(L_F + 2L_d)} \ , \frac{1}{2\sqrt{3}c_{\alpha_{z}}(\gamma_2-L_\rho)} \ , \frac{\sqrt{\beta_{x}K}}{48\sqrt{2c_{\alpha_{z}}}L_g} \ ,  \frac{\sqrt{\beta_{y}K}}{16\sqrt{6c_{\alpha_{z}}}L_g} \ ,   \frac{\sqrt{\beta_{z}K}}{48c_{\alpha_{z}}L_g} \ ,  \notag \\
    & \qquad \frac{1}{24\sqrt{2c_{\alpha_{z}}}\tau\eta L_g} \ , \frac{1}{24c_{\alpha_{z}}\tau\eta L_g} \Big\} \ ,  \notag \\
    & \alpha_y = \alpha_x \ , \quad  \alpha_z = \alpha_x\underbrace{\frac{(\gamma_1 - L_{\rho})^2}{72L^2_{g}}}_{c_{\alpha_z}=O(1)} \ , \quad \alpha_{pq} = \alpha_{x}\underbrace{\frac{(\gamma_1-L_{\rho})^4(\gamma_2-L_\rho)^2\mu}{72\times768\gamma_1L_{g}^2(5L_g^2\mu+16\gamma_2^2(\gamma_1-L_\rho))}}_{c_{\alpha_{pq}} = O(1/\kappa)} \ , \notag \\
    & \alpha_r = \alpha_{x}\underbrace{\frac{(\gamma_1-L_{\rho})^4(\gamma_2-L_\rho)^4}{72\times24\times640\gamma_1\gamma_2^2 L_g^4}}_{{c_{\alpha_r}=O(1)}} \ , 
\end{align}
we obtain
\begin{align}
    & \frac{\mathcal{P}_{T} -  \mathcal{P}_{0}}{ T} \leq - \frac{\alpha_x \eta}{8} \frac{1}{T}\sum_{t=0}^{T-1}  \mathbb{E}[\| \nabla_{x} F(\bar{x}_{t}, \bar{y}_{t}, \bar{z}_{t}; \bar{p}_{t}, \bar{q}_{t}, \bar{r}_{t}) \|^2]  \notag \\
    & - \frac{\alpha_{y}\eta}{2} \frac{1}{T}\sum_{t=0}^{T-1} \mathbb{E}[\|\nabla_{y}  F(\bar{x}_{t}, \bar{y}_{t}, \bar{z}_{t}; \bar{p}_{t}, \bar{q}_{t}, \bar{r}_{t}) \|^2] - \frac{\alpha_z\eta}{2} \frac{1}{T}\sum_{t=0}^{T-1} \mathbb{E}[\|\nabla_{z} F(\bar{x}_{t}, \bar{y}_{t}, \bar{z}_{t}; \bar{p}_{t}, \bar{q}_{t}, \bar{r}_{t})\|^2 ]  \notag \\
    & - \frac{\gamma_1}{8\alpha_{pq}\eta} \frac{1}{T}\sum_{t=0}^{T-1} \big( \mathbb{E}[\|\bar{p}_{t+1}- \bar{p}_{t}\|^2] + \mathbb{E}[\|\bar{q}_{t+1}-\bar{q}_{t}\|^2]\big) - \frac{\gamma_2}{4\alpha_{r}\eta}\mathbb{E}[\|\bar{r}_{t+1}-\bar{r}_{t}\|^2] \notag \\
    &  + \mathcal{M}_{0} + 3\tau S\mathcal{M} \ . \notag 
\end{align}
From 
\begin{align}
    & \rho^2\mathbb{E}[\| \nabla_{x} f_{\rho}(\bar{x}_{t}, \bar{y}_{t}, \bar{z}_{t}) \|^2] = 2\mathbb{E}[\| \nabla_{x} F(\bar{x}_{t}, \bar{y}_{t}, \bar{z}_{t}; \bar{p}_{t}, \bar{q}_{t}, \bar{r}_{t}) \|^2] + 2 \gamma_1^2\mathbb{E}[\|\bar{x}_{t}- \bar{p}_{t}\|^2]\ , \notag \\
    & \rho^2\mathbb{E}[\| \nabla_{y} f_{\rho}(\bar{x}_{t}, \bar{y}_{t}, \bar{z}_{t}) \|^2] \leq 2\mathbb{E}[\| \nabla_{y} F(\bar{x}_{t}, \bar{y}_{t}, \bar{z}_{t}; \bar{p}_{t}, \bar{q}_{t}, \bar{r}_{t}) \|^2] + 2\gamma_1^2 \mathbb{E}[\|\bar{y}_{t}- \bar{q}_{t}\|^2] \ , \notag \\
    & \rho^2\mathbb{E}[\| \nabla_{z} f_{\rho}(\bar{x}_{t}, \bar{y}_{t}, \bar{z}_{t}) \|^2] \leq 2\mathbb{E}[\| \nabla_{z} F(\bar{x}_{t}, \bar{y}_{t}, \bar{z}_{t}; \bar{p}_{t}, \bar{q}_{t}, \bar{r}_{t}) \|^2] + 2\gamma_2^2\mathbb{E}[\|\bar{z}_{t}- \bar{r}_{t}\|^2]\ , \notag 
\end{align}
we obtain
\begin{align}\label{eq:final_bound}
    &  \quad  \frac{1}{T}\sum_{t=0}^{T-1} \Bigg(  \rho^2\mathbb{E}[\| \nabla_{x} f_{\rho}(\bar{x}_{t}, \bar{y}_{t}, \bar{z}_{t}) \|^2] +  \rho^2\mathbb{E}[\|\nabla_{y}   f_{\rho}(\bar{x}_{t}, \bar{y}_{t}, \bar{z}_{t}) \|^2] +  \rho^2\kappa\mathbb{E}[\|\nabla_{z}  f_{\rho}(\bar{x}_{t}, \bar{y}_{t}, \bar{z}_{t}) \|^2 ] \Bigg) \notag \\
    & \leq  \frac{1}{T}\sum_{t=0}^{T-1} \Bigg(  2\mathbb{E}[\| \nabla_{x} F(\bar{x}_{t}, \bar{y}_{t}, \bar{z}_{t}; \bar{p}_{t}, \bar{q}_{t}, \bar{r}_{t}) \|^2] +  2\mathbb{E}[\|\nabla_{y}  F(\bar{x}_{t}, \bar{y}_{t}, \bar{z}_{t}; \bar{p}_{t}, \bar{q}_{t}, \bar{r}_{t}) \|^2]   \notag \\
    & \qquad \qquad \quad  +  2\kappa \mathbb{E}[\|\nabla_{z} F(\bar{x}_{t}, \bar{y}_{t}, \bar{z}_{t}; \bar{p}_{t}, \bar{q}_{t}, \bar{r}_{t})\|^2 ]   + 2\gamma_1^2\mathbb{E}[\|\bar{x}_{t}- \bar{p}_{t}\|^2] + 2\gamma_1^2 \mathbb{E}[\|\bar{y}_{t}- \bar{q}_{t}\|^2] \notag \\
    & \qquad \qquad \quad  + 2\gamma_2^2\kappa \mathbb{E}[\|\bar{z}_{t}- \bar{r}_{t}\|^2] \Bigg) \notag \\
	& \leq \max\{\frac{16}{\alpha_{x}\eta}  \ , \frac{16\kappa}{\alpha_{x}\eta c_{\alpha_{z}}} \ , \frac{16\gamma_1}{\alpha_{x}\eta c_{\alpha_{pq}}} \ , \frac{16\kappa\gamma_2}{\alpha_{r}\eta c_{\alpha_{r}}} \} \Big( \frac{\mathcal{P}_{0} -  \mathcal{P}_{T}}{ T} + \mathcal{M}_{0} + 3\tau S\mathcal{M} \Big) \ .
\end{align}
When $t=0$, we have
\begin{align}
    & \mathcal{P}_{0} = \Psi_{0}+ \frac{6\alpha_{x}}{\beta_{x}\eta}\rho^2\mathcal{G}_{u, 1, 0}  +  \frac{6\alpha_{x}}{\beta_{x}\eta}\mathcal{G}_{u, 2, 0}    + \frac{6\alpha_{x}}{\beta_{x}\eta}\mathcal{G}_{u, 3, 0}+ \frac{2\alpha_{y}}{\beta_{y}\eta}\rho^2\mathcal{G}_{v, 1, 0}  + \frac{2\alpha_{y}}{\beta_{y}\eta} \mathcal{G}_{v, 2, 0}  + \frac{6\alpha_{z}}{\beta_{z}\eta} \mathcal{G}_{w, 0}  \notag  \\
    & \leq \Psi_{0} + \frac{6\alpha_{x}}{\beta_{x}\eta}\frac{\sigma^2}{BK}(\rho^2+2) + \frac{2\alpha_{x}}{\beta_{y}\eta}\frac{\sigma^2}{BK}(\rho^2+1) + \frac{3\alpha_{x}c_{\alpha_{z}}}{\beta_{z}\eta}  \frac{\sigma^2}{BK} \ . 
\end{align}
Therefore, taking $O$ notation, the first term in Eq.~(\ref{eq:final_bound}) can be shown as 
\begin{align}
    & \quad \max\{\frac{16}{\alpha_{x}\eta} \ , \frac{16\kappa}{\alpha_{x}\eta c_{\alpha_{z}}} \ ,  \frac{16\gamma_1}{\alpha_{x}\eta c_{\alpha_{pq}}} \ , \frac{16\kappa\gamma_2}{\alpha_{r}\eta c_{\alpha_{r}}} \} \frac{\mathcal{P}_{0} -  \mathcal{P}_{T}}{ T}   \leq   O\Big(\frac{\Psi_{0} \kappa}{\alpha_{x}\eta T}\Big) + O\Big( \frac{\kappa }{\beta \eta^2BKT }\Big) \ . \notag 
\end{align}
Similarly, the second term in Eq.~(\ref{eq:final_bound}) can be shown as
\begin{align}
    & \mathcal{M}_0 = \frac{2\sigma^2}{K}\Big(\beta_x^2\eta^4\frac{6\alpha_{x}}{\beta_{x}\eta}(\rho^2 + 2)  + \beta_y^2\eta^4(\rho^2+1) \frac{2\alpha_{y}}{\beta_{y}\eta}  + \beta_z^2\eta^4\frac{3\alpha_{z}}{\beta_{z}\eta} \Big) \ , \notag \\
    & \max\{\frac{16}{\alpha_{x}\eta} \ , \frac{16\kappa}{\alpha_{x}\eta c_{\alpha_{z}}} \ ,  \frac{16\gamma_1}{\alpha_{x}\eta c_{\alpha_{pq}}} \ , \frac{16\kappa\gamma_2}{\alpha_{r}\eta c_{\alpha_{r}}}\} \mathcal{M}_{0} \leq O\Big( \frac{\kappa\beta\eta^2}{K}\Big) \ . 
\end{align}
At last, the third term in Eq.~(\ref{eq:final_bound}) can be shown as
\begin{align}
     & S = 12\tau (\beta_x^2+\beta_y^2)\eta^4 \delta^2\rho^2 + 8\tau (\beta_x^2+\beta_y^2)\eta^4  \sigma^{2}\rho^2 \notag + 12\tau(4\beta_x^2 + 3\beta_y^2+\beta_z^2)\eta^4\delta^2 \notag \\
     & \quad \quad +8\tau (4\beta_x^2 + 3\beta_y^2+\beta_z^2)\eta^4\sigma^{2}  = O\Big(\tau\beta^2\eta^4\Big) \notag \\
     & \mathcal{M} = \frac{6(\alpha_x^2+\alpha_y^2+\alpha_z^2)\eta^2}{K} (2L_{f}^{2}\rho^2c_1 +2L_{g}^{2}c_2+ 2L_{g}^{2}c_3  +  2L_{f}^{2}\rho^2c_4 + 2L_{g}^{2}c_5 + 2L_{g}^{2}c_6) \notag \\
     & \quad  \qquad + 3 \tau^2(\alpha_x^2+\alpha_y^2+\alpha_z^2)\eta^2 \Big(6\alpha_x\eta(\rho^2L_f^2+2L_g^2)+  2\alpha_{y}\eta(\rho^2L_f^2+L_g^2) + 3\alpha_z\eta L_g^2\Big) \notag \\
     & = O\Big(\frac{\alpha_{x}^3\eta}{\beta K}(\rho^2+1)\Big) + O\Big(\tau^2\alpha_{x}^3\eta^3(\rho^2+1)\Big) \ . 
\end{align}
Therefore, we obtain
\begin{align}
    & \max\{\frac{16}{\alpha_{x}\eta} \ , \frac{16\kappa}{\alpha_{x}\eta c_{\alpha_{z}}} \ , \frac{16\gamma_1}{\alpha_{x}\eta c_{\alpha_{pq}}} \ , \frac{16\kappa\gamma_2}{\alpha_{r}\eta c_{\alpha_{r}}} \} 3\tau S\mathcal{M} \leq O\Big(\frac{\kappa\tau^2\beta\eta^4\alpha_{x}^2 }{K}\Big) + O\Big(\kappa\tau^4\alpha_{x}^2\beta^2\eta^6\Big) \ . 
\end{align}

In summary, we establish the convergence rate
\begin{align}
    &  \quad  \frac{1}{T}\sum_{t=0}^{T-1} \Bigg(  \mathbb{E}[\| \nabla_{x} f_{\rho}(\bar{x}_{t}, \bar{y}_{t}, \bar{z}_{t}) \|^2] + \mathbb{E}[\|\nabla_{y}   f_{\rho}(\bar{x}_{t}, \bar{y}_{t}, \bar{z}_{t}) \|^2] +  \kappa\mathbb{E}[\|\nabla_{z}  f_{\rho}(\bar{x}_{t}, \bar{y}_{t}, \bar{z}_{t}) \|^2 ] \Bigg) \notag \\
    & \leq \frac{1}{\rho^2}\max\{\frac{16}{\alpha_{x}\eta} \ , \frac{16\kappa}{\alpha_{x}\eta c_{\alpha_{z}}} \ , \frac{16\gamma_1}{\alpha_{x}\eta c_{\alpha_{pq}}} \ , \frac{16\kappa\gamma_2}{\alpha_{r}\eta c_{\alpha_{r}}} \} \Big( \frac{\mathcal{P}_{0} -  \mathcal{P}_{T}}{ T} + \mathcal{M}_{0} + 3\tau S\mathcal{M} \Big)  \\
    & \leq O\Big(\frac{  \kappa (\Psi_{0} - \Psi_{T})}{\alpha_{x}\eta \rho^2 T}\Big) + O\Big( \frac{\kappa}{\beta \eta^2\rho^2 BKT }\Big) + O\Big( \frac{\kappa\beta\eta^2}{\rho^2K}\Big) + O\Big(\frac{\kappa\tau^2\beta\eta^4\alpha_{x}^2 }{\rho^2K}\Big) + O\Big(\frac{\kappa\tau^4\alpha_{x}^2\beta^2\eta^6}{\rho^2}\Big)  \ . \notag
\end{align}

Since $F(x, y, z ; p, q, z)  =\rho f (x, y) + (g(x, y)-g(x, z))  + \frac{\gamma_1}{2}\| x - p \|^2 + \frac{\gamma_1}{2}\| y - q\|^2 -  \frac{\gamma_2}{2}\| z - r\|^2$, we need to scale $F(x, y, z ; p, q, r)$ by $\frac{1}{\rho}$ to study the original penalized function $ f (x, y) + \frac{1}{\rho}(g(x, y)-g(x, z))$. Then according to \cite{zhang2025federated}, by initializing $y_0$ and $z_0$ such that $\mathbb{E}[\|{y}_{0} - y_{1/\rho}^{*}({{x}}_{0})\|^2]\leq O(\epsilon^2)$  and $\mathbb{E}[\|{z}_{0} - y^{*}({{x}}_{0})\|^2]\leq O(\epsilon^2)$ where $y_{1/\rho}^{*}({{x}})=\arg\min_{y} f (x, y) + \frac{1}{\rho}g(x, y)$, we can mitigate the influence of $\frac{1}{\rho}$ in the initial upper bound. As a result, we can obtain 
\begin{align}
    &  \quad  \frac{1}{T}\sum_{t=0}^{T-1} \Bigg(  \mathbb{E}[\| \nabla_{x} f_{\rho}(\bar{x}_{t}, \bar{y}_{t}, \bar{z}_{t}) \|^2] + \mathbb{E}[\|\nabla_{y}   f_{\rho}(\bar{x}_{t}, \bar{y}_{t}, \bar{z}_{t}) \|^2] +  \kappa\mathbb{E}[\|\nabla_{z}  f_{\rho}(\bar{x}_{t}, \bar{y}_{t}, \bar{z}_{t}) \|^2 ] \Bigg) \notag \\
    & \leq O\Big(\frac{  \kappa }{\alpha_{x}\eta \rho T}\Big) + O\Big( \frac{\kappa}{\beta \eta^2\rho^2 BKT }\Big) + O\Big( \frac{\kappa\beta\eta^2}{\rho^2K}\Big) + O\Big(\frac{\kappa\tau^2\beta\eta^4\alpha_{x}^2 }{\rho^2K}\Big) + O\Big(\frac{\kappa\tau^4\alpha_{x}^2\beta^2\eta^6}{\rho^2}\Big)  \ .
\end{align}

By setting
\begin{align}
    & \rho = O\left(\frac{\epsilon}{\kappa^3}\right) \ , \quad  \alpha_x = O\left(1\right)  \ , \notag \\
    & \beta_x = O\left(\frac{1}{K}\right) \ , \quad \beta_y = O\left(\frac{1}{K}\right) \ , \quad  \beta_z = O\left(\frac{1}{K}\right) \ , \notag \\
    & \eta = O\left(\frac{K\epsilon^2}{\kappa^{7/2}}\right)\ ,  \tau =  O\left(\frac{\kappa^{7/2}}{K\epsilon^2}\right)  \ , \quad B = O\left(\frac{\kappa^{13/2}}{K\epsilon^3}\right) \ , \quad T = O\left(\frac{\kappa^{15/2}}{K\epsilon^5}\right) \ .  
\end{align}
we obtain
\begin{align}
    &   \frac{1}{T}\sum_{t=0}^{T-1} \Bigg(  \mathbb{E}[\| \nabla_{x} f_{\rho}(\bar{x}_{t}, \bar{y}_{t}, \bar{z}_{t}) \|^2] +  \mathbb{E}[\|\nabla_{y}   f_{\rho}(\bar{x}_{t}, \bar{y}_{t}, \bar{z}_{t}) \|^2] +  \kappa\mathbb{E}[\|\nabla_{z}  f_{\rho}(\bar{x}_{t}, \bar{y}_{t}, \bar{z}_{t}) \|^2 ] \Bigg) \leq O(\epsilon^2) \ . \notag 
\end{align}

\section{Translation between Stationarity Measures}\label{app:translation}

\begin{algorithm}[ht]
	\caption{LocalSGDA~\citep{deng2021local}}
	\label{alg:localsgda-pl}
	\begin{algorithmic}[1]
		\REQUIRE $\eta_{\alpha}>0$, $\eta_{\theta}>0$. 
		Initialization on  worker $k$: 
		$\alpha_{0}^{(k)}=\alpha_{0}$, $\theta_{0}^{(k)}=\theta_{0}$, \\
		\vspace{2mm}
		\FOR{$t=0,\cdots, T-1$,  worker $k$} 
		
		\STATE Update $\alpha$: 
		$\alpha^{(k)}_{t+1} = \alpha^{(k)}_{t} - \eta_{\alpha} \nabla_{1} \tilde{f}_{\rho}^{(k)}(\alpha^{(k)}_{t}, \theta^{(k)}_{t}; \xi^{(k)}_{t}) $ \ , \\
		
		\STATE Update $\theta$: 
		$\theta^{(k)}_{t+1} = \theta^{(k)}_{t} + \eta_{\theta}\nabla_{2} \tilde{f}_{\rho}^{(k)}(\alpha^{(k)}_{t}, \theta^{(k)}_{t}; \xi^{(k)}_{t}) $ \ , \\

		\IF {$\text{mod}(t+1, \tau)==0$}
		\STATE 
		$\alpha^{(k)}_{t+1}=\frac{1}{K}\sum_{k'=1}^{K}\alpha^{(k')}_{ t+1}$\ , \ 
        $\theta^{(k)}_{t+1}=\frac{1}{K}\sum_{k'=1}^{K}\theta^{(k')}_{ t+1}$,  \\
		\ENDIF
		\ENDFOR
	\end{algorithmic}
\end{algorithm}

In this section, we present a detailed treatment of the translation between the stationarity measures defined in Definition~\ref{def:measures}, and complete the proof of Remark~\ref{remark:tran}. To establish this connection in the federated setting, we employ the LocalSGDA method (Algorithm~\ref{alg:localsgda-pl}) on an auxiliary two-sided PL minimax problem given by
\begin{align} \label{eq:loss-translation}
    & \min_{\alpha \triangleq (x, y)}\max_{\theta \triangleq z} \tilde{f}_{\rho}(\alpha, \theta)\triangleq   \frac{1}{K}\sum_{k=1}^{K}\underbrace{f^{(k)}_{\rho}(\alpha, \theta) + L \|\alpha -\tilde{\alpha}\|^2}_{\tilde{f}_{\rho}^{(k)} (\alpha, \theta)} \ , 
\end{align}
where $\tilde{\alpha}$ is the output of by Algorithm~\ref{alg:fedss-bvrfo}.  The resulting objective $\tilde{f}_{\rho}(\alpha, \theta)$ possesses several useful properties: it is $L$-strongly convex in $\alpha$, satisfies the $\mu$-PL condition with respect to $\theta$, and is $\tilde{L}$-smooth with $\tilde{L} = 3L$. We next characterize the convergence behavior of Algorithm~\ref{alg:localsgda-pl} under the strongly-convex--PL assumption and show that its convergence rate is dominated by that of Algorithm~\ref{alg:fedss-bvrfo}; therefore, the overall convergence speed is also dominated by Algorithm~\ref{alg:fedss-bvrfo}.


\begin{theorem}~\label{theorem:two-sided-pl}
 Given the smoothness and strongly-convex-PL assumptions on $\tilde{f}_{\rho}(\alpha, \theta)$, choosing step sizes $\eta_{\alpha} = O(\epsilon^2/\kappa^5)$ and $\eta_{\theta}=128\kappa^2\eta_{\alpha}$, then Algorithm~\ref{alg:localsgda-pl} has a convergence rate of $T=O(\epsilon^{-2}\kappa^6\log(\kappa))$ to achieve the $\epsilon$-accuracy solution: $\mathbb{E}[\|\alpha_{T} -\alpha^*\|^2] \leq  O(\epsilon^2)$, where $0<\epsilon<1$.
 
\end{theorem}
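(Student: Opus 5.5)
The plan is to follow the standard Local SGDA analysis for two-sided PL minimax problems (in the spirit of \cite{deng2021local} and \cite{yang2022faster}), specialized to the strongly-convex--PL objective $\tilde{f}_{\rho}$ in Eq.~(\ref{eq:loss-translation}).

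\textbf{Step 1: structural facts.} First record the properties used. Since $\tilde f_\rho$ is $L$-strongly convex in $\alpha$ and $\mu$-PL in $\theta$, the primal function $\tilde\Phi(\alpha)=\max_\theta \tilde f_\rho(\alpha,\theta)$ satisfies the following:
\begin{itemize}
\item it is $L$-strongly convex in the PL sense, hence $\tilde\Phi$ is PL with constant of order $L$;
\item it is smooth with constant $\tilde L(1+\tilde L/\mu)=O(\kappa\tilde L)$.
\end{itemize}
Its minimizer $\alpha^*$ is unique, and
\[
\|\alpha-\alpha^*\|^2\le \tfrac{2}{L}\big(\tilde\Phi(\alpha)-\tilde\Phi^*\big).
\]
So it suffices to drive the primal gap $a_t=\mathbb{E}[\tilde\Phi(\bar\alpha_t)-\tilde\Phi^*]$ to $O(L\epsilon^2)$.

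\textbf{Step 2: Lyapunov recursion on averaged iterates.} Work with the averages $\bar\alpha_t,\bar\theta_t$ and the dual gap $b_t=\mathbb{E}[\tilde\Phi(\bar\alpha_t)-\tilde f_\rho(\bar\alpha_t,\bar\theta_t)]$. Use the Lyapunov function $V_t=a_t+\lambda b_t$ with $\lambda=1/8$, as in the two-sided PL analysis.

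Descent of $a_t$ follows from $\tilde L\kappa$-smoothness of $\tilde\Phi$. The error $\|\nabla_1\tilde f_\rho(\bar\alpha,\bar\theta)-\nabla\tilde\Phi(\bar\alpha)\|^2$ is bounded by $\frac{2\tilde L^2}{\mu}b_t$ via PL and the quadratic growth in $\theta$. Ascent on $\theta$ with the PL inequality contracts $b_t$ by $(1-\mu\eta_\theta/2)$. The drift of $\tilde\Phi$ along the $\alpha$-move is absorbed by taking $\eta_\alpha\le\eta_\theta/(128\kappa^2)$, which is the prescribed ratio $\eta_\theta=128\kappa^2\eta_\alpha$.

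Stochastic variance enters as $O(\eta^2\tilde L\sigma^2/K)$. Local steps enter through the consensus error
\[
\frac1K\sum_k\big(\|\alpha^{(k)}_t-\bar\alpha_t\|^2+\|\theta^{(k)}_t-\bar\theta_t\|^2\big),
\]
multiplied by $\tilde L^2$.

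\textbf{Step 3: consensus error and unrolling.} Bound the drift exactly as in Lemma~\ref{lemma:consensus-error-variables}. Within a period, the drift is at most $O(\tau^2\eta_\theta^2(\sigma^2+\delta^2+\text{gradient terms}))$. The gradient terms are absorbed for $\eta_\theta\le 1/(c\tau\tilde L)$.

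This yields
\[
V_{t+1}\le(1-c\,\mu\eta_\alpha)V_t+O\Big(\eta_\alpha^2\kappa\tilde L\frac{\sigma^2}{K}+\tilde L^2\tau^2\eta_\theta^2(\sigma^2+\delta^2)\eta_\alpha\Big).
\]
Unrolling gives
\[
V_T\le e^{-c\mu\eta_\alpha T}V_0+O\!\big(\text{noise}/(\mu\eta_\alpha)\big).
\]
Choosing $\eta_\alpha=O(\epsilon^2/\kappa^5)$ makes the stationary floor $O(\epsilon^2)$, since $\eta_\theta$ carries the extra factor $\kappa^2$ and the noise floor scales like $\kappa^{5}\eta_\alpha$. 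Then $T=O(\log(V_0/\epsilon^2)/(\mu\eta_\alpha))=O(\epsilon^{-2}\kappa^6\log\kappa)$, where $V_0$ is controlled by the output of Algorithm~\ref{alg:fedss-bvrfo}.

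\textbf{Main obstacle.} The delicate step is the coupling in Step 2: the $\alpha$-update changes the dual gap $b_t$ by an amount involving $\|\nabla\tilde\Phi\|^2$, and this must be dominated by the primal descent. I will try the weighting $\lambda=1/8$ first and verify the ratio $\eta_\theta/\eta_\alpha=128\kappa^2$ closes the inequality, while the local-drift terms from heterogeneity $\delta$ remain of lower order under $\tau\eta_\theta\tilde L\le 1/c$.
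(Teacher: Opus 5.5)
Your plan is correct and matches the paper's proof: it also pairs primal-gap descent from the $O(\kappa)$-smoothness of $\tilde\Phi$ with dual-gap contraction under PL, closes the coupling with $\eta_\theta=128\kappa^2\eta_\alpha$, bounds the local drift by $O(\tau^2\eta^2(\sigma^2+\delta^2))$, unrolls the contraction $(1-\eta_\alpha\mu/2)$, and starts from $\mathcal{L}_0\le 2\epsilon^2/L$ supplied by the output of Algorithm~\ref{alg:fedss-bvrfo}. The only differences are that the paper uses the unit-weight potential $\mathcal{L}_t=2\tilde\Phi(\bar\alpha_t)-\tilde\Phi^*-\tilde f_\rho(\bar\alpha_t,\bar\theta_t)$ rather than your $\lambda=1/8$, and that your displayed recursion should also include the $\theta$-step variance term $\eta_\theta^2\tilde L\sigma^2$ and an $\eta_\theta$ rather than $\eta_\alpha$ factor on the consensus error, since $\eta_\theta^2\tilde L\sigma^2/(\mu\eta_\alpha)=O(\kappa^5\eta_\alpha)$ is what produces the noise floor you quote.
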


\begin{proof}
    To prove Theorem~\ref{theorem:two-sided-pl}, we introduce the following auxiliary functions:
    \begin{align}
    	& \tilde{\Phi}(\alpha) = \max_{\alpha} \tilde{f}_{\rho}(\alpha, \theta) = \max_{\theta} f_{\rho}(\alpha, \theta) + L \| \alpha -\tilde{\alpha}\|^2 \ , \notag \\
    	& \tilde{\Psi}(\theta) =  \min_{\alpha} \tilde{f}_{\rho}(\alpha, \theta) = \min_{\alpha} f_{\rho}(\alpha, \theta) + L \| \alpha -\tilde{\alpha}\|^2 \ , \quad  \tilde{\Phi}^* = \max_{\theta} \tilde{\Psi}(\theta) \ . \notag 
    \end{align}
    In addition, we denote $\bar{\alpha}_{t} =  \frac{1}{K}\sum_{k=1}^{K} {\alpha}^{(k)}_{t} , \ \bar{\theta}_{t} =  \frac{1}{K}\sum_{k=1}^{K} {\theta}^{(k)}_{t}$, and:
    \begin{align}
        \hat{m}^{(k)}_{\alpha, t} = \nabla_{1} \tilde{f}_{\rho}^{(k)}(\alpha^{(k)}_{t}, \theta^{(k)}_{t}; \xi^{(k)}_{t}) \ , \quad & \hat{m}^{(k)}_{\theta, t} = \nabla_{2} \tilde{f}_{\rho}^{(k)}(\alpha^{(k)}_{t}, \theta^{(k)}_{t}; \xi^{(k)}_{t})  \ , \notag \\
        {m}^{(k)}_{\alpha, t} : = \nabla_{1} \tilde{f}_{\rho}^{(k)}({\alpha}^{(k)}_{t}, {\theta}^{(k)}_{t})  \ , \quad & {m}^{(k)}_{\theta, t} : = \nabla_{2} \tilde{f}_{\rho}^{(k)}({\alpha}^{(k)}_{t}, {\theta}^{(k)}_{t})  \ . \notag 
    \end{align}

    Given that the $\tilde{L}$-smooth of $\tilde{f}_{\rho}(\alpha, \theta)$ with respect to ${\alpha}$, we derive
\begin{align} 
	& \quad  \mathbb{E}[\tilde{f}_{\rho}(\bar{\alpha}_{t}, \bar{\theta}_{t})] -  \mathbb{E}[\tilde{f}_{\rho}(\bar{\alpha}_{t+1}, \bar{\theta}_{t})]  \leq - \mathbb{E}[\langle \nabla_{1} \tilde{f}_{\rho}(\bar{\alpha}_{t}, \bar{\theta}_{t}), \bar{\alpha}_{t+1}- \bar{\alpha}_{t} \rangle]  + \frac{\tilde{L}}{2} \mathbb{E}[\|\bar{\alpha}_{t+1} - \bar{\alpha}_{t}\|^2] \notag \\
	& \leq  \eta_{\alpha} \mathbb{E}[\langle \nabla_{1} \tilde{f}_{\rho}(\bar{\alpha}_{t}, \bar{\theta}_{t}), \frac{1}{K}\sum_{k=1}^{K} {m}^{(k)}_{\alpha, t} \rangle]  + \frac{\eta^2_{\alpha}\tilde{L}}{2} \mathbb{E}[\|\frac{1}{K}\sum_{k=1}^{K} {m}^{(k)}_{\alpha, t} \|^2] +  \frac{\eta^2_{\alpha}\tilde{L}\sigma^2}{2}  \notag \\
	& \leq   \frac{\eta_{\alpha}}{2} \mathbb{E}[\| \nabla_{1} \tilde{f}_{\rho}(\bar{\alpha}_{t}, \bar{\theta}_{t})\|^2] + (\frac{\eta_{\alpha}}{2} + \frac{\eta^2_{\alpha}\tilde{L}}{2} ) \mathbb{E}[\|\frac{1}{K}\sum_{k=1}^{K} {m}^{(k)}_{\alpha, t} \|^2] +  \frac{\eta^2_{\alpha}\tilde{L}\sigma^2}{2}  \ . \notag
\end{align}

    Similarly, given that the $\tilde{L}$-smooth of $\tilde{f}_{\rho}(\alpha, \theta)$ with respect to $\theta$, we derive
\begin{align}
    & \quad \mathbb{E}[\tilde{f}_{\rho}(\bar{\alpha}_{t+1}, \bar{\theta}_{t})]  - \mathbb{E}[\tilde{f}_{\rho}(\bar{\alpha}_{t+1}, \bar{\theta}_{t+1})] \leq  - \mathbb{E}[\langle \nabla_{2} \tilde{f}_{\rho}(\bar{\alpha}_{t+1}, \bar{\theta}_{t}),  \bar{\theta}_{t+1} - \bar{\theta}_{t}\rangle] + \frac{\tilde{L}}{2}\mathbb{E}[\|\bar{\theta}_{t+1} - \bar{\theta}_{t}\|^2] \notag \\
    &  \leq  - \eta_{\theta}\mathbb{E}[\langle \nabla_{2} \tilde{f}_{\rho}(\bar{\alpha}_{t+1}, \bar{\theta}_{t}),  \frac{1}{K}\sum_{k=1}^{K} {m}^{(k)}_{\theta, t} \rangle]  + \frac{\eta^2_{\theta}\tilde{L}}{2}\mathbb{E}[\|\frac{1}{K}\sum_{k=1}^{K} {m}^{(k)}_{\theta, t} \|^2 ] + \frac{\eta^2_{\theta}\tilde{L}\sigma^2}{2}\notag \\
    &  \leq  - \frac{\eta_{\theta}}{2}\mathbb{E}[\| \nabla_{2} \tilde{f}_{\rho}(\bar{\alpha}_{t+1}, \bar{\theta}_{t})\|^2] - \frac{\eta_{\theta}}{2}\mathbb{E}[\| \frac{1}{K}\sum_{k=1}^{K} {m}^{(k)}_{\theta, t} \|^2] +  \eta_{\theta}\mathbb{E}[\| \nabla_{2} \tilde{f}_{\rho}(\bar{\alpha}_{t}, \bar{\theta}_{t})- \frac{1}{K}\sum_{k=1}^{K}{m}^{(k)}_{\theta, t} \|^2] \notag \\
    & \quad + \eta_{\theta}\mathbb{E}[\| \nabla_{2} \tilde{f}_{\rho}(\bar{\alpha}_{t+1}, \bar{\theta}_{t})- \nabla_{2} \tilde{f}_{\rho}(\bar{\alpha}_{t}, \bar{\theta}_{t}) \|^2]  + \frac{\eta^2_{\theta}\tilde{L}}{2}\mathbb{E}[\|\frac{1}{K}\sum_{k=1}^{K} {m}^{(k)}_{\theta, t}\|^2 ] + \frac{\eta^2_{\theta}\tilde{L}\sigma^2}{2}\ . \notag
\end{align}
From $ - \| \nabla_{2} \tilde{f}_{\rho}(\bar{\alpha}_{t+1}, \bar{\theta}_{t})\|^2 \leq -\frac{1}{2}\|  \nabla_{2} \tilde{f}_{\rho}(\bar{\alpha}_{t}, \bar{\theta}_{t})\|^2 + \|\nabla_{2} \tilde{f}_{\rho}(\bar{\alpha}_{t+1}, \bar{\theta}_{t})- \nabla_{2} \tilde{f}_{\rho}(\bar{\alpha}_{t}, \bar{\theta}_{t}) \| $ and $\eta_{\theta} \leq \frac{1}{2\tilde{L}}$, we obtain
\begin{align}
     & \quad \mathbb{E}[\tilde{f}_{\rho}(\bar{\alpha}_{t+1}, \bar{\theta}_{t})]  - \mathbb{E}[\tilde{f}_{\rho}(\bar{\alpha}_{t+1}, \bar{\theta}_{t+1})] \notag \\
     &   \leq  - \frac{\eta_{\theta}}{4}\mathbb{E}[\| \nabla_{2} \tilde{f}_{\rho}(\bar{\alpha}_{t}, \bar{\theta}_{t})\|^2] - \frac{\eta_{\theta}}{4}\mathbb{E}[\| \frac{1}{K}\sum_{k=1}^{K} {m}^{(k)}_{\theta, t} \|^2]  +  \eta_{\theta}\mathbb{E}[\| \nabla_{2} \tilde{f}_{\rho}(\bar{\alpha}_{t}, \bar{\theta}_{t})- \frac{1}{K}\sum_{k=1}^{K}{m}^{(k)}_{\theta, t} \|^2]  \notag \\
    & \quad  + \frac{3\eta_{\theta}}{2}\mathbb{E}[\| \nabla_{2} \tilde{f}_{\rho}(\bar{\alpha}_{t+1}, \bar{\theta}_{t})- \nabla_{2} \tilde{f}_{\rho}(\bar{\alpha}_{t}, \bar{\theta}_{t}) \|^2] + \frac{\eta^2_{\theta}\tilde{L}\sigma^2}{2}\ . \notag
\end{align}
Then, combining 
\begin{align}
    &  \mathbb{E}[\| \nabla_{2} \tilde{f}_{\rho}(\bar{\alpha}_{t}, \bar{\theta}_{t})- \frac{1}{K}\sum_{k=1}^{K} {m}^{(k)}_{\theta, t}  \|^2] \leq  \tilde{L}^2\frac{1}{K}\sum_{k=1}^{K} \mathbb{E}[\| \bar{\alpha}_{t}-  {\alpha}^{(k)}_{t} \|^2] + \tilde{L}^2\frac{1}{K}\sum_{k=1}^{K} \mathbb{E}[\| \bar{\theta}_{t}-  {\theta}^{(k)}_{t} \|^2] \ , \notag
\end{align}
and
\begin{align}
    & \quad \mathbb{E}[\| \nabla_{2} \tilde{f}_{\rho}(\bar{\alpha}_{t+1}, \bar{\theta}_{t})- \nabla_{2} \tilde{f}_{\rho}(\bar{\alpha}_{t}, \bar{\theta}_{t}) \|^2] \leq \eta_{\alpha}^2\tilde{L}^2 \mathbb{E}[\| \frac{1}{K}\sum_{k=1}^{K} \hat{m}^{(k)}_{\alpha, t} \|^2]  \notag \\
    & \leq  \eta_{\alpha}^2\tilde{L}^2\mathbb{E}[\| \frac{1}{K}\sum_{k=1}^{K} {m}^{(k)}_{\alpha, t} \|^2] + \eta_{\alpha}^2\tilde{L}^2\sigma^2 \ , \notag
\end{align}
we obtain
\begin{align}
    &  \quad  \mathbb{E}[\tilde{f}_{\rho}(\bar{\alpha}_{t+1}, \bar{\theta}_{t})] - \mathbb{E}[\tilde{f}_{\rho}(\bar{\alpha}_{t+1}, \bar{\theta}_{t+1})] \notag \\
    & \leq  - \frac{\eta_{\theta}}{4}\mathbb{E}[\| \nabla_{2} \tilde{f}_{\rho}(\bar{\alpha}_{t}, \bar{\theta}_{t})\|^2] - \frac{\eta_{\theta}}{4}\mathbb{E}[\| \frac{1}{K}\sum_{k=1}^{K} {m}^{(k)}_{\theta, t}  \|^2] +  \frac{3\eta_{\theta}\eta_{\alpha}^2\tilde{L}^2}{2}  \mathbb{E}[\| \frac{1}{K}\sum_{k=1}^{K} {m}^{(k)}_{\alpha, t}  \|^2]  \notag \\
    & \quad  + \eta_{\theta} \tilde{L}^2\frac{1}{K}\sum_{k=1}^{K} \mathbb{E}[\| \bar{\alpha}_{t}-  {\alpha}^{(k)}_{t} \|^2]  +\eta_{\theta} \tilde{L}^2\frac{1}{K}\sum_{k=1}^{K} \mathbb{E}[\| \bar{\theta}_{t}-  {\theta}^{(k)}_{t} \|^2] + \frac{3\eta_{\theta}\eta_{\alpha}^2\tilde{L}^2\sigma^2}{2}  + \frac{\eta^2_{\theta}\tilde{L}\sigma^2}{2}  \ . \notag
\end{align}

Combining the upper bound of $\mathbb{E}[\tilde{f}_{\rho}(\bar{\alpha}_{t}, \bar{\theta}_{t})] -  \mathbb{E}[\tilde{f}_{\rho}(\bar{\alpha}_{t+1}, \bar{\theta}_{t})]$ and $\mathbb{E}[\tilde{f}_{\rho}(\bar{\alpha}_{t+1}, \bar{\theta}_{t})] - \mathbb{E}[\tilde{f}_{\rho}(\bar{\alpha}_{t+1}, \bar{\theta}_{t+1})]$  with $\eta_{\theta} \leq \frac{1}{2\tilde{L}}$, we obtain
\begin{align}\label{eq:f_smooth_res}
    &  \mathbb{E}[\tilde{f}_{\rho}(\bar{\alpha}_{t}, \bar{\theta}_{t})] -  \mathbb{E}[\tilde{f}_{\rho}(\bar{\alpha}_{t+1}, \bar{\theta}_{t+1})]  \leq \frac{\eta_{\alpha}}{2} \mathbb{E}[\| \nabla_{1} \tilde{f}_{\rho}(\bar{\alpha}_{t}, \bar{\theta}_{t})\|^2]  - \frac{\eta_{\theta}}{4}\mathbb{E}[\| \nabla_{2} \tilde{f}_{\rho}(\bar{\alpha}_{t}, \bar{\theta}_{t})\|^2]  \notag \\
    &  + (\frac{\eta_{\alpha}}{2} + \frac{5\eta_{\alpha}^2\tilde{L}}{4} )\mathbb{E}[\| \frac{1}{K}\sum_{k=1}^{K} {m}^{(k)}_{\alpha, t} \|^2] - \frac{\eta_{\theta}}{4}\mathbb{E}[\| \frac{1}{K}\sum_{k=1}^{K} {m}^{(k)}_{\theta, t} \|^2]  + \eta_{\theta} \tilde{L}^2\frac{1}{K}\sum_{k=1}^{K} \mathbb{E}[\| \bar{\alpha}_{t}-  {\alpha}^{(k)}_{t} \|^2]  \notag \\
    & +\eta_{\theta} \tilde{L}^2\frac{1}{K}\sum_{k=1}^{K} \mathbb{E}[\| \bar{\theta}_{t}-  {\theta}^{(k)}_{t} \|^2] + \frac{5\eta_{\alpha}^2\tilde{L}\sigma^2}{4}  + \frac{\eta^2_{\theta}\tilde{L}\sigma^2}{2}  \ . 
\end{align}

Moreover, using the $L_{\tilde{\Phi}}$-smoothness of $\tilde{\Phi}({\alpha})$, with $L_{\tilde{\Phi}}=O(\kappa)$ \cite{lin2020gradient}, we derive
\begin{align}\label{eq:phi_smooth_res}
	& \quad  \mathbb{E}[\tilde{\Phi}(\bar{\alpha}_{t+1})] -  \mathbb{E}[\tilde{\Phi}(\bar{\alpha}_{t})] \leq    \mathbb{E}[\langle \nabla \tilde{\Phi}(\bar{\alpha}_{t}), \bar{\alpha}_{t+1} - \bar{\alpha}_{t}  \rangle] + \frac{L_{\tilde{\Phi}}}{2} \mathbb{E}[\| \bar{\alpha}_{t+1} - \bar{\alpha}_{t} \|^2] \notag \\
	& \leq   -\eta_{\alpha}   \mathbb{E}[\langle \nabla \tilde{\Phi}(\bar{\alpha}_{t}), \frac{1}{K}\sum_{k=1}^{K} {m}^{(k)}_{\alpha, t} \rangle] + \frac{\eta^2_{\alpha}L_{\tilde{\Phi}}}{2} \mathbb{E}[\|  \frac{1}{K}\sum_{k=1}^{K}  {m}^{(k)}_{\alpha, t}\|^2]  +\frac{\eta^2_{\alpha}L_{\tilde{\Phi}}\sigma^2}{2} \notag \\
    & \leq 8 \eta_{\alpha}   \mathbb{E}[\| \nabla \tilde{\Phi}(\bar{\alpha}_{t}) -  \nabla_{1} \tilde{f}_{\rho}(\bar{\alpha}_{t}, \bar{\theta}_{t})\|^2]  +  \frac{\eta_{\alpha}}{8}   \mathbb{E}[\|  \frac{1}{K}\sum_{k=1}^{K}  {m}^{(k)}_{\alpha, t} \|^2] \notag \\
    & \quad - \frac{\eta_{\alpha}}{2}   \mathbb{E}[\|  \nabla_{1} \tilde{f}_{\rho}(\bar{\alpha}_{t}, \bar{\theta}_{t}) \|^2] -  \frac{\eta_{\alpha}}{2}   \mathbb{E}[\| \frac{1}{K}\sum_{k=1}^{K} {m}^{(k)}_{\alpha, t} \|^2]  +  \frac{\eta_{\alpha}}{2}   \mathbb{E}[\| \nabla_{1} \tilde{f}_{\rho}(\bar{\alpha}_{t}, \bar{\theta}_{t})  -\frac{1}{K}\sum_{k=1}^{K} {m}^{(k)}_{\alpha, t}  \|^2] \notag \\
    & \quad + \frac{\eta^2_{\alpha}L_{\tilde{\Phi}}}{2} \mathbb{E}[\|  \frac{1}{K}\sum_{k=1}^{K}  {m}^{(k)}_{\alpha, t}\|^2]  +\frac{\eta^2_{\alpha}L_{\tilde{\Phi}}\sigma^2}{2} \notag \\
    & \leq   8 \eta_{\alpha} \frac{\tilde{L}^2}{\mu^2}  \mathbb{E}[\|  \nabla_{2} \tilde{f}_{\rho}(\bar{\alpha}_{t}, \bar{\theta}_{t}) \|^2]  - \frac{\eta_{\alpha}}{2}   \mathbb{E}[\|  \nabla_{1} \tilde{f}_{\rho}(\bar{\alpha}_{t}, \bar{\theta}_{t}) \|^2] + ( \frac{\eta^2_{\alpha}L_{\tilde{\Phi}}}{2}  -  \frac{3\eta_{\alpha}}{8} )  \mathbb{E}[\|  \frac{1}{K}\sum_{k=1}^{K}  {m}^{(k)}_{\alpha, t} \|^2]  \notag \\
	&  \quad  +  \frac{\eta_{\alpha} \tilde{L}^2}{2}  \frac{1}{K}\sum_{k=1}^{K}  \mathbb{E}[\| \bar{\alpha}_{t}-\alpha^{(k)}_{t} \|^2]  +  \frac{\eta_{\alpha} \tilde{L}^2}{2}  \frac{1}{K}\sum_{k=1}^{K}  \mathbb{E}[\|  \bar{\theta}_{t}  - \theta^{(k)}_{t}  \|^2]  +\frac{\eta^2_{\alpha}L_{\tilde{\Phi}}\sigma^2}{2} \ ,
\end{align}
where the last step follows from that $\tilde{f}_{\rho}(\alpha, \theta)$ satisfies the $\mu$-PL condition with respect to $\theta$.

We next define a potential function of the form:
\begin{align}
	& \mathcal{L}_{t}  =  2\tilde{\Phi}(\bar{\alpha}_{t}) - \tilde{\Phi}^*  - \tilde{f}_{\rho}(\bar{\alpha}_{t}, \bar{\theta}_{t}) \ , \notag
\end{align}
By invoking Eq.~(\ref{eq:f_smooth_res}) and Eq.~(\ref{eq:phi_smooth_res}), and choosing $\eta_{\alpha} =\frac{\eta_{\theta}}{128\kappa^2}$ with $\kappa = \frac{\tilde{L}}{\mu} > 1$, and setting $\eta_{\alpha} \leq \frac{1}{2(5\tilde{L} + 4L_{\tilde{\Phi}})}$, we obtain
\begin{align}\label{eq:p_1}
	& \quad \mathcal{L}_{t+1} - \mathcal{L}_{t}  \notag \\
	& = 2(\tilde{\Phi}(\bar{\alpha}_{t+1}) -\tilde{\Phi}(\bar{\alpha}_{t}) )  +  \tilde{f}_{\rho}(\bar{\alpha}_{t}, \bar{\theta}_{t})- \tilde{f}_{\rho}(\bar{\alpha}_{t+1}, \bar{\theta}_{t+1})  \notag \\
	& \leq -  \frac{\eta_{\alpha}}{2} \mathbb{E}[\| \nabla_{1} \tilde{f}_{\rho}(\bar{\alpha}_{t}, \bar{\theta}_{t})\|^2] - \frac{\eta_{\theta}}{8}\mathbb{E}[\| \nabla_{2} \tilde{f}_{\rho}(\bar{\alpha}_{t}, \bar{\theta}_{t})\|^2] - \frac{\eta_{\alpha}}{8} \mathbb{E}[\|  \frac{1}{K}\sum_{k=1}^{K}  {m}^{(k)}_{\alpha, t} \|^2]   \notag \\
	& \quad  - \frac{\eta_{\theta}}{4}\mathbb{E}[\| \frac{1}{K}\sum_{k=1}^{K} {m}^{(k)}_{\theta, t} \|^2]  +  (\eta_{\alpha}+\eta_{\theta}) \tilde{L}^2 \frac{1}{K}\sum_{k=1}^{K}  \mathbb{E}[\| \bar{\alpha}_{t}-\alpha^{(k)}_{t} \|^2]   \notag \\
    & \quad  +  (\eta_{\alpha}+\eta_{\theta})  \tilde{L}^2 \frac{1}{K}\sum_{k=1}^{K}  \mathbb{E}[\|  \bar{\theta}_{t}  - \theta^{(k)}_{t}  \|^2] +\eta^2_{\alpha}L_{\tilde{\Phi}}\sigma^2   +  \frac{5\eta^2_{\alpha}\tilde{L}\sigma^2}{4} + \frac{\eta^2_{\theta}\tilde{L}\sigma^2}{2}  \ . 
\end{align}

Given that $\tilde{f}_{\rho}(\alpha, \theta)$ satisfies the $\mu$-PL condition with respect to $\theta$, we obtain
\begin{align}
	& \quad  \mathbb{E}[\| \nabla \tilde{\Phi}(\bar{\alpha}_{t})\|^2] \leq  2\mathbb{E}[\| \nabla \tilde{\Phi}(\bar{\alpha}_{t}) -  \nabla_{1} \tilde{f}_{\rho}(\bar{\alpha}_{t}, \bar{\theta}_{t})\|^2]   +2 \mathbb{E}[\| \nabla_{1} \tilde{f}_{\rho}(\bar{\alpha}_{t}, \bar{\theta}_{t})\|^2] \notag \\
	& \leq \frac{2\tilde{L}^2}{\mu^2}\mathbb{E}[\|\nabla_2 \tilde{f}_{\rho}(\bar{\alpha}_{t}, \bar{\theta}_{t}) \|^2]   +2 \mathbb{E}[\| \nabla_{1} \tilde{f}_{\rho}(\bar{\alpha}_{t}, \bar{\theta}_{t})\|^2] \ . \notag
\end{align}
Then, by setting $\eta_{\alpha} =\frac{\eta_{\theta}}{128\kappa^2}$ and using the fact that $\tilde{f}_{\rho}(\alpha, \theta)$ satisfies the $\mu$-PL condition with respect to $\theta$, we obtain
\begin{align}
		& \quad  \frac{\eta_{\alpha}}{2} \mathbb{E}[\| \nabla_{1} \tilde{f}_{\rho}(\bar{\alpha}_{t}, \bar{\theta}_{t})\|^2] + \frac{\eta_{\theta}}{8}\mathbb{E}[\| \nabla_{2} \tilde{f}_{\rho}(\bar{\alpha}_{t}, \bar{\theta}_{t})\|^2]  \notag \\
		& \geq \frac{\eta_{\alpha}}{4}\mathbb{E}[\| \nabla \tilde{\Phi}(\bar{\alpha}_{t})\|^2] - \frac{\eta_{\alpha}}{2} \frac{\tilde{L}^2}{\mu^2}\mathbb{E}[\|\nabla_2 \tilde{f}_{\rho}(\bar{\alpha}_{t}, \bar{\theta}_{t}) \|^2]  + \frac{\eta_{\theta}}{8}\mathbb{E}[\| \nabla_{2} \tilde{f}_{\rho}(\bar{\alpha}_{t}, \bar{\theta}_{t})\|^2]  \notag \\
		& \geq \frac{\eta_{\alpha}}{4}\mathbb{E}[\| \nabla \tilde{\Phi}(\bar{\alpha}_{t})\|^2] + \frac{\eta_{\theta}}{16}\mathbb{E}[\| \nabla_{2} \tilde{f}_{\rho}(\bar{\alpha}_{t}, \bar{\theta}_{t})\|^2]  \notag \\
		& \geq \frac{\eta_{\alpha}\mu}{2} (\tilde{\Phi}(\bar{\alpha}_{t}) - \tilde{\Phi}^*) + \frac{\eta_{\theta}\mu}{8} (\tilde{\Phi}({\alpha}_{t})  - \tilde{f}_{\rho}(\bar{\alpha}_{t}, \bar{\theta}_{t}))\notag \\
		& \geq  \frac{\eta_{\alpha}\mu}{2} (\tilde{\Phi}(\bar{\alpha}_{t}) - \tilde{\Phi}^*) + \frac{\eta_{\alpha}\mu}{2} (\tilde{\Phi}({\alpha}_{t})  - \tilde{f}_{\rho}(\bar{\alpha}_{t}, \bar{\theta}_{t}))\notag \\
		& = \frac{\eta_{\alpha}\mu}{2}\mathcal{L}_{t} \ . \notag
\end{align}

Rearranging Eq.~(\ref{eq:p_1}) yields
\begin{align}
    \mathcal{L}_{t+1} & \leq (1 - \frac{\eta_{\alpha}\mu}{2}) \mathcal{L}_{t} +\eta^2_{\alpha}L_{\tilde{\Phi}}\sigma^2   +  \frac{5\eta^2_{\alpha}\tilde{L}\sigma^2}{4} + \frac{\eta^2_{\theta}\tilde{L}\sigma^2}{2} \notag \\
    & \quad +  (\eta_{\alpha}+\eta_{\theta}) \tilde{L}^2 \Big( \frac{1}{K}\sum_{k=1}^{K}  \mathbb{E}[\| \bar{\alpha}_{t}-\alpha^{(k)}_{t} \|^2]  +  \frac{1}{K}\sum_{k=1}^{K}  \mathbb{E}[\|  \bar{\theta}_{t}  - \theta^{(k)}_{t}  \|^2] \Big) \notag \\
    & \leq (1 - \frac{\eta_{\alpha}\mu}{2})^{t+1} \mathcal{L}_{0} + \frac{2\eta_{\alpha}L_{\tilde{\Phi}}\sigma^2}{\mu}   +  \frac{5\eta_{\alpha}\tilde{L}\sigma^2}{2\mu} + \frac{\eta^2_{\theta}\tilde{L}\sigma^2}{\eta_{\alpha}\mu} \notag \\
    & \quad +  \frac{2(\eta_{\alpha}+\eta_{\theta}) }{\eta_{\alpha}\mu}\tilde{L}^2 \sum_{t'=0}^{t}\Big( \frac{1}{K}\sum_{k=1}^{K}  \mathbb{E}[\| \bar{\alpha}_{t'}-\alpha^{(k)}_{t'} \|^2]  +  \frac{1}{K}\sum_{k=1}^{K}  \mathbb{E}[\|  \bar{\theta}_{t'}  - \theta^{(k)}_{t'}  \|^2] \Big)  \  . \notag 
\end{align}

For the final term, we begin by establishing a bound on the consensus error associated with $\alpha$:
\begin{align}
    & \quad \frac{1}{K}\sum_{k=1}^{K}  \mathbb{E}[\| \bar{\alpha}_{t}-\alpha^{(k)}_{t} \|^2]  = \frac{1}{K}\sum_{k=1}^{K}  \mathbb{E}[\| \bar{\alpha}_{s_{t}\tau} - \eta_{\alpha}\sum_{t'=s_{t}\tau}^{t-1}\frac{1}{K}\sum_{k'=1}^{K} \hat{m}^{(k')}_{\alpha, t'} -(\alpha^{(k)}_{s_t\tau}  - \eta_{\alpha}\sum_{t'=s_{t}\tau}^{t-1}\hat{m}^{(k)}_{\alpha, t'}) \|^2] \notag \\
    & = \tau\eta_{\alpha}^2\frac{1}{K}\sum_{k=1}^{K}\sum_{t'=s_{t}\tau}^{t-1}  \mathbb{E}[\|  \frac{1}{K}\sum_{k'=1}^{K}\hat{m}^{(k')}_{\alpha, t'} -  \hat{m}^{(k)}_{\alpha, t'}  \|^2] \notag \\
    & = \tau\eta_{\alpha}^2\frac{1}{K}\sum_{k=1}^{K}\sum_{t'=s_{t}\tau}^{t-1}  \mathbb{E}[\|  \frac{1}{K}\sum_{k'=1}^{K}\hat{m}^{(k')}_{\alpha, t'} - \frac{1}{K}\sum_{k'=1}^{K}{m}^{(k')}_{\alpha, t'} +  \frac{1}{K}\sum_{k'=1}^{K}{m}^{(k')}_{\alpha, t'} -  {m}^{(k)}_{\alpha, t'} + {m}^{(k)}_{\alpha, t'}  -  \hat{m}^{(k)}_{\alpha, t'}  \|^2] \notag \\
    & \leq \tau\eta_{\alpha}^2\frac{1}{K}\sum_{k=1}^{K}\sum_{t'=s_{t}\tau}^{t-1}  \mathbb{E}[\|  \frac{1}{K}\sum_{k'=1}^{K}\hat{m}^{(k')}_{\alpha, t'} - \frac{1}{K}\sum_{k'=1}^{K}{m}^{(k')}_{\alpha, t'}  \|^2] +  \tau\eta_{\alpha}^2\frac{1}{K}\sum_{k=1}^{K}\sum_{t'=s_{t}\tau}^{t-1}  \mathbb{E}[\| {m}^{(k)}_{\alpha, t'} -  \hat{m}^{(k)}_{\alpha, t'} \|^2] \notag \\
    & \quad +3\tau\eta_{\alpha}^2\frac{1}{K}\sum_{k=1}^{K}\sum_{t'=s_{t}\tau}^{t-1}  \mathbb{E}[\| \frac{1}{K}\sum_{k'=1}^{K}{m}^{(k')}_{\alpha, t'} - \frac{1}{K}\sum_{k'=1}^{K}\nabla_{1}\tilde{f}_{\rho}^{(k')}(\bar{\alpha}_{t'}, \bar{\theta}_{t'})  \|^2] \notag \\
    & \quad +3\tau\eta_{\alpha}^2\frac{1}{K}\sum_{k=1}^{K}\sum_{t'=s_{t}\tau}^{t-1}  \mathbb{E}[\| \frac{1}{K}\sum_{k'=1}^{K}\nabla_{1}\tilde{f}_{\rho}^{(k')}(\bar{\alpha}_{t'}, \bar{\theta}_{t'}) -  \nabla_{1}\tilde{f}_{\rho}^{(k)}(\bar{\alpha}_{t'}, \bar{\theta}_{t'})  \|^2] \notag \\
    & \quad  + 3\tau\eta_{\alpha}^2\frac{1}{K}\sum_{k=1}^{K}\sum_{t'=s_{t}\tau}^{t-1}  \mathbb{E}[\| \nabla_{1}\tilde{f}_{\rho}^{(k)}(\bar{\alpha}_{t'}, \bar{\theta}_{t'}) -  {m}^{(k)}_{\alpha, t'}   \|^2]  \notag  \\
    & \leq \sum_{t'=s_{t}\tau}^{t-1}2\tau\eta_{\alpha}^2\sigma^2 + \sum_{t'=s_{t}\tau}^{t-1}3\tau\eta_{\alpha}^2\delta^2 + 6\tau\eta_{\alpha}^2\tilde{L}^2\frac{1}{K}\sum_{k=1}^{K}\sum_{t'=s_{t}\tau}^{t-1}\Big(  \mathbb{E}[\| \bar{\alpha}_{t'}-\alpha^{(k)}_{t'} \|^2] + \mathbb{E}[\| \bar{\theta}_{t'}-\theta^{(k)}_{t'} \|^2] \Big) \notag 
\end{align}
where the second step holds due to $\bar{\alpha}_{s_{t}\tau} = \alpha^{(k)}_{s_t\tau}$, the last step holds due to Assumptions~\ref{assumption:variance}-\ref{assumption:heterogeneous}. 
Similarly, we obtain a bound on the consensus error associated with $\theta$:
\begin{align}
    &  \frac{1}{K}\sum_{k=1}^{K}  \mathbb{E}[\| \bar{\theta}_{t}-\theta^{(k)}_{t} \|^2] \leq 2\sum_{t'=s_{t}\tau}^{t-1}\tau\eta_{\theta}^2\sigma^2 + 3\sum_{t'=s_{t}\tau}^{t-1}\tau\eta_{\theta}^2\sigma^2 \notag \\
    & + 6\tau\eta_{\theta}^2\tilde{L}^2\frac{1}{K}\sum_{k=1}^{K}\sum_{t'=s_{t}\tau}^{t-1}\Big(  \mathbb{E}[\| \bar{\alpha}_{t'}-\alpha^{(k)}_{t'} \|^2] + \mathbb{E}[\| \bar{\theta}_{t'}-\theta^{(k)}_{t'} \|^2] \Big) \ . \notag 
\end{align}
Combining together and by summing up $t'$ from $s_t\tau$ to $(s_t+1)\tau -1$, we derive
\begin{align}
    &  \sum_{t'=s_{t}\tau}^{(s_t+1)\tau-1} \frac{1}{K}\sum_{k=1}^{K}  \Big( \mathbb{E}[\| \bar{\alpha}_{t}-\alpha^{(k)}_{t} \|^2] + \mathbb{E}[\| \bar{\theta}_{t}-\theta^{(k)}_{t} \|^2] \Big) \leq \sum_{t'=s_{t}\tau}^{(s_t+1)\tau-1} \tau(\eta_{\alpha}^2 + \eta_{\theta}^2)(2\sigma^2 + 3\delta^2)\notag \\
    & +  6\tau^2(\eta_{\alpha}^2+ \eta_{\theta}^2)\tilde{L}^2\frac{1}{K}\sum_{k=1}^{K}\sum_{t'=s_{t}\tau}^{(s_t+1)\tau-1}\Big(  \mathbb{E}[\| \bar{\alpha}_{t'}-\alpha^{(k)}_{t'} \|^2] + \mathbb{E}[\| \bar{\theta}_{t'}-\theta^{(k)}_{t'} \|^2] \Big) \ . \notag 
\end{align}
Setting $\eta_{\alpha} \leq \frac{1}{2\sqrt{6}\tau \tilde{L}}$, $\eta_{\alpha} \leq \frac{1}{2\sqrt{6}\tau \tilde{L}}$ yields
\begin{align}
    \sum_{t=0}^{T-1}\frac{1}{K}\sum_{k=1}^{K}  \Big( \mathbb{E}[\| \bar{\alpha}_{t}-\alpha^{(k)}_{t} \|^2] + \mathbb{E}[\| \bar{\theta}_{t}-\theta^{(k)}_{t} \|^2] \Big) \leq \tau^2(\eta_{\alpha}^2 + \eta_{\theta}^2)(4\sigma^2 + 6\sigma^2) \ . \notag 
\end{align}
Plugging this upper bound to above inequality, we obtain
\begin{align}
     & \quad \mathcal{L}_{T}  \leq (1 - \frac{\eta_{\alpha}\mu}{2})^{T} \mathcal{L}_{0} + \frac{2\eta_{\alpha}L_{\tilde{\Phi}}\sigma^2}{\mu}   +  \frac{5\eta_{\alpha}\tilde{L}\sigma^2}{2\mu} + \frac{\eta^2_{\theta}\tilde{L}\sigma^2}{\eta_{\alpha}\mu} \notag \\
    & \qquad \quad +  \frac{2}{\mu}(1+\frac{\eta_{\theta}}{\eta_{\alpha}}) \tilde{L}^2 \sum_{t'=0}^{T-1}\Big( \frac{1}{K}\sum_{k=1}^{K}  \mathbb{E}[\| \bar{\alpha}_{t'}-\alpha^{(k)}_{t'} \|^2]  +  \frac{1}{K}\sum_{k=1}^{K}  \mathbb{E}[\|  \bar{\theta}_{t'}  - \theta^{(k)}_{t'}  \|^2] \Big) \notag \\ 
    & \leq (1 - \frac{\eta_{\alpha}\mu}{2})^{T} \mathcal{L}_{0} + \frac{2\eta_{\alpha}L_{\tilde{\Phi}}\sigma^2}{\mu}   +  \frac{5\eta_{\alpha}\tilde{L}\sigma^2}{2\mu} + \frac{\eta^2_{\theta}\tilde{L}\sigma^2}{\eta_{\alpha}\mu} +  (1+\frac{\eta_{\theta}}{\eta_{\alpha}}) \tilde{L}^2 \frac{2\tau^2(\eta_{\alpha}^2 + \eta_{\theta}^2)(4\sigma^2+6\delta^2)}{\mu}  \ . \notag
\end{align}
Following Eq.~(9) and its following inequality in \cite{yang2022faster}, we can obtain
\begin{align}
	&  \mathcal{L}_{0}\leq  \Tilde{\Phi}^*  - \Tilde{\Psi}(\tilde{\theta})  + \frac{1}{2{L}}\|\nabla_{1} {f}(\tilde{\alpha}, \tilde{\theta}) \|^2 \leq \Tilde{\Phi}^*  - \Tilde{\Psi}(\tilde{\theta})  + \frac{\epsilon^2}{2{L}} \ , \notag \\
    &   \leq  \frac{1}{2\mu} \|\nabla_{2} {f}(\tilde{\alpha}, \tilde{\theta})\|^2 +\frac{1}{2{L}} \|\nabla_{1} {f}(\tilde{\alpha}, \tilde{\theta}) \|^2 + \frac{\epsilon^2}{2{L}} \leq  \frac{\epsilon^2}{{L}} + \frac{\epsilon^2}{2{L}} \leq \frac{2\epsilon^2}{L} \ . \notag
\end{align}

By putting this upper bound into that of $\mathcal{L}_{T}$, we have
\begin{align}
     \mathcal{L}_{T}  & \leq  (1 - \frac{\eta_{\alpha}\mu}{2})^{T}\frac{2\epsilon^2}{L} + \frac{2\eta_{\alpha}L_{\tilde{\Phi}}\sigma^2}{\mu}   +  \frac{5\eta_{\alpha}\tilde{L}\sigma^2}{2\mu} + \frac{\eta^2_{\theta}\tilde{L}\sigma^2}{\eta_{\alpha}\mu} \notag \\
    & \quad +  (1+\frac{\eta_{\theta}}{\eta_{\alpha}}) \tilde{L}^2 \frac{2\tau^2(\eta_{\alpha}^2 + \eta_{\theta}^2)(4\sigma^2+6\delta^2)}{\mu} \ .  \notag
\end{align}
Following the argument in the proof of Proposition 2.1 in \cite{yang2022faster} and choosing $\eta_{\alpha} = O(\eta_{\theta}/\kappa^2)$, one can establish that
\begin{align}
    & \mathbb{E}[\|\alpha_{T} -\alpha^*\|^2]\leq O(\kappa) \mathcal{L}_{T}\leq O(\kappa\epsilon^2)(1-\frac{\eta_{\alpha}\mu}{2})^{T}  +O(\kappa^5\eta_{\alpha})  +O(\kappa^7\eta_{\alpha}^2)  \ .  \notag
\end{align}
Finally, when $\eta_{\alpha} = O(\epsilon^2/\kappa^5)$ and $T = O(\epsilon^{-2}\kappa^6 \log(\kappa))$, we obtain
\begin{align}\label{eq:convergence-rate-pl}
	& \mathbb{E}[\|\alpha_{T} -\alpha^*\|^2]\leq  O(\epsilon^2) \  .
\end{align}
This completes the proof.
\end{proof}

Based on Theorem~\ref{theorem:two-sided-pl}, we combine Theorem~\ref{theorem:main} and Theorem~\ref{theorem:two-sided-pl} to obtain an $O(\epsilon)$-stationary guarantee for the original objective. Defining $\Phi_{1/2L}(\tilde{\alpha}) = \min_{\beta}\Phi(\beta) + L\|\beta-\tilde{\alpha}\|^2$, using the argument of Proposition 2.1 b) in \cite{yang2022faster}, one obtains
\begin{align}
     \|\nabla \Phi(\hat{\alpha})\|&  \leq \|\nabla \Phi_{1/2L}(\tilde{\alpha})\| + 2\kappa L \|\hat{\alpha} - \alpha^*(\tilde{\alpha})\| \notag \\
    & \leq \underbrace{4\|\nabla_1 f_{\rho}(\tilde{\alpha}, \tilde{\theta})\| +  4\kappa\|\nabla_2 f_{\rho}(\tilde{\alpha}, \tilde{\theta})\|}_{\leq O(\epsilon)  \text{\ follows from Theorem~\ref{theorem:main}}} + \underbrace{2\kappa L \|\hat{\alpha} - \alpha^*(\tilde{\alpha})\|}_{\leq O(\epsilon)  \text{\ follows from Theorem~\ref{theorem:two-sided-pl}}}  \ . 
\end{align}
Here $(\tilde{\alpha}, \tilde{\theta}) \triangleq (\tilde{x}, \tilde{y}, \tilde{z})$ is the output of Algorithm~\ref{alg:fedss-bvrfo}, while $\hat{\alpha}$ is obtained by running LocalSGDA initialized at $\tilde{\alpha}$. Then, it is easy to conclude that  $\|\nabla \Phi(\hat{\alpha})\|\leq O(\epsilon)$. 


Finally, we complete the Remark~\ref{remark:tran}. The main algorithm requires $T = O\big(\frac{\kappa^{15/2}}{K\epsilon^5}\big)$ iterations, whereas the translation phase incurs $T = O\big(\epsilon^{-2}\kappa^6 \log(\kappa)\big)$. After comparing these two number of iterations, it is easy to conclude that the overall complexity for obtaining an $O(\epsilon)$-stationary solution is $T = O\big(\frac{\kappa^{15/2}}{K\epsilon^5}\big)$.



\end{document}